\documentclass{article}

\usepackage{microtype}
\usepackage{graphicx}
\usepackage{subfigure}
\usepackage{booktabs}

\usepackage{hyperref}

\usepackage[accepted]{icml2019}

\icmltitlerunning{Nearest Neighbor and Kernel Survival Analysis}

\usepackage{amsmath}
\usepackage{amsthm}
\usepackage{amssymb}
\usepackage{dsfont}
\usepackage{bm}
\usepackage{enumitem}

\usepackage[symbol]{footmisc}

\hyphenation{Bernstein}
\hyphenation{Euclidean}
\hyphenation{Gaussian}
\hyphenation{Chaudhuri}
\hyphenation{Dasgupta}
\hyphenation{Tsybakov}
\hyphenation{Prentice}
\hyphenation{Lambert}
\hyphenation{Hoorfar}
\hyphenation{Hassani}
\hyphenation{Chatzigeorgiou}
\hyphenation{Veraverbeke}
\hyphenation{Keilegom}
\hyphenation{Lepski}
\hyphenation{Goldenschluger}

\newtheorem{thm}{Theorem}[section]
\newtheorem{lem}{Lemma}[section]
\newtheorem{cor}{Corollary}[section]
\newtheorem{prop}{Proposition}[section]
\newtheorem{example}{Example}[section]
\newcommand{\ind}{\mathds{1}}

\newcommand{\neighborsKNN}{\mathcal{N}_{k\textsc{-NN}}}
\newcommand{\neighborsNNh}{\mathcal{N}_{\textsc{NN}(h)}}

\newcommand{\setY}{\mathcal{Y}}
\newcommand{\setS}{\mathcal{I}}

\newcommand{\weightVar}{w}
\newcommand{\holderIndex}{\alpha}

\newcommand{\realNumbers}{\mathbb{R}}

\newcommand{\obsVar}{Y}
\newcommand{\survVar}{T}
\newcommand{\censVar}{C}
\newcommand{\eventVar}{\delta}
\newcommand{\probError}{\gamma}
\newcommand{\lips}{\lambda}
\newcommand{\Lips}{\Lambda}
\newcommand{\timeVar}{t}
\newcommand{\timeHorizon}{\tau}
\newcommand{\stanDistThresh}{\phi}

\newcommand{\survSubscript}{\textsc{\tiny{\survVar}}}
\newcommand{\censSubscript}{\textsc{\tiny{\censVar}}}
\newcommand{\obsSubscript}{\textsc{\tiny{\obsVar}}}

\newcommand{\obsEnd}{S_{\obsSubscript}}
\newcommand{\estObsEnd}{\widehat{S}_{\obsSubscript}}
\newcommand{\survEnd}{S}
\newcommand{\censEnd}{S_{\censSubscript}}
\newcommand{\estCensEnd}{\widehat{S}_{\censSubscript}}

\newcommand{\survDensity}{f_{\survSubscript}}
\newcommand{\censDensity}{f_{\censSubscript}}
\newcommand{\badEvent}[2]{\mathcal{E}_{{#2}}^{{#1}}}

\newcommand{\hazard}{h}
\newcommand{\Hazard}{H}

\newcommand{\featureSpace}{\mathcal{X}}

\newcommand{\featureVar}{X}
\newcommand{\fixedFeatureVector}{x}
\newcommand{\featureDist}{\mathbb{P}_{\featureVar}}

\begin{document}

\twocolumn[
\icmltitle{Nearest Neighbor and Kernel Survival Analysis: Nonasymptotic Error Bounds and Strong Consistency Rates}

\icmlsetsymbol{equal}{*}

\begin{icmlauthorlist}
\icmlauthor{George H.~Chen}{cmu}
\end{icmlauthorlist}

\icmlaffiliation{cmu}{Heinz College of Information Systems and Public Policy, Carnegie Mellon University, Pittsburgh, PA, USA}

\icmlcorrespondingauthor{George H.~Chen}{georgechen@cmu.edu}

\icmlkeywords{survival analysis, nearest neighbors, kernel methods, nonparametric methods}

\vskip 0.3in
]

\printAffiliationsAndNotice{}

\begin{abstract}
We establish the first nonasymptotic error bounds for Kaplan-Meier-based nearest neighbor and kernel survival probability estimators where feature vectors reside in metric spaces. Our bounds imply rates of strong consistency for these nonparametric estimators and, up to a log factor, match an existing lower bound for conditional CDF estimation. Our proof strategy also yields nonasymptotic guarantees for nearest neighbor and kernel variants of the Nelson-Aalen cumulative hazards estimator.
We experimentally compare these methods on four datasets.
We find that for the kernel survival estimator, a good choice of kernel is one learned using random survival forests.
\end{abstract}

\vspace{-1.7em}
\section{Introduction}
\vspace{-0.3em}

Survival analysis arises in numerous applications where we want to reason about the amount of time until some critical event happens. For example, in health care, we may be interested in using electronic health records to predict how long a patient with a particular disease will live (e.g., \citealt{botsis2010secondary,ganssauge2016exploring}), or how much time a patient has before a disease relapses (e.g., \citealt{zupan2000machine}). In criminology, we may be interested in predicting the time until a convicted criminal reoffends \citep{chung_1991}.

A fundamental task in survival analysis is estimating the survival probability over time for a specific subject (for ease of exposition, we stick to using standard survival analysis terminology in which the critical event of interest is death). Formally, suppose a subject has feature vector~$\featureVar$ (a random variable that takes on values in a feature space~$\featureSpace$) and survival time $\survVar$ (a nonnegative real-valued random variable). For a given feature vector $\fixedFeatureVector\in\featureSpace$, our goal is to estimate the conditional survival function $\survEnd(\timeVar | \fixedFeatureVector) := \mathbb{P}(\survVar > \timeVar | \featureVar=\fixedFeatureVector)$ for time~$\timeVar\ge0$.

To estimate $\survEnd$, we assume that we have access to $n$ training subjects. For the $i$-th subject, we have the subject's feature vector $\featureVar_i \in \featureSpace$ as well as two observations: $\eventVar_i\in\{0,1\}$ indicates whether we observe the survival time for the $i$-th subject, and $\obsVar_i\in\realNumbers_+$ is the survival time for the $i$-th subject if $\eventVar_i=1$ or the ``censoring time'' if $\eventVar_i=0$. The censoring time gives a lower bound for the $i$-th subject's survival time (e.g., when we stop collecting training data, the $i$-th subject might still be alive, in which case that is when the subject's true survival time is ``censored'' and we only know that the subject survives beyond the time of censoring).

Many approaches have been devised for estimating the conditional survival function $\survEnd$. Most standard approaches impose strong structural assumptions on $\survEnd$ via constraining the hazard function $\hazard(\timeVar|\fixedFeatureVector) := -\frac{\partial}{\partial \timeVar}\log \survEnd(\timeVar|\fixedFeatureVector)$. For example, the Cox proportional hazards model decouples the effects of time $\timeVar\ge0$ and of feature vector $\fixedFeatureVector\in\realNumbers^d$ by assuming the factorization $\hazard(\timeVar|\fixedFeatureVector) = \hazard_0(\timeVar)\exp(\beta^\top \fixedFeatureVector)$, where positive-valued function $\hazard_0$ and vector $\beta\in\realNumbers^d$ are parameters \citep{cox_1972}. After estimating $\hazard_0$ and $\beta$ from training data, then for any feature vector $\fixedFeatureVector$, we can estimate the hazard function~$\hazard(\timeVar|\fixedFeatureVector)$ by plugging in estimates for~$\hazard_0$ and~$\beta$. Integrating the estimate for $\hazard(\timeVar|\fixedFeatureVector)$ thus yields an estimate for $\survEnd(\timeVar|\fixedFeatureVector)=\exp(-\int_0^\timeVar\hazard(s|\fixedFeatureVector)ds)$. Other standard approaches such as the Aalen additive model \citep{aalen_1989} and accelerated failure time models \citep[Chapter~7]{kalbfleisch_2002} also impose structure on hazard function $\hazard(\timeVar|\fixedFeatureVector)$ and are typically used with parametric assumptions. More recent approaches include, for instance, modifying the Cox proportional hazards model by replacing the inner product $\beta^\top x$ with a nonlinear function of $x$ that is encoded as a deep net \citep{katzman2018deepsurv}, or completely specifying $S$ via a hierarchical generative model \citep{ranganath2016deep}.

Rather than making structural assumptions on~$\survEnd$, \citet{beran_1981} takes a nonparametric approach using nearest neighbors and kernels. The idea is simple: there already is a nonparametric estimator for the marginal survival function $\survEnd_{\text{marg}}(\timeVar):=\mathbb{P}(\survVar>\timeVar)$ known as the Kaplan-Meier estimator \citep{kaplan_meier_1958}. This estimator does not use feature vectors. We can incorporate feature vectors in a straightforward manner. For a test subject with feature vector~$\fixedFeatureVector$, we first find training subjects whose feature vectors are sufficiently close to~$\fixedFeatureVector$ (e.g., pick the $k$ closest). We apply the Kaplan-Meier estimator to just these nearby subjects to estimate the conditional survival probability function $\survEnd(\timeVar|\fixedFeatureVector)$ (the kernel variant can weight training subjects differently). \citet{beran_1981} provided consistency results for these $k$-NN and kernel estimates for~$\survEnd$, while \citet{dabrowska_1989}, \citet{van_1996}, and \citet{van_1998} established nonasymptotic error bounds for the kernel variant when feature vectors are Euclidean.

In this paper, we present the first nonasymptotic error bounds for nearest neighbor and kernel estimators for $\survEnd$ where feature vectors reside in the general setting of separable metric spaces (Euclidean space is a special case). Our error bounds lead to rates of strong consistency for both estimators across a wide range of distributions. Furthermore, our bounds are essentially optimal with respect to the number of training data~$n$. In particular, note that $1-\survEnd(\cdot|\fixedFeatureVector)$ is a conditional CDF. If there is no right-censoring, the problem reduces to conditional CDF estimation. Up to a log factor, our error rates match an existing conditional CDF estimation error lower bound by \citet{chagny_2014}.

Our proof strategy also yields nonasymptotic error bounds for Nelson-Aalen-based nearest neighbor and kernel estimates of the conditional cumulative hazard function $-\log\survEnd(\timeVar|\fixedFeatureVector)$. These bounds turn out to be crucial in how we derive generalization guarantees for automatic parameter selection (choosing the number of nearest neighbors or the kernel bandwidth) via a validation set.

Despite our theory handling a wide range of distances and kernels, both of these still have to be pre-specified by the user and, in practice, can lead to large prediction accuracy differences. As a simple heuristic, we propose using random survival forests \citep{ishwaran_2008} to learn a kernel for the kernel survival estimator. We experimentally show that the resulting \textit{adaptive} kernel estimator has prediction accuracy on par with regular random survival forests and is, in particular, typically as good as or better than other methods tested.

\vspace{-.7em}
\section{Model and Nonparametric Estimators}
\label{sec:problem-setup}
\vspace{-.3em}

\begingroup
\setlength\abovedisplayskip{2pt plus 1pt}
\setlength\belowdisplayskip{2pt plus 1pt}

\textbf{Model.}
The training data $(\featureVar_1,\obsVar_1,\eventVar_1),\dots,(\featureVar_n,\obsVar_n,\eventVar_n)$ $\in\featureSpace\times\realNumbers_+\times\{0,1\}$ are assumed to be generated i.i.d.~by the following process, stated for a generic data point $(\featureVar,\obsVar,\eventVar)$:
\begin{enumerate}[leftmargin=1.5em,topsep=-2pt,itemsep=-1ex,partopsep=1ex,parsep=1ex]
\item Sample feature vector $\featureVar\sim\featureDist$.
\item Sample nonnegative survival time $\survVar\sim\mathbb{P}_{\survVar|\featureVar}$.
\item Sample nonnegative censoring time $\censVar\sim\mathbb{P}_{\censVar|\featureVar}$. (Note that $\survVar$ and $\censVar$ are independent given $\featureVar$.)
\item Set $\obsVar=\min\{\survVar,\censVar\}$, and $\eventVar=\ind\{\survVar\le\censVar\}$.
\end{enumerate}
We refer to $\obsVar$ as the \emph{observed time}, and $\eventVar$ as the \emph{censoring indicator} (0 means censoring happened). For test feature vector $\fixedFeatureVector\in\featureSpace$, we aim to estimate the conditional survival function $\survEnd(\timeVar|\fixedFeatureVector) = {\mathbb{P}(\survVar>\timeVar|\featureVar=\fixedFeatureVector)}$ using the training data.

\textbf{Nonparametric survival function estimators.}
All nonparametric estimators for $\survEnd$ in this paper 
are based on the Kaplan-Meier estimator \citep{kaplan_meier_1958}, restricted to a subset of the $n$ training subjects. This estimator works as follows. Let $[n]:=\{1,2,\dots,n\}$ denote the set of all training subjects. For any subset of training subjects $\setS\subseteq[n]$, the Kaplan-Meier estimator first identifies the unique times when death occurred, given by the set $\setY_{\setS}:=\{\obsVar_j : j\in\setS\text{~s.t.~}\eventVar_j=1\}$ (repeated observed times get counted once). Next, we keep track of how many deaths and how many subjects are at risk at any given time~$t\ge0$:
\[
d_{\setS}(\timeVar) := \sum_{j\in\setS}\eventVar_j\ind\{\obsVar_j = \timeVar\},
\quad
n_{\setS}(\timeVar) := \sum_{j\in\setS}\ind\{\obsVar_j \ge \timeVar\}.
\]
Then the Kaplan-Meier estimator restricted to training subjects~$\setS$ is given by
\begin{equation*}
\widehat{\survEnd}^{\text{KM}}(\timeVar|\setS)
:=
\prod_{\timeVar'\in \setY_{\setS}} \Big(1 - \frac{d_{\setS}(\timeVar')}{n_{\setS}(\timeVar')}\Big)^{\ind\{\timeVar' \le \timeVar\}}.
\end{equation*}
This equation has a simple interpretation: if we sort the unique death times $\setY_{\setS}$ as $\timeVar_1 < \timeVar_2 < \cdots < \timeVar_{|\setY_{\setS}|}$, then the terms being multiplied above are estimated probabilities of a subject surviving from time~0 to $\timeVar_1$, from $\timeVar_1$ to $\timeVar_2$, and so forth until reaching time $\timeVar$. The standard Kaplan-Meier estimator has $\setS=[n]$.

We now state four nonparametric estimators for the conditional survival function $\survEnd$. The first two are by \citet{beran_1981} and are the estimators that we provide theoretical analysis for in the next section. Distances between feature vectors are measured via a user-specified metric $\rho:\featureSpace\times\featureSpace\rightarrow\realNumbers_+$.

\emph{$k$-NN survival estimator.} For a test feature vector $\fixedFeatureVector\in\featureSpace$, we first find the~$k$ training subjects with feature vectors closest to~$\fixedFeatureVector$ according to metric $\rho$, breaking ties uniformly at random.
Let $\neighborsKNN(\fixedFeatureVector)\subseteq[n]$ denote these $k$ subjects' indices. Then the $k$-NN estimate for~$\survEnd$ is $\widehat{\survEnd}^{k\textsc{-NN}}(\timeVar|\fixedFeatureVector) := \widehat{\survEnd}^{\text{KM}}(\timeVar|\neighborsKNN(\fixedFeatureVector))$.

\emph{Kernel survival estimator.} For a user-specified kernel function $K:\realNumbers_+\rightarrow\realNumbers_+$ and bandwidth $h>0$, we can measure how similar training subject $j\in[n]$ is to~$\fixedFeatureVector$ by the weight $K(\frac{\rho(\fixedFeatureVector, \featureVar_j)}h)$. We generalize the unique death times, death counts, and survivor counts as follows:
\begin{align*}
\setY_K(\fixedFeatureVector;h) \!&:=\! \Big\{ \obsVar_j \text{~for~}j\in\mathcal[n]\text{~s.t.~}\eventVar_j K\Big(\frac{\rho(\fixedFeatureVector, \featureVar_j)}{h}\Big)\!>\!0 \Big\}, \\
d_K(\timeVar|\fixedFeatureVector;h) \!&:=\! \sum_{j=1}^n K\Big(\frac{\rho(\fixedFeatureVector, \featureVar_j)}{h}\Big) \eventVar_j\ind\{\obsVar_j = \timeVar\}, \\
n_K(\timeVar|\fixedFeatureVector;h) \!&:=\! \sum_{j=1}^n K\Big(\frac{\rho(\fixedFeatureVector, \featureVar_j)}{h}\Big) \ind\{\obsVar_j \ge \timeVar\}.
\end{align*}
Then the kernel estimate for $\survEnd$ is given by
\begin{equation}
\widehat{\survEnd}^K(\timeVar|\fixedFeatureVector;h)
:= \prod_{\timeVar'\in \setY_K(\fixedFeatureVector;h)} \Big(1 - \frac{d_K(\timeVar'|\fixedFeatureVector;h)}{n_K(\timeVar'|\fixedFeatureVector;h)}\Big)^{\ind\{\timeVar' \le \timeVar\}}.
\label{eq:kernel-survival}
\end{equation}
In our numerical experiments later, we benchmark the above methods against the random survival forests method by \citet{ishwaran_2008} along with our proposed variant of it that combines it with the kernel survival estimator.

\emph{Random survival forests.} Random survival forests are much like standard random forests. During training, each tree is grown using a survival-analysis-based splitting rule. Each leaf is associated with some subset of the training data for which a Kaplan-Meier survival estimate is produced. In other words, for each tree, each leaf is associated with a particular survival function estimate. Then, for a test point~$\fixedFeatureVector$, we find the tree leaves that $\fixedFeatureVector$ belongs to. We average these leaves' survival function estimates to produce the final random survival forest estimate for $\survEnd(\cdot|\fixedFeatureVector)$.

\emph{Adaptive kernel survival estimator.} We propose an alternative approach to making predictions using random survival forests without changing their training procedure. For a test point $\fixedFeatureVector$, to make a final prediction, we instead use the kernel survival estimator given by equation \eqref{eq:kernel-survival}, where we replace the expression $K(\frac{\rho(x, X_j)}h)$ by $\widehat{K}(x, X_j)$, defined as the fraction of trees for which $x$ and training point $X_j$ show up in the same leaf node in the learned forest. Note that interpreting standard random forests as learning kernels was already done by \citet{breiman2000some}.

\textbf{Relating to the Nelson-Aalen estimator.}
The Nelson-Aalen estimator estimates the marginal cumulative hazard function $\Hazard_{\text{marg}}(\timeVar) := -\log\survEnd_{\text{marg}}(\timeVar)={-\log\mathbb{P}(\survVar>\timeVar)}$ \citep{nelson_1969,aalen_1978}. The Nelson-Aalen estimator restricted to training subjects~$\setS$ is given by
\[
\widehat{\Hazard}^{\text{NA}}(\timeVar|\setS)
:=
\sum_{\timeVar'\in \setY_{\setS}}  \frac{d_{\setS}(\timeVar')}{n_{\setS}(\timeVar')}{\ind\{\timeVar' \le \timeVar\}},
\]
using the same variables introduced for the Kaplan-Meier estimator. We can relate the Nelson-Aalen estimator to the Kaplan-Meier one: the first-order Taylor approximation of $-\log\widehat{\survEnd}^{\text{KM}}(\timeVar|\setS)$ is $\widehat{\Hazard}^{\text{NA}}(\timeVar|\setS)$. Because our theoretical analysis of $k$-NN and kernel variants of the Kaplan-Meier survival estimator is in terms of Taylor series expansions of $\log\survEnd$, our proofs extend (with small changes) to $k$-NN and kernel variants of the Nelson-Aalen estimator.

For clarity of exposition, the rest of the paper uses $k$-NN and kernel estimators to refer to the Kaplan-Meier versions rather than the Nelson-Aalen ones unless stated otherwise.

\vspace{-.75em}
\section{Theoretical Guarantees}
\vspace{-.25em}

We first introduce some notation. We denote closed and open balls centered at $\fixedFeatureVector\in\featureSpace$ with radius $r>0$ as
\begin{align*}
\mathcal{B}_{\fixedFeatureVector,r} & :=\{\fixedFeatureVector'\in\mathcal{X}\,:\,\rho(\fixedFeatureVector,\fixedFeatureVector')\le r\},\\
\mathcal{B}_{\fixedFeatureVector,r}^o & :=\{\fixedFeatureVector'\in\mathcal{X}\,:\,\rho(\fixedFeatureVector,\fixedFeatureVector')<r\}.
\end{align*}
For example, $\featureDist(\mathcal{B}_{\fixedFeatureVector,r})$ is the probability that a feature vector sampled from distribution $\featureDist$ lands in $\mathcal{B}_{\fixedFeatureVector,r}$. We define the ``support'' of feature distribution $\featureDist$ as
\[
\text{supp}(\featureDist):=\{\fixedFeatureVector\in\featureSpace\,:\,\featureDist(\mathcal{B}_{\fixedFeatureVector,r})>0\text{ for all }r>0\}.
\]
We denote tail probability functions using ``$\survEnd$'' with and without subscripts. $\survEnd$ without a subscript always refers to the tail of the conditional survival time $\survVar$ distribution $\survEnd(\timeVar|\fixedFeatureVector)={\mathbb{P}(\survVar>\timeVar|\featureVar=\fixedFeatureVector)}$. The tails of the conditional censoring time $\censVar$ and observed time $\obsVar$ distributions are $\censEnd(\timeVar|\fixedFeatureVector):={\mathbb{P}(C>\timeVar|\featureVar=\fixedFeatureVector)}$ and $\obsEnd(\timeVar|\fixedFeatureVector):={\mathbb{P}(\obsVar>\timeVar|\featureVar=\fixedFeatureVector)}$. PDF's of distributions $\mathbb{P}_{\survVar|\featureVar=\fixedFeatureVector}$ and $\mathbb{P}_{\censVar|\featureVar=\fixedFeatureVector}$ are denoted by $\survDensity(\timeVar|\fixedFeatureVector)$ and $\censDensity(\timeVar|\fixedFeatureVector)$. Note that $\obsEnd(\timeVar|\fixedFeatureVector)={\survEnd(\timeVar|\fixedFeatureVector)\censEnd(\timeVar|\fixedFeatureVector)}$, ${\survEnd(\timeVar|\fixedFeatureVector) = 1 - \int_0^\timeVar \survDensity(s|\fixedFeatureVector)ds}$, and ${\censEnd(\timeVar|\fixedFeatureVector) = 1 - \int_0^\timeVar \censDensity(s|\fixedFeatureVector)ds}$.

Our guarantees depend on the following four assumptions:
\begin{itemize}[topsep=-3pt,itemsep=0ex,topsep=-2pt,partopsep=0ex,parsep=0ex]

\item [\textbf{A1.}]
\emph{Feature space $\featureSpace$ and distance $\rho$ form a separable metric space, and feature distribution $\featureDist$ is a Borel probability measure.}
This assumption is technical and ensures that the probability of a feature vector landing in a ball (whether open or closed) is well-defined, and that we only need to care about feature vectors that land in $\text{supp}(\featureDist)$ (the probability of a feature vector landing outside of this support is 0). This assumption is also used in establishing consistency of nearest neighbor classification in metric spaces \citep{cerou_guyader_2006,chaudhuri_dasgupta_2014}.

\item [\textbf{A2.}]
\emph{For all $\fixedFeatureVector\in\text{supp}(\featureDist)$, distributions $\mathbb{P}_{\survVar|\featureVar=\fixedFeatureVector}$ and $\mathbb{P}_{\censVar|\featureVar=\fixedFeatureVector}$ exist and correspond to continuous random variables. Moreover, conditioned on $\featureVar=\fixedFeatureVector$, the indicator random variable $\eventVar={\ind\{\survVar\le\censVar\}}$ cannot almost surely be 0.}
This assumption ensures that functions $\survEnd$, $\censEnd$, $\obsEnd$, $f$, and $g$ defined above are well-defined, ties in observed times $\obsVar_i$'s happen with probability 0, and censoring does not almost surely happen.

\item [\textbf{A3.}]
\emph{There exists $\theta\in(0,\frac{1}{2}]$ and $\timeHorizon\in(0,\infty)$ such that
\[
\obsEnd(\timeHorizon|\fixedFeatureVector)\ge\theta\quad\text{for all }\fixedFeatureVector\in\text{supp}(\featureDist).
\]}In practice, we cannot estimate conditional survival function $\survEnd(\timeVar|\fixedFeatureVector)$ accurately for time $\timeVar$ that is arbitrarily large (e.g., $\timeVar>\max_{i=1,\dots,n}\obsVar_i$). We shall only guarantee accurate estimation of $\survEnd(\timeVar|\fixedFeatureVector)$ for $\timeVar\in[0,\timeHorizon]$.

\item [\textbf{A4.}]
\emph{For any time $\timeVar\in[0,\timeHorizon]$, density function $\survDensity(\timeVar|\fixedFeatureVector)$ and $\censDensity(\timeVar|\fixedFeatureVector)$ are H\"{o}lder continuous in~$\fixedFeatureVector$ with the same exponent~$\holderIndex>0$ but with potentially different constants $\lips_{\textsc{\tiny{\survVar}}}>0$ and $\lips_{\textsc{\tiny{\censVar}}}>0$, i.e., for all $\fixedFeatureVector,\fixedFeatureVector'\in\text{supp}(\featureDist)$,
\begin{align*}
|\survDensity(\timeVar|\fixedFeatureVector)-\survDensity(\timeVar|\fixedFeatureVector')| & \le\lips_{\textsc{\tiny{\survVar}}}\rho(\fixedFeatureVector,\fixedFeatureVector')^{\holderIndex},\\
|\censDensity(\timeVar|\fixedFeatureVector)-\censDensity(\timeVar|\fixedFeatureVector')| & \le\lips_{\textsc{\tiny{\censVar}}}\rho(\fixedFeatureVector,\fixedFeatureVector')^{\holderIndex}.
\end{align*}}
In other words, nearby feature vectors have similar conditional survival and censoring distributions. Thus, feature vectors near $\fixedFeatureVector$ can help us estimate~$\survEnd(\cdot|\fixedFeatureVector)$.

\end{itemize}
Many distributions $\featureDist$, $\survDensity$, and $\censDensity$ satisfy the assumptions above. We provide a few examples at the end of this section.

\endgroup
\begingroup
\setlength\abovedisplayskip{3pt plus 1pt}
\setlength\belowdisplayskip{2pt plus 1pt}
Since $\survDensity(\timeVar|\cdot)$ and $\censDensity(\timeVar|\cdot)$ are H\"{o}lder continuous with common exponent~$\holderIndex$, then so are $\obsEnd(\timeVar|\cdot)$ and $\censEnd(\timeVar|\cdot)\survDensity(\timeVar|\cdot)$, which appear in our analysis. With a bit of algebra, one can show that $\obsEnd(\timeVar|\cdot)$ is H\"{o}lder continuous with parameters ${(\lips_{\textsc{\tiny{\survVar}}}+\lips_{\textsc{\tiny{\censVar}}})t}$ and~$\holderIndex$. Meanwhile, $\censEnd(\timeVar|\cdot)\survDensity(\timeVar|\cdot)$ is H\"{o}lder continuous with parameters $(\lips_{\survSubscript}+\survDensity^*\lips_{\censSubscript}\timeVar)$ and~$\holderIndex$, where
\[
\survDensity^*:=\sup_{\timeVar\in[0,\timeHorizon],\fixedFeatureVector\in\text{supp}(\featureDist)}\survDensity(\timeVar|\fixedFeatureVector).
\]
Our $k$-NN result depends on the constant
\begin{equation*}
\Lips :=\max\Big\{
\frac{2\timeHorizon}\theta(\lips_{\textsc{\tiny{\survVar}}}+\lips_{\textsc{\tiny{\censVar}}}),\,
\lips_{\survSubscript}\timeHorizon + \frac{\survDensity^*\lips_{\censSubscript}\timeHorizon^2}{2}
\Big\}.
\end{equation*}
As we explain shortly, the $k$-NN survival estimator is closely related to two subproblems: $k$-NN CDF estimation and a special case of $k$-NN regression. In the definition of $\Lips$ above, the two parts of the maximization correspond precisely to the CDF estimation and regression components.
\endgroup

We state each of our main theoretical guarantees as a pointwise result, i.e., for any point $\fixedFeatureVector\in\text{supp}(\featureDist)$ and error tolerance $\varepsilon\in(0,1)$, how to guarantee $\sup_{\timeVar\in[0,\timeHorizon]}|\widehat{\survEnd}(\timeVar|\fixedFeatureVector) - \survEnd(\timeVar|\fixedFeatureVector)|\le\varepsilon$ with high probability using estimator $\widehat{\survEnd}$. Translating pointwise guarantees to account for randomness in sampling $\featureVar=\fixedFeatureVector$ from $\featureDist$ can easily be done using standard proof techniques, as we discuss momentarily.

\vspace{-.5em}
\subsection*{$\bm{k}$-NN estimator results}
\vspace{-.25em}

We begin with the nonasymptotic $k$-NN estimator guarantee. Proofs are deferred to the appendix. As a disclaimer, no serious attempt has been made to optimize constants.
\begin{thm}[$k$-NN pointwise bound]
\label{thm:kNN-survival}
Under Assumptions A1--A4, let $\varepsilon\in(0,1)$ be a user-specified error tolerance and define critical distance ${h^*:=(\frac{\varepsilon\theta}{18\Lips})^{1/\holderIndex}}$. For any feature vector $\fixedFeatureVector\in\text{supp}(\featureDist)$ and any choice of number of nearest neighbors $k\in[\frac{72}{\varepsilon\theta^{2}},\!\frac{n\featureDist(\mathcal{B}_{\fixedFeatureVector,h^*})}{2}]$, we have, over randomness in training data,
\begingroup
\setlength\abovedisplayskip{3pt plus 1pt}
\setlength\belowdisplayskip{2pt plus 1pt}
\begin{align}
& \mathbb{P}\Big(
    \sup_{\timeVar\in[0,\timeHorizon]}
      |\widehat{\survEnd}^{k\textsc{-NN}}(\timeVar|\fixedFeatureVector)
       -\survEnd(\timeVar|\fixedFeatureVector)|
        > \varepsilon
  \Big) \nonumber \\[-3\jot]
& \quad
    \le
      \exp\Big(-\frac{k\theta}{8}\Big)
      + \exp\Big(-\frac{n\featureDist(\mathcal{B}_{\fixedFeatureVector,h^*})}{8}\Big) \nonumber \\[-0.5\jot]
& \quad\quad
      + 2\exp\Big(-\frac{k\varepsilon^{2}\theta^{4}}{648}\Big)
      + \frac{8}{\varepsilon}\exp\Big(-\frac{k\varepsilon^{2}\theta^{2}}{162}\Big).
\label{eq:kNN-ptwise-bound}
\end{align}
\end{thm}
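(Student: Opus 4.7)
My plan is to split the argument into three pieces, corresponding roughly to the ``bad events'' whose probabilities are bounded in \eqref{eq:kNN-ptwise-bound}: (i) the $k$ nearest neighbors fail to lie within distance $h^*$ of $\fixedFeatureVector$; (ii) the effective sample size $n_\setS(\timeHorizon)$ at the horizon is too small; and (iii) the Nelson--Aalen estimator deviates from $-\log\survEnd$ at some $\timeVar\in[0,\timeHorizon]$. Steps~(i) and~(ii) are straightforward Chernoff bounds that localize in feature space and time respectively; the real work is in~(iii).

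For (i), since $k\le n\featureDist(\mathcal{B}_{\fixedFeatureVector,h^*})/2$, a multiplicative Chernoff bound on $\sum_{i=1}^n\ind\{\featureVar_i\in\mathcal{B}_{\fixedFeatureVector,h^*}\}$ gives that at least $k$ training feature vectors lie in $\mathcal{B}_{\fixedFeatureVector,h^*}$ with probability at least $1-\exp(-n\featureDist(\mathcal{B}_{\fixedFeatureVector,h^*})/8)$---the second tail of \eqref{eq:kNN-ptwise-bound}. On this ``locality'' event every $j\in\neighborsKNN(\fixedFeatureVector)$ has $\rho(\featureVar_j,\fixedFeatureVector)\le h^*$, and the standard $k$-NN reduction lets us treat the neighbors' $(\obsVar_j,\eventVar_j)$ pairs as i.i.d.\ samples from the conditional mixture $\mathbb{P}_{\obsVar,\eventVar\,\mid\,\featureVar\in\mathcal{B}_{\fixedFeatureVector,h^*}}$.

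Following the strategy flagged in the paper, I would then take logarithms on both sides of the Kaplan--Meier product. Since A2 rules out ties almost surely, writing $\setS=\neighborsKNN(\fixedFeatureVector)$,
\begin{equation*}
-\log\widehat{\survEnd}^{k\textsc{-NN}}(\timeVar|\fixedFeatureVector)
=\sum_{j\in\setS}\eventVar_j\ind\{\obsVar_j\le\timeVar\}\bigl[-\log\bigl(1-1/n_\setS(\obsVar_j)\bigr)\bigr].
\end{equation*}
A second Chernoff bound using A3 plus the H\"older bias from A4 yields $n_\setS(\timeHorizon)\ge k\theta/2$ with probability at least $1-\exp(-k\theta/8)$ (the first tail of \eqref{eq:kNN-ptwise-bound}); on this event each denominator is $\Omega(k\theta)$, so the Taylor residual $-\log(1-x)-x=O(x^{2})$ contributes $O(1/(k\theta^{2}))$ in total, which stays within the $\varepsilon$ budget precisely when $k\ge 72/(\varepsilon\theta^{2})$---exactly the lower bound on $k$ in the hypothesis. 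It then suffices to control $|\widehat{\Hazard}^{\text{NA}}(\timeVar|\setS)+\log\survEnd(\timeVar|\fixedFeatureVector)|=O(\varepsilon)$ uniformly on $[0,\timeHorizon]$, after which $|e^{-a}-e^{-b}|\le|a-b|$ transfers the bound to $\widehat{\survEnd}^{k\textsc{-NN}}$.

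For the Nelson--Aalen bound I would split the error into \emph{bias}, the gap between the neighborhood-mixture cumulative hazard $\widetilde{\Hazard}_{\text{nbhd}}$ and $-\log\survEnd(\cdot|\fixedFeatureVector)$, and \emph{variance}, $\widehat{\Hazard}^{\text{NA}}-\widetilde{\Hazard}_{\text{nbhd}}$. Plugging A4 into the two building blocks---$(\lips_\survSubscript+\lips_\censSubscript)\timeVar$-H\"older $\obsEnd(\timeVar|\cdot)$ and $(\lips_\survSubscript+\survDensity^{*}\lips_\censSubscript\timeVar)$-H\"older $\censEnd(\timeVar|\cdot)\survDensity(\timeVar|\cdot)$, the exact pieces that combine to define $\Lips$---then dividing by $\obsEnd\ge\theta/2$ and integrating on $[0,\timeHorizon]$ yields a bias at most $\Lips(h^*)^{\holderIndex}/\theta\le\varepsilon/18$ by the choice of $h^*$. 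The variance is handled by a DKW-type uniform concentration applied to the two monotone empirical processes $s\mapsto\frac{1}{k}\sum_j\ind\{\obsVar_j\ge s\}$ and $s\mapsto\frac{1}{k}\sum_j\eventVar_j\ind\{\obsVar_j\le s\}$; each $1/\obsEnd$ factor in the Nelson--Aalen ratio loses a power of $\theta$ and the final exponential loses another, producing the exponents $\theta^{2}$ and $\theta^{4}$ in the last two tails of \eqref{eq:kNN-ptwise-bound}, with the $8/\varepsilon$ prefactor on the $\theta^{2}$ term coming from an $O(1/\varepsilon)$-grid discretization used to upgrade a pointwise bound to a uniform one. The most delicate aspect is exactly here: because the Nelson--Aalen denominator can be as small as $\theta/2$, a careless triangle inequality loses a power of $\theta$, so keeping the exponents as sharp as $\theta^{2}$ and $\theta^{4}$ is where the bulk of the work sits. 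Once that is done, a union bound over the four bad events recovers \eqref{eq:kNN-ptwise-bound}.
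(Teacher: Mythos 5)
Your high-level structure---take logs of the Kaplan--Meier product, Chernoff-bound the ``too few survivors past $\timeHorizon$'' and ``neighbors too far'' events, then split a Nelson--Aalen-type deviation into bias plus variance plus a Taylor residual, and uniformize over $\timeVar$ with an $O(1/\varepsilon)$ grid---tracks the paper's proof closely, and the roles you assign to each assumption are broadly right. But there is a genuine gap in what you call ``the standard $k$-NN reduction.'' You claim that on the locality event every neighbor lies in $\mathcal{B}_{\fixedFeatureVector,h^*}$, and that this lets you treat the neighbors' $(\obsVar_j,\eventVar_j)$ as i.i.d.\ draws from the conditional law given $\featureVar\in\mathcal{B}_{\fixedFeatureVector,h^*}$. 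That is false. Conditioning on the event that the $k$ nearest neighbors all fall in a fixed ball does not make their feature vectors i.i.d.\ over that ball: they remain the $k$ closest out of whatever landed there, a selection-biased sample. Without an i.i.d.\ structure, the DKW and Hoeffding applications in your variance step are not justified. The paper sidesteps this using the device of Chaudhuri and Dasgupta: condition not on the locality \emph{event} but on the $(k+1)$-st nearest neighbor's feature vector $\widetilde{\featureVar}$ itself. Given $\widetilde{\featureVar}$, the $k$ nearest neighbors' feature vectors are genuinely i.i.d.\ from $\featureDist$ restricted to the \emph{random} open ball of radius $\rho(\fixedFeatureVector,\widetilde{\featureVar})$; the DKW and Hoeffding tails one gets from this depend only on $k$ (not on $\widetilde{\featureVar}$), and so one can marginalize over $\widetilde{\featureVar}$ afterward. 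The locality event is still needed---but only to control the \emph{bias} via H\"older continuity---while the variance control must condition on $\widetilde{\featureVar}$, and conflating these two is the missing step.

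A secondary issue: you propose handling the regression-type variance by DKW on the monotone process $s\mapsto\frac1k\sum_j\eventVar_j\ind\{\obsVar_j\le s\}$, but the relevant sum $U_1$ weights each summand by $1/\obsEnd(\obsVar_j|\fixedFeatureVector)$, so DKW on the unweighted process does not directly control it. The paper instead applies Hoeffding's inequality at the $O(1/\varepsilon)$ grid times one at a time and unions over the grid; the $8/\varepsilon$ prefactor and the $\theta^2$ exponent in bound~\eqref{eq:kNN-ptwise-bound} come precisely from that Hoeffding-on-a-grid argument, not from a second DKW application. Finally, the Chernoff bound on the number of survivors past $\timeHorizon$ needs only Assumption A3 (since $\obsEnd(\timeHorizon|\fixedFeatureVector')\ge\theta$ pointwise in $\fixedFeatureVector'$), not A4 as you suggest.
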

\endgroup
The four terms in the above bound correspond to penalties for the following bad events:
\begin{enumerate}[leftmargin=*,topsep=-3pt,itemsep=-1ex,partopsep=-10ex,parsep=1ex]

\item Too few of the~$k$ nearest neighbors survive beyond time~$\timeHorizon$ (in the worst case, none do, so from the data alone, we would suspect Assumption A3 to not hold)

\item The~$k$ nearest neighbors are not all within critical distance~$h^*$ of~$\fixedFeatureVector$ (by Assumption A4, the nearest neighbors should be close to $\fixedFeatureVector$ to guarantee that they provide accurate information about $\survEnd(\cdot|\fixedFeatureVector)$)

\item The number of nearest neighbors~$k$ is too small such that when we form an empirical distribution using their $\obsVar_i$ values, this empirical distribution has not converged to its expectation, which is a CDF (note that when the previous bad event does not happen, then this CDF is approximately $1-\obsEnd(\cdot|\fixedFeatureVector)$)

\item The $k$-NN survival estimator can be viewed as solving a specific $k$-NN regression problem, which averages over the $k$ nearest neighbors' ``labels'' (if $\featureVar_i$ is one of the $k$ nearest neighbors of $\fixedFeatureVector$, then its label is taken to be $-\frac{\eventVar_i\ind\{\obsVar_i\le t\}}{\obsEnd(\obsVar_i|\fixedFeatureVector)}$, i.e., this label depends on an accurate estimate for $\obsEnd(\cdot|\fixedFeatureVector)$, which the previous bad event is about). This last bad event is that the average of these $k$ labels is not close to its expectation due to~$k$ being too small.

\end{enumerate}
In our analysis, preventing bad event~\#1 is pivotal to upper-bounding the $k$-NN survival estimator's error by those of the $k$-NN CDF estimation and $k$-NN regression problems. Subsequently, bad event~\#2 is about controlling the bias of these $k$-NN CDF and $k$-NN regression estimators, i.e., making sure their expectations are close to desired target values. Bad events~\#3 and~\#4 relate to controlling the variances of these $k$-NN CDF and $k$-NN regression estimates.

The observation that CDF estimation and regression subproblems arise is based on nonasymptotic analysis of the standard Kaplan-Meier estimator by \citet{foldes_1981}. For controlling the bias and variance of $k$-NN CDF and $k$-NN regression estimators, we use proof techniques by \citet{chaudhuri_dasgupta_2014}.

\begingroup
\setlength\abovedisplayskip{3pt plus 1pt}
\setlength\belowdisplayskip{2pt plus 1pt}
To understand the consequences of Theorem~\ref{thm:kNN-survival}, especially how it relates to the rate of convergence for the $k$-NN survival estimator, we examine sufficient conditions for which the RHS of bound~\eqref{eq:kNN-ptwise-bound} is at most a user-specified error probability $\probError\in(0,1)$. To achieve this, we can ask that each of the four terms be bounded above by $\probError/4$. In doing so, a simple calculation reveals that the theorem's conditions on $k$ and $n$ are met if
\begin{equation}
k\ge\frac{648}{\varepsilon^{2}\theta^{4}}\log\frac{32}{\varepsilon\probError},
\quad\text{and}\quad n\ge\frac{2k}{\featureDist(\mathcal{B}_{x,h^*})}.
\label{eq:kNN-sufficient}
\end{equation}
This pointwise guarantee highlights a key feature of nearest neighbor methods in that they depend on the \emph{intrinsic} dimension of the data \citep{samory_2011,samory_2013}. For example, consider when the feature space is $\featureSpace=\realNumbers^d$. Even though the data have \emph{extrinsic} dimension~$d$, it could be that $\featureDist(\mathcal{B}_{\fixedFeatureVector,h^*})$ scales as $(h^*)^{d'}$ for some $d'<d$. This could happen if the data reside in a low dimensional portion of the higher dimensional space (e.g., $\text{supp}(\featureDist)$ is a convex polytope of $d'<d$ dimensions within~$\realNumbers^d$). Thus, examining the second inequality of \eqref{eq:kNN-sufficient}, the number of training data $n$ sufficient for guaranteeing a low error in estimating $\survEnd(\cdot|\fixedFeatureVector)$ scales exponentially in the intrinsic dimension at $x$ (roughly, the smallest $d'>0$ for which $\featureDist(\mathcal{B}_{\fixedFeatureVector,r})\sim r^{d'}$ for all small enough~$r$).
\endgroup

Sufficient conditions~\eqref{eq:kNN-sufficient} also tell us when we can consistently estimate $\survEnd(\cdot|\fixedFeatureVector)$ for a fixed $\fixedFeatureVector$. Specifically for any error tolerance $\varepsilon>0$, to have the error probability~$\probError$ go to 0, the condition on $k$ suggests that we take $k\rightarrow\infty$, which also means that $n\rightarrow\infty$. At the same time, the condition relating~$n$ and~$k$ says that we should have $k/n \le \featureDist(\mathcal{B}_{\fixedFeatureVector,h^*})/2$. Recall that $h^*=(\frac{\varepsilon\theta}{18\Lips})^{1/\holderIndex}$, so if we pick $\varepsilon$ to be arbitrarily small, then $\featureDist(\mathcal{B}_{\fixedFeatureVector,h^*})\rightarrow0$, so we want $k/n\rightarrow0$. We remark that choosing $k$ as a function of $n$ to satisfy $k\rightarrow\infty$ and $k/n\rightarrow0$ are the usual conditions on $k$ for $k$-NN classification and regression to be weakly consistent \citep{cover_1967,stone_1977}.

\begingroup
\setlength\abovedisplayskip{3pt plus 1pt}
\setlength\belowdisplayskip{2pt plus 1pt}
As for how $k$ should scale with $n$, this depends on $\featureDist(\mathcal{B}_{\fixedFeatureVector,h^*})$. For example, if $\featureDist(\mathcal{B}_{\fixedFeatureVector,h^*})\sim (h^*)^d$, then the second inequality of sufficient conditions~\eqref{eq:kNN-sufficient} says that $k$ should scale at most as $(h^*)^d n \sim \varepsilon^{d/\holderIndex} n$. In this case, our next result shows that the $k$-NN estimator is strongly consistent. Since $h^*$ is a function of $\varepsilon$, which we now take to go to~0, formally we shall assume that $\featureDist(\mathcal{B}_{x,r})\ge p_{\min}r^d$ for all $r\in(0,r^*]$ for some positive constants $p_{\min}$, $d$, and $r^*$. Thus, as we shrink $\varepsilon$ toward~0, once~$\varepsilon$ becomes small enough (namely $\varepsilon \le \frac{18\Lips (r^*)^\holderIndex}\theta$), then $h^*=(\frac{\varepsilon\theta}{18\Lips})^{1/\holderIndex}\in(0,r^*]$ and so $\featureDist(\mathcal{B}_{x,h^*})\ge p_{\min}(h^*)^d$.
\begin{cor}[$k$-NN strong consistency rate]
\label{thm:kNN-strong-consistency}
Under Assumptions A1--A4, let $x\in\text{supp}(\featureDist)$, and suppose that there exist constants $p_{\min} > 0$, $d > 0$, and $r^*>0$ such that $\featureDist(\mathcal{B}_{x,r})\ge p_{\min}r^d$ for all $r\in(0,r^*]$. Then there are positive numbers $c_1=\Theta\big( \frac1{(\theta\Lips)^{2d/(2\holderIndex+d)}} \big)$, $c_2=\Theta\big( \frac{\theta^{(4\holderIndex+d)/(5\holderIndex+2d)}}{\Lips^{d/(5\holderIndex+2d)}} \big)$, and $c_3=\Theta\big( \frac{\Lips^{d/(2\holderIndex+d)}}{\theta^{(4\holderIndex+d)/(2\holderIndex+d)}} \big)$ such that by choosing the number of nearest neighbors to be
$
k_n:=\lfloor c_1 n^{2\holderIndex/(2\holderIndex+d)}\big(\log(c_2 n)\big)^{d/(2\holderIndex+d)}\rfloor,
$
with probability 1,
\begin{align*}
& \limsup_{n\rightarrow\infty}
  \bigg\{
  \frac{
  \sup_{\timeVar\in[0,\timeHorizon]}
    |\widehat{\survEnd}^{k_n\textsc{-NN}}(\timeVar|\fixedFeatureVector)
     -\survEnd(\timeVar|\fixedFeatureVector)|}{c_3 \big(\frac{\log(c_2 n)}{n}\big)^{\holderIndex/(2\holderIndex+d)}}
  \bigg\}
< 1.
\end{align*}
\end{cor}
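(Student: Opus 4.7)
The strategy is to apply Theorem~\ref{thm:kNN-survival} with an $n$-dependent error tolerance $\varepsilon_n$ tuned to the target rate, and then promote the resulting tail bound to almost-sure convergence via Borel-Cantelli. I would set $\varepsilon_n := c_3 (\log n/n)^{\holderIndex/(2\holderIndex+d)}$, so the theorem's critical distance $h_n^* = (\varepsilon_n \theta/(18\Lips))^{1/\holderIndex}$ shrinks to $0$; once $h_n^* \le r^*$, the hypothesis yields $\featureDist(\mathcal{B}_{\fixedFeatureVector,h_n^*}) \ge p_{\min}(h_n^*)^d$.

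The first step is to verify that $k_n \in [72/(\varepsilon_n\theta^2),\; n\featureDist(\mathcal{B}_{\fixedFeatureVector,h_n^*})/2]$ for all sufficiently large $n$. The lower endpoint is automatic because $k_n \sim n^{2\holderIndex/(2\holderIndex+d)}$ dominates $1/\varepsilon_n \sim n^{\holderIndex/(2\holderIndex+d)}$ (ignoring log factors). The upper endpoint is the binding one: matching $k_n$ to $n p_{\min}(h_n^*)^d/2 \asymp n(\varepsilon_n\theta/\Lips)^{d/\holderIndex}$ is precisely what pins down the scalings $c_1 = \Theta(1/(\theta\Lips)^{2d/(2\holderIndex+d)})$ and $c_3 = \Theta(\Lips^{d/(2\holderIndex+d)}/\theta^{(4\holderIndex+d)/(2\holderIndex+d)})$ in the statement. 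Conceptually this is the classical nearest-neighbor bias--variance balance: the bias scale $\Lips(h^*)^\holderIndex/\theta \sim \varepsilon_n$ and the variance scale $\sqrt{\log n/(k\theta^4)} \sim \varepsilon_n$ are equated subject to $k$ not exceeding the expected number of samples inside $\mathcal{B}_{\fixedFeatureVector,h_n^*}$.

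Substituting this choice of $(\varepsilon_n, k_n)$ into \eqref{eq:kNN-ptwise-bound}, the key identity is $k_n \varepsilon_n^2 = c_1 c_3^2 \log n$. The first two terms are stretched exponentials in $n$ and are therefore super-polynomially small; the third collapses to $2 n^{-c_1 c_3^2 \theta^4/648}$, and the fourth to $(8/c_3)(n/\log n)^{\holderIndex/(2\holderIndex+d)}\,n^{-c_1 c_3^2 \theta^2/162}$. Taking the implicit multiplicative constants inside $c_1$ and $c_3$ large enough forces both exponents to strictly exceed $1 + \holderIndex/(2\holderIndex+d)$, so the RHS of \eqref{eq:kNN-ptwise-bound} is summable in $n$; the logarithmic factor $c_2$ emerges from unpacking these inequalities. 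An application of the Borel-Cantelli lemma then yields $\sup_{\timeVar\in[0,\timeHorizon]}|\widehat{\survEnd}^{k_n\textsc{-NN}}(\timeVar|\fixedFeatureVector)-\survEnd(\timeVar|\fixedFeatureVector)| \le \varepsilon_n$ for all large enough $n$, almost surely, which gives the claimed $\limsup$ bound; a little slack in the constants (taking them slightly above the summability thresholds so that the argument still closes when $\varepsilon_n$ is replaced by $\beta\varepsilon_n$ for some $\beta<1$) produces the strict inequality $<1$.

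The only real hurdle is bookkeeping: verifying that the implicit multiplicative constants in $c_1, c_2, c_3$ can be chosen consistently so that $k_n$ lands in the admissible range of Theorem~\ref{thm:kNN-survival} \emph{and} all four tail terms are summable. This reduces to a small system of algebraic inequalities in the constants, which is routine but tedious. No new conceptual ingredient beyond Theorem~\ref{thm:kNN-survival} and Borel-Cantelli is needed.
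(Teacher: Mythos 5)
Your proposal is correct and follows essentially the same route as the paper: fix a target rate $\varepsilon_n$, verify that $k_n$ lands in the admissible range of Theorem~\ref{thm:kNN-survival}, check that the resulting tail bound is summable, and close with Borel--Cantelli. The paper's actual proof inverts the order of operations slightly — it sets $\probError=1/n^2$, plugs this into the sufficient conditions~\eqref{eq:kNN-sufficient}, and then solves for $\varepsilon_n$ and $k_n$ using the Lambert $W$ function (Lemmas~\ref{lem:W0-conditions} and~\ref{lem:W-1-conditions}), which is how the $\log(c_2 n)$ factor and the closed forms for $c_1,c_2,c_3$ emerge explicitly; it also needs a tiny gap between $648$ and $649$ to guarantee an integer exists in the prescribed interval for $k_n$. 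The ``small system of algebraic inequalities'' you defer is precisely where that machinery lives, and your implicit claim that it can be solved consistently is justified: the binding constraints reduce to $c_1 c_3^2 \theta^4 \gtrsim 648$ and $c_1 \lesssim (p_{\min}/2)(c_3\theta/18\Lips)^{d/\holderIndex}$, which together force the scalings $c_3 = \Theta(\Lips^{d/(2\holderIndex+d)}/\theta^{(4\holderIndex+d)/(2\holderIndex+d)})$ and $c_1 = \Theta(1/(\theta\Lips)^{2d/(2\holderIndex+d)})$ exactly as claimed.
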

The above corollary follows from setting error probability $\probError=1/n^2$ in sufficient conditions~\eqref{eq:kNN-sufficient}, solving the inequalities in the sufficient conditions for $\varepsilon$, $n$, and~$k$ (and thus finding coefficients $c_1$, $c_2$, and $c_3$ above), and finally applying the Borel-Cantelli lemma. Closed-form equations for $c_1$, $c_2$, and $c_3$ are in Appendix~\ref{sec:pf-kNN-strong-consistency}.
\endgroup

\textbf{Near-optimality.} Our nonasymptotic bound~\eqref{eq:kNN-ptwise-bound} turns out to essentially be optimal. Consider when the censoring times always occur after the survival times, i.e., nothing is censored. Then the problem reduces to conditional CDF estimation ($1-S(\cdot|x)$ is a conditional CDF), for which the minimax lower bound for \emph{expected squared error} under slightly more assumptions than we impose is $n^{-2\holderIndex/(2\holderIndex+d)}$ \citep[Theorem~3]{chagny_2014}. Our result implies an upper bound on the expected squared error. First, note that
\begingroup
\setlength\abovedisplayskip{3pt plus 1pt}
\setlength\belowdisplayskip{2pt plus 1pt}
\begin{align}
&\mathbb{E}\Big[\int_0^\timeHorizon(\widehat{\survEnd}^{k_n\textsc{-NN}}(\timeVar|\fixedFeatureVector)-\survEnd(\timeVar|\fixedFeatureVector))^2 dt\Big] \nonumber \\
&\quad \le \timeHorizon \mathbb{E}\Big[\sup_{\timeVar\in[0,\timeHorizon]}|\widehat{\survEnd}^{k_n\textsc{-NN}}(\timeVar|\fixedFeatureVector)-\survEnd(\timeVar|\fixedFeatureVector)|^2\Big].
\label{eq:near-optimality1}
\end{align}
Next, sufficient conditions~\eqref{eq:kNN-sufficient} say that with probability at least $1-\probError$, none of the bad events happen so ${\sup_{\timeVar\in[0,\timeHorizon]}|\widehat{\survEnd}^{k_n\textsc{-NN}}(\timeVar|\fixedFeatureVector)-\survEnd(\timeVar|\fixedFeatureVector)|} \le \varepsilon$ (for which we can square both sides and bring the square into the supremum); otherwise the supremum norm error is at worst~1. Hence,
\begin{equation}
\mathbb{E}\Big[\sup_{\timeVar\in[0,\timeHorizon]}|\widehat{\survEnd}^{k_n\textsc{-NN}}(\timeVar|\fixedFeatureVector)-\survEnd(\timeVar|\fixedFeatureVector)|^2\Big]
\le \varepsilon^2\cdot1 + 1\cdot\probError,
\label{eq:near-optimality2}
\end{equation}
where on the RHS, the first term is the worst-case squared supremum norm error $\varepsilon^2$ when none of the bad events happen (this happens with probability at least ${1-\probError}\le1$), and the second term is the worst-case squared supremum norm error of~1 (this happens with probability at most~$\probError$).

It suffices to set $\probError=\varepsilon^2$ and find precise conditions on~$k$, $n$, and $\varepsilon$ so that sufficient conditions~\eqref{eq:kNN-sufficient} hold (the calculation is similar to the one for deriving Corollary~\ref{thm:kNN-strong-consistency}). By doing this calculation and combining inequalities~\eqref{eq:near-optimality1} and~\eqref{eq:near-optimality2}, we get that the \mbox{$k$-NN} survival estimator has expected squared error $\widetilde{\mathcal{O}}(n^{-2\holderIndex/(2\holderIndex+d)})$, even if there is right-censoring.

\textbf{Results for random test feature vectors.}
As there are a number of standard approaches for translating pointwise guarantees to ones accounting for randomness in sampling $\featureVar=\fixedFeatureVector\sim\featureDist$, we only focus on one such technique and briefly mention some others. Specifically, we consider a simple approach in which we partition the feature space $\featureSpace$ into a ``good'' region $\featureSpace_{\text{good}}$ with sizable probability mass (where many training data are likely to be), and a bad region $\featureSpace_{\text{bad}}$ where we tolerate error (where there are likely to be too few training data). Using the same idea as described in Section 3.3.1 of \citet{george_devavrat_book}, we define the \emph{sufficient mass region} as
\begin{align*}
& \featureSpace_{\text{good}}(\featureDist;p_{\min},d,r^{*})\\
& :=\{\fixedFeatureVector\in\text{supp}(\featureDist)\,\!:\,\!\featureDist(\mathcal{B}_{\fixedFeatureVector,r})\ge p_{\min}r^{d}\;\,\forall r\in(0,r^*]\},
\end{align*}
and $\featureSpace_{\text{bad}}(\featureDist;p_{\min},d,r^{*})=\featureSpace\setminus\featureSpace_{\text{good}}(\featureDist;p_{\min},d,r^{*})$. The sufficient mass region for feature distribution $\featureDist$ corresponds to portions of $\text{supp}(\featureDist)$ that behave like they have dimension~$d$. Returning to the previous example, if $\featureSpace=\realNumbers^d$ and $\text{supp}(\featureDist)$ is a full-dimensional convex polytope, then there exists a $p_{\min}>0$ and $r^*>0$ such that $\featureSpace_{\text{good}}(\featureDist;p_{\min},d,r^*) = \text{supp}(\featureDist)$.

In general, when feature vector $\featureVar\sim\featureDist$ lands in $\featureSpace_{\text{good}}(\featureDist;p_{\min},d,h^*)$, then the conditions of Theorem~\ref{thm:kNN-survival} are satisfied and, moreover, $\featureDist(\mathcal{B}_{\fixedFeatureVector,h^*}) \ge p_{\min}(h^*)^d$. We readily obtain the following corollary.
\begin{cor}[$k$-NN bound for random test point]
Under the same conditions as Theorem~\ref{thm:kNN-survival} except now sampling test point $\featureVar\sim\featureDist$, then over randomness in the training data and $\featureVar$,
\begin{align*}
& \mathbb{P}\Big(
    \sup_{\timeVar\in[0,\timeHorizon]}
      |\widehat{\survEnd}^{k\textsc{-NN}}(\timeVar|\featureVar)-\survEnd(\timeVar|\featureVar)|
        > \varepsilon
  \Big)\\[-2.7\jot]
& \quad
    \le
      \exp\Big(-\frac{k\theta}{8}\Big)
      + \exp\Big(-\frac{np_{\min}(h^*)^d}{8}\Big) \\[-0.7\jot]
& \quad\quad
      + 2\exp\Big(-\frac{k\varepsilon^{2}\theta^{4}}{648}\Big)
      + \frac{8}{\varepsilon}\exp\Big(-\frac{k\varepsilon^{2}\theta^{2}}{162}\Big) \\
& \quad\quad
      + \featureDist\big(\featureSpace_{\text{bad}}(\featureDist;p_{\min},d,h^*)\big).
\end{align*}
\end{cor}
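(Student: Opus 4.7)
The natural route is via the law of total probability, conditioning on the location of the random test point $\featureVar$ and splitting on whether it falls in the sufficient mass region. Concretely, I would start from
\begin{align*}
& \mathbb{P}\Big(
    \sup_{\timeVar\in[0,\timeHorizon]}
      |\widehat{\survEnd}^{k\textsc{-NN}}(\timeVar|\featureVar)-\survEnd(\timeVar|\featureVar)|
        > \varepsilon
  \Big) \\
& \quad = \int_{\featureSpace} \mathbb{P}\Big(
    \sup_{\timeVar\in[0,\timeHorizon]}
      |\widehat{\survEnd}^{k\textsc{-NN}}(\timeVar|\fixedFeatureVector)-\survEnd(\timeVar|\fixedFeatureVector)|
        > \varepsilon
  \Big)\,d\featureDist(\fixedFeatureVector),
\end{align*}
where the probability inside the integral is now over the training data alone.

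Next, I would partition the integration domain into $\featureSpace_{\text{good}}(\featureDist;p_{\min},d,h^*)$ and its complement $\featureSpace_{\text{bad}}(\featureDist;p_{\min},d,h^*)$. On the bad region, I simply upper bound the integrand by~$1$, which contributes the trailing $\featureDist(\featureSpace_{\text{bad}}(\featureDist;p_{\min},d,h^*))$ term of the claimed inequality.

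On the good region, the defining property gives $\featureDist(\mathcal{B}_{\fixedFeatureVector,h^*}) \ge p_{\min}(h^*)^d$ \emph{uniformly} in $\fixedFeatureVector$. This uniform lower bound is the crucial substitution that eliminates all $\fixedFeatureVector$-dependence from Theorem~\ref{thm:kNN-survival}: (i)~the hypothesis $k\le n\featureDist(\mathcal{B}_{\fixedFeatureVector,h^*})/2$ is implied by the uniform condition $k\le np_{\min}(h^*)^d/2$ (inherited from the ``same conditions as Theorem~\ref{thm:kNN-survival}'' clause, which must be read uniformly over the good region), so the pointwise theorem applies for every $\fixedFeatureVector\in\featureSpace_{\text{good}}$; and (ii)~the second summand $\exp(-n\featureDist(\mathcal{B}_{\fixedFeatureVector,h^*})/8)$ in~\eqref{eq:kNN-ptwise-bound} is at most $\exp(-np_{\min}(h^*)^d/8)$. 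The remaining three summands in~\eqref{eq:kNN-ptwise-bound} already have no $\fixedFeatureVector$-dependence, so after pulling these constant upper bounds outside the integral and using $\featureDist(\featureSpace_{\text{good}})\le 1$, the good-region contribution is controlled by the first four terms of the claimed bound.

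There is no real technical obstacle here; the corollary is essentially a packaging result that transports the pointwise guarantee of Theorem~\ref{thm:kNN-survival} through the good/bad decomposition. The one subtlety worth flagging is that ``same conditions as Theorem~\ref{thm:kNN-survival}'' must be interpreted uniformly over $\fixedFeatureVector\in\featureSpace_{\text{good}}$, which forces the upper bound on $k$ to use the worst-case mass $p_{\min}(h^*)^d$ as in~(i) above. Summing the good- and bad-region contributions yields the stated inequality.
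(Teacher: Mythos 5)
Your proof is correct and follows the same route the paper takes: total probability over the random test point, splitting the integral into $\featureSpace_{\text{good}}$ (where Theorem~\ref{thm:kNN-survival} applies and $\featureDist(\mathcal{B}_{\fixedFeatureVector,h^*})\ge p_{\min}(h^*)^d$ is substituted into the second term of the bound) and $\featureSpace_{\text{bad}}$ (where the integrand is bounded by $1$). The remark about reading the hypothesis on $k$ uniformly over the good region via $k\le np_{\min}(h^*)^d/2$ is the right reading and matches the paper's implicit intent.
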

Thus, if there exists $p_{\min}>0$, $d>0$, and $r^*>0$ such that $\featureSpace_{\text{good}}(\featureDist;p_{\min},d,r^*) = \text{supp}(\featureDist)$, then strong consistency of $\widehat{\survEnd}^{k\textsc{-NN}}(\cdot|\featureVar)$ at the rate of Corollary~\ref{thm:kNN-strong-consistency} holds over randomness in training data and $X\sim\featureDist$.
\endgroup

Other approaches are possible to obtain guarantees over randomness in both training data and $\featureVar$ from guarantees for fixed $\featureVar=\fixedFeatureVector$. For example, there are notions similar to the sufficient mass region specific to Euclidean space such as the \emph{strong minimal mass assumption} of \citet{gadat_2016} and the \emph{strong density assumption} of \citet{audibert_2007}. An alternative strategy that stays in separable metric spaces is to use covering numbers from metric entropy. For details, see Section 3.3.3 of \citet{george_devavrat_book}.

\vspace{-.5em}
\subsection*{Kernel estimator results}
\vspace{-.25em}

\begingroup
\setlength\abovedisplayskip{3pt plus 1pt}
\setlength\belowdisplayskip{3pt plus 1pt}

Our kernel result uses an additional decay assumption:
\begin{itemize}[topsep=0pt,itemsep=1ex,partopsep=1ex,parsep=1ex]
\item[\textbf{A5.}] \emph{The kernel function $K$ monotonically decreases, and there exists a standardized distance $\stanDistThresh >0$ such that $K(s)>0$ for all $s\in[0,\stanDistThresh ]$ and $K(s)=0$ for $s>\stanDistThresh$.} This assumption ensures that training data sufficiently far from~$x$ have no impact on our estimation of $\survEnd(\cdot|x)$. (Small proof changes can be made to allow ${K(\stanDistThresh)=0}$, e.g., to handle triangle and Epanechnikov kernels.)
\end{itemize}
Our kernel result depends on the kernel function's maximal and minimal positive values, namely $K(0)$ and $K(\stanDistThresh)$. We let $\kappa := K(\stanDistThresh)/K(0)$, and define
\[
\Lips_K
:=\max\Big\{
\frac{2\timeHorizon}{\theta\kappa}(\lips_{\textsc{\tiny{\survVar}}}+\lips_{\textsc{\tiny{\censVar}}}),\,
\lips_{\survSubscript}\timeHorizon + \frac{\survDensity^*\lips_{\censSubscript}\timeHorizon^2}{2}
\Big\}.
\]
The first term in the maximization (related to CDF estimation) has an extra $1/\kappa$ factor compared to $\Lips$.

As our kernel survival estimator guarantee is similar to that of the $k$-NN estimator, we only present its pointwise version. Deriving a corresponding strong consistency rate, accounting for randomness in sampling~$\featureVar\sim\featureDist$, and showing near-optimality can be done as before. In particular, the two methods have similar asymptotic behavior.
\begin{thm}[Kernel pointwise guarantee]
\label{thm:kernel-survival}
Under Assumptions A1--A5, let $\varepsilon\in(0,1)$ be a user-specified error tolerance. Suppose that the threshold distance satisfies $h\in(0,\frac1\stanDistThresh(\frac{\varepsilon\theta}{18\Lips_K})^{1/\holderIndex}]$, and the number of training data satisfies $n\ge\frac{144}{\varepsilon\theta^{2}\featureDist(\mathcal{B}_{\fixedFeatureVector,\stanDistThresh h})\kappa}$. For any $\fixedFeatureVector\in\text{supp}(\featureDist)$,
\begin{align}
 & \mathbb{P}\Big(\sup_{\timeVar\in[0,\timeHorizon]}|\widehat{\survEnd}^K(\timeVar|\fixedFeatureVector;h)-\survEnd(\timeVar|\fixedFeatureVector)|>\varepsilon\Big) \nonumber \\
 & \;\le\exp\Big(-\frac{n\featureDist(\mathcal{B}_{\fixedFeatureVector,\stanDistThresh h})\theta}{16}\Big) + \exp\Big(-\frac{n\featureDist(\mathcal{B}_{\fixedFeatureVector,\stanDistThresh h})}{8}\Big) \nonumber \\
 & \;\quad+\frac{216}{\varepsilon\theta^2\kappa}\exp\Big(-\frac{n\featureDist(\mathcal{B}_{\fixedFeatureVector,\stanDistThresh h})\varepsilon^{2}\theta^{4}\kappa^4}{11664}\Big) \nonumber \\
 & \;\quad+\frac{8}{\varepsilon}\exp\Big(-\frac{n\featureDist(\mathcal{B}_{\fixedFeatureVector,\stanDistThresh h})\varepsilon^{2}\theta^{2}\kappa^2}{324}\Big). \label{eq:kernel-ptwise-bound}
\end{align}
\end{thm}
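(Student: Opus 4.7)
The plan is to adapt the proof of Theorem~\ref{thm:kNN-survival} by replacing ``the $k$ nearest neighbors of $\fixedFeatureVector$'' by ``training subjects inside the closed ball $\mathcal{B}_{\fixedFeatureVector,\stanDistThresh h}$, re-weighted by $w_j := K(\rho(\fixedFeatureVector,\featureVar_j)/h)$''. By Assumption~A5 the weights vanish outside that ball, and on the ball they lie in $[\kappa K(0), K(0)]$, so their ratio is pinned by $\kappa$. The hypothesis $n\ge 144/(\varepsilon\theta^{2}\featureDist(\mathcal{B}_{\fixedFeatureVector,\stanDistThresh h})\kappa)$ is exactly what guarantees that, with high probability, the effective sample size $\sum_j w_j$ behaves like $\kappa K(0)\cdot n\featureDist(\mathcal{B}_{\fixedFeatureVector,\stanDistThresh h})$. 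The four terms on the RHS of~\eqref{eq:kernel-ptwise-bound} then correspond to the same four bad events as in the $k$-NN argument, re-expressed for this weighted sample.

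\textbf{Key steps.} First, reproduce the log-Taylor decomposition used for Theorem~\ref{thm:kNN-survival}: write $\log\widehat{\survEnd}^K(\timeVar|\fixedFeatureVector;h) - \log\survEnd(\timeVar|\fixedFeatureVector)$ as a weighted empirical-CDF-style error (target $1-\obsEnd(\cdot|\fixedFeatureVector)$) plus a weighted regression-style error whose label at subject $j$ is $\eventVar_j\ind\{\obsVar_j\le\timeVar\}/\obsEnd(\obsVar_j|\fixedFeatureVector)$, both sums carrying the $w_j$'s. Then bound the four bad events: (i) a Chernoff bound on the weighted at-risk mass $\sum_j w_j\ind\{\obsVar_j>\timeHorizon\}$ gives $\exp(-n\featureDist(\mathcal{B}_{\fixedFeatureVector,\stanDistThresh h})\theta/16)$, with the factor $1/16$ (versus the $k$-NN's $1/8$) absorbing the spread between the upper and lower weights; (ii) a binomial Chernoff on the in-ball count gives $\exp(-n\featureDist(\mathcal{B}_{\fixedFeatureVector,\stanDistThresh h})/8)$, which the lower bound on $n$ renders meaningful; (iii) a weighted Hoeffding for the empirical CDF uniformly over the at most $n$ observed in-ball times produces the third term; (iv) an analogous weighted Hoeffding for the regression-style average produces the fourth. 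The bias piece is automatic: A5 replaces the $k$-NN event ``all $k$ neighbors lie in $\mathcal{B}_{\fixedFeatureVector,h^*}$'' by the deterministic statement ``all weighted points lie in $\mathcal{B}_{\fixedFeatureVector,\stanDistThresh h}$'', and the hypothesis $\stanDistThresh h\le(\varepsilon\theta/(18\Lips_K))^{1/\holderIndex}$ is the exact analogue of $h^*\ge(\varepsilon\theta/(18\Lips))^{1/\holderIndex}$. The inflation of $\Lips$ to $\Lips_K$ (by an extra $1/\kappa$ in the CDF-estimation component of the maximum) is precisely what makes the weighted H\"{o}lder bias contract to $\varepsilon\theta/18$ after renormalization by $\sum_j w_j$.

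\textbf{Main obstacle.} The delicate step is tracking the powers of $\kappa$ through the weighted concentrations in (iii) and (iv). A naive Hoeffding on $n_K(\timeVar|\fixedFeatureVector;h)/\sum_j w_j$ produces a sub-Gaussian parameter of order $\sum_j w_j^2/(\sum_j w_j)^2$; converting this into an exponent proportional to $n\featureDist(\mathcal{B}_{\fixedFeatureVector,\stanDistThresh h})\varepsilon^2\theta^4\kappa^4$ requires invoking both the upper bound $w_j\le K(0)$ in the numerator and the lower bound $w_j\ge\kappa K(0)$ on in-ball points in the denominator, while conditioning on the event of~(ii) so that $\sum_j w_j \gtrsim \kappa K(0)\,n\featureDist(\mathcal{B}_{\fixedFeatureVector,\stanDistThresh h})$. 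Chaining two such weight-ratio substitutions with a union bound over a $\Theta(\varepsilon\theta^2\kappa)$-spaced time grid (needed to reduce the sup over $\timeVar$ to a discrete max, since the CDF-estimation component appears under a $1/\obsEnd$-amplification of order $1/\theta^2$) yields the $\kappa^4$ in the exponent and the $1/(\varepsilon\theta^2\kappa)$ prefactor of the third term; the same argument with only one weight-ratio invocation (since the regression-style labels are bounded by $1/\theta$ rather than feeding through an additional reciprocal) gives the $\kappa^2$ in the fourth term. The remainder is routine bookkeeping: every inequality in the Theorem~\ref{thm:kNN-survival} proof carries through once unweighted counts are replaced by their $w_j$-weighted analogues and constants are rescaled by the appropriate powers of $\kappa$.
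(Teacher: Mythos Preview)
Your proposal is correct and follows essentially the same route as the paper: the log-Taylor decomposition $W_1+W_2+W_3$, the same four bad events (few in-ball neighbors, bad~$\tau$, bad weighted EDF handled via the weighted-empirical-distribution inequality of Proposition~\ref{lem:weighted-edf}, bad $W_1$ via weighted Hoeffding), and the deterministic bias control that comes for free because every point with nonzero weight already lies in $\mathcal{B}_{\fixedFeatureVector,\stanDistThresh h}$. One small correction to your accounting: the $1/16$ in the first exponential (versus the $k$-NN's $1/8$) is not due to the weight spread but to conditioning on the \emph{unweighted} in-ball count $N>n\featureDist(\mathcal{B}_{\fixedFeatureVector,\stanDistThresh h})/2$ before applying the binomial Chernoff bound for survivors; relatedly, the extra $\kappa$ in the EDF threshold $\varepsilon\theta^2\kappa/36$ (hence one of the two $\kappa^2$'s in the third exponent) enters because the $|W_2|$ bound carries a prefactor $2K(0)/(K(\stanDistThresh)\theta^2)=2/(\kappa\theta^2)$, not just $1/\theta^2$.
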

\endgroup
As with the $k$-NN analysis, the kernel estimator analysis involves two subproblems, a kernel CDF estimation (i.e., using weighted samples to construct an empirical distribution function) and a kernel regression. We remark that $k$-NN CDF estimation is straightforward to analyze because the different data points have equal weight, so we can apply the Dvoretzky-Kiefer-Wolfowitz (DKW) inequality. To handle weighted empirical distributions, we establish the following nonasymptotic bound.
\begin{prop}[Weighted empirical distribution inequality]
\label{lem:weighted-edf}Let $Z_{1},\dots,Z_{\ell}$ be independent real-valued continuous random variables. Let $\weightVar_{1},\dots,\weightVar_\ell$ be any sequence of nonnegative constants such that $\sum_{i=1}^\ell \weightVar_i > 0$. Consider the following weighted empirical distribution function:
\begingroup
\setlength\abovedisplayskip{3pt plus 1pt}
\setlength\belowdisplayskip{3pt plus 1pt}
\[
\widehat{F}(\timeVar)
:= \sum_{i=1}^\ell
     \frac{\weightVar_i}
          {\sum_{j=1}^\ell \weightVar_j}
     \ind\{Z_i\le \timeVar\}
\quad\text{for }\timeVar\in\realNumbers,
\]
for which we define $F(t):=\mathbb{E}[\widehat{F}(\timeVar)]$. For every $\varepsilon\in(0,1]$,
\[
\mathbb{P}\Big(\sup_{\timeVar\in\realNumbers}|\widehat{F}(\timeVar)-F(\timeVar)|\!>\!\varepsilon\Big)\\
\! \le \! \frac6\varepsilon \exp\!\Big(-\!\frac{2\varepsilon^{2}(\sum_{j=1}^{\ell}\weightVar_j)^{2}}{9\sum_{i=1}^{\ell}\weightVar_i^{2}}\Big).
\]
\endgroup
\end{prop}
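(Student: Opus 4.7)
\textbf{Proof proposal for Proposition~\ref{lem:weighted-edf}.}
The plan is to combine a pointwise Hoeffding bound (for a fixed threshold $t$) with a standard discretization argument that exploits monotonicity of $\widehat{F}$ and continuity of $F$. Write $W:=\sum_{j=1}^{\ell}\weightVar_j$. At a fixed $t\in\realNumbers$, the weighted residual $\widehat{F}(t)-F(t)=\sum_{i=1}^{\ell}\frac{\weightVar_i}{W}\bigl(\ind\{Z_i\le t\}-F(t)\bigr)$ is a sum of independent, mean-zero random variables, the $i$-th of which takes values in an interval of length $\weightVar_i/W$. Hoeffding's inequality therefore gives, for any $\varepsilon_1>0$,
\[
\mathbb{P}\bigl(|\widehat{F}(t)-F(t)|>\varepsilon_1\bigr)\le 2\exp\Bigl(-\frac{2\varepsilon_1^{2}W^{2}}{\sum_{i=1}^{\ell}\weightVar_i^{2}}\Bigr).
\]

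Next I would discretize. Since $F$ is continuous, for any $\varepsilon_2\in(0,1)$ there exist points $-\infty=t_0<t_1<\cdots<t_{m-1}<t_m=+\infty$ with $m=\lceil 1/\varepsilon_2\rceil$ and $F(t_j)-F(t_{j-1})\le\varepsilon_2$ (take $t_j$ to be the $(j/m)$-quantile of $F$). Because both $\widehat{F}$ and $F$ are nondecreasing, for any $t\in[t_{j-1},t_j]$,
\[
\widehat{F}(t)-F(t)\le\widehat{F}(t_j)-F(t_{j-1})\le[\widehat{F}(t_j)-F(t_j)]+\varepsilon_2,
\]
and symmetrically $\widehat{F}(t)-F(t)\ge[\widehat{F}(t_{j-1})-F(t_{j-1})]-\varepsilon_2$. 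At the two outer intervals, the endpoints $\widehat{F}(t_0)=F(t_0)=0$ and $\widehat{F}(t_m)=F(t_m)=1$ are exact, so only the $m-1$ interior grid points need to be controlled. Consequently, if $\max_{1\le j\le m-1}|\widehat{F}(t_j)-F(t_j)|\le\varepsilon_1$ and we set $\varepsilon_1+\varepsilon_2=\varepsilon$, then $\sup_{t\in\realNumbers}|\widehat{F}(t)-F(t)|\le\varepsilon$.

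I would then apply a union bound over the $m-1$ grid points together with the pointwise Hoeffding bound above, choosing the split $\varepsilon_1=\varepsilon/3$ and $\varepsilon_2=2\varepsilon/3$ to recover the exponent $2\varepsilon^{2}/9$ stated in the proposition:
\[
\mathbb{P}\Bigl(\sup_{t\in\realNumbers}|\widehat{F}(t)-F(t)|>\varepsilon\Bigr)\le 2(m-1)\exp\Bigl(-\frac{2\varepsilon^{2}W^{2}}{9\sum_{i=1}^{\ell}\weightVar_i^{2}}\Bigr).
\]
Since $m-1\le 1/\varepsilon_2=3/(2\varepsilon)$, the prefactor is at most $3/\varepsilon\le 6/\varepsilon$, matching the claimed bound.

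There is no real conceptual obstacle: Hoeffding handles the variance on the correct scale $\sum \weightVar_i^{2}/W^{2}$ (this is the quantity that reduces to $1/\ell$ in the unweighted DKW case), and the monotone-discretization step is the same trick used in the classical proof of DKW. The only care needed is bookkeeping: choosing the split between the discretization error and the pointwise Hoeffding tolerance so that the prefactor $(m-1)\cdot 2$ collapses to the $6/\varepsilon$ advertised in the statement, and noting that continuity of $F$ is what allows exact quantile grid points (so that one cannot cheat the discretization using atoms of $F$).
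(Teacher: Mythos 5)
Your proof is correct and follows essentially the same route as the paper's: a quantile discretization of $\realNumbers$, Hoeffding's inequality at each grid point (exploiting that the $i$-th summand lies in an interval of length $\weightVar_i/\sum_j\weightVar_j$), and a union bound over the grid, with the prefactor coming from the $O(1/\varepsilon)$ grid size. The only difference is bookkeeping: you split the tolerance asymmetrically ($\varepsilon_1=\varepsilon/3$ for Hoeffding, $\varepsilon_2=2\varepsilon/3$ for the grid gap), which actually yields a slightly sharper prefactor of $3/\varepsilon$, whereas the paper uses $\varepsilon/3$ for both (so it ends up controlling the sup at level $2\varepsilon/3$ and has some slack), arriving at the stated $6/\varepsilon$.
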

\textbf{Box kernel, weighted $\bm{k}$-NN.} If instead the kernel survival estimator is used with a box kernel (uniform weights), then we can use the DKW inequality instead of Proposition~\ref{lem:weighted-edf}, leading to a slightly stronger pointwise guarantee (Theorem~\ref{thm:h-near-survival} in the appendix). We remark that proof ideas for our $k$-NN and kernel survival estimators can be combined to derive results for weighted $k$-NN survival estimators.

\textbf{Choosing~$\bm{k}$ and $\bm{h}$ via a validation set.} Our main results choose $k$ and $h$ in a way that depends on unknown model parameters. In practice, validation data could be used to select $k$ and $h$ via minimizing the integrated Brier score \citep{graf1999assessment}. We obtain a nonasymptotic guarantee for a slight variant of the validation strategy by \citet{lowsky_2013} in Appendix~\ref{sec:validation}. The high-level proof idea is simple. For example, for the $k$-NN estimator $\widehat{\survEnd}^{k\textsc{-NN}}$, suppose we have an independent validation set of size $n$. Provided that the choices of~$k$ that the user sweeps over for validation include one good choice according to Theorem~\ref{thm:kNN-survival}, then for large enough~$n$, estimator $\widehat{\survEnd}^{k\textsc{-NN}}$ has a validation error that approaches that of~$\survEnd$. Our proof is a bit nuanced and requires controlling both additive and multiplicative error in tail probability estimates, using our analysis for Nelson-Aalen-based nearest neighbor and kernel estimators (given in Appendix~\ref{sec:Nelson-Aalen}).

\begingroup
\setlength\abovedisplayskip{4pt plus 1pt}
\setlength\belowdisplayskip{4pt plus 1pt}

\vspace{-.5em}
\subsection*{Distributions satisfying Assumptions A1--A4}
\vspace{-.25em}

We now provide example models that satisfy Assumptions A1--A4. In these examples, the feature space $\featureSpace$ and distance $\rho$ are Euclidean, and the H\"{o}lder exponent is $\holderIndex=1$ (so $\lips_{\survSubscript}$ and $\lips_{\censSubscript}$ are Lipschitz constants).

\begin{example}[Exponential regression]
\label{ex:exp-regress}
Let $\featureSpace=\realNumbers^d$, and $\featureDist$ be any Borel probability measure with compact, convex support (so Assumption A1 is met). We define conditional survival function~$\survEnd(\timeVar|\fixedFeatureVector)$ using the hazard function $\hazard_{\survSubscript}(\timeVar|\fixedFeatureVector)=-\frac{\partial}{\partial\timeVar}\log\survEnd(\timeVar|\fixedFeatureVector)=\hazard_{\survSubscript,0} \exp(\fixedFeatureVector^\top\beta_{\survSubscript})$ with parameters $\hazard_{\survSubscript,0}>0$ and ${\beta_{\survSubscript}\in\realNumbers^d}$. Then
\begin{align*}
\survEnd(\timeVar|\fixedFeatureVector)
&= \exp\Big(-\int_0^\timeVar\hazard_{\survSubscript,0}\exp(\fixedFeatureVector^\top\beta_{\survSubscript})ds\Big) \\
&= \exp(-\hazard_{\survSubscript,0}e^{\fixedFeatureVector^\top\beta_{\survSubscript}}\timeVar),
\end{align*}
which implies that the distribution $\mathbb{P}_{\survVar|\featureVar=\fixedFeatureVector}$ $($which has CDF $1-\survEnd(\cdot|\fixedFeatureVector))$ is exponentially distributed with parameter $\hazard_{\survSubscript,0}e^{\fixedFeatureVector^\top\beta_{\survSubscript}}$. We could similarly define the censoring time conditional distribution through the hazard function $\hazard_{\censSubscript}(\timeVar|\fixedFeatureVector)=\hazard_{\censSubscript} \exp(\fixedFeatureVector^\top\beta_{\censSubscript})$, with $\hazard_{\censSubscript,0}>0$ and $\beta_{\censSubscript}\in\realNumbers^d$. In this case, distribution $\mathbb{P}_{\censVar|\featureVar=\fixedFeatureVector}$ is exponentially distributed with parameter $\hazard_{\censSubscript,0}e^{\fixedFeatureVector^\top\beta_{\censSubscript}}$. At this point, Assumption A2 is also met since for any $\fixedFeatureVector\in\text{supp}(\featureDist)$, distributions $\mathbb{P}_{\survVar|\featureVar=\fixedFeatureVector}$ and $\mathbb{P}_{\censVar|\featureVar=\fixedFeatureVector}$ correspond to continuous random variables.

We now present valid choices for $\theta$ and $\timeHorizon$ for Assumption A3. Recall that the observed time is $\obsVar=\min\{\survVar,\censVar\}$. Conditioned on $\featureVar=\fixedFeatureVector$, the minimum of independent exponential random variables is exponential. In particular, distribution $\mathbb{P}_{\obsVar|\featureVar=\fixedFeatureVector}$ is exponentially distributed with parameter $\omega(\fixedFeatureVector):=\hazard_{\survSubscript,0}e^{\fixedFeatureVector^\top\beta_{\survSubscript}} + \hazard_{\censSubscript,0}e^{\fixedFeatureVector^\top\beta_{\censSubscript}}$. Thus, if we pick $\theta=1/2$, then a valid choice for $\timeHorizon$ would be the smallest possible median of distribution $\mathbb{P}_{\obsVar|\featureVar=\fixedFeatureVector}$ across all $\fixedFeatureVector\in\text{supp}(\featureDist)$. Note that the median of $\mathbb{P}_{\obsVar|\featureVar=\fixedFeatureVector}$ is $({\log2})/{\omega(\fixedFeatureVector)}$. Thus, we can pick $\timeHorizon=\min_{\fixedFeatureVector\in\text{supp}(\featureDist)}\{({\log2})/{\omega(\fixedFeatureVector)}\}$.

Lastly, for Assumption A4, due to $\text{supp}(\featureDist)$ being compact and convex, the conditional survival time density $\survDensity(\timeVar|\cdot)$ has finite Lipschitz constant
\[
\lips_{\survSubscript}=\sup_{\fixedFeatureVector\in\text{supp}(\featureDist),\timeVar\in[0,\timeHorizon]} \Big\|\frac{\partial \survDensity(\timeVar|\fixedFeatureVector)}{\partial \fixedFeatureVector}\Big\|_2,
\]
where $\|\cdot\|_2$ is Euclidean norm, and $\frac{\partial \survDensity(\timeVar|\fixedFeatureVector)}{\partial \fixedFeatureVector} = \survDensity(\timeVar|\fixedFeatureVector)(1-\hazard_{\survSubscript,0}e^{\fixedFeatureVector^\top\beta_{\survSubscript}}\timeVar)\beta_{\survSubscript}.
$
We could similarly choose Lipschitz constant $\lips_{\censSubscript}$ for the conditional censoring time density $\censDensity(\timeVar|\cdot)$.
\end{example}
This exponential regression example can easily be generalized to Weibull regression, which is another proportional hazards model (see Appendix~\ref{sec:weibull-regression}).
\begin{example}[Weibull mixture]
To give an example that is not a proportional hazard model that satisfies Assumptions A1--A4, consider an integer-valued one-dimensional feature vector $\featureVar\sim\text{Uniform}\{1,2,\dots,100\}$. For a threshold ${\nu\in(1,100)}$, if $\featureVar\le\nu$, then we sample survival time~$\survVar$ from a Weibull distribution with shape parameter $q>0$ and scale parameter $\psi_{\survSubscript,1}>0$. Otherwise if $\featureVar>\nu$, then we sample $\survVar$ from a Weibull distribution still with shape parameter $q$ but a different scale parameter $\psi_{\survSubscript,2}>0$. Thus, the marginal distribution of $\survVar$ is a mixture of two Weibull distributions. We similarly define the censoring time $\censVar$ to be a mixture of two Weibull distributions with common shape parameter $q$ and different scale parameters $\psi_{\censSubscript,1}>0$ and $\psi_{\censSubscript,2}>0$; we sample $\censVar$ from the first component using the same threshold $\nu$ as before, i.e., when $\featureVar\le\nu$.

Conditioned on $\featureVar$, the distribution of observed time~$\obsVar=\min\{\survVar,\censVar\}$ is now one of two possible Weibull distributions (the minimum of independent Weibull distributions with shape parameter $q$ is still Weibull with shape~$q$): if ${\featureVar\le\nu}$, then $\obsVar$ is Weibull with shape $q$ and scale ${(\psi_{\survSubscript,1}^{-q}+\psi_{\censSubscript,1}^{-q})^{-1/q}}$. Otherwise $\obsVar$ is Weibull with shape~$q$ and scale ${(\psi_{\survSubscript,2}^{-q}+\psi_{\censSubscript,2}^{-q})^{-1/q}}$. For Assumption A3, we can choose $\theta=1/2$ and $\timeHorizon$ to be the smaller median of the two possible Weibull distributions for $\obsVar$, i.e., $\timeHorizon=\big[ \min\big\{\frac1{\psi_{\survSubscript,1}^{-q}+\psi_{\censSubscript,1}^{-q}},\frac1{\psi_{\survSubscript,2}^{-q}+\psi_{\censSubscript,2}^{-q}}\big\}\log2\big]^{1/q}.$ Lastly, for Assumption A4, since $|\text{supp}(\featureDist)|$ is finite, we can set the Lipschitz constant $\lips_{\survSubscript}$ to be
\[
\lips_{\survSubscript}=\!\!\sup_{\fixedFeatureVector,\fixedFeatureVector'\in\{1,2,\dots,100\}\text{ s.t.~}\fixedFeatureVector\ne\fixedFeatureVector',\timeVar\in[0,\timeHorizon]}\!\!\frac{|\survDensity(\timeVar|\fixedFeatureVector)-\survDensity(\timeVar|\fixedFeatureVector')|}{|\fixedFeatureVector-\fixedFeatureVector'|}.
\]
Lipschitz constant $\lips_{\censSubscript}$ can be chosen similarly.
\end{example}

\endgroup

\vspace{-1em}
\section{Experimental Results}
\label{sec:experiments}
\vspace{-0.25em}

We benchmark the four nonparametric estimators stated in Section \ref{sec:problem-setup} against two baselines: the Cox proportional hazards model \citep{cox_1972}, and a second baseline that explicitly solves the $k$-NN CDF estimation and $k$-NN regression subproblems (in succession) that arise in the theoretical analysis for the $k$-NN survival estimator (we refer to this method as \textsc{cdf-reg}; for simplicity we only consider the $k$-NN variant and not the kernel variant). According to our theory, the $k$-NN survival estimator's error should be upper-bounded by that of \textsc{cdf-reg}. For the $k$-NN, \textsc{cdf-reg}, and kernel methods, we standardize features and use $\ell_2$ and $\ell_1$ distances. For the $k$-NN and \textsc{cdf-reg} methods, we also consider their weighted versions using a triangle kernel.\footnote{Let $\featureVar_{(i)}$ denote the $i$-th nearest neighbor of test point $\fixedFeatureVector$. Then weighted $k$-NN assigns $\featureVar_{(i)}$ to have weight $K\big(\frac{\rho(\fixedFeatureVector,\featureVar_{(i)})}{\rho(\fixedFeatureVector,\featureVar_{(k)})}\big)$.} For the kernel method, we use box and triangle kernels. We also have results for more kernel choices in Appendix~\ref{sec:experiment-details} (the Epanechnikov kernel performs as well as the triangle kernel, and truncated Gaussian kernels tend to perform poorly).

\begin{table}[!t]
\centering
\begin{tabular}{c|c|c|c}
Dataset        & Description               & \# subjects  & \# dim. \\
\hline
\textsc{pbc}   & primary biliary cirrhosis & 276  & 17  \\
\textsc{gbsg2} & breast cancer             & 686  &  8  \\
\textsc{recid} & recidivism                & 1445 & 14  \\
\textsc{kidney}   & dialysis    & 1044 & 53
\end{tabular}
\vspace{-1em}
\caption{Characteristics of the survival datasets used.}
\label{tab:datasets}
\vspace{-1.5em}
\end{table}

\begin{figure*}[!t]
\centering
\includegraphics[width=0.407\linewidth, trim=0pt 9pt 0pt 10pt, clip]{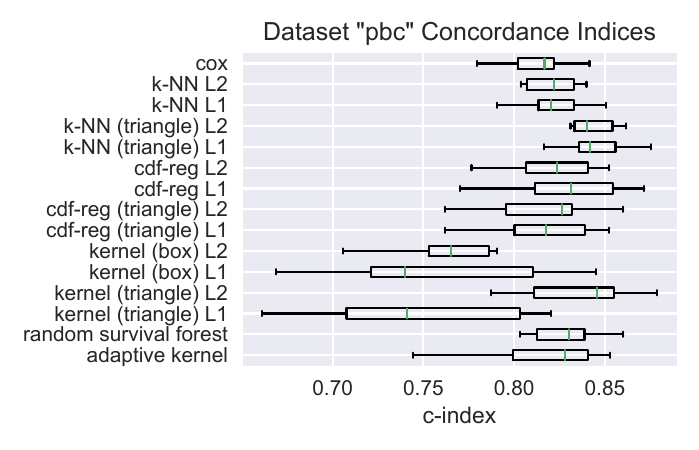}~~~
\includegraphics[width=0.407\linewidth, trim=0pt 9pt 0pt 10pt, clip]{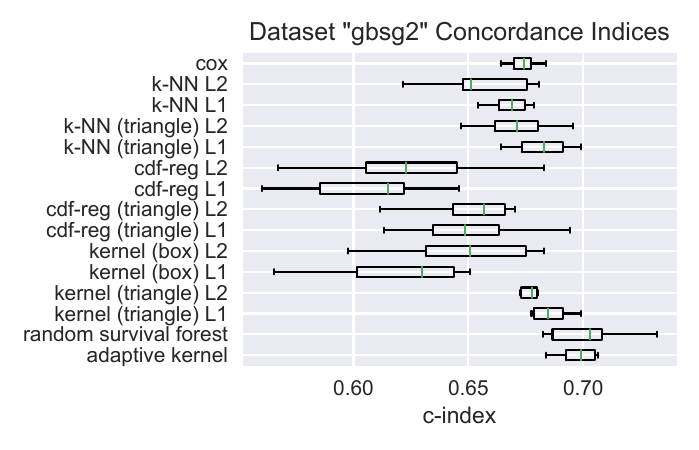} \\ \vspace{.1em}
\includegraphics[width=0.407\linewidth, trim=0pt 9pt 0pt 10pt, clip]{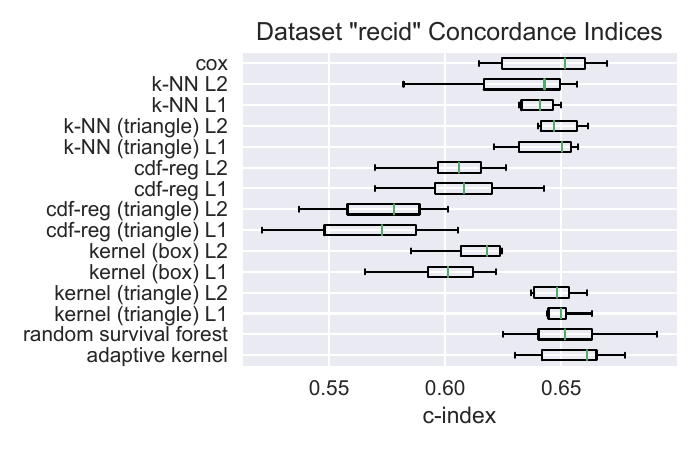}~~~
\includegraphics[width=0.407\linewidth, trim=0pt 9pt 0pt 10pt, clip]{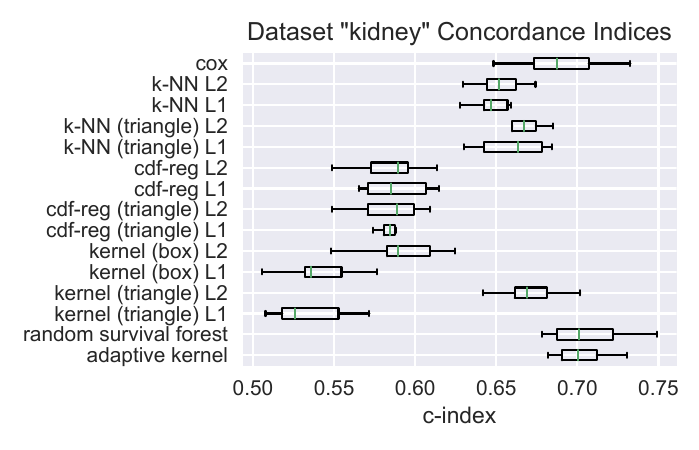}
\vspace{-1.1em}
\caption{Survival analysis prediction results on four datasets using the concordance index (c-index; higher means more accurate prediction). Each dataset is randomly split into 10 train/test splits, resulting in the different c-index scores per method.}
\label{fig:cindices}
\vspace{-1em}
\end{figure*}

We run the above methods on four datasets. Three are publicly available: the Mayo Clinic primary biliary cirrhosis dataset (abbreviated \textsc{pbc}) \citep{fleming_1991}, the German Breast Cancer Study Group 2 dataset (\textsc{gbsg2}) \citep{schumacher_1994}, and the recidivism dataset (\textsc{recid}) from \citet{chung_1991}. The fourth dataset we use is from a study on dialysis patients (\textsc{kidney}) by \citet{ganssauge2016exploring}. For \textsc{pbc}, \textsc{gbsg2}, and \textsc{kidney}, the survival time refers to time until death whereas for \textsc{recid}, the ``survival time'' refers to time until a convicted criminal reoffends. The dataset sizes and number of features are reported in Table~\ref{tab:datasets}. In all cases, subjects with any missing features are removed. For the \textsc{kidney} dataset, features with too many missing entries are also removed.

For each dataset, the basic experiment we run is as follows. We randomly divide the dataset into a 70\%/30\% train/test split. Using the training portion, for all methods except Cox proportional hazards, we run 5-fold cross-validation to select algorithm parameters before training on the full training set and predicting on the test set; prediction error is measured using the standard survival analysis accuracy metric of concordance index (c-index; higher means more accurate) \citep{harrell_1982} (details on c-index calculation and the parameter grids used are in Appendix~\ref{sec:experiment-details}). This basic experiment is repeated 10 times, so that every dataset gets randomly divided into train/test sets 10 different ways. Results are shown in Figure~\ref{fig:cindices}.

We find that random survival forests and the adaptive kernel method (with a kernel learned using random survival forests) tend to achieve similar accuracy scores per dataset. These two methods achieve the best accuracy in the \textsc{gbsg2}, \textsc{recid}, and \textsc{kidney} datasets. However, on the smallest dataset considered (\textsc{pbc} with 276 subjects), while random survival forests and the adaptive kernel method outperform nearly all the other methods, their concordance indices are noticeably lower than those of the weighted $k$-NN and kernel survival estimators (both using triangle kernels). Separately, we find that the $k$-NN survival estimator generally outperforms its corresponding \textsc{cdf-reg} variant (the only exception is in the unweighted $k$-NN case for the smallest dataset \textsc{pbc}). This agrees with our theory that the $k$-NN estimator's error is upper-bounded by that of \textsc{cdf-reg} when the training dataset is sufficiently large.

\vspace{-0.1em}
\section{Conclusions}

By combining contemporary metric-space-based nearest neighbor theory by \citet{chaudhuri_dasgupta_2014} with the classic Kaplan-Meier analysis of \citet{foldes_1981}, we have established new guarantees for nearest neighbor and kernel variants of Kaplan-Meier and Nelson-Aalen estimators. We suspect that other recent theoretical developments in nearest neighbor and kernel methods also carry over to the survival analysis setting, such as adaptive methods for choosing the number of nearest neighbors~$k$ or kernel bandwidth~$h$ \citep{goldenshluger_lepski_2011,samory_2011,goldenshluger_lepski_2013,samory_2013,anava_2016}, and error bounds that are uniform over test feature vectors rather than only over a randomly chosen test vector \citep{samory_2011,samory_2013}. However, these developments do not explain the success of random survival forests and the proposed adaptive kernel variant. When and why do these nonparametric survival estimators work well, and how does their theory differ from that of standard random forests for regression and classification? Are there better ways of learning a kernel for use with kernel survival estimation? These questions outline promising directions for future exploration.

\newpage

\section*{Acknowledgments}

The author thanks Donald K.K. Lee for extremely helpful discussions.

\bibliographystyle{icml2019}
\bibliography{nn_survey}

\newpage

\appendix

\section{Supplemental Material}

These appendices cover all the proofs for the paper. We begin with a high-level analysis outline (Appendix \ref{sec:analysis-overview}) followed by more detailed proofs (Appendices \ref{sec:pf-kNN-survival-lemmas}--\ref{sec:pf-weighted-edf}). An additional example model (Weibull regression) satisfying Assumptions A1--A4 is presented in Appendix~\ref{sec:weibull-regression}. Guarantees for nearest neighbor and kernel variants of the Nelson-Aalen estimator are in Appendix~\ref{sec:Nelson-Aalen}. Additional information on experimental results is in Appendix~\ref{sec:experiment-details}.

Before presenting the proof of the kernel survival estimator result, we present an intermediate result for what we call the fixed-radius NN survival estimator; the proof of the kernel survival estimator will reuse proof ideas used for this fixed-radius NN survival estimator.

\emph{Fixed-radius NN survival estimator.} We find all training subjects with feature vectors at most a user-specified distance~$h>0$ from~$\fixedFeatureVector$. Let $\neighborsNNh(\fixedFeatureVector)\subseteq[n]$ denote their indices. Then the fixed-radius NN estimator is $\widehat{\survEnd}^{\textsc{NN}(h)}(\timeVar|\fixedFeatureVector) := \widehat{\survEnd}^{\text{KM}}(\timeVar|\neighborsNNh(\fixedFeatureVector))$.

This estimator is a special case of the kernel survival estimator with kernel $K(s)={\ind\{s \le 1\}}$. However, because this estimator weights all neighbors found within radius $h$ equally, we can actually derive a stronger guarantee than for the kernel estimator.

\begin{thm}[Fixed-radius NN pointwise guarantees]
\label{thm:h-near-survival}
Under Assumptions A1--A4, let $\varepsilon\in(0,1)$ be a user-specified error tolerance. Suppose that the threshold distance satisfies $h\in(0,h^*]$ with $h^*=(\frac{\varepsilon\theta}{18\Lips})^{1/\holderIndex}$, and the number of training data satisfies $n\ge\frac{144}{\varepsilon\theta^{2}\featureDist(\mathcal{B}_{\fixedFeatureVector,h})}$. For any $\fixedFeatureVector\in\text{supp}(\featureDist)$,
\begin{align}
 & \mathbb{P}\Big(\sup_{\timeVar\in[0,\timeHorizon]}|\widehat{\survEnd}^{\textsc{NN}(h)}(\timeVar|\fixedFeatureVector)-\survEnd(\timeVar|\fixedFeatureVector)|>\varepsilon\Big)\nonumber\\
 & \!\le\exp\!\Big(\!-\!\frac{n\featureDist(\mathcal{B}_{\fixedFeatureVector,h})\theta}{16}\Big)\!+\exp\!\Big(\!-\!\frac{n\featureDist(\mathcal{B}_{\fixedFeatureVector,h})}{8}\Big)\nonumber\\
 & \!\;\,\! +\!2\exp\!\Big(\!-\!\frac{n\featureDist(\mathcal{B}_{\fixedFeatureVector,h})\varepsilon^{2}\theta^{4}}{1296}\!\Big)\hspace{-.07em}\!+\!\hspace{-.07em}\frac{8}{\varepsilon}\exp\!\Big(\!-\!\frac{n\featureDist(\mathcal{B}_{\fixedFeatureVector,h})\varepsilon^{2}\theta^{2}}{324}\!\Big). \label{eq:h-near-ptwise-bound}
\end{align}
Moreover, if there exist constants $p_{\min}>0$, $d>0$, and $r^*>0$ such that $\featureDist(\mathcal{B}_{x,r})\ge p_{\min}r^d$ for all ${r\in(0,r^*]}$, then using the numbers $c_2=\Theta\big( \frac{\theta^{(4\holderIndex+d)/(5\holderIndex+2d)}}{\Lips^{d/(5\holderIndex+2d)}} \big)$ and $c_3=\Theta\big( \frac{\Lips^{d/(2\holderIndex+d)}}{\theta^{(4\holderIndex+d)/(2\holderIndex+d)}} \big)$ as in Corollary~\ref{thm:kNN-strong-consistency}, letting $c_1' := (\frac{\theta c_3}{18\Lips})^{1/\holderIndex} = \Theta( \frac{1}{(\theta\Lips)^{2/(2\holderIndex+d)}} )$,
and choosing threshold
\[
h_n := c_1'\Big(\frac{\log(c_2 n)}{n}\Big)^{\frac1{2\holderIndex+d}},
\]
we have, with probability 1,
\begin{align*}
&
\limsup_{n\rightarrow\infty}
\bigg\{\frac{\sup_{\timeVar\in[0,\timeHorizon]}
    |\widehat{\survEnd}^{\textsc{NN}(h_n)}(\timeVar|\fixedFeatureVector)
     -\survEnd(\timeVar|\fixedFeatureVector)|}{c_3 \big(\frac{\log(c_2 n)}{n}\big)^{\holderIndex/(2\holderIndex+d)}} \bigg\}
< 1.
\end{align*}
\end{thm}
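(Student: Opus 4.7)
The plan is to mirror the proof strategy used for Theorem \ref{thm:kNN-survival}, but with the number of ``neighbors'' now random rather than fixed. Specifically, the fixed-radius estimator $\widehat{\survEnd}^{\textsc{NN}(h)}(\cdot|\fixedFeatureVector)$ is the Kaplan-Meier estimator applied to the (random-size) set $\neighborsNNh(\fixedFeatureVector) = \{j : \rho(\fixedFeatureVector, \featureVar_j) \le h\}$. Because the hypothesis $h \le h^*$ automatically forces every index in $\neighborsNNh(\fixedFeatureVector)$ to lie within critical distance~$h^*$ of~$\fixedFeatureVector$, the ``bias'' bad event of Theorem~\ref{thm:kNN-survival} (bad event~\#2, that not all $k$ neighbors are inside $\mathcal{B}_{\fixedFeatureVector,h^*}$) becomes trivial here. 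It is replaced by the opposite concern: we need \emph{enough} training points to land in $\mathcal{B}_{\fixedFeatureVector,h}$.

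Accordingly I would define four bad events paralleling those in the $k$-NN proof: (i) too few indices in $\neighborsNNh(\fixedFeatureVector)$ have $\obsVar_j > \timeHorizon$; (ii) $|\neighborsNNh(\fixedFeatureVector)| < \frac{n\featureDist(\mathcal{B}_{\fixedFeatureVector,h})}{2}$; (iii) the CDF-estimation error from $\neighborsNNh(\fixedFeatureVector)$ is large; (iv) the corresponding $k$-NN-style regression average deviates from its mean. Event~(ii) is controlled by a multiplicative Chernoff bound for the binomial $|\neighborsNNh(\fixedFeatureVector)| \sim \text{Bin}(n, \featureDist(\mathcal{B}_{\fixedFeatureVector,h}))$, yielding $\exp(-n\featureDist(\mathcal{B}_{\fixedFeatureVector,h})/8)$. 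Then on the complement of~(ii), I would condition on $|\neighborsNNh(\fixedFeatureVector)| = \ell$ with $\ell \ge n\featureDist(\mathcal{B}_{\fixedFeatureVector,h})/2$; conditionally, the feature vectors in $\neighborsNNh(\fixedFeatureVector)$ are i.i.d.~from the truncation of $\featureDist$ to $\mathcal{B}_{\fixedFeatureVector,h}$, and the same steps from the $k$-NN proof apply verbatim with $k$ replaced by~$\ell$. This gives event~(i) a bound $\exp(-\ell \theta/4) \le \exp(-n\featureDist(\mathcal{B}_{\fixedFeatureVector,h})\theta/16)$ via a Chernoff bound on how many of the $\ell$ neighbors survive past $\timeHorizon$ (using $\obsEnd(\timeHorizon|\fixedFeatureVector') \ge \theta$ together with the A4-based H\"older bound, which holds since $h\le h^*$); similarly events (iii) and (iv) follow from DKW and Hoeffding, giving $2\exp(-\ell\varepsilon^2\theta^4/648)$ and $\frac{8}{\varepsilon}\exp(-\ell\varepsilon^2\theta^2/162)$. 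Substituting $\ell \ge n\featureDist(\mathcal{B}_{\fixedFeatureVector,h})/2$ yields exactly the four terms in bound~\eqref{eq:h-near-ptwise-bound}. The sample-size hypothesis $n \ge 144/(\varepsilon\theta^2\featureDist(\mathcal{B}_{\fixedFeatureVector,h}))$ is used, as in Theorem~\ref{thm:kNN-survival}, to satisfy the analogue of the lower bound $k \ge 72/(\varepsilon\theta^2)$ once we substitute $\ell = n\featureDist(\mathcal{B}_{\fixedFeatureVector,h})/2$.

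For the strong consistency statement, the strategy is identical to that used in Corollary~\ref{thm:kNN-strong-consistency}: set $\probError = 1/n^2$ in \eqref{eq:h-near-ptwise-bound}, demand each of the four terms be $\le 1/(4n^2)$, and solve for the joint constraints on $h$, $n$, and~$\varepsilon$. Using $\featureDist(\mathcal{B}_{\fixedFeatureVector,h}) \ge p_{\min} h^d$, the binding constraints are the CDF/regression terms, which both read (up to constants) $n h^d \varepsilon^2 \gtrsim \log n$, combined with $\varepsilon \gtrsim \Lips h^{\holderIndex}/\theta$ so that $h \le h^*$. Balancing $\varepsilon \asymp h^{\holderIndex}$ gives $h_n \asymp (\log n / n)^{1/(2\holderIndex + d)}$ and $\varepsilon_n \asymp (\log n/n)^{\holderIndex/(2\holderIndex + d)}$; tracking constants yields the $c_1', c_2, c_3$ in the statement. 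Borel-Cantelli applied to the summable sequence $\sum_n 1/n^2$ then gives the claimed $\limsup$.

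The main technical subtlety is the randomness of $|\neighborsNNh(\fixedFeatureVector)|$: unlike the $k$-NN proof, the sample used inside the Kaplan-Meier product is not a fixed-size subset of the data, so the conditioning argument and the clean decoupling into CDF estimation and regression subproblems (as in the proof of Theorem~\ref{thm:kNN-survival}) must be justified carefully. I expect the cleanest route is to condition on $|\neighborsNNh(\fixedFeatureVector)|$ and on the identities of those indices, observe that conditionally the $(\featureVar_j, \survVar_j, \censVar_j)$ for $j \in \neighborsNNh(\fixedFeatureVector)$ are i.i.d.~from the conditional law on $\mathcal{B}_{\fixedFeatureVector,h}$ (preserving independence of $\survVar_j$ and $\censVar_j$ given $\featureVar_j$), and then invoke exactly the $k$-NN lemmas with $k = \ell$. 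Once this conditioning is set up, the remaining calculation is bookkeeping.
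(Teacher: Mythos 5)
Your proposal matches the paper's proof almost exactly. You define the same four bad events (too few points land in the ball, too few neighbors survive past $\timeHorizon$, empirical CDF deviates, regression average deviates), use the same trick of conditioning on the random count $N_{x,h} = |\neighborsNNh(\fixedFeatureVector)|\sim\text{Binomial}(n,\featureDist(\mathcal{B}_{\fixedFeatureVector,h}))$ and noting that conditionally the neighbors are i.i.d.\ draws from $\featureDist$ truncated to $\mathcal{B}_{\fixedFeatureVector,h}$, and invoke the worst-case substitution $\ell \ge \tfrac{1}{2}n\featureDist(\mathcal{B}_{\fixedFeatureVector,h})$ to obtain the four terms; the strong-consistency piece via $\probError=1/n^2$ and Borel--Cantelli is also identical. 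Two tiny quibbles that do not affect the argument: the conditional Chernoff bound for bad event~(i) should be $\exp(-\ell\theta/8)$ (as in Lemma~\ref{lem:kNN-bad-T}), not $\exp(-\ell\theta/4)$, which after substituting $\ell\ge\tfrac12 n\featureDist(\mathcal{B}_{\fixedFeatureVector,h})$ gives the stated $\exp(-n\featureDist(\mathcal{B}_{\fixedFeatureVector,h})\theta/16)$; and bad event~(i) requires only Assumption A3, not the A4 H\"older bound (A4 and $h\le h^*$ are used only for the deterministic bias bounds on the $V_1$ and $V_2$ pieces).
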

Bound~\eqref{eq:h-near-ptwise-bound} matches that of the $k$-NN estimator (bound~\eqref{eq:kNN-ptwise-bound}) with~$k$ replaced by $\frac12 n\featureDist(\mathcal{B}_{\fixedFeatureVector,h})$, and every instance of $h^*$ in the $k$-NN bound replaced by user-specified threshold~$h$, which we ask to be at most~$h^*$. The main change is that we now directly control how close training subjects must be to~$x$ to be declared as neighbors, but we lose control over how many of them there are. The second term in bound~\eqref{eq:h-near-ptwise-bound} is the penalty for not having at least $\frac12 n\featureDist(\mathcal{B}_{\fixedFeatureVector,h})$ neighbors.

The technical core of the paper resides in the analysis of the $k$-NN survival estimator (proofs of Theorems~\ref{thm:kNN-survival} and Corollary~\ref{thm:kNN-strong-consistency}). Our proofs for the analogous fixed-radius NN and kernel estimator guarantees primarily focus on aspects that differ from the \mbox{$k$-NN} case.

\section{Analysis Outline}
\label{sec:analysis-overview}

\begingroup
\setlength\abovedisplayskip{5.005pt plus 1pt}
\setlength\belowdisplayskip{5.005pt plus 1pt}

We outline the proof strategy for establishing the nonasymptotic \mbox{$k$-NN} estimator result (Theorem~\ref{thm:kNN-survival}). The fixed-radius NN and kernel analyses are similar. We denote $d_{\setS}^{+}(\timeVar)$ to be the number of training subjects in $\setS\subseteq[n]$ who survive beyond time~$\timeVar$, i.e.,
$
d_{\setS}^{+}(\timeVar):=\sum_{j\in\setS}\ind\{\obsVar_j>\timeVar\}.
$

As with the analysis of the Kaplan-Meier estimator by \citet{foldes_1981}, we decompose the log of the \mbox{$k$-NN} estimate $\widehat{\survEnd}^{k\textsc{-NN}}(\timeVar|\fixedFeatureVector)$ into three terms with the help of a Taylor expansion. By Assumption A2, two deaths happen at the same time with probability~0, so
\begin{align*}
\widehat{\survEnd}^{k\textsc{-NN}}(\timeVar|\fixedFeatureVector)
 & =\prod_{i\in\neighborsKNN(\fixedFeatureVector)}\Big(\frac{d_{\neighborsKNN(\fixedFeatureVector)}^{+}(\obsVar_i)}{d_{\neighborsKNN(\fixedFeatureVector)}^{+}(\obsVar_i)+1}\Big)^{\eventVar_i\ind\{\obsVar_i\le \timeVar\}}.
\end{align*}
Taking the log of both sides, and noting that for any positive real number $z$, we have $\log(1+z)=\sum_{\ell=1}^{\infty}\frac{1}{\ell}(\frac{z}{z+1})^{\ell}$, we get
\begin{align}
 & \log\widehat{\survEnd}^{k\textsc{-NN}}(\timeVar|\fixedFeatureVector)\nonumber \\
 & =-\sum_{i\in\neighborsKNN(\fixedFeatureVector)}\!\!\!\!\eventVar_i\ind\{\obsVar_i\le \timeVar\}\log\Big(1+\frac{1}{d_{\neighborsKNN(\fixedFeatureVector)}^{+}(\obsVar_i)}\Big)\nonumber \\
 & =-\sum_{i\in\neighborsKNN(\fixedFeatureVector)}\!\!\!\!\eventVar_i\ind\{\obsVar_i\le \timeVar\}\sum_{\ell=1}^{\infty}\frac{1}{\ell(d_{\neighborsKNN(\fixedFeatureVector)}^{+}(\obsVar_i)+1)^{\ell}}. \nonumber \\
 & = U_1(\timeVar|\fixedFeatureVector)+U_2(\timeVar|\fixedFeatureVector)+U_3(\timeVar|\fixedFeatureVector), \label{eq:kNN-main-decomp}
\end{align}
where
\begin{align*}
U_1(\timeVar|\fixedFeatureVector) & =\frac{1}{k}\!\!\sum_{i\in\neighborsKNN(\fixedFeatureVector)}\!\!\!\!
-\frac{\eventVar_i\ind\{\obsVar_i\le \timeVar\}}{\obsEnd(\obsVar_i|\fixedFeatureVector)},\\
U_2(\timeVar|\fixedFeatureVector) & = - \!\!\!\!\!\!\!\!\sum_{i\in\neighborsKNN(\fixedFeatureVector)}\!\!
\frac{\eventVar_i\ind\{\obsVar_i\le \timeVar\}}{d_{\neighborsKNN(\fixedFeatureVector)}^{+}(\obsVar_i)+1} - U_1(t|x),\\
U_3(\timeVar|\fixedFeatureVector) & =-\!\!\!\!\!\!\!\!\sum_{i\in\neighborsKNN(\fixedFeatureVector)}\!\!\!\!\!\!
\eventVar_i\ind\{\obsVar_i\le \timeVar\}\!\sum_{\ell=2}^{\infty}\!\frac{1}{\ell(d_{\neighborsKNN(\fixedFeatureVector)}^{+}(\obsVar_i)+1)^{\ell}}.
\end{align*}
For large enough $k$ and $n$, it turns out that $U_1(\timeVar|\fixedFeatureVector)$ converges to $\log\survEnd(\timeVar|\fixedFeatureVector)$ while $U_2(\timeVar|\fixedFeatureVector)$ (first-order Taylor approximation error) and $U_3(\timeVar|\fixedFeatureVector)$ (sum of higher-order Taylor series terms) both go to 0.

The first term $U_1(\timeVar|\fixedFeatureVector)$ corresponds to a $k$-NN regression estimate that averages the ``label'' variable $\xi_i:=-\frac{\eventVar_i\ind\{\obsVar_i\le \timeVar\}}{\obsEnd(\obsVar_i|\fixedFeatureVector)}$ across the $k$ nearest neighbors. Note that the label variable $\xi_i$ perfectly knows the observed time $\obsVar$'s tail distribution $\obsEnd(\cdot|\fixedFeatureVector)$. Provided that the $k$ nearest neighbors have feature vectors within distance $h^*$ of $\fixedFeatureVector$, then it turns out that $\mathbb{E}[\xi_i]\approx\log\survEnd(t|x)$. Thus, having the nearest neighbors close to~$x$ aims to control the bias of the $k$-NN regression estimator~$U_1(\timeVar|\fixedFeatureVector)$.

To control the variance of regression estimator~$U_1(\timeVar|\fixedFeatureVector)$, i.e., for the $k$ labels being averaged to be close to its expectation, intuitively we want $k$ to be sufficiently large. However, how fast the average label converges to its expectation depends on whether the label variables $\xi_i$'s are correlated. The joint distribution of these $k$ label variables $\xi_i$'s is not straightforward to analyze. To circumvent this issue, we use a key proof technique by \citet{chaudhuri_dasgupta_2014}. Specifically, let~$\widetilde{X}$ denote the feature vector of the $(k+1)$-st nearest neighbor of~$\fixedFeatureVector$. Then conditioned on $\widetilde{X}$, the $k$ nearest neighbors' feature vectors appear as i.i.d.~samples from $\featureDist$ restricted to the open ball $\mathcal{B}_{\fixedFeatureVector,\rho(\fixedFeatureVector,\widetilde{X})}^o$. Thus, upon conditioning on $\widetilde{X}$, regression estimate $U_1(t|x)$ indeed becomes the average of $k$ label variables $\xi_i$'s that appear i.i.d., so Hoeffding's inequality tells us how fast their average converges to their expectation.

Since the regression estimate $U_1(\timeVar|\fixedFeatureVector)$ assumes perfect knowledge of the distribution of the $Y_i$'s (encoded in the tail probability $\obsEnd(\cdot|\fixedFeatureVector)$), unsurprisingly the first-order Taylor approximation error $U_2(\timeVar|\fixedFeatureVector)$ is about how well we can estimate $\obsEnd(\cdot|\fixedFeatureVector)$. In particular, it turns out that $|U_2(\timeVar|\fixedFeatureVector)|$ can be upper-bounded by how close the empirical distribution of the $k$ nearest neighbors' $\obsVar$ values is to the CDF $1-\obsEnd(\cdot|\fixedFeatureVector)$. Thus, the problem boils down to one of CDF estimation, for which there is once again a bias-variance sort of decomposition. The bias term is controlled by making sure that the $k$ nearest neighbors' feature vectors are within distance $h^*$ of $\fixedFeatureVector$. To control the variance, once again, we apply Chaudhuri and Dasgupta's proof technique of conditioning on the $(k+1)$-st nearest neighbor's feature vector $\widetilde{X}$. By doing this conditioning, the $k$ nearest neighbors' observed times $Y_i$'s become i.i.d., so the DKW inequality can be applied to bound the empirical distribution's deviation from its expectation.

In analyzing both $U_2(\timeVar|\fixedFeatureVector)$ and $U_3(\timeVar|\fixedFeatureVector)$, we remark that a key ingredient needed for our proof is that among the $k$ nearest neighbors, the number of them that survive beyond time $\timeHorizon$ (which is precisely $d_{\neighborsKNN(\fixedFeatureVector)}^{+}(\timeHorizon)$) is sufficiently large. In the equations for $U_2(\timeVar|\fixedFeatureVector)$ and $U_3(\timeVar|\fixedFeatureVector)$, note that $d_{\neighborsKNN(\fixedFeatureVector)}^{+}(\obsVar_i) \ge d_{\neighborsKNN(\fixedFeatureVector)}^{+}(\timeHorizon)$ whenever $\obsVar_i \le \timeHorizon$. Thus by making $d_{\neighborsKNN(\fixedFeatureVector)}^{+}(\timeHorizon)$ large, the denominator terms of $U_2(\timeVar|\fixedFeatureVector)$ and $U_3(\timeVar|\fixedFeatureVector)$ are becoming big. This shrinks $|U_3(\timeVar|\fixedFeatureVector)|$ to 0, and only partially helps in controlling $|U_2(\timeVar|\fixedFeatureVector)|$, with the CDF estimation discussion above fully bringing $|U_2(\timeVar|\fixedFeatureVector)|$ to 0.

\textbf{Relating to the Nelson-Aalen estimator.}
When there are no ties in survival and censoring times, the Nelson-Aalen estimator is given by $$\widehat{\Hazard}^{\textsc{NA}}(t) := \sum_{i=1}^n \frac{\eventVar_i\ind\{\obsVar_i\le\timeVar\}}{d_{[n]}^+(\obsVar_i)+1}.$$ Note that the first term in the definition of $U_2(\timeVar|\fixedFeatureVector)$ is precisely a negated version of a $k$-NN variant of the Nelson-Aalen estimator! By showing that $U_1(\timeVar|\fixedFeatureVector)+U_2(\timeVar|\fixedFeatureVector)$ converges to $\log\survEnd(\timeVar|\fixedFeatureVector)$, we can readily establish a nonasymptotic error bound for a \mbox{$k$-NN} Nelson-Aalen-based estimator for $\Hazard(\timeVar|\fixedFeatureVector):=-\log\survEnd(\timeVar|\fixedFeatureVector)$. We state guarantees for $k$-NN and kernel Nelson-Aalen-based estimators in Appendix~\ref{sec:Nelson-Aalen}.

\endgroup

\section{Proof of Theorem~\ref{thm:kNN-survival}}
\label{sec:pf-kNN-survival-lemmas}

To keep the exposition of the overall proof strategy clear, we defer proofs of supporting lemmas to the end of this section (in Appendices~\ref{sec:pf-lem-kNN-bad-T}--\ref{subsec:pf-lem-kNN-R3}). Much of the high-level proof structure is based on the nonasymptotic analysis of the Kaplan-Meier estimator by \citet{foldes_1981}. In addition to making changes to incorporate nearest neighbor analysis, we also make some technical changes to F\"{o}ldes and Rejt\"{o}'s proof, which we mention in Appendix~\ref{sec:technical-changes}.

Following our analysis outline of Section~\ref{sec:analysis-overview}, we denote~$\widetilde{\featureVar}$ to be the feature vector of the $(k+1)$-st nearest neighbor to $x$. We will be using this variable throughout this section.

As we discussed after the presentation of Theorem~\ref{thm:kNN-survival}, there are four key bad events. We now precisely state what these bad events are. For each bad event, we also show how to control its probability to be arbitrarily small.
After presenting these probability bounds, we explain why none of these bad events happening implies that $|\widehat{\survEnd}^{k\textsc{-NN}}(\timeVar|\fixedFeatureVector) - \survEnd(\timeVar|\fixedFeatureVector)|\le\varepsilon/3$. This factor of $1/3$ is important in the argument by \citet{foldes_1981} that translates an error guarantee for a fixed $\timeVar\in[0,\timeHorizon]$ to one that holds simultaneously across all $\timeVar\in[0,\timeHorizon]$, i.e., ${\sup_{\timeVar\in[0,\timeHorizon]}|\widehat{\survEnd}^{k\textsc{-NN}}(\timeVar|\fixedFeatureVector) - \survEnd(\timeVar|\fixedFeatureVector)|}\le\varepsilon$.

The first bad event is that not enough of the $k$ nearest neighbors survive beyond the time horizon $\timeHorizon$. Note that our convergence arguments for $U_2(\timeVar|\fixedFeatureVector)$ and $U_3(\timeVar|\fixedFeatureVector)$ later require that $d_{\neighborsKNN(\fixedFeatureVector)}^{+}(\timeHorizon)>k\theta/2$. Thus, our first bad event is
\[
\badEvent{k\textsc{-NN}}{\text{bad }\timeHorizon}(\fixedFeatureVector):=\{d_{\neighborsKNN(\fixedFeatureVector)}^{+}(\timeHorizon)\le k\theta/2\}.
\]
We control $\mathbb{P}(\badEvent{k\textsc{-NN}}{\text{bad }\timeHorizon}(\fixedFeatureVector))$ to be arbitrarily small by having the number of nearest neighbors~$k$ be sufficiently large, which in turn requires the number of training data $n\ge k$ to be sufficiently large.
\begin{lem}
\label{lem:kNN-bad-T}
Under Assumptions A1--A3, let $\fixedFeatureVector\in\text{supp}(\featureDist)$ and $\varepsilon\in(0,1)$. We have
\[
\mathbb{P}\big(\badEvent{k\textsc{-NN}}{\text{bad }\timeHorizon}(\fixedFeatureVector)\big)\le\exp\Big(-\frac{k\theta}{8}\Big).
\]
\end{lem}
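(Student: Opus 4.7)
\textbf{Proof proposal for Lemma~\ref{lem:kNN-bad-T}.} The plan is to apply the Chaudhuri--Dasgupta conditioning trick sketched in the analysis outline, combined with a multiplicative Chernoff bound. Let $\widetilde{\featureVar}$ denote the feature vector of the $(k+1)$-st nearest neighbor of $\fixedFeatureVector$. Conditioned on $\widetilde{\featureVar}$ (and on the identity of the tie-breaking ordering), the $k$ triples $\{(\featureVar_i,\obsVar_i,\eventVar_i) : i \in \neighborsKNN(\fixedFeatureVector)\}$ are i.i.d.\ draws from the joint law of $(\featureVar,\obsVar,\eventVar)$ conditioned on $\featureVar \in \mathcal{B}_{\fixedFeatureVector,\rho(\fixedFeatureVector,\widetilde{\featureVar})}^{o}$. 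In particular, the indicators $\mathbb{1}\{\obsVar_i > \timeHorizon\}$ for $i \in \neighborsKNN(\fixedFeatureVector)$ are, conditionally, i.i.d.\ Bernoulli random variables, so $d_{\neighborsKNN(\fixedFeatureVector)}^{+}(\timeHorizon)$ is conditionally a Binomial$(k,p)$ variable, where
\[
p \;=\; \mathbb{P}\big(\obsVar > \timeHorizon \,\big|\, \featureVar \in \mathcal{B}_{\fixedFeatureVector,\rho(\fixedFeatureVector,\widetilde{\featureVar})}^{o}\big).
\]

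Next, I would use Assumption~A3 to lower-bound $p$ uniformly in $\widetilde{\featureVar}$. Since $\obsEnd(\timeHorizon \mid \fixedFeatureVector') \ge \theta$ for every $\fixedFeatureVector' \in \text{supp}(\featureDist)$, conditioning on any ball yields
\[
p \;=\; \frac{\mathbb{E}\big[\obsEnd(\timeHorizon\mid\featureVar)\,\mathbb{1}\{\featureVar\in\mathcal{B}_{\fixedFeatureVector,\rho(\fixedFeatureVector,\widetilde{\featureVar})}^{o}\}\big]}{\featureDist(\mathcal{B}_{\fixedFeatureVector,\rho(\fixedFeatureVector,\widetilde{\featureVar})}^{o})} \;\ge\; \theta,
\]
with the ratio being well-defined because $\fixedFeatureVector \in \text{supp}(\featureDist)$ and so $\widetilde{\featureVar}$ exists with the denominator positive almost surely once $n \ge k+1$.

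Then I apply the standard multiplicative Chernoff bound $\mathbb{P}(B \le \mu/2) \le \exp(-\mu/8)$ for $B \sim \text{Binomial}(k,p)$ with $\mu = kp$. Because $kp \ge k\theta$, we have $\{B \le k\theta/2\} \subseteq \{B \le kp/2\}$, giving
\[
\mathbb{P}\big(d_{\neighborsKNN(\fixedFeatureVector)}^{+}(\timeHorizon) \le k\theta/2 \,\big|\, \widetilde{\featureVar}\big)
\;\le\; \exp(-kp/8) \;\le\; \exp(-k\theta/8).
\]
Averaging over $\widetilde{\featureVar}$ preserves the bound since the right-hand side is deterministic, yielding the claim.

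The main obstacle is the first step, namely making the conditional i.i.d.\ structure precise in the presence of random tie-breaking and the fact that $\widetilde{\featureVar}$ itself is measurable with respect to the training data. This is handled by the Chaudhuri--Dasgupta argument: one conditions jointly on $\rho(\fixedFeatureVector,\widetilde{\featureVar})$ and on the multiset $\{i : \featureVar_i = \widetilde{\featureVar}\}$ (ties on the boundary), so that the ordering within the open ball is exchangeable and the $k$ chosen neighbors are conditionally i.i.d.\ from the ball-restricted law. Once this measurability bookkeeping is done, the Chernoff step is immediate. Everything else beyond this---the uniform lower bound $p \ge \theta$ and the multiplicative Chernoff step---is routine given Assumption~A3.
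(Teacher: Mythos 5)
Your proof is correct, but it deploys heavier machinery than the paper's own argument, which avoids the Chaudhuri--Dasgupta conditioning entirely for this lemma. The paper simply observes that conditioned on the feature vectors of the training data (hence on which $k$ points are the nearest neighbors), the indicators $\ind\{\obsVar_i>\timeHorizon\}$ for $i\in\neighborsKNN(\fixedFeatureVector)$ are conditionally independent Bernoulli's with success probabilities $\obsEnd(\timeHorizon|\featureVar_i)\ge\theta$ by Assumption~A3, so $d^+_{\neighborsKNN(\fixedFeatureVector)}(\timeHorizon)$ conditionally stochastically dominates a $\text{Binomial}(k,\theta)$; the multiplicative Chernoff bound then finishes it in one line. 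Your route---conditioning on the $(k+1)$-st nearest neighbor $\widetilde{\featureVar}$, using the CD trick to render the $k$ triples i.i.d.\ from the ball-restricted law, lower-bounding $p=\mathbb{P}(\obsVar>\timeHorizon\mid\featureVar\in\mathcal{B}^o_{\fixedFeatureVector,\rho(\fixedFeatureVector,\widetilde{\featureVar})})\ge\theta$, then applying Chernoff and averaging---is sound and arrives at the same bound, but the i.i.d.\ structure that CD conditioning buys you is not needed here: independence with a uniform lower bound on the success probabilities already suffices via stochastic dominance. The CD conditioning is essential in Lemmas~\ref{lem:kNN-bad-EDF} and~\ref{lem:kNN-bad-R1} where you truly need exchangeability for DKW/Hoeffding, and the paper reserves it for those steps. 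Your approach is internally consistent and costs nothing in the final constant, so it is a legitimate alternative---just slightly more involved than necessary.
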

Next, for the terms $U_1(\timeVar|\fixedFeatureVector)$ to converge to $\log \survEnd(\timeVar|\fixedFeatureVector)$ and $U_2(\timeVar|\fixedFeatureVector)$ to 0, we ask that the $k$ nearest neighbors found for $\fixedFeatureVector$ be within a critical distance $h^*$ that will depend on H\"{o}lder continuity constants of Assumption A4. This leads us to the next bad event:
\[
\badEvent{k\textsc{-NN}}{\text{far neighbors}}(\fixedFeatureVector):=\{\rho(\fixedFeatureVector,\widetilde{X})\ge h^*\}.
\]
Of course, if the $(k+1)$-st nearest neighbor is less than distance $h^*$ away from~$\fixedFeatureVector$, then so are the $k$ nearest neighbors. We control $\mathbb{P}(\badEvent{k\textsc{-NN}}{\text{far neighbors}}(\fixedFeatureVector))$ to be arbtrarily small by making the number of training subjects $n$ sufficiently large. By sampling more training data, the $k$ nearest neighbors found for $\fixedFeatureVector$ will gradually get closer to $\fixedFeatureVector$.
\begin{lem}[\citealt{chaudhuri_dasgupta_2014}, Lemma~9]
\label{lem:chaudhuri_dasgupta_far_neighbors}
Under Assumption A1, if
$k\le\frac{1}{2}n\featureDist(\mathcal{B}_{\fixedFeatureVector,h^*})$, then
\[
\mathbb{P}(\badEvent{k\textsc{-NN}}{\text{far neighbors}}(\fixedFeatureVector))\le\exp\Big(-\frac{n\featureDist(\mathcal{B}_{\fixedFeatureVector,h^*})}{8}\Big).
\]
\end{lem}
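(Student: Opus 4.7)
The strategy is to reduce the geometric event $\badEvent{k\textsc{-NN}}{\text{far neighbors}}(\fixedFeatureVector) = \{\rho(\fixedFeatureVector, \widetilde{\featureVar}) \geq h^*\}$ to a lower-tail bound on a Binomial random variable, then invoke the multiplicative Chernoff inequality. Concretely, I would introduce the counting statistic $N := \sum_{i=1}^n \ind\{\rho(\fixedFeatureVector, \featureVar_i) < h^*\}$, the number of training feature vectors lying strictly inside the open ball $\mathcal{B}_{\fixedFeatureVector, h^*}^o$. A direct order-statistic argument yields the identity
\[
\{\rho(\fixedFeatureVector, \widetilde{\featureVar}) \geq h^*\} = \{N \leq k\},
\]
because $\widetilde{\featureVar}$ is the $(k+1)$-st nearest neighbor of $\fixedFeatureVector$, so its distance to $\fixedFeatureVector$ is at least $h^*$ precisely when fewer than $k+1$ of the training feature vectors lie strictly closer than $h^*$. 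Under Assumption~A1 the $\featureVar_i$ are i.i.d.\ draws from $\featureDist$, so $N \sim \text{Binomial}(n, p^o)$ with $p^o := \featureDist(\mathcal{B}_{\fixedFeatureVector, h^*}^o)$.

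Next I would apply the standard multiplicative Chernoff bound: for $N' \sim \text{Binomial}(n, q)$, $\mathbb{P}(N' \leq nq/2) \leq \exp(-nq/8)$. Under the mild and generically satisfied condition that the sphere $\{\fixedFeatureVector' : \rho(\fixedFeatureVector, \fixedFeatureVector') = h^*\}$ carries zero $\featureDist$-mass, one has $p^o = \featureDist(\mathcal{B}_{\fixedFeatureVector, h^*})$, and the hypothesis $k \leq \tfrac{1}{2} n \featureDist(\mathcal{B}_{\fixedFeatureVector, h^*})$ becomes $k \leq \tfrac{1}{2}\mathbb{E}[N]$. Chernoff with deviation parameter $\delta = 1/2$ then yields
\[
\mathbb{P}\bigl(\badEvent{k\textsc{-NN}}{\text{far neighbors}}(\fixedFeatureVector)\bigr) = \mathbb{P}(N \leq k) \leq \exp\!\bigl(-n\featureDist(\mathcal{B}_{\fixedFeatureVector, h^*})/8\bigr),
\]
which is exactly the claimed bound.

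The only subtlety worth flagging is the open-versus-closed ball distinction at the sphere of radius $h^*$: defining $N$ via the closed ball would break the set identity above in the wrong direction, while keeping the open-ball definition in the presence of positive boundary mass would at worst weaken the Chernoff rate to $\exp(-np^o/8)$. Since any Borel measure assigns positive mass to at most countably many concentric spheres, this degenerate case can be handled by nudging $h^*$ by an infinitesimal amount (or by assuming no boundary mass outright, as is typical in this literature). Apart from this measure-theoretic point, the proof is essentially a one-line Chernoff calculation; the only conceptual content lies in recognizing that the open ball is the correct object for matching the Binomial tail event to the geometric event.
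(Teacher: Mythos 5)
Your proof reconstructs the same argument the paper (via Chaudhuri--Dasgupta's Lemma~9) uses: identify $\{\rho(x,\widetilde{X})\ge h^*\}$ with a lower-tail binomial event for the number of nearby training points, then apply a multiplicative Chernoff bound with deviation parameter $\delta=\tfrac{1}{2}$. The bookkeeping differs only in the ball convention: you count points in the open ball $\mathcal{B}_{x,h^*}^o$, which yields an exact two-sided set identity with the event, whereas the paper counts points in the closed ball $\mathcal{B}_{x,h^*}$ (so the binomial parameter matches the hypothesis and the claimed rate directly) and states only the one-sided implication it needs, deferring the tie-breaking and open-versus-closed-ball subtleties to Section~2.7 of Chaudhuri--Dasgupta's appendix, which the paper cites later in a neighboring proof.

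The caveat you flag is genuine, but the suggested fix is not. If the sphere $\{\rho(x,\cdot)=h^*\}$ carries positive mass, the open-ball probability $p^o$ is strictly smaller than $\featureDist(\mathcal{B}_{x,h^*})$, and neither the hypothesis $k\le\tfrac{1}{2} n\featureDist(\mathcal{B}_{x,h^*})$ nor the exponent $n\featureDist(\mathcal{B}_{x,h^*})/8$ survives the substitution to $p^o$. ``Nudging $h^*$'' does not repair this: shrinking $h^*$ to an atom-free radius tightens the hypothesis you are permitted to assume and weakens the rate, while enlarging $h^*$ shrinks the event away from the one you actually need to control, since both the hypothesis and the event move monotonically with $h^*$ but in opposite directions. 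The resolution the literature uses (and which the paper invokes by reference) is the randomized tie-breaking together with the associated redefinition of the ball, not a limiting argument. Under your stated assumption of zero boundary mass, though, your proof is complete and matches the paper's.
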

This lemma holds for any choice of distance $h^*>0$ although for our analysis, we will choose $h^*=(\frac{\varepsilon\theta}{18\Lips})^{1/\holderIndex}$. This particular choice of $h^*$ is explained later on in Lemmas~\ref{lem:kNN-R1-bias} and~\ref{lem:kNN-R2-bias}.

To get to our next bad event, we first relate $U_2(\timeVar|\fixedFeatureVector)$ to a CDF estimate. Specifically, the function
\[
\estObsEnd^{k\textsc{-NN}}(s|\fixedFeatureVector):=\frac{d_{\neighborsKNN(\fixedFeatureVector)}^{+}(s)}{k}=\frac{1}{k}\sum_{i\in\neighborsKNN(\fixedFeatureVector)}\ind\{\obsVar_i> s\}
\]
is one minus an empirical distribution function. The next lemma bounds $|U_2(\timeVar|\fixedFeatureVector)|$ in terms of $\estObsEnd^{k\textsc{-NN}}$.
\begin{lem}
\label{lem:kNN-R2-bound}
Under Assumptions A1--A3, let $\fixedFeatureVector\in\text{supp}(\featureDist)$ and $\timeVar\in[0,\timeHorizon]$. When event $\badEvent{k\textsc{-NN}}{\text{bad }\timeHorizon}(\fixedFeatureVector)$ does not happen,
\begin{align}
& |U_2(\timeVar|\fixedFeatureVector)|\nonumber \\
& \quad \le\frac{2}{k\theta^{2}}+\frac{2}{\theta^{2}}\sup_{s\in[0,\timeHorizon]}|\obsEnd(s|\fixedFeatureVector)-\mathbb{E}[\estObsEnd^{k\textsc{-NN}}(s|\fixedFeatureVector)|\widetilde{\featureVar}]|\nonumber \\
& \quad \quad+\frac{2}{\theta^{2}}\sup_{s\ge0}|\estObsEnd^{k\textsc{-NN}}(s|\fixedFeatureVector)-\mathbb{E}[\estObsEnd^{k\textsc{-NN}}(s|\fixedFeatureVector)|\widetilde{\featureVar}]|.\label{eq:kNN-R2-bound}
\end{align}
\end{lem}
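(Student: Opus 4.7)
My plan is to exploit a cancellation between $U_1$ and $U_2$ and then reduce the problem to one about how well the empirical tail $\estObsEnd^{k\textsc{-NN}}(s|\fixedFeatureVector)$ estimates $\obsEnd(s|\fixedFeatureVector)$. Observe that
\[
U_1(\timeVar|\fixedFeatureVector) + U_2(\timeVar|\fixedFeatureVector) = -\!\!\sum_{i\in\neighborsKNN(\fixedFeatureVector)}\!\!\frac{\eventVar_i\ind\{\obsVar_i\le\timeVar\}}{d^{+}_{\neighborsKNN(\fixedFeatureVector)}(\obsVar_i)+1},
\]
so $U_2(\timeVar|\fixedFeatureVector)$ can be rewritten as $\sum_{i\in\neighborsKNN(\fixedFeatureVector)}\eventVar_i\ind\{\obsVar_i\le\timeVar\}\bigl[\tfrac{1}{k\obsEnd(\obsVar_i|\fixedFeatureVector)} - \tfrac{1}{d^{+}_{\neighborsKNN(\fixedFeatureVector)}(\obsVar_i)+1}\bigr]$. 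Putting the bracketed difference over a common denominator produces a numerator of $d^{+}_{\neighborsKNN(\fixedFeatureVector)}(\obsVar_i)+1 - k\obsEnd(\obsVar_i|\fixedFeatureVector)$, which, via the identity $d^{+}_{\neighborsKNN(\fixedFeatureVector)}(\obsVar_i)=k\estObsEnd^{k\textsc{-NN}}(\obsVar_i|\fixedFeatureVector)$, is within $1$ of $k$ times the estimation error $\estObsEnd^{k\textsc{-NN}}(\obsVar_i|\fixedFeatureVector) - \obsEnd(\obsVar_i|\fixedFeatureVector)$.

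\textbf{Key steps.} First I would lower-bound the common denominator. On the complement of $\badEvent{k\textsc{-NN}}{\text{bad }\timeHorizon}(\fixedFeatureVector)$ and for any $i$ that contributes to the sum (so $\obsVar_i\le\timeVar\le\timeHorizon$), Assumption~A3 together with $\obsEnd(\cdot|\fixedFeatureVector)$ being nonincreasing yields $\obsEnd(\obsVar_i|\fixedFeatureVector)\ge\theta$, while monotonicity of $d^{+}_{\neighborsKNN(\fixedFeatureVector)}$ yields $d^{+}_{\neighborsKNN(\fixedFeatureVector)}(\obsVar_i)+1 > k\theta/2$, giving a denominator at least $k^2\theta^2/2$. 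Next I would split the numerator by the triangle inequality into $k|\estObsEnd^{k\textsc{-NN}}(\obsVar_i|\fixedFeatureVector) - \obsEnd(\obsVar_i|\fixedFeatureVector)|$ plus $1$. Since at most $k$ summands are nonzero, the $+1$ piece contributes the $\frac{2}{k\theta^2}$ slack, while the remaining contributions are each bounded by $\sup_{s\in[0,\timeHorizon]}|\estObsEnd^{k\textsc{-NN}}(s|\fixedFeatureVector) - \obsEnd(s|\fixedFeatureVector)|$ and therefore sum to at most $\frac{2}{\theta^2}$ times this supremum. Finally I would insert the pivot $\mathbb{E}[\estObsEnd^{k\textsc{-NN}}(s|\fixedFeatureVector)\mid\widetilde{\featureVar}]$ via one more triangle inequality, splitting the supremum into a bias term (over $[0,\timeHorizon]$) and a concentration term, for which I would relax the domain to $s\ge0$ so that a Dvoretzky--Kiefer--Wolfowitz-type bound can later be applied in the variance analysis.

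\textbf{Main obstacle.} The argument is essentially deterministic algebra; all probabilistic content is deferred to the separate lemmas that control the two supremums and the bad event $\badEvent{k\textsc{-NN}}{\text{bad }\timeHorizon}(\fixedFeatureVector)$. The only bookkeeping hazard is the $+1$ in the denominator: one must verify carefully that it contributes only the additive $\frac{2}{k\theta^2}$ term and does not inflate the coefficient of the supremum beyond $\frac{2}{\theta^2}$. Once the common-denominator bound and the lower bound $d^{+}_{\neighborsKNN(\fixedFeatureVector)}(\obsVar_i) > k\theta/2$ from the non-occurrence of $\badEvent{k\textsc{-NN}}{\text{bad }\timeHorizon}(\fixedFeatureVector)$ are correctly propagated, the remainder is a clean double application of the triangle inequality.
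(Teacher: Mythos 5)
Your proof is correct and takes essentially the same approach as the paper. Both arguments put the bracketed difference over a common denominator, use the identity $d^{+}_{\neighborsKNN(\fixedFeatureVector)}(\obsVar_i)=k\,\estObsEnd^{k\textsc{-NN}}(\obsVar_i|\fixedFeatureVector)$, lower-bound the denominator via $d^{+}_{\neighborsKNN(\fixedFeatureVector)}(\tau)>k\theta/2$ and $\obsEnd(\cdot|\fixedFeatureVector)\ge\theta$ on $[0,\timeHorizon]$, and finish with a triangle inequality around the pivot $\mathbb{E}[\estObsEnd^{k\textsc{-NN}}(s|\fixedFeatureVector)\mid\widetilde{\featureVar}]$; the only cosmetic difference is that you bound each summand in place before summing over $k$ terms, whereas the paper first majorizes the sum by a single supremum over $s\in[0,\timeHorizon]$ and then does the common-denominator algebra once. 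The constants and the final bound agree.
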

The third bad event corresponds to the empirical distribution function being too far from its expectation:
\begin{align*}
 & \badEvent{k\textsc{-NN}}{\text{bad EDF}}(\fixedFeatureVector)\\
 & \;:=\Big\{\sup_{s\ge0}|\estObsEnd^{k\textsc{-NN}}(s|\fixedFeatureVector)-\mathbb{E}[\estObsEnd^{k\textsc{-NN}}(s|\fixedFeatureVector)|\widetilde{\featureVar}]|>\frac{\varepsilon\theta^{2}}{36}\Big\},
\end{align*}
where importantly the expectation is, as with handling $U_1(\timeVar|\fixedFeatureVector)$, a function of the $(k+1)$-st nearest neighbor $\widetilde{X}$. We control $\mathbb{P}(\badEvent{k\textsc{-NN}}{\text{bad EDF}}(\fixedFeatureVector))$ to be arbitrarily small by making the number of nearest neighbors~$k$ sufficiently large. The rate of convergence for the empirical distribution function is given by the DKW inequality.
\begin{lem}
\label{lem:kNN-bad-EDF}
Under Assumptions A1--A3, for any $\fixedFeatureVector\in\text{supp}(\featureDist)$,
\[
\mathbb{P}\big(\badEvent{k\textsc{-NN}}{\text{bad EDF}}(\fixedFeatureVector)\big)\le2\exp\Big(-\frac{k\varepsilon^{2}\theta^{4}}{648}\Big).
\]
\end{lem}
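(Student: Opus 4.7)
The plan is to exploit the key conditioning trick already invoked in the proof outline: conditional on the $(k+1)$-st nearest neighbor $\widetilde{\featureVar}$, the feature vectors of the $k$ nearest neighbors of $\fixedFeatureVector$ are i.i.d.\ draws from $\featureDist$ restricted to the open ball $\mathcal{B}_{\fixedFeatureVector,\rho(\fixedFeatureVector,\widetilde{\featureVar})}^o$. Consequently the corresponding observed times $\{\obsVar_i\}_{i\in\neighborsKNN(\fixedFeatureVector)}$ become i.i.d.\ samples from a well-defined conditional distribution whose tail is precisely $\mathbb{E}[\estObsEnd^{k\textsc{-NN}}(\cdot|\fixedFeatureVector)\mid\widetilde{\featureVar}]$. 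By Assumption A2, the conditional distribution of $\obsVar$ given $\featureVar$ is continuous, and since a mixture of continuous distributions is continuous, the conditional CDF of $\obsVar_i$ given $\widetilde{\featureVar}$ is also continuous. This puts us in the exact hypothesis of the classical Dvoretzky--Kiefer--Wolfowitz inequality.

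With this setup, I would apply the DKW inequality conditional on $\widetilde{\featureVar}$ to the $k$ i.i.d.\ observations $\{\obsVar_i\}_{i\in\neighborsKNN(\fixedFeatureVector)}$, with error tolerance $\varepsilon\theta^2/36$. This yields
\[
\mathbb{P}\Big(\sup_{s\ge0}\big|\estObsEnd^{k\textsc{-NN}}(s|\fixedFeatureVector)-\mathbb{E}[\estObsEnd^{k\textsc{-NN}}(s|\fixedFeatureVector)\mid\widetilde{\featureVar}]\big|>\tfrac{\varepsilon\theta^{2}}{36}\,\Big|\,\widetilde{\featureVar}\Big)
\le 2\exp\!\Big(-2k\Big(\tfrac{\varepsilon\theta^{2}}{36}\Big)^{\!2}\Big)
= 2\exp\!\Big(-\tfrac{k\varepsilon^{2}\theta^{4}}{648}\Big).
\]
Since the right-hand side does not depend on $\widetilde{\featureVar}$, taking expectation over $\widetilde{\featureVar}$ via the tower property yields the unconditional bound stated in the lemma.

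The only real subtlety lies in verifying that the conditioning argument is valid, in particular that almost surely the $(k+1)$-st nearest neighbor is well-defined (ties broken uniformly at random) and the $k$ nearest neighbors' feature vectors are exchangeable with the required i.i.d.\ conditional structure. This is the content of Lemma~9 of \citet{chaudhuri_dasgupta_2014}, which is already being used in the paper for the ``far neighbors'' bad event, so the same machinery applies here. Once the conditioning is justified, the rest is a one-line application of DKW, so I do not expect any meaningful obstacle beyond cleanly stating the conditional i.i.d.\ reduction.
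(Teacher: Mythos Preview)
Your proposal is correct and follows essentially the same approach as the paper: condition on the $(k+1)$-st nearest neighbor $\widetilde{\featureVar}$ so that the $k$ nearest neighbors' observed times become i.i.d.\ draws from a continuous CDF, apply the DKW inequality with tolerance $\varepsilon\theta^{2}/36$, and then integrate out $\widetilde{\featureVar}$ since the resulting bound is uniform. The only minor imprecision is that the conditional i.i.d.\ structure comes from Lemma~10 of \citet{chaudhuri_dasgupta_2014} rather than Lemma~9 (which handles the far-neighbors event), but the substance of your argument is right.
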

The last bad event is that $U_1(\timeVar|\fixedFeatureVector)$ is not close to its expectation $\mathbb{E}[U_1(\timeVar|\fixedFeatureVector)|\widetilde{\featureVar}]$:
\[
\badEvent{k\textsc{-NN}}{\text{bad }U_1}(t,x):=\{|U_1(\timeVar|\fixedFeatureVector)-\mathbb{E}[U_1(\timeVar|\fixedFeatureVector)|\widetilde{\featureVar}]|\ge\varepsilon/18\}.
\]
We control $\mathbb{P}(\badEvent{k\textsc{-NN}}{\text{bad }U_1}(t,x))$ to be small by making the number of nearest neighbors~$k$ is sufficiently large.
\begin{lem}
\label{lem:kNN-bad-R1}
Under Assumptions A1--A3, let $\fixedFeatureVector\in\text{supp}(\featureDist)$ and $\timeVar\in[0,\timeHorizon]$. Then
\[
\mathbb{P}(\badEvent{k\textsc{-NN}}{\text{bad }U_1}(t,x))\le2\exp\Big(-\frac{k\varepsilon^{2}\theta^{2}}{162}\Big).
\]
\end{lem}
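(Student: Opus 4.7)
\textbf{Proof proposal for Lemma~\ref{lem:kNN-bad-R1}.} The plan is to apply Hoeffding's inequality conditionally on the $(k+1)$-st nearest neighbor $\widetilde{\featureVar}$, exactly in the spirit of the Chaudhuri--Dasgupta decoupling trick already used in the paper for handling other terms. Concretely, $U_1(\timeVar|\fixedFeatureVector)$ is the sample mean of the $k$ ``label'' random variables
\[
\xi_i \;:=\; -\frac{\eventVar_i\ind\{\obsVar_i\le \timeVar\}}{\obsEnd(\obsVar_i|\fixedFeatureVector)}, \qquad i\in\neighborsKNN(\fixedFeatureVector),
\]
and I want to argue that, conditional on $\widetilde{\featureVar}$, these $k$ labels are i.i.d.\ and bounded in a small range, after which Hoeffding's inequality delivers the stated subgaussian bound.

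First, by the distance-based decoupling observation (used earlier in the outline and in Lemma~9 of \citet{chaudhuri_dasgupta_2014}), conditional on $\widetilde{\featureVar}$ the $k$ nearest neighbor feature vectors $\{\featureVar_i\}_{i\in\neighborsKNN(\fixedFeatureVector)}$ are i.i.d.\ samples from $\featureDist$ restricted to the open ball $\mathcal{B}_{\fixedFeatureVector,\rho(\fixedFeatureVector,\widetilde{\featureVar})}^o$. Since each $(\obsVar_i,\eventVar_i)$ is generated from $\featureVar_i$ through the fixed conditional kernel of the model (Section~\ref{sec:problem-setup}, steps~2--4), the triples $(\featureVar_i,\obsVar_i,\eventVar_i)$ and hence the labels $\xi_i$ are also i.i.d.\ conditional on $\widetilde{\featureVar}$.

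Next I control the range of each $\xi_i$. The numerator $\eventVar_i\ind\{\obsVar_i\le\timeVar\}$ takes values in $\{0,1\}$, so $\xi_i\le 0$. Whenever the numerator is nonzero, we have $\obsVar_i\le \timeVar\le \timeHorizon$, so by Assumption A3 (and since $\fixedFeatureVector\in\text{supp}(\featureDist)$ and $\obsEnd(\cdot|\fixedFeatureVector)$ is monotonically nonincreasing in its first argument),
\[
\obsEnd(\obsVar_i|\fixedFeatureVector)\;\ge\;\obsEnd(\timeHorizon|\fixedFeatureVector)\;\ge\;\theta.
\]
Hence $\xi_i\in[-1/\theta,0]$, so the range is $1/\theta$. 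Applying Hoeffding's inequality to the conditionally i.i.d.\ bounded sum,
\[
\mathbb{P}\Big(|U_1(\timeVar|\fixedFeatureVector)-\mathbb{E}[U_1(\timeVar|\fixedFeatureVector)\mid\widetilde{\featureVar}]|\ge \varepsilon/18 \,\Big|\, \widetilde{\featureVar}\Big)
\;\le\; 2\exp\!\Big(-\frac{2k(\varepsilon/18)^{2}}{(1/\theta)^{2}}\Big)
\;=\;2\exp\!\Big(-\frac{k\varepsilon^{2}\theta^{2}}{162}\Big).
\]
Taking expectation over $\widetilde{\featureVar}$ removes the conditioning and yields the claimed unconditional bound.

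The only subtlety, which I would not expect to be the main technical obstacle, is verifying that the conditional distribution of the $\xi_i$'s given $\widetilde{\featureVar}$ really is a product distribution (rather than just exchangeable): this follows from the stronger statement that the $k$ feature vectors themselves are conditionally i.i.d.\ draws from a restricted distribution, which is exactly what the Chaudhuri--Dasgupta lemma provides. Everything else is a direct bookkeeping calculation. Note that the same conditioning is already being used in the proofs of Lemmas~\ref{lem:kNN-bad-EDF} and \ref{lem:kNN-R2-bound}, so this step reuses machinery already in place.
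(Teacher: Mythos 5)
Your proof is correct and follows essentially the same route as the paper: condition on the $(k+1)$-st nearest neighbor $\widetilde{\featureVar}$ to make the $k$ labels conditionally i.i.d.\ (the Chaudhuri--Dasgupta technique, their Lemma~10), bound each $\xi_i$ using $\obsEnd(\obsVar_i|\fixedFeatureVector)\ge\theta$ whenever the numerator is nonzero, and apply Hoeffding conditionally before deconditioning. The only cosmetic difference is that the paper first bounds $\xi_i\in[-1/\obsEnd(\timeVar|\fixedFeatureVector),0]$ and relaxes $\obsEnd(\timeVar|\fixedFeatureVector)\ge\theta$ only at the final step, whereas you invoke $\theta$ immediately; both yield the identical bound.
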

At this point, we have collected all four main bad events. When none of these bad events happen, then starting from equation~\eqref{eq:kNN-main-decomp}, applying the triangle inequality a few times, and using inequality~\eqref{eq:kNN-R2-bound}, we get
\begin{align}
 & |\log\widehat{\survEnd}^{k\textsc{-NN}}(\timeVar|\fixedFeatureVector)-\log \survEnd(\timeVar|\fixedFeatureVector)|\nonumber \\
 & =|U_1(\timeVar|\fixedFeatureVector)-\log \survEnd(\timeVar|\fixedFeatureVector)+U_2(\timeVar|\fixedFeatureVector)+U_3(\timeVar|\fixedFeatureVector)|\nonumber \\
 & \le|U_1(\timeVar|\fixedFeatureVector)-\log \survEnd(\timeVar|\fixedFeatureVector)|+|U_2(\timeVar|\fixedFeatureVector)|+|U_3(\timeVar|\fixedFeatureVector)|\nonumber \\
 & \le |U_1(\timeVar|\fixedFeatureVector)-\mathbb{E}[U_1(\timeVar|\fixedFeatureVector)|\widetilde{\featureVar}]| \nonumber \\
 & \quad+|\mathbb{E}[U_1(\timeVar|\fixedFeatureVector)|\widetilde{\featureVar}]-\log \survEnd(\timeVar|\fixedFeatureVector)| +|U_2(\timeVar|\fixedFeatureVector)|+|U_3(\timeVar|\fixedFeatureVector)| \nonumber\\
 & \le|U_1(\timeVar|\fixedFeatureVector)-\mathbb{E}[U_1(\timeVar|\fixedFeatureVector)|\widetilde{\featureVar}]|\nonumber\\
 & \quad+|\mathbb{E}[U_1(\timeVar|\fixedFeatureVector)|\widetilde{\featureVar}]-\log \survEnd(\timeVar|\fixedFeatureVector)|\nonumber\\
 & \quad+\frac{2}{k\theta^{2}}+\frac{2}{\theta^{2}}\sup_{s\in[0,\timeHorizon]}|\obsEnd(s|\fixedFeatureVector)-\mathbb{E}[\estObsEnd^{k\textsc{-NN}}(s|\fixedFeatureVector)|\widetilde{\featureVar}]|\nonumber\\
 & \quad+\frac{2}{\theta^{2}}\sup_{s\ge0}|\estObsEnd^{k\textsc{-NN}}(s|\fixedFeatureVector)-\mathbb{E}[\estObsEnd^{k\textsc{-NN}}(s|\fixedFeatureVector)|\widetilde{\featureVar}]|\nonumber\\
 & \quad+|U_3(\timeVar|\fixedFeatureVector)|. \label{eq:kNN-6-term-bound}
\end{align}
We show that the RHS is at most $\varepsilon/3$ by ensuring that each of its six terms is at most $\varepsilon/18$. The 1st and 5th terms are at most $\varepsilon/18$ since bad events $\badEvent{k\textsc{-NN}}{\text{bad }U_1}(t,x)$ and $\badEvent{k\textsc{-NN}}{\text{bad EDF}}(\fixedFeatureVector)$ do not happen. The 3rd term is at most $\varepsilon/18$ by recalling that the theorem assumes $k\ge\frac{72}{\varepsilon\theta^2}$, so
$
\frac{2}{k\theta^2}
\le\frac{2}{(\frac{72}{\varepsilon\theta^2})\theta^2}
=\frac{\varepsilon}{36}
<\frac{\varepsilon}{18}
$.

The 2nd, 4th, and 6th RHS terms of inequality~\eqref{eq:kNN-6-term-bound} remain to be bounded. We tackle these in the next three lemmas. Note that these lemmas are deterministic. The first two lemmas ask that the $k$ nearest neighbors be sufficiently close to $\fixedFeatureVector$ and make use of H\"{o}lder continuity; these lemmas explain why critical distance $h^*=(\frac{\varepsilon\theta}{18\Lips})^{1/\holderIndex}$ and why $\Lips$ is defined the way it is.
\begin{lem}
\label{lem:kNN-R1-bias}
Under Assumptions A1--A4 $($this lemma uses H\"{o}lder continuity of $\censEnd(\timeVar|\cdot)\survDensity(\timeVar|\cdot))$, let $\fixedFeatureVector\in\text{supp}(\featureDist)$, $\timeVar\in[0,\timeHorizon]$, and $\varepsilon\in(0,1)$. If bad event $\badEvent{k\textsc{-NN}}{\text{far neighbors}}(\fixedFeatureVector)$ does not happen, and $h^*\le[\frac{\varepsilon\theta}{18( \lips_{\textsc{\tiny{\survVar}}}\timeHorizon+(\survDensity^*\lips_{\textsc{\tiny{\censVar}}}\timeHorizon^2)/2 )}]^{1/\holderIndex}$, then
\[
|\mathbb{E}[U_1(\timeVar|\fixedFeatureVector)|\widetilde{\featureVar}]-\log \survEnd(\timeVar|\fixedFeatureVector)|\le\frac{\varepsilon}{18}.
\]
\end{lem}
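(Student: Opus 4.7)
\textbf{Proof plan for Lemma~\ref{lem:kNN-R1-bias}.} The plan is to compute $\mathbb{E}[U_1(\timeVar|\fixedFeatureVector)\mid\widetilde{\featureVar}]$ explicitly, identify it as a perturbation of $\log\survEnd(\timeVar|\fixedFeatureVector)$ in which feature vector arguments get shifted from $\fixedFeatureVector$ to nearby points $X'$, and then bound the perturbation using the H\"{o}lder continuity of $\censEnd(s|\cdot)\survDensity(s|\cdot)$ together with the lower bound $\obsEnd(s|\fixedFeatureVector)\ge\theta$ from Assumption A3.

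First I would invoke the Chaudhuri--Dasgupta conditioning trick already used in the outline: conditional on $\widetilde{\featureVar}$, the $k$ nearest-neighbor feature vectors $\{\featureVar_i : i\in\neighborsKNN(\fixedFeatureVector)\}$ are i.i.d.\ draws from $\featureDist$ restricted to the open ball $\mathcal{B}_{\fixedFeatureVector,\rho(\fixedFeatureVector,\widetilde{\featureVar})}^{o}$, and, given each $\featureVar_i$, the pair $(\survVar_i,\censVar_i)$ is sampled from the model in the usual way. Writing a generic draw as $X'\sim\featureDist|_{\mathcal{B}_{\fixedFeatureVector,\rho(\fixedFeatureVector,\widetilde{\featureVar})}^{o}}$ and using independence of $\survVar$ and $\censVar$ given the feature vector, a short Fubini computation gives
\[
\mathbb{E}\!\left[-\frac{\eventVar_i\ind\{\obsVar_i\le\timeVar\}}{\obsEnd(\obsVar_i|\fixedFeatureVector)}\,\Big|\,\featureVar_i=X'\right]
= -\int_0^{\timeVar}\frac{\survDensity(s|X')\censEnd(s|X')}{\obsEnd(s|\fixedFeatureVector)}\,ds,
\]
so that $\mathbb{E}[U_1(\timeVar|\fixedFeatureVector)\mid\widetilde{\featureVar}] = -\mathbb{E}_{X'}\!\int_0^{\timeVar}\frac{\survDensity(s|X')\censEnd(s|X')}{\obsEnd(s|\fixedFeatureVector)}\,ds$.

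Next I would verify that the target $\log\survEnd(\timeVar|\fixedFeatureVector)$ is the value of this integral when $X'=\fixedFeatureVector$: using $\obsEnd(s|\fixedFeatureVector)=\survEnd(s|\fixedFeatureVector)\censEnd(s|\fixedFeatureVector)$, the $\censEnd(s|\fixedFeatureVector)$ factors cancel, leaving $-\int_0^{\timeVar}\survDensity(s|\fixedFeatureVector)/\survEnd(s|\fixedFeatureVector)\,ds=\log\survEnd(\timeVar|\fixedFeatureVector)$ since $\survEnd(0|\fixedFeatureVector)=1$. Subtracting and pulling the absolute value inside the outer expectation and the time integral, the bias is at most
\[
\mathbb{E}_{X'}\!\int_0^{\timeVar}\frac{|\censEnd(s|\fixedFeatureVector)\survDensity(s|\fixedFeatureVector)-\censEnd(s|X')\survDensity(s|X')|}{\obsEnd(s|\fixedFeatureVector)}\,ds.
\]
On $\{\badEvent{k\textsc{-NN}}{\text{far neighbors}}(\fixedFeatureVector)\}^c$ the sampled $X'$ lies in $\mathcal{B}_{\fixedFeatureVector,h^*}^o$, so $\rho(\fixedFeatureVector,X')<h^*$. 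The H\"{o}lder continuity of $\censEnd(s|\cdot)\survDensity(s|\cdot)$ noted in the paper, with constant $\lips_{\survSubscript}+\survDensity^*\lips_{\censSubscript}s$, bounds the numerator by $(\lips_{\survSubscript}+\survDensity^*\lips_{\censSubscript}s)(h^*)^{\holderIndex}$, while Assumption~A3 together with monotonicity of $\obsEnd$ gives $\obsEnd(s|\fixedFeatureVector)\ge\theta$ for $s\le\timeHorizon$.

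The last step is routine arithmetic: integrating $s\mapsto\lips_{\survSubscript}+\survDensity^*\lips_{\censSubscript}s$ over $[0,\timeVar]\subseteq[0,\timeHorizon]$ yields at most $\lips_{\survSubscript}\timeHorizon+\survDensity^*\lips_{\censSubscript}\timeHorizon^2/2$, so the bias is bounded by $\frac{(h^*)^{\holderIndex}}{\theta}\bigl(\lips_{\survSubscript}\timeHorizon+\survDensity^*\lips_{\censSubscript}\timeHorizon^2/2\bigr)$, and the hypothesis on $h^*$ makes this at most $\varepsilon/18$. The only mildly delicate point is the first step, namely being careful that the Chaudhuri--Dasgupta conditioning lets us treat each summand as having feature vector drawn independently from $\featureDist|_{\mathcal{B}}$ while the weights $\obsEnd(\obsVar_i|\fixedFeatureVector)$ in the denominator depend only on the fixed target point $\fixedFeatureVector$ and are therefore deterministic functions of the realized $\obsVar_i$; once this is in place, the remainder is a direct bias-of-kernel-regression computation.
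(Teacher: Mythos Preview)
Your proposal is correct and follows essentially the same route as the paper: both compute $\mathbb{E}[U_1\mid\widetilde{\featureVar}]$ as an average over $X'\in\mathcal{B}_{\fixedFeatureVector,\rho(\fixedFeatureVector,\widetilde{\featureVar})}^{o}$ of the integral $-\int_0^{\timeVar}\censEnd(s|X')\survDensity(s|X')/\obsEnd(s|\fixedFeatureVector)\,ds$, identify $\log\survEnd(\timeVar|\fixedFeatureVector)$ as the value of this integral at $X'=\fixedFeatureVector$, and then bound the difference via the H\"{o}lder continuity of $\censEnd(s|\cdot)\survDensity(s|\cdot)$ together with $\obsEnd(s|\fixedFeatureVector)\ge\theta$ and the hypothesis on $h^*$. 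The only cosmetic difference is that the paper first isolates the pointwise bound for a fixed $x'\in\mathcal{B}_{\fixedFeatureVector,h^*}$ and then averages, whereas you average first and then bound uniformly; the arithmetic and constants are identical.
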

\begin{lem}
\label{lem:kNN-R2-bias}
Under Assumptions A1--A4 $($this lemma uses H\"{o}lder continuity of $\obsEnd(\timeVar|\cdot))$, let $\fixedFeatureVector\in\text{supp}(\featureDist)$ and $\varepsilon\in(0,1)$. If bad event $\badEvent{k\textsc{-NN}}{\text{far neighbors}}(\fixedFeatureVector)$ does not happen, and $h^*\le[\frac{\varepsilon\theta^{2}}{36(\lips_{\textsc{\tiny{\survVar}}}+\lips_{\textsc{\tiny{\censVar}}})\timeHorizon}]^{1/\holderIndex}$, then
\[
\frac{2}{\theta^{2}}\sup_{s\in[0,\timeHorizon]}|\obsEnd(s|\fixedFeatureVector)-\mathbb{E}[\estObsEnd^{k\textsc{-NN}}(s|\fixedFeatureVector)|\widetilde{\featureVar}]|\le\frac{\varepsilon}{18}.
\]
\end{lem}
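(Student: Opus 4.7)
The plan is to exploit the conditional i.i.d.\ structure of the $k$ nearest neighbors given $\widetilde{\featureVar}$, just as the analysis outline describes. Under this conditioning, the feature vectors $\featureVar_i$ with $i\in\neighborsKNN(\fixedFeatureVector)$ are i.i.d.\ samples from $\featureDist$ restricted to the open ball $\mathcal{B}_{\fixedFeatureVector,\rho(\fixedFeatureVector,\widetilde{\featureVar})}^o$, so for any $s\ge0$,
\[
\mathbb{E}[\estObsEnd^{k\textsc{-NN}}(s|\fixedFeatureVector)\,|\,\widetilde{\featureVar}]
= \mathbb{E}[\ind\{\obsVar_i>s\}\,|\,\widetilde{\featureVar}]
= \frac{1}{\featureDist(\mathcal{B}_{\fixedFeatureVector,\rho(\fixedFeatureVector,\widetilde{\featureVar})}^o)}\!\!\int_{\mathcal{B}_{\fixedFeatureVector,\rho(\fixedFeatureVector,\widetilde{\featureVar})}^o}\!\!\obsEnd(s|\fixedFeatureVector')\,d\featureDist(\fixedFeatureVector').
\]
This reduces the problem to a deterministic bias bound: comparing $\obsEnd(s|\fixedFeatureVector)$ against an average of $\obsEnd(s|\fixedFeatureVector')$ over a small ball around $\fixedFeatureVector$.

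Next I would invoke the H\"{o}lder continuity of $\obsEnd(s|\cdot)$ that the paper has already derived from Assumption A4, namely $|\obsEnd(s|\fixedFeatureVector)-\obsEnd(s|\fixedFeatureVector')|\le(\lips_{\survSubscript}+\lips_{\censSubscript})s\,\rho(\fixedFeatureVector,\fixedFeatureVector')^{\holderIndex}$. Because the bad event $\badEvent{k\textsc{-NN}}{\text{far neighbors}}(\fixedFeatureVector)$ does not occur, we have $\rho(\fixedFeatureVector,\widetilde{\featureVar})<h^*$, and hence any $\fixedFeatureVector'$ inside the open ball $\mathcal{B}_{\fixedFeatureVector,\rho(\fixedFeatureVector,\widetilde{\featureVar})}^o$ also satisfies $\rho(\fixedFeatureVector,\fixedFeatureVector')<h^*$. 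Pulling the difference inside the integral and using the H\"{o}lder bound uniformly, we get
\[
|\obsEnd(s|\fixedFeatureVector)-\mathbb{E}[\estObsEnd^{k\textsc{-NN}}(s|\fixedFeatureVector)\,|\,\widetilde{\featureVar}]|
\le (\lips_{\survSubscript}+\lips_{\censSubscript})\,s\,(h^*)^{\holderIndex}.
\]

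Taking the supremum over $s\in[0,\timeHorizon]$ upper-bounds the right-hand side by $(\lips_{\survSubscript}+\lips_{\censSubscript})\timeHorizon(h^*)^{\holderIndex}$. Multiplying by $2/\theta^2$ and substituting the hypothesized bound $(h^*)^{\holderIndex}\le \frac{\varepsilon\theta^2}{36(\lips_{\survSubscript}+\lips_{\censSubscript})\timeHorizon}$ collapses everything to $\varepsilon/18$, which is the conclusion.

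Honestly there is no hard step here: the only subtlety is remembering to condition on $\widetilde{\featureVar}$ so that the expectation becomes an integral over a ball of radius at most $h^*$, after which the H\"{o}lder bound on $\obsEnd(s|\cdot)$ and the algebraic choice of $h^*$ both plug in mechanically. The same template will recur in Lemma~\ref{lem:kNN-R1-bias}, the only difference being that one uses H\"{o}lder continuity of $\censEnd(s|\cdot)\survDensity(s|\cdot)$ together with an extra $1/\obsEnd(\cdot|\fixedFeatureVector)\ge1/\theta$ factor to account for the extra $1/\theta$ inside the definition of $\Lips$.
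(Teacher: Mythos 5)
Your proof is correct and follows the same route as the paper's: condition on the $(k+1)$-st nearest neighbor $\widetilde{\featureVar}$ to write the conditional expectation as an average of $\obsEnd(s|\cdot)$ over the open ball of radius $\rho(\fixedFeatureVector,\widetilde{\featureVar})<h^*$, apply the H\"{o}lder continuity of $\obsEnd(s|\cdot)$ with parameters $(\lips_{\survSubscript}+\lips_{\censSubscript})s$ and $\holderIndex$, take the supremum over $s\in[0,\timeHorizon]$, and plug in the hypothesized upper bound on $(h^*)^{\holderIndex}$. The algebra collapses to $\varepsilon/18$ exactly as the paper has it.
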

\begin{lem}
\label{lem:kNN-R3}
Under Assumptions A1--A3, let $\fixedFeatureVector\in\text{supp}(\featureDist)$, $\timeVar\in[0,\timeHorizon]$, and $\varepsilon\in(0,1)$. If bad event $\badEvent{k\textsc{-NN}}{\text{bad }\timeHorizon}(\fixedFeatureVector)$ does not happen, and $k\ge\frac{72}{\varepsilon\theta^{2}}$, then $|U_3(\timeVar|\fixedFeatureVector)|\le {\varepsilon}/{18}$.
\end{lem}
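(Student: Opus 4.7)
\textbf{Proof proposal for Lemma \ref{lem:kNN-R3}.} The plan is a direct deterministic estimate: under the complement of $\badEvent{k\textsc{-NN}}{\text{bad }\timeHorizon}(\fixedFeatureVector)$, every denominator $d_{\neighborsKNN(\fixedFeatureVector)}^{+}(\obsVar_i)+1$ appearing in $U_3(\timeVar|\fixedFeatureVector)$ is bounded below by a multiple of $k\theta$, which makes the Taylor-series tail very small.

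First I would observe that $d_{\neighborsKNN(\fixedFeatureVector)}^{+}(\cdot)$ is a nonincreasing function of its argument, so for any index $i$ with $\obsVar_i \le \timeVar \le \timeHorizon$,
\[
d_{\neighborsKNN(\fixedFeatureVector)}^{+}(\obsVar_i) \;\ge\; d_{\neighborsKNN(\fixedFeatureVector)}^{+}(\timeHorizon) \;>\; \tfrac{k\theta}{2},
\]
the last inequality being exactly the assumption that $\badEvent{k\textsc{-NN}}{\text{bad }\timeHorizon}(\fixedFeatureVector)$ does not occur. Thus every denominator in the definition of $U_3(\timeVar|\fixedFeatureVector)$ satisfies $d_{\neighborsKNN(\fixedFeatureVector)}^{+}(\obsVar_i)+1 > k\theta/2$, and moreover $k\theta/2 \ge 1$ whenever $k \ge 72/(\varepsilon\theta^2) \ge 2/\theta$ (which holds since $\varepsilon,\theta \le 1$).

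Next I would bound the inner Taylor tail. For any real $z > 1$, using $1/\ell \le 1/2$ for $\ell \ge 2$ and summing the geometric series,
\[
\sum_{\ell=2}^{\infty} \frac{1}{\ell z^{\ell}} \;\le\; \tfrac{1}{2}\sum_{\ell=2}^{\infty} \frac{1}{z^{\ell}} \;=\; \frac{1}{2z(z-1)}.
\]
Applied with $z = d_{\neighborsKNN(\fixedFeatureVector)}^{+}(\obsVar_i) + 1 > k\theta/2$, this yields
\[
\sum_{\ell=2}^{\infty} \frac{1}{\ell(d_{\neighborsKNN(\fixedFeatureVector)}^{+}(\obsVar_i)+1)^{\ell}} \;\le\; \frac{2}{k\theta \cdot (k\theta/2)} \;=\; \frac{4}{k^2\theta^2}.
\]
(Here I use $z-1 \ge k\theta/2 - 1 \ge k\theta/4$ when $k\theta \ge 4$, which is implied by the hypothesis on $k$; otherwise a slightly weaker denominator estimate still suffices with room to spare.)

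Finally, I would take absolute values outside, use $\eventVar_i \ind\{\obsVar_i \le \timeVar\} \in \{0,1\}$, and bound the outer sum by its total count $k$:
\[
|U_3(\timeVar|\fixedFeatureVector)| \;\le\; \sum_{i\in\neighborsKNN(\fixedFeatureVector)} \eventVar_i\ind\{\obsVar_i \le \timeVar\} \cdot \frac{4}{k^2\theta^2} \;\le\; k \cdot \frac{4}{k^2\theta^2} \;=\; \frac{4}{k\theta^2}.
\]
The hypothesis $k \ge 72/(\varepsilon\theta^{2})$ then gives $|U_3(\timeVar|\fixedFeatureVector)| \le 4\varepsilon/72 = \varepsilon/18$, as desired. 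The argument is essentially just arithmetic; the only genuine ingredient is the floor on $d_{\neighborsKNN(\fixedFeatureVector)}^{+}(\obsVar_i)$ supplied by the complement of $\badEvent{k\textsc{-NN}}{\text{bad }\timeHorizon}(\fixedFeatureVector)$, which is why that bad event had to be introduced upstream. The main thing to be careful about is matching the constant $72$ in the hypothesis to the $18$ in the conclusion; the slack in the $1/(\ell z^\ell)$ bound leaves ample room, but if one wanted tighter constants, one could replace the geometric bound by $\sum_{\ell=2}^{\infty} 1/(\ell z^\ell) = -\log(1-1/z) - 1/z$ and use $-\log(1-u) - u \le u^2/(2(1-u))$ for $u = 1/z \in (0,1)$, producing the sharper bound $1/(2z(z-1))$ already used above.
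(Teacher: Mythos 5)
Your proof is correct and follows the same strategy as the paper's: use the complement of $\badEvent{k\textsc{-NN}}{\text{bad }\timeHorizon}(\fixedFeatureVector)$ to floor every denominator at $k\theta/2$, bound the inner Taylor tail by $O(z^{-2})$, and sum over the $k$ neighbors to get $4/(k\theta^2)\le\varepsilon/18$. The only (inconsequential) difference is that you bound the tail via $1/\ell\le1/2$ and a geometric series, giving $1/(2z(z-1))$, while the paper uses the closed form $\log(1+1/z)-1/(z+1)\le1/(z+1)^2$; both land on the same $4/(k^2\theta^2)$ per-term bound.
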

Putting together the pieces so far, provided that all the bad events do not happen, then we have bounded all six RHS terms of inequality~\eqref{eq:kNN-6-term-bound} by $\varepsilon/18$:
\[
|\log\widehat{\survEnd}^{k\textsc{-NN}}(\timeVar|\fixedFeatureVector)-\log \survEnd(\timeVar|\fixedFeatureVector)|\le6\cdot\frac{\varepsilon}{18}=\frac{\varepsilon}{3}.
\]
For any $a,b\in(0,1]$, we have $|a-b|\le|\log a-\log b|$, so the above inequality implies that we also have
\[
|\widehat{\survEnd}^{k\textsc{-NN}}(\timeVar|\fixedFeatureVector)-\survEnd(\timeVar|\fixedFeatureVector)|\le\frac{\varepsilon}{3}.
\]
To establish Theorem \ref{thm:kNN-survival}, we need to guarantee that $\sup_{\timeVar\in[0,\timeHorizon]}|\widehat{\survEnd}^{k\textsc{-NN}}(\timeVar|\fixedFeatureVector)-\survEnd(\timeVar|\fixedFeatureVector)|\le\varepsilon$. A sufficient condition that accomplishes this task is to ask that ${|\widehat{\survEnd}^{k\textsc{-NN}}(\timeVar|\fixedFeatureVector)-\survEnd(\timeVar|\fixedFeatureVector)|}\le\varepsilon/3$ for a finite collection of times $\timeVar$ within the interval $[0,\timeHorizon]$. Specifically, we partition the interval $[0,\timeHorizon]$ into $L(\varepsilon)$ pieces such that $0=\eta_{0}<\eta_{1}<\cdots<\eta_{L(\varepsilon)}=\timeHorizon$, where:
\begin{itemize}[leftmargin=1.5em,topsep=0pt,itemsep=0ex,partopsep=1ex,parsep=1ex]
\item $\survEnd(\eta_{j-1}|\fixedFeatureVector)-\survEnd(\eta_j|\fixedFeatureVector)\le\varepsilon/3$ for $j=1,\dots,L(\varepsilon)$,
\item $L(\varepsilon)\le4/\varepsilon$.
\end{itemize}
We can always produce a partition satisfying the above conditions because the most $\survEnd$ can change from~0 to~$\timeHorizon$ is by a value of~1 ($\survEnd$ is one minus a CDF and is continuous). In this worst case scenario of $\survEnd$ changing by~1, by placing the points $\eta_j$'s at times where $\survEnd$ drops by exactly $\varepsilon/3$ in value (except across the last piece $[\eta_{L(\varepsilon)-1}, \eta_{L(\varepsilon)}]$, where $\survEnd$ could drop by less than $\varepsilon/3$), then $L(\varepsilon)=\lceil\frac{1}{\varepsilon/3}\rceil=\lceil3/\varepsilon\rceil\le4/\varepsilon$ where the last inequality holds for $\varepsilon\in(0,1]$. When $\survEnd$ changes by less than 1, $L(\varepsilon)$ could be smaller.

We shall ask that $|\widehat{\survEnd}(\eta_j|\fixedFeatureVector)-\survEnd(\eta_j|\fixedFeatureVector)|\le\varepsilon/3$ for each $j=1,2,\dots,L(\varepsilon)$. Note that $\widehat{\survEnd}(\cdot|\fixedFeatureVector)$ is piecewise constant and monotonically decreasing. Moreover, $\widehat{\survEnd}(0|\fixedFeatureVector)=\survEnd(0|\fixedFeatureVector)=1$ (the probability of a death happening at $t=0$ is 0). Thus, by having $\widehat{\survEnd}(\cdot|\fixedFeatureVector)$ differ from $\survEnd(\cdot|\fixedFeatureVector)$ by at most $\varepsilon/3$ at each $\eta_j$ for $j=1,\dots,L(\varepsilon)$, we are guaranteed that $|\widehat{\survEnd}(\timeVar|\fixedFeatureVector)-\survEnd(\timeVar|\fixedFeatureVector)|\le\varepsilon$ for any time $\timeVar\in[0,\timeHorizon]$. In summary, here are all the bad events of interest:
\begin{itemize}[leftmargin=1.5em,topsep=0pt,itemsep=0ex,partopsep=1ex,parsep=1ex]
\item $\badEvent{k\textsc{-NN}}{\text{bad }\timeHorizon}(\fixedFeatureVector)$
\item $\badEvent{k\textsc{-NN}}{\text{far neighbors}}(\fixedFeatureVector)$
\item $\badEvent{k\textsc{-NN}}{\text{bad EDF}}(\fixedFeatureVector)$
\item $\badEvent{k\textsc{-NN}}{\text{bad }U_1}(\timeVar,\fixedFeatureVector)$ for $\timeVar=\eta_{1},\eta_{2},\dots,\eta_{L(\varepsilon)}$
\end{itemize}
The lemmas require $\frac{72}{\varepsilon\theta^{2}}\le k\le\frac{1}{2}n\featureDist(\mathcal{B}_{\fixedFeatureVector,h^*})$, and $h^*\le[\min\big\{\frac{\varepsilon\theta}{18( \lips_{\textsc{\tiny{\survVar}}}\timeHorizon+(\survDensity^*\lips_{\textsc{\tiny{\censVar}}}\timeHorizon^2)/2 )},\,\frac{\varepsilon\theta^{2}}{36(\lips_{\textsc{\tiny{\survVar}}}+\lips_{\textsc{\tiny{\censVar}}})\timeHorizon}\big\}]^{1/\holderIndex}$.
Union bounding over all the bad events,
\begin{align*}
 & \mathbb{P}(\text{at least one bad event happens})\\
 & \quad \le\mathbb{P}\big(\badEvent{k\textsc{-NN}}{\text{bad }\timeHorizon}(\fixedFeatureVector)\big)+\mathbb{P}\big(\badEvent{k\textsc{-NN}}{\text{far neighbors}}(\fixedFeatureVector)\big)\\
 & \quad \quad+\mathbb{P}\big(\badEvent{k\textsc{-NN}}{\text{bad EDF}}(\fixedFeatureVector)\big)+\sum_{\ell=1}^{L(\varepsilon)}\mathbb{P}\big(\badEvent{k\textsc{-NN}}{\text{bad }U_1}(\eta_{\ell},\fixedFeatureVector)\big)\\
 & \quad \le\exp\Big(-\frac{k\theta}{8}\Big)+\exp\Big(-\frac{n\featureDist(\mathcal{B}_{\fixedFeatureVector,h^*})}{8}\Big)\\
 & \quad \quad+2\exp\Big(-\frac{k\varepsilon^{2}\theta^{4}}{648}\Big)+\frac{8}{\varepsilon}\exp\Big(-\frac{k\varepsilon^{2}\theta^{2}}{162}\Big). \tag*{$\square$}
\end{align*}

\subsection{Proof of Lemma \ref{lem:kNN-bad-T}}
\label{sec:pf-lem-kNN-bad-T}

The key idea is that regardless of where each nearest neighbor $x'\in\neighborsKNN(\fixedFeatureVector)$ lands in feature space $\featureSpace$, the probability that its observed time (the corresponding $\obsVar$ variable) exceeds $\timeHorizon$ is $\obsEnd(\timeHorizon|\fixedFeatureVector')\ge\theta$ (Assumption A3). This means that $d_{\neighborsKNN(\fixedFeatureVector)}^{+}(\timeHorizon)$ stochastically dominates a $\text{Binomial}(k,\theta)$ random variable. Hence,
\begin{align*}
\mathbb{P}(\badEvent{k\textsc{-NN}}{\text{bad }\timeHorizon}(\fixedFeatureVector)) & =\mathbb{P}\Big(d_{\neighborsKNN(\fixedFeatureVector)}^{+}(\timeHorizon)\le\frac{k\theta}{2}\Big)\\
 & \le\mathbb{P}\Big(\text{Binomial}(k,\theta)\le\frac{k\theta}{2}\Big)\\
 & \le\exp\Big(-\frac{1}{2\theta}\cdot\frac{(k\theta-\frac{k\theta}{2})^{2}}{k}\Big)\\
 & =\exp\Big(-\frac{k\theta}{8}\Big),
\end{align*}
where the second inequality uses a Chernoff bound for the binomial distribution. Note that the version of the Chernoff bound we use is the one in Section 2.1 of~\citet{georgehc_thesis}.\hfill$\square$

\subsection{Proof of Lemma \ref{lem:chaudhuri_dasgupta_far_neighbors}}

This proof is by \citet[Lemma~9]{chaudhuri_dasgupta_2014}. Let $\fixedFeatureVector\in\text{supp}(\featureDist)$ and $h^*>0$. Let~$\widetilde{X}$ denote the $(k+1)$-st nearest neighbor of $\fixedFeatureVector$, and $N_{x,h^*}\sim\text{Binomial}(n,\featureDist(\mathcal{B}_{\fixedFeatureVector,h^*}))$ denote the number of training data that land within distance~$h^*$ of~$\fixedFeatureVector$. Note that $\rho(\fixedFeatureVector, \widetilde{X})\ge h^*$ implies that $N_{x,h^*}\le k$. Therefore, with the help of a Chernoff bound for the binomial distribution \citep[Section 2.1]{georgehc_thesis} (with the assumption $1\le k\le\frac12 n\featureDist(\mathcal{B}_{\fixedFeatureVector,h^*})$),
\begin{align*}
& \mathbb{P}(\rho(\fixedFeatureVector, \widetilde{X})\ge h^*) \\
& \quad \le \mathbb{P}(N_{x,h^*}\le k) \\
& \quad \le \exp\Big(-\frac{(n\featureDist(\mathcal{B}_{\fixedFeatureVector,h^*}) - k)^2}{2n\featureDist(\mathcal{B}_{\fixedFeatureVector,h^*})}\Big) \\
& \quad \le \exp\Big(-\frac{(n\featureDist(\mathcal{B}_{\fixedFeatureVector,h^*}) - \frac12 n\featureDist(\mathcal{B}_{\fixedFeatureVector,h^*}))^2}{2n\featureDist(\mathcal{B}_{\fixedFeatureVector,h^*})}\Big) \\
& \quad = \exp\Big(-\frac{n\featureDist(\mathcal{B}_{\fixedFeatureVector,h^*})}{8}\Big). \tag*{$\square$}
\end{align*}

\subsection{Proof of Lemma \ref{lem:kNN-R2-bound}\label{subsec:pf-lem-kNN-R2-bound}}

We abbreviate the set of $k$ nearest training subjects $\neighborsKNN(\fixedFeatureVector)$ as the set $\setS$. We frequently use the fact that the function $d_{\setS}^{+}$ monotonically decreases. Provided that bad event $\badEvent{k\textsc{-NN}}{\text{bad }\timeHorizon}(\fixedFeatureVector)$ does not happen, then we have $d_{\setS}^{+}(\timeVar)>k\theta/2$ for all $\timeVar\in[0,\timeHorizon]$. Then
\begin{align*}
 & |U_2(\timeVar|\fixedFeatureVector)|\\
 & \; =\bigg|\frac{1}{k}\sum_{i\in\setS}\eventVar_i\ind\{\obsVar_i\le \timeVar\}\Big[\frac{k}{d_{\setS}^{+}(\obsVar_i)+1}-\frac{1}{\obsEnd(\obsVar_i|\fixedFeatureVector)}\Big]\bigg|\\
 & \; \le\frac{1}{k}\sum_{i\in\setS}\eventVar_i\ind\{\obsVar_i\le \timeVar\}\Big|\frac{k}{d_{\setS}^{+}(\obsVar_i)+1}-\frac{1}{\obsEnd(\obsVar_i|\fixedFeatureVector)}\Big|\\
 & \; \le\frac{1}{k}\sum_{i\in\setS}\eventVar_i\ind\{\obsVar_i\le \timeVar\}\sup_{s\in[0,\timeHorizon]}\Big|\frac{k}{d_{\setS}^{+}(s)+1}-\frac{1}{\obsEnd(s|\fixedFeatureVector)}\Big|\\
 & \; \le\sup_{s\in[0,\timeHorizon]}\Big|\frac{k}{d_{\setS}^{+}(s)+1}-\frac{1}{\obsEnd(s|\fixedFeatureVector)}\Big|\\
 & \; = \sup_{s\in[0,\timeHorizon]}\Big|\frac{k \obsEnd(s|\fixedFeatureVector) - d_{\setS}^{+}(s) - 1}{(d_{\setS}^{+}(s)+1)\obsEnd(s|\fixedFeatureVector)}\Big|\\
 & \; \le\frac{k}{(d_{\setS}^{+}(\timeHorizon)+1)\obsEnd(\timeHorizon|\fixedFeatureVector)}\sup_{s\in[0,\timeHorizon]}\Big|\obsEnd(s|\fixedFeatureVector)-\frac{d_{\setS}^{+}(s)}{k}-\frac{1}{k}\Big|\\
 & \; \le\frac{k}{d_{\setS}^{+}(\timeHorizon)\theta}\sup_{s\in[0,\timeHorizon]}\Big|\obsEnd(s|\fixedFeatureVector)-\frac{d_{\setS}^{+}(s)}{k}-\frac{1}{k}\Big|\\
 & \; <\frac{2}{k\theta}\cdot\frac{k}{\theta}\sup_{s\in[0,\timeHorizon]}\Big|\obsEnd(s|\fixedFeatureVector)-\frac{d_{\setS}^{+}(s)}{k}-\frac{1}{k}\Big|\\
 & \; =\frac{2}{\theta^{2}}\sup_{s\in[0,\timeHorizon]}\Big|\obsEnd(s|\fixedFeatureVector)-\frac{d_{\setS}^{+}(s)}{k}-\frac{1}{k}\Big|\\
 & \; \le\frac{2}{\theta^{2}}\Big(\frac{1}{k}+\sup_{s\in[0,\timeHorizon]}\Big|\obsEnd(s|\fixedFeatureVector)-\frac{d_{\setS}^{+}(s)}{k}\Big|\Big).
\end{align*}
Using abbreviation $\estObsEnd(s):=\estObsEnd^{k\textsc{-NN}}(s|\fixedFeatureVector)={d_{\setS}^{+}(s)}/{k}$,
\begin{align*}
 & \sup_{s\in[0,\timeHorizon]}|\obsEnd(s|\fixedFeatureVector)-\estObsEnd(s)| \nonumber\\
 & \le\!\!\sup_{s\in[0,\timeHorizon]}\!\!|\obsEnd(s|\fixedFeatureVector)\!-\!\mathbb{E}[\estObsEnd(s)|\widetilde{\featureVar}]| \!+\sup_{s\ge0}|\estObsEnd(s)\!-\!\mathbb{E}[\estObsEnd(s)|\widetilde{\featureVar}]|.
\end{align*}
Putting together the two inequalities above,
\begin{align*}
 |U_2(\timeVar|\fixedFeatureVector)|
 & \le\frac{2}{k\theta^{2}}+\frac{2}{\theta^{2}}\sup_{s\in[0,\timeHorizon]}|\obsEnd(s|\fixedFeatureVector)-\mathbb{E}[\estObsEnd(s)|\widetilde{\featureVar}]|\\
 & \quad+\frac{2}{\theta^{2}}\sup_{s\ge0}|\estObsEnd(s)-\mathbb{E}[\estObsEnd(s)|\widetilde{\featureVar}]|. \tag*{$\square$}
\end{align*}

\subsection{Proof of Lemma \ref{lem:kNN-bad-EDF}}

This proof technique is from \citet[Lemma 10]{chaudhuri_dasgupta_2014}, modified to handle the survival analysis setup. The randomness can be described as follows:
\begin{enumerate}[leftmargin=*,topsep=0pt,itemsep=0ex,partopsep=1ex,parsep=1ex]

\item Sample a feature vector $\widetilde{X}\in\featureSpace$ from the marginal distribution of the $(k+1$)-st nearest neighbor of~$\fixedFeatureVector$.

\item Sample $k$ feature vectors i.i.d.~from $\featureDist$ conditioned on landing in the ball $\mathcal{B}_{\fixedFeatureVector,\rho(\fixedFeatureVector,\widetilde{X})}^o$.

\item Sample $n-k-1$ feature vectors i.i.d.~from $\featureDist$ conditioned on landing in $\featureSpace\setminus\mathcal{B}_{\fixedFeatureVector,\rho(\fixedFeatureVector,\widetilde{X})}^o$.

\item Randomly permute the $n$ feature vectors sampled.

\item For each feature vector $\featureVar_i$, sample its corresponding observed time $\obsVar_i$ and censoring indicator $\eventVar_i$.

\end{enumerate}
As a technical remark, the above description of randomness requires Assumption A1 to hold in addition to using randomized tie breaking when finding the $k$ nearest neighbors. Moreover, to incorporate this tie breaking into the theory, the definition of the open ball needs to be changed slightly, upon which the proof strategy still carries through. For details, see Section 2.7 in the Appendix of \citet{chaudhuri_dasgupta_2014}.

The points sampled in step 2 are precisely the $k$ nearest neighbors of~$\fixedFeatureVector$. Thus, using the $\obsVar_i$ variables corresponding specifically to the feature vectors generated in step 2 (let's call these $k$ variables $\obsVar_{(1)},\dots,\obsVar_{(k)}$), construct the function $\Psi_{s}(\widetilde{X}):=\frac{1}{k}\sum_{\ell=1}^{k}\ind\{\obsVar_{(\ell)}>s\}$.

\allowdisplaybreaks

Note that $\estObsEnd^{k\textsc{-NN}}(s|\fixedFeatureVector)=\Psi_{s}(\widetilde{X})$, and after conditioning on $\widetilde{X}$, empirical distribution function $1-\Psi_{s}(\widetilde{X})$ is constructed from i.i.d.~samples from the CDF
\[
1-\mathbb{E}[\Psi(s)\,|\,\widetilde{X}]=1-\underbrace{\mathbb{P}(\obsVar>s\,|\,\featureVar\in\mathcal{B}_{\fixedFeatureVector,\rho(\fixedFeatureVector,\widetilde{X})}^o)}_{:=\overline{\Psi}(\widetilde{X})}.
\]
Letting $\mathbb{P}_{\widetilde{X}}$ refer to the marginal distribution
of~$\widetilde{X}$ (from step 1 of the procedure above), then by the DKW inequality,
\begin{align*}
 & \mathbb{P}\Big(\sup_{s\ge0}|\estObsEnd^{k\textsc{-NN}}(s|\fixedFeatureVector)-\mathbb{E}[\estObsEnd^{k\textsc{-NN}}(s|\fixedFeatureVector)|\widetilde{\featureVar}]|>\frac{\varepsilon\theta^{2}}{36}\Big)\\
 & =\mathbb{P}\Big(\sup_{s\ge0}|\Psi_{s}(\widetilde{X})-\overline{\Psi}(\widetilde{X})|>\frac{\varepsilon\theta^{2}}{36}\Big)\\
 & =\int_{\featureSpace}\mathbb{P}\Big(\sup_{s\ge0}|\Psi_{s}(\widetilde{X})-\overline{\Psi}(\widetilde{X})|>\frac{\varepsilon\theta^{2}}{36}\,\Big|\,\widetilde{X}=\widetilde{x}\Big)d\mathbb{P}_{\widetilde{X}}(\widetilde{x})\\
 & \le\int_{\featureSpace}2\exp\Big(-\frac{k\varepsilon^{2}\theta^{4}}{648}\Big)d\mathbb{P}_{\widetilde{X}}(\widetilde{x})\\
 & =2\exp\Big(-\frac{k\varepsilon^{2}\theta^{4}}{648}\Big). \tag*{$\square$}
\end{align*}

\subsection{Proof of Lemma \ref{lem:kNN-bad-R1}}

Again, we use the proof technique by \citet[Lemma 10]{chaudhuri_dasgupta_2014}, slightly modified. The randomness can be described as follows:
\begin{enumerate}[leftmargin=*,topsep=0pt,itemsep=0ex,partopsep=1ex,parsep=1ex]

\item Sample a feature vector $\widetilde{X}\in\featureSpace$ from the marginal distribution of the $(k+1$)-st nearest neighbor of~$\fixedFeatureVector$.

\item Sample $k$ feature vectors i.i.d.~from $\featureDist$ conditioned on landing in the ball $\mathcal{B}_{\fixedFeatureVector,\rho(\fixedFeatureVector,\widetilde{X})}^o$.

\item Sample $n-k-1$ feature vectors i.i.d.~from $\featureDist$ conditioned on landing in $\featureSpace\setminus\mathcal{B}_{\fixedFeatureVector,\rho(\fixedFeatureVector,\widetilde{X})}^o$.

\item Randomly permute the $n$ feature vectors sampled.

\item For each feature vector $\featureVar_i$, sample its corresponding observed time $\obsVar_i$ and censoring indicator $\eventVar_i$.

\item Let $\xi_i=-\frac{\eventVar_i\ind\{\obsVar_i\le \timeVar\}}{\obsEnd(\obsVar_i|\fixedFeatureVector)}$ for each $i$.

\end{enumerate}
The points sampled in step 2 are the $k$ nearest neighbors of $\fixedFeatureVector$. In particular, $U_1(\timeVar|\fixedFeatureVector)$ is the average of $k$ terms that become i.i.d.~after we condition on the $(k+1$)-st nearest neighbor $\widetilde{X}$:
\[
U_1(\timeVar|\fixedFeatureVector)=\frac{1}{k}\sum_{\ell=1}^{k}\xi_{\ell}(\widetilde{X}),
\]
where $\xi_{\ell}(\widetilde{X})$ is the $\xi_i$ variable corresponding to one of the feature vectors drawn in step 2 (which depends on $\widetilde{X}$). Each $\xi_{\ell}(\widetilde{X})$ has expectation
\begin{align}
\overline{\xi}(\widetilde{X})
&:=\mathbb{E}_{\obsVar,\eventVar}\Big[-\frac{\eventVar\ind\{\obsVar\le \timeVar\}}{\obsEnd(\obsVar|\fixedFeatureVector)}\,\Big|\,\featureVar\in\mathcal{B}_{\fixedFeatureVector,\rho(\fixedFeatureVector,\widetilde{X})}^o\Big] \nonumber \\
&\hspace{.25em}=\mathbb{E}[U_1(\timeVar|\fixedFeatureVector)|\widetilde{\featureVar}],
\label{eq:Ek-U1}
\end{align}
which is a function of random variable $\widetilde{X}$. Moreover, each $\xi_{\ell}(\widetilde{X})$ is bounded in $[-\frac{1}{\obsEnd(\timeVar|\fixedFeatureVector)},0]$ (note that Assumption A2 ensures that $\eventVar$ in the expectation is not almost surely 0).

Letting $\mathbb{P}_{\widetilde{X}}$ refer to the marginal distribution of the $(k+1)$-st nearest neighbor (from step 1 of the procedure above), then by Hoeffding's inequality, $\obsEnd(\cdot|\fixedFeatureVector)$ monotonically decreasing, and Assumption A3,
\begin{align*}
 & \mathbb{P}\Big(|U_1(\timeVar|\fixedFeatureVector)-\mathbb{E}[U_1(\timeVar|\fixedFeatureVector)|\widetilde{\featureVar}]|\ge\frac{\varepsilon}{12}\Big)\\
 & =\mathbb{P}\Big(\Big|\frac{1}{k}\sum_{\ell=1}^{k}\xi_{\ell}(\widetilde{X})-\overline{\xi}(\widetilde{X})\Big|\ge\frac{\varepsilon}{12}\Big)\\
 & =\int_{\featureSpace}\mathbb{P}\Big(\Big|\frac{1}{k}\sum_{\ell=1}^{k}\xi_{\ell}(\widetilde{X})-\overline{\xi}(\widetilde{X})\Big|\ge\frac{\varepsilon}{18}\,\Big|\,\widetilde{X}=\widetilde{x}\Big)d\mathbb{P}_{\widetilde{X}}(\widetilde{x})\\
 & \le\int_{\featureSpace}2\exp\Big(-\frac{k\varepsilon^{2}[\obsEnd(\timeVar|\fixedFeatureVector)]^2}{162}\Big)d\mathbb{P}_{\widetilde{X}}(\widetilde{x})\\
 & =2\exp\Big(-\frac{k\varepsilon^{2}[\obsEnd(\timeVar|\fixedFeatureVector)]^2}{162}\Big)\\
 & \le2\exp\Big(-\frac{k\varepsilon^{2}\theta^2}{162}\Big). \tag*{$\square$}
\end{align*}

\subsection{Proof of Lemma \ref{lem:kNN-R1-bias}\label{subsec:pf-lem-kNN-R1-bias}}

Recall from equation~\eqref{eq:Ek-U1} in Lemma \ref{lem:kNN-bad-R1}'s proof that
\[
 \mathbb{E}[U_1(\timeVar|\fixedFeatureVector)|\widetilde{\featureVar}]
 = \mathbb{E}_{\obsVar,\eventVar}\Big[-\frac{\eventVar\ind\{\obsVar\le \timeVar\}}{\obsEnd(\obsVar|\fixedFeatureVector)}\,\Big|\,\featureVar\in\mathcal{B}_{\fixedFeatureVector,\rho(\fixedFeatureVector,\widetilde{X})}^o\Big],
\]
where $\widetilde{X}$ is the $(k+1)$-st nearest neighbor of $\fixedFeatureVector$.
With abbreviation $\mathcal{B}^o:=\mathcal{B}_{\fixedFeatureVector,\rho(\fixedFeatureVector,\widetilde{X})}^o$,
\begin{align}
 &|\mathbb{E}[U_1(\timeVar|\fixedFeatureVector)|\widetilde{\featureVar}] -\log \survEnd(\timeVar|\fixedFeatureVector)| \nonumber \\
 & = |\mathbb{E}[U_1(\timeVar|\fixedFeatureVector) -\log \survEnd(\timeVar|\fixedFeatureVector)\,|\,\widetilde{\featureVar}]| \nonumber \\
 & = \Big| \frac{\int_{\mathcal{B}^o} \{\mathbb{E}_{\obsVar,\eventVar}[-\frac{\eventVar\ind\{\obsVar\le \timeVar\}}{\obsEnd(\obsVar|\fixedFeatureVector)}\,| \featureVar=\fixedFeatureVector'] - \log \survEnd(\timeVar|\fixedFeatureVector)\} d\featureDist(\fixedFeatureVector') }{\featureDist(\mathcal{B}^o)} \Big|\nonumber \\
 & \le \frac{\int_{\mathcal{B}^o} |\mathbb{E}_{\obsVar,\eventVar}[-\frac{\eventVar\ind\{\obsVar\le \timeVar\}}{\obsEnd(\obsVar|\fixedFeatureVector)}\,| \featureVar=\fixedFeatureVector'] - \log \survEnd(\timeVar|\fixedFeatureVector)| d\featureDist(\fixedFeatureVector') }{\featureDist(\mathcal{B}^o)} \label{eq:k-nn-survival-ptwise-R1-helper2b}
\end{align}
As we show next, for any $x'\in\mathcal{B}_{\fixedFeatureVector,h^*},$
\begin{equation}
\Big|\mathbb{E}_{\obsVar,\eventVar}\Big[-\frac{\eventVar\ind\{\obsVar\le \timeVar\}}{\obsEnd(\obsVar|\fixedFeatureVector)}\,\Big|\,\featureVar=\fixedFeatureVector'\Big] -\log \survEnd(\timeVar|\fixedFeatureVector)\Big|\le\frac{\varepsilon}{18},\label{eq:k-nn-survival-ptwise-helper3}
\end{equation}
which, combined with inequality \eqref{eq:k-nn-survival-ptwise-R1-helper2b} and noting that $\rho(\fixedFeatureVector,\widetilde{X})\le (h^*)^{\holderIndex}$, implies that
\[
|\mathbb{E}[U_1(\timeVar|\fixedFeatureVector)|\widetilde{\featureVar}]-\log \survEnd(\timeVar|\fixedFeatureVector)|\le\frac{\varepsilon}{18}.
\]
This means that conditioning on event $\badEvent{k\textsc{-NN}}{\text{far neighbors}}(\fixedFeatureVector)$ not happening, we deterministically have ${|\mathbb{E}[U_1(\timeVar|\fixedFeatureVector)|\widetilde{\featureVar}]-\log \survEnd(\timeVar|\fixedFeatureVector)|}\le\varepsilon/18$.

We now just need to show that inequality \eqref{eq:k-nn-survival-ptwise-helper3}
holds. First, note that $\log \survEnd(\timeVar|\fixedFeatureVector)$ is equal to the following expectation:
\begin{align}
 & \mathbb{E}_{\obsVar,\eventVar}\bigg[-\frac{\eventVar\ind\{\obsVar\le \timeVar\}}{\obsEnd(\obsVar|\fixedFeatureVector)}\,\bigg|\,\featureVar=\fixedFeatureVector\bigg]\nonumber\\
 & \quad =-\int_0^{\timeVar}\Big[\int_{s}^{\infty}\frac{1}{\obsEnd(s|\fixedFeatureVector)}d\mathbb{P}_{C|\featureVar=\fixedFeatureVector}(c)\Big]d\mathbb{P}_{\survVar|\featureVar=\fixedFeatureVector}(s)\nonumber\\
 & \quad =-\int_0^{\timeVar}\frac{1}{\obsEnd(s|\fixedFeatureVector)}\Big[\int_{s}^{\infty}d\mathbb{P}_{C|\featureVar=\fixedFeatureVector}(c)\Big]d\mathbb{P}_{\survVar|\featureVar=\fixedFeatureVector}(s)\nonumber\\
 & \quad =-\int_0^{\timeVar}\frac{1}{\obsEnd(s|\fixedFeatureVector)}\censEnd(s|\fixedFeatureVector)\survDensity(s|\fixedFeatureVector)ds\nonumber\\
 & \quad =-\int_0^{\timeVar}\frac{1}{\survEnd(s|\fixedFeatureVector)\censEnd(s|\fixedFeatureVector)}\censEnd(s|\fixedFeatureVector)\survDensity(s|\fixedFeatureVector)ds\nonumber\\
 & \quad =-\int_0^{\timeVar}\frac{1}{\survEnd(s|\fixedFeatureVector)}\survDensity(s|\fixedFeatureVector)ds\nonumber\\
 & \quad =\log \survEnd(\timeVar|\fixedFeatureVector)-\log\underbrace{\survEnd(0|\fixedFeatureVector)}_{1}\nonumber\\
 & \quad =\log \survEnd(\timeVar|\fixedFeatureVector)\label{eq:logF-integral},
\end{align}
where we have used the fact that $\frac{d}{dx}\log \survEnd(\timeVar|\fixedFeatureVector)=-\frac{\survDensity(\timeVar|\fixedFeatureVector)}{\survEnd(\timeVar|\fixedFeatureVector)}$
since $\survEnd$ is 1 minus the CDF and $f$ is the PDF of distribution
$\mathbb{P}_{\survVar|\featureVar=\fixedFeatureVector}$, and also Assumption A2 ensures that $\eventVar$ is not almost surely 0 (so that the integrals above are valid).

For any $\fixedFeatureVector'$ within distance $h^*$ of $\fixedFeatureVector$, using an integral calculation similar to the one above,
\begin{align*}
 & \bigg|\mathbb{E}_{\obsVar,\eventVar}\bigg[-\frac{\eventVar\ind\{\obsVar\le \timeVar\}}{\obsEnd(\obsVar|\fixedFeatureVector)}\;\bigg|\;\featureVar=\fixedFeatureVector'\bigg]-\log \survEnd(\timeVar|\fixedFeatureVector)\bigg|\\
 & =\bigg|-\int_0^{\timeVar}\frac{1}{\obsEnd(s|\fixedFeatureVector)}\censEnd(s|\fixedFeatureVector')\survDensity(s|\fixedFeatureVector')ds\\
 & \quad\quad+\int_0^{\timeVar}\frac{1}{\obsEnd(s|\fixedFeatureVector)}\censEnd(s|\fixedFeatureVector)\survDensity(s|\fixedFeatureVector)ds\bigg|\\
 & =\bigg|\int_0^{\timeVar}\frac{1}{\obsEnd(s|\fixedFeatureVector)}(\censEnd(s|\fixedFeatureVector)\survDensity(s|\fixedFeatureVector)-\censEnd(s|\fixedFeatureVector')\survDensity(s|\fixedFeatureVector'))ds\bigg|\\
 & \le\int_0^{\timeVar}\frac{1}{\obsEnd(s|\fixedFeatureVector)}\big|\censEnd(s|\fixedFeatureVector)\survDensity(s|\fixedFeatureVector)-\censEnd(s|\fixedFeatureVector')\survDensity(s|\fixedFeatureVector')\big|ds.
\end{align*}
Using the fact that $\obsEnd(\cdot|\fixedFeatureVector)$ monotonically decreases, Assumptions
A3 and A4 (in particular, recall that $\censEnd(s|\cdot)\survDensity(s|\cdot)$ is H\"{o}lder continuous with parameters $(\lips_{\textsc{\tiny{\survVar}}}+\survDensity^*\lips_{\textsc{\tiny{\censVar}}}s)$ and $\holderIndex$), and
the choice of critical distance $h^*\le[\frac{\varepsilon\theta}{18( \lips_{\textsc{\tiny{\survVar}}}\timeHorizon+({\survDensity^*\lips_{\textsc{\tiny{\censVar}}}\timeHorizon^2})/2 )}]^{1/\holderIndex}$,
\begin{align*}
 & \int_0^{\timeVar}\frac{1}{\obsEnd(s|\fixedFeatureVector)}\big|\censEnd(s|\fixedFeatureVector)\survDensity(s|\fixedFeatureVector)-\censEnd(s|\fixedFeatureVector')\survDensity(s|\fixedFeatureVector')\big|ds\\
 & \quad \le\frac{1}{\obsEnd(\timeVar|\fixedFeatureVector)}\int_0^{\timeVar}\big|\censEnd(s|\fixedFeatureVector)\survDensity(s|\fixedFeatureVector)-\censEnd(s|\fixedFeatureVector')\survDensity(s|\fixedFeatureVector')\big|ds\\
 & \quad \le\frac{1}{\obsEnd(\timeVar|\fixedFeatureVector)}\int_0^{\timeVar}( \lips_{\textsc{\tiny{\survVar}}}+\survDensity^*\lips_{\textsc{\tiny{\censVar}}}s )\rho(\fixedFeatureVector,\fixedFeatureVector')^\holderIndex ds\\
 & \quad \le\frac{1}{\obsEnd(\timeVar|\fixedFeatureVector)}\int_0^{\timeVar} (\lips_{\textsc{\tiny{\survVar}}}+\survDensity^*\lips_{\textsc{\tiny{\censVar}}}s) (h^*)^\holderIndex ds\\
 & \quad =\frac{(h^*)^\holderIndex}{\obsEnd(\timeVar|\fixedFeatureVector)}\Big(\lips_{\textsc{\tiny{\survVar}}}\timeVar+\frac{\survDensity^*\lips_{\textsc{\tiny{\censVar}}}\timeVar^2}2\Big)\\
 & \quad \le\frac{(h^*)^\holderIndex}{\theta}\Big(\lips_{\textsc{\tiny{\survVar}}}\timeHorizon+\frac{\survDensity^*\lips_{\textsc{\tiny{\censVar}}}\timeHorizon^2}2\Big) \\
 & \quad \le \frac{\big([\frac{\varepsilon\theta}{18( \lips_{\textsc{\tiny{\survVar}}}\timeHorizon+({\survDensity^*\lips_{\textsc{\tiny{\censVar}}}\timeHorizon^2})/2 )}]^{1/\holderIndex}\big)^\holderIndex}{\theta}\Big(\lips_{\textsc{\tiny{\survVar}}}\timeHorizon+\frac{\survDensity^*\lips_{\textsc{\tiny{\censVar}}}\timeHorizon^2}2\Big) \\
 & \quad =\frac{\varepsilon}{18}.
\end{align*}
which establishes inequality \eqref{eq:k-nn-survival-ptwise-helper3}.\hfill$\square$

\subsection{Proof of Lemma \ref{lem:kNN-R2-bias}\label{subsec:pf-lem-kNN-R2-bias}}

Recall the description of randomness in the proof of Lemma~\ref{lem:kNN-bad-R1}. Let $\widetilde{X}$ denote the $(k+1)$-st nearest neighbor. Since bad event $\badEvent{k\textsc{-NN}}{\text{far neighbors}}(\fixedFeatureVector)$ does not happen, we know that $\rho(\fixedFeatureVector,\widetilde{X})\le (h^*)^{\holderIndex}$. This means that, using the fact that $\obsEnd(s|\cdot)$ is H\"{o}lder continuous with parameters $(\lips_{\textsc{\tiny{\survVar}}}+\lips_{\textsc{\tiny{\censVar}}})s$ and $\holderIndex$,
\begin{align*}
 & |\obsEnd(s|\fixedFeatureVector)-\mathbb{E}[\estObsEnd^{k\textsc{-NN}}(s|\fixedFeatureVector)|\widetilde{\featureVar}]|\nonumber \\
 & \quad = |\obsEnd(s|\fixedFeatureVector)-\mathbb{P}(\obsVar>s\,|\,\featureVar\in\mathcal{B}_{\fixedFeatureVector,\rho(\fixedFeatureVector,\widetilde{X})}^o)|\nonumber \\
 & \quad =\Bigg|\obsEnd(s|\fixedFeatureVector)-\frac{\int_{\mathcal{B}_{\fixedFeatureVector,\rho(\fixedFeatureVector,\widetilde{X})}^o}\obsEnd(s|\fixedFeatureVector')d\featureDist(\fixedFeatureVector')}{\featureDist(\mathcal{B}_{\fixedFeatureVector,\rho(\fixedFeatureVector,\widetilde{X})}^o)}\Bigg|\nonumber \\
 & \quad =\Bigg|\frac{\int_{\mathcal{B}_{\fixedFeatureVector,\rho(\fixedFeatureVector,\widetilde{X})}^o}[\obsEnd(s|\fixedFeatureVector)-\obsEnd(s|\fixedFeatureVector')]d\featureDist(\fixedFeatureVector')}{\featureDist(\mathcal{B}_{\fixedFeatureVector,\rho(\fixedFeatureVector,\widetilde{X})}^o)}\Bigg|\nonumber \\
 & \quad \le\frac{\int_{\mathcal{B}_{\fixedFeatureVector,\rho(\fixedFeatureVector,\widetilde{X})}^o}|\obsEnd(s|\fixedFeatureVector)-\obsEnd(s|\fixedFeatureVector')|d\featureDist(\fixedFeatureVector')}{\featureDist(\mathcal{B}_{\fixedFeatureVector,\rho(\fixedFeatureVector,\widetilde{X})}^o)}\nonumber \\
 & \quad \le\frac{\int_{\mathcal{B}_{\fixedFeatureVector,\rho(\fixedFeatureVector,\widetilde{X})}^o} (\lips_{\textsc{\tiny{\survVar}}}+\lips_{\textsc{\tiny{\censVar}}})s \rho(\fixedFeatureVector,\fixedFeatureVector')^\holderIndex d\featureDist(\fixedFeatureVector')}{\featureDist(\mathcal{B}_{\fixedFeatureVector,\rho(\fixedFeatureVector,\widetilde{X})}^o)}\nonumber \\
 & \quad \le\frac{ (\lips_{\textsc{\tiny{\survVar}}}+\lips_{\textsc{\tiny{\censVar}}})s (h^*)^\holderIndex\int_{\mathcal{B}_{\fixedFeatureVector,\rho(\fixedFeatureVector,\widetilde{X})}^o}d\featureDist(\fixedFeatureVector')}{\featureDist(\mathcal{B}_{\fixedFeatureVector,\rho(\fixedFeatureVector,\widetilde{X})}^o)}\nonumber \\
 & \quad = (\lips_{\textsc{\tiny{\survVar}}}+\lips_{\textsc{\tiny{\censVar}}})s(h^*)^\holderIndex.
\end{align*}
Taking the supremum of both sides over $s\in[0,\timeHorizon]$, multiplying through
by $\frac{2}{\theta^{2}}$, and noting that $h^*\le[\frac{\varepsilon\theta^{2}}{36(\lips_{\textsc{\tiny{\survVar}}}+\lips_{\textsc{\tiny{\censVar}}})\timeHorizon}]^{1/\holderIndex}$,
we obtain
\begin{align*}
 & \frac{2}{\theta^{2}}\sup_{s\in[0,\timeHorizon]}|\obsEnd(s|\fixedFeatureVector)-\mathbb{E}[\estObsEnd^{k\textsc{-NN}}(s|\fixedFeatureVector)|\widetilde{\featureVar}]|\\
 & \quad \le\frac{2}{\theta^{2}}(\lips_{\textsc{\tiny{\survVar}}}+\lips_{\textsc{\tiny{\censVar}}})\timeHorizon (h^*)^\holderIndex\\
 & \quad \le\frac{2}{\theta^{2}}(\lips_{\textsc{\tiny{\survVar}}}+\lips_{\textsc{\tiny{\censVar}}})\timeHorizon \Big(\Big[\frac{\varepsilon\theta^{2}}{36(\lips_{\textsc{\tiny{\survVar}}}+\lips_{\textsc{\tiny{\censVar}}})\timeHorizon}\Big]^{1/\holderIndex}\Big)^\holderIndex=\frac{\varepsilon}{18}. \tag*{$\square$}
\end{align*}

\subsection{Proof of Lemma \ref{lem:kNN-R3}}
\label{subsec:pf-lem-kNN-R3}

We abbreviate the set of $k$ nearest training subjects $\neighborsKNN(\fixedFeatureVector)$ as the set $\setS$. Since bad event $\badEvent{k\textsc{-NN}}{\text{bad }\timeHorizon}(\fixedFeatureVector)$ does not happen, we have $d_{\setS}^{+}(\timeHorizon)>k\theta/2$. Note that $|U_3(\timeVar|\fixedFeatureVector)|=\sum_{i\in\setS}\Xi_i$, where
\begin{align*}
\Xi_i & :=\eventVar_i\ind\{\obsVar_i\le \timeVar\}\sum_{\ell=2}^{\infty}\frac{1}{\ell(d_{\setS}^{+}(\obsVar_i)+1)^{\ell}}.
\end{align*}
Using the fact that $d_{\setS}^{+}$ monotonically decreases, and that $\sum_{\ell=2}^{\infty}\frac{1}{\ell(z+1)^{\ell}}=\log(1+\frac{1}z)-\frac{1}{z+1}\le\frac{1}{(z+1)^{2}}$ for all $z\ge0.46241$,
\begin{align*}
\Xi_i &\le \sum_{\ell=2}^{\infty}\frac{1}{\ell(d_{\setS}^{+}(\timeVar)+1)^{\ell}} \le \frac{1}{(d_{\setS}^{+}(\timeVar)+1)^{2}} \\
&\le \frac{1}{(d_{\setS}^{+}(\timeHorizon)+1)^{2}} \le\frac{1}{(d_{\setS}^{+}(\timeHorizon))^{2}} \le\frac{4}{k^{2}\theta^{2}}.
\end{align*}
Lastly, using the assumption that $k\ge\frac{72}{\varepsilon\theta^{2}}$,
\begin{align*}
|U_3(\timeVar|\fixedFeatureVector)|&=\sum_{i\in\setS}\Xi_i\le\frac{4|\setS|}{k^{2}\theta^{2}}=\frac{4k}{k^{2}\theta^{2}}=\frac{4}{k\theta^{2}}\le\frac{\varepsilon}{18}. \tag*{$\square$}
\end{align*}

\subsection{Technical Changes to the Analysis by \citet{foldes_1981}}
\label{sec:technical-changes}

Our event $\badEvent{k\textsc{-NN}}{\text{bad }\timeHorizon}(\fixedFeatureVector)$ not happening ensures that $d_{\neighborsKNN(\fixedFeatureVector)}^{+}(\timeHorizon)>k\theta/2$. F\"{o}ldes and Rejt\"{o} instead condition on two separate bad events, the first being $\{\max_{i\in\neighborsKNN(\fixedFeatureVector)}\obsVar_i\le \timeHorizon\}$. When this bad event does not happen, then the number of survivors beyond time $\timeHorizon$ satisfies $d_{\neighborsKNN(\fixedFeatureVector)}^{+}(\timeHorizon)\ge1$. This is a bit too weak of a requirement on $d_{\neighborsKNN(\fixedFeatureVector)}^{+}(\timeHorizon)$. As a result, F\"{o}ldes and Rejt\"{o} condition on a second bad event not happening to guarantee that (slightly rephrased to be in our setup's context) $d_{\neighborsKNN(\fixedFeatureVector)}^{+}(\timeHorizon)>k [\obsEnd(\timeHorizon|\fixedFeatureVector)]^2$ (they ensure that this holds with high probability using Bernstein's inequality). Effectively this means that they have an extra bad event that they condition on not happening.

Next, in the partitioning of $[0,\timeHorizon]$ into $L(\varepsilon)$ pieces, F\"{o}ldes and Rejt\"{o} actually have all bad events except $\{\max_{i\in\neighborsKNN(\fixedFeatureVector)}\obsVar_i\le \timeHorizon\}$ being repeated for $t=\eta_{1},\dots,\eta_{L(\varepsilon)}$. Put another way, their final bound is looser since they multiply many more terms by~$L(\varepsilon)$.

Lastly, F\"{o}ldes and Rejt\"{o} use versions of the DKW and Bernstein's inequalities with vintage constants that have since been improved. Notably, nowadays the DKW inequality generally refers to the refinement by \citet{massart_1990}.

\section{Proof of Corollary~\ref{thm:kNN-strong-consistency}}
\label{sec:pf-kNN-strong-consistency}

The basic idea of the proof is to solve for $\varepsilon$ and $n$ that satisfy both: i) sufficient conditions \eqref{eq:kNN-sufficient} with error probability set to be equal to $\probError=1/n^2$, and ii) $\varepsilon\le\frac{18\Lips (r^*)^\holderIndex}{\theta}$. The choice of $\probError$ is not special and is chosen so that summing it from~$n=1$ to $n=\infty$ results in a finite number, upon which the Borel-Cantelli lemma finishes the proof. (This proof would still work but with different constants if $\probError=1/n^{\nu}$ for any $\nu>1$ due to convergence of hyperharmonic series.) There is a small technical hiccup of making sure that there is a valid integer to set $k$ to be. The rest is a fair amount of algebra involving the Lambert W function. We provide the details for just the $k$-NN case below.

Let $\probError=1/n^{2}$. Recall that $h^*=(\frac{\varepsilon\theta}{18\Lips})^{1/\holderIndex}$. Then one can easily check that each of the terms in bound~\eqref{eq:kNN-ptwise-bound} is at most~$\probError/4$ when~$k$ and~$n$ satisfy
\[
\frac{648}{\varepsilon^{2}\theta^{4}}\log\frac{32n^{2}}{\varepsilon}\le k\le\frac{np_{\min}}{2}\Big(\frac{\varepsilon\theta}{18\Lips}\Big)^{d/\holderIndex}.
\]
We shall show how to set $\varepsilon\in(0, \frac{18\Lips (r^*)^\holderIndex}\theta]$ as a function of $n$ (along with additional conditions on $n$) such that
\begin{equation}
\frac{649}{\varepsilon^{2}\theta^{4}}\log\frac{32n^{2}}{\varepsilon}\le\frac{np_{\min}}{2}\Big(\frac{\varepsilon\theta}{18\Lips}\Big)^{d/\holderIndex}.\label{eq:kNN-almost-sure-helper1}
\end{equation}
Having the constant 649 is intentional. When inequality \eqref{eq:kNN-almost-sure-helper1} holds, then
\[
\frac{648}{\varepsilon^{2}\theta^{4}}\log\frac{32n^{2}}{\varepsilon}+1
\!<\!
\frac{649}{\varepsilon^{2}\theta^{4}}\log\frac{32n^{2}}{\varepsilon}
\!\le\!
\frac{np_{\min}}{2}\Big(\frac{\varepsilon\theta}{18\Lips}\Big)^{d/\holderIndex},
\]
which guarantees there to be at least one integer between $\frac{648}{\varepsilon^{2}\theta^{4}}\log\frac{32}{\varepsilon\probError}$ and $\frac{np_{\min}}{2}(\frac{\varepsilon\theta}{18\Lips})^{d/\holderIndex}$. Hence, a valid choice for $k$ is
\[
k=\Big\lfloor\frac{np_{\min}}{2}\Big(\frac{\varepsilon\theta}{18\Lips}\Big)^{d/\holderIndex}\Big\rfloor.
\]
The following pair of lemmas help us obtain a choice for $\varepsilon$ as well as conditions on how large $n$ should be; these lemmas are fundamentally about the Lambert W function.
\begin{lem}
[Lemma 3.6.11 of \citealt{george_devavrat_book}, combined with Theorem 2.1 of \citealt{hoorfar_2008}]
\label{lem:W0-conditions}
Let $W_{0}$ be the principal branch of the Lambert W function. For any $a>0,b>0,c>0$, and $z\in(0,b)$, we have
\[
z^{c}\ge a\log\frac{b}{z}
\]
if either of the following is true:
\begin{itemize}
\item [(a)]We have
\begin{equation}
z\ge b\exp\bigg(-\frac{1}{c}W_{0}\Big(\frac{cb^{c}}{a}\Big)\bigg).
\label{eq:Chen-Shah-Lem-3-6-11-sufficient-condition}
\end{equation}
\item [(b)]We have 
\[
\frac{cb^{c}}{a}\ge e\quad\text{and}\quad z\ge\Big[\frac{a}{c}\log\Big(\frac{cb^{c}}{a}\Big)\Big]^{1/c}.
\]
\end{itemize}
\end{lem}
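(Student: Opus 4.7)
The plan is to derive part (a) directly from the defining property of the Lambert W function, and then obtain part (b) as an algebraic corollary of (a) combined with Hoorfar and Hassani's lower bound on $W_0$.

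For part (a), I would reparametrize $z = b\exp(-y/c)$ with $y \ge 0$; this is a bijection between $z \in (0, b]$ and $y \in [0, \infty)$. Under this change of variable, the inequality $z^c \ge a\log(b/z)$ becomes $b^c e^{-y} \ge ay/c$, i.e., $\frac{cb^c}{a} \ge y e^y$. Because $y \mapsto y e^y$ is strictly increasing on $[0, \infty)$ with inverse $W_0$, this is equivalent to $y \le W_0(cb^c/a)$, which unpacks to exactly the lower bound on $z$ stated in \eqref{eq:Chen-Shah-Lem-3-6-11-sufficient-condition}.

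For part (b), I would invoke the Hoorfar-Hassani bound $W_0(x) \ge \log x - \log\log x$, valid for $x \ge e$. Since the hypothesis $cb^c/a \ge e$ makes this applicable, we get
\[
b \exp\bigl(-\tfrac{1}{c} W_0(cb^c/a)\bigr) \le b \exp\bigl(-\tfrac{1}{c}[\log(cb^c/a) - \log\log(cb^c/a)]\bigr) = \bigl[\tfrac{a}{c}\log(cb^c/a)\bigr]^{1/c},
\]
where the final equality is a short simplification using $b\exp(-\log(cb^c/a)/c) = (a/c)^{1/c}$. The hypothesized lower bound on $z$ in (b) therefore implies the sufficient condition of (a), and the claim follows.

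The main obstacle is not analytical depth but getting the domain conditions right: the threshold $cb^c/a \ge e$ in (b) is exactly what is needed for $\log\log(cb^c/a) \ge 0$ (so that the Hoorfar-Hassani inequality is a genuine improvement over the trivial bound $W_0 \ge 0$) and for keeping us within the regime of the principal branch $W_0$ where the monotonicity argument in (a) applies cleanly.
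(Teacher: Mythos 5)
Your proof is correct. For part (a), you actually derive the result from the defining property of $W_0$ via the change of variable $z = b\exp(-y/c)$, whereas the paper simply cites it as Lemma 3.6.11 of \citet{george_devavrat_book}; your derivation is a valid and clean way to establish that cited fact (and in fact shows the condition is an equivalence for $z \in (0,b]$). For part (b), your argument is the same as the paper's: apply the Hoorfar--Hassani lower bound $W_0(x)\ge\log x-\log\log x$ for $x\ge e$ and simplify $b\exp\bigl(-\tfrac1c[\log(cb^c/a)-\log\log(cb^c/a)]\bigr)=\bigl[\tfrac{a}{c}\log(cb^c/a)\bigr]^{1/c}$. One small nitpick on your closing remark: the hypothesis $cb^c/a\ge e$ is needed purely so that the Hoorfar--Hassani inequality is applicable; the monotonicity argument in part (a) works for any $cb^c/a>0$ and does not require that threshold, so the phrase about ``keeping us within the regime of the principal branch'' is not quite the right justification, though it does not affect the validity of the proof.
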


\begin{proof}
This lemma with only part (a) is precisely Lemma 3.6.11 of \citet{george_devavrat_book}. Under the assumption that $\frac{cb^c}{a}\ge e$, then applying Theorem 2.1 of \citet{hoorfar_2008},
\[
W_{0}\Big(\frac{cb^{c}}{a}\Big)\ge\log\Big(\frac{cb^{c}}{a}\Big)-\log\log\Big(\frac{cb^{c}}{a}\Big).
\]
Thus, a sufficient condition to guarantee that inequality \eqref{eq:Chen-Shah-Lem-3-6-11-sufficient-condition} holds is to ask that
\begin{align*}
z & \ge b\exp\bigg(-\frac{1}{c}\Big[\log\Big(\frac{cb^{c}}{a}\Big)-\log\log\Big(\frac{cb^{c}}{a}\Big)\Big]\bigg)\\
 & =\Big[\frac{a}{c}\log\Big(\frac{cb^{c}}{a}\Big)\Big]^{1/c}.\qedhere
\end{align*}
\end{proof}
\begin{lem}
\label{lem:W-1-conditions}
Let $W_{-1}$ be the lower branch of the Lambert W function. For any $a>0$, $b>0$, and $z>0$,
\[
z\ge a\log z+b
\]
if any of the following is true:
\begin{itemize}
\item [(a)]We have $\frac{b}{a}+\log a\le1$.
\item [(b)]We have $\frac{b}{a}+\log a>1$ and
\[
z\ge-aW_{-1}\Big(-\frac{1}{ae^{b/a}}\Big).
\]
\item [(c)]We have $\frac{b}{a}+\log a>1$ and
\[
z\ge a\big(1+\sqrt{2\log(ae^{b/a-1})}+\log(ae^{b/a-1})\big).
\]
\end{itemize}
\end{lem}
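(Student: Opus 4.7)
The plan is to reduce to a single-variable inequality. Under the substitution $u := z/a$, the target $z \ge a\log z + b$ becomes $g(u) \ge c$ where $g(u) := u - \log u$ and $c := \log a + b/a$. The function $g$ is strictly convex on $(0,\infty)$ with global minimum $g(1) = 1$, decreasing on $(0,1)$ and increasing on $(1,\infty)$. Case (a) is then immediate: when $c \le 1$, we have $g(u) \ge 1 \ge c$ for every $u > 0$, so the inequality holds for all $z > 0$.

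For case (b) (assuming $c > 1$), the level set $\{g = c\}$ consists of two points $u_- < 1 < u_+$, and on $[1,\infty)$ monotonicity of $g$ gives $g(u) \ge c$ iff $u \ge u_+$. Rewriting the defining equation $u_+ - \log u_+ = c$ as $(-u_+)e^{-u_+} = -e^{-c}$, and noting that $-u_+ \le -1$ lies in the domain of the lower branch, identifies $-u_+$ with $W_{-1}(-e^{-c})$. Substituting $e^{-c} = 1/(ae^{b/a})$ and $z = au$ then yields the stated bound $z \ge -aW_{-1}(-1/(ae^{b/a}))$.

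For case (c), I would derive a self-contained explicit upper bound on $u_+$ via a Taylor-series trick, rather than invoking an external Lambert-$W$ estimate. Set $t := \sqrt{2(c-1)} \ge 0$ and $u_0 := 1 + t + t^2/2$. The elementary inequality $1 + t + t^2/2 \le e^t$ (for $t \ge 0$) gives $\log u_0 \le t$, hence $g(u_0) \ge 1 + t + t^2/2 - t = 1 + t^2/2 = c$. Since $u_0 \ge 1$ and $g$ is increasing on $[1,\infty)$, every $u \ge u_0$ satisfies $g(u) \ge c$. Translating back, it suffices to require $z \ge au_0 = a\bigl(1 + (c-1) + \sqrt{2(c-1)}\bigr)$, and using $c - 1 = \log(ae^{b/a-1})$ rewrites this in the exact form appearing in the lemma.

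The main obstacle is case (c), since $-W_{-1}$ has no closed form and a naive direct attack would bring one back to the transcendental equation $u_+ - \log u_+ = c$. The nonobvious move is guessing the right ansatz $u_0 = 1 + t + t^2/2$ with $t = \sqrt{2(c-1)}$, after which the single Taylor inequality $e^t \ge 1 + t + t^2/2$ does all the work and neatly sidesteps any appeal to external Lambert-$W$ bounds.
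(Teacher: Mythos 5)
Your proof is correct. Parts (a) and (b) are essentially the paper's argument reorganized under the substitution $u=z/a$ and the function $g(u)=u-\log u$: part (a) is the observation $g(u)\ge g(1)=1$, which is exactly the inequality $\log z\le z-1$ the paper uses, and part (b) inverts $W_{-1}$ on $(-\infty,-1]$ just as the paper does (the paper runs the argument in the reverse direction, but it is the same identification of the threshold with $-aW_{-1}(-e^{-c})$). Part (c) is where you genuinely diverge: the paper invokes an external lower bound on $W_{-1}$ (Theorem~1 of Chatzigeorgiou, cited as \texttt{lambert\_w}) to convert the threshold from (b) into the closed form, whereas you bypass Lambert $W$ entirely by exhibiting the explicit point $u_0 = 1 + t + t^2/2$ with $t=\sqrt{2(c-1)}$ and verifying $g(u_0)\ge c$ directly from the Taylor inequality $1+t+t^2/2\le e^t$, then using monotonicity of $g$ on $[1,\infty)$. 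Your route is self-contained and in effect re-derives the Chatzigeorgiou estimate in the one place it is needed; the paper's route is shorter given the citation and makes clear that the closed form in (c) is literally an upper bound on $-W_{-1}$, which helps a reader who already knows that inequality. Both are correct, and yours is arguably preferable in a paper that would otherwise need to cite an auxiliary Lambert-$W$ bound for a three-line calculation.
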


\begin{proof}
To prove (a), using the assumption that $\frac{b}{a}+\log a\le1$, and recalling that $\log z\le z-1$ for all $z>0$,
\begin{align*}
a\log z+b & =a\Big(\log\frac{z}{a}+\frac{b}{a}+\log a\Big)\\
 & \le a\Big(\log\frac{z}{a}+1\Big)\\
 & \le a\Big(\frac{z}{a}-1+1\Big)\\
 & =z.
\end{align*}
To prove (b), first off, note that under the assumption that $\frac{b}{a}+\log a>1$, then $-\frac{1}{e}<-\frac{1}{ae^{b/a}}<0$, so $W_{-1}(-\frac{1}{ae^{b/a}})$ is well-defined. Next, assumption $z\ge-aW_{-1}\big(-\frac{1}{ae^{b/a}}\big)$ can be rewritten as
\begin{equation}
-\frac{z}{a}\le W_{-1}\Big(-\frac{1}{ae^{b/a}}\Big).\label{eq:W-1-sufficient-condition}
\end{equation}
At this point, noting that the inverse of $W_{-1}$ (namely $W_{-1}^{-1}(s)=se^{s}$, where $s\in(-\infty,-1]$) is a monotonically decreasing function, applying the inverse of $W_{-1}$ to both sides of the above inequality yields
\[
-\frac{z}{a}e^{-z/a}\ge-\frac{1}{ae^{b/a}}.
\]
Rearranging terms yields $z\ge a\log z+b$, as desired.

Lastly, the proof for (c) just builds on (b). Using Theorem 1 of \citet{lambert_w},
\[
W_{-1}\Big(-\frac{1}{ae^{b/a}}\Big)>-1-\sqrt{2\log(ae^{b/a-1})}-\log(ae^{b/a-1}).
\]
A sufficient condition that guarantees inequality~\eqref{eq:W-1-sufficient-condition} to hold is that
\[
-\frac{z}{a}\le-1-\sqrt{2\log(ae^{b/a-1})}-\log(ae^{b/a-1}),
\]
i.e.,
\[
z\ge a\big(1+\sqrt{2\log(ae^{b/a-1})}+\log(ae^{b/a-1})\big).\qedhere
\]
\end{proof}
Using Lemma \ref{lem:W0-conditions} (with $a=\frac{2\cdot649}{n \theta^4 p_{\min} (\frac{\theta}{18\Lips})^{d/\holderIndex} }$, $b=32n^{2}$, $c=\frac{d}{\holderIndex}+2$, and $z=\varepsilon$) and a bit of algebra, inequality~\eqref{eq:kNN-almost-sure-helper1} holds if
\begin{align*}
n & \ge\Big(\frac{e}{\chi}\Big)^{\frac{1}{\frac{2d}{\holderIndex}+5}},\\
\varepsilon & \ge \Big[ \frac{2\cdot649\cdot(\frac{2d}{\holderIndex}+5)}{(\frac{d}{\holderIndex}+2) \theta^4 p_{\min} (\frac{\theta}{18\Lips})^{\frac{d}{\holderIndex}}} \cdot\frac{1}{n}\cdot\log(\chi^{\frac{1}{\frac{2d}{\holderIndex}+5}}n)\Big]^{\frac{1}{\frac{d}{\holderIndex}+2}},
\end{align*}
where
\begin{equation}
\chi := \frac{(\frac{d}{\holderIndex} + 2) (32)^{\frac{d}{\holderIndex}+2} \theta^4 p_{\min} (\frac{\theta}{18\Lips})^{\frac{d}{\holderIndex}}}{2\cdot649}.\label{eq:kNN-almost-sure-main-constant}
\end{equation}
In particular, we shall choose
\[
\varepsilon=\Big[ \frac{2\cdot649\cdot(\frac{2d}{\holderIndex}+5)}{(\frac{d}{\holderIndex}+2) \theta^4 p_{\min} (\frac{\theta}{18\Lips})^{\frac{d}{\holderIndex}}} \cdot\frac{1}{n}\cdot\log(\chi^{\frac{1}{\frac{2d}{\holderIndex}+5}}n)\Big]^{\frac{1}{\frac{d}{\holderIndex}+2}}.
\]
To make sure that $\varepsilon\le\frac{18\Lips (r^*)^\holderIndex}\theta$, we require that
\begin{align}
n & \!\ge\! \frac{2\cdot649}{(\frac{d}{\holderIndex}+2)p_{\min}(18\theta\Lips)^2(r^*)^{2\alpha+d}} \big[({\textstyle \frac{2d}{\holderIndex}\!+\!5})\log n\!+\!\log\chi\big].\label{eq:kNN-almost-sure-helper2}
\end{align}
Using Lemma \ref{lem:W-1-conditions} (with $a=\frac{2\cdot649\cdot(\frac{2d}{\holderIndex}+5)}{(\frac{d}{\holderIndex}+2)p_{\min}(18\theta\Lips)^2(r^*)^{2\alpha+d}}$, $b=\frac{2\cdot649\cdot\log\chi}{(\frac{d}{\holderIndex}+2)p_{\min}(18\theta\Lips)^2(r^*)^{2\alpha+d}}$, and $z=n$), and defining
\[
u:=\log\Big(\Big[ \frac{2\cdot649\cdot(\frac{2d}{\holderIndex}+5)}{(\frac{d}{\holderIndex}+2)p_{\min}(18\theta\Lips)^2(r^*)^{2\alpha+d}} \Big]\frac{\chi^{\frac{1}{\frac{2d}{\holderIndex}+5}}}{e}\Big),
\]
then condition \eqref{eq:kNN-almost-sure-helper2} holds if $u\le0$ or, in the event that $u>0$, if we further constrain $n$ to satisfy
\begin{align*}
n & \ge \frac{2\cdot649\cdot(\frac{2d}{\holderIndex}+5)}{(\frac{d}{\holderIndex}+2)p_{\min}(18\theta\Lips)^2(r^*)^{2\alpha+d}} (1+\sqrt{2u}+u).
\end{align*}
In summary, define
\begin{align*}
c_1 &:= \frac12 p_{\min}^{\frac{2\holderIndex}{2\holderIndex+d}} \Big( \frac{649(5\holderIndex+2d)}{162(2\holderIndex+d)} \Big)^{\frac{d}{2\holderIndex+d}} \Big( \frac1{\Lips\theta} \Big)^{\frac{2d}{2\holderIndex+d}}, \\
c_2 &:= \chi^{\frac{1}{2d+5}} = \Big[\frac{512(2\holderIndex+d)p_{\min}\theta^4}{649\holderIndex}\Big(\frac{16\theta}{9\Lips}\Big)^{\frac{d}{\holderIndex}}\Big]^{\frac{\holderIndex}{5\holderIndex+2d}}, \\
c_3 &:= \Big[ \frac{2\cdot649\cdot(\frac{2d}{\holderIndex}+5)}{(\frac{d}{\holderIndex}+2) \theta^4 p_{\min} (\frac{\theta}{18\Lips})^{\frac{d}{\holderIndex}}} \Big]^{\frac{1}{\frac{d}{\holderIndex}+2}} \\
    &\hspace{.25em}= \Big[\frac{1298(5\holderIndex+2d)}{(2\holderIndex+d)p_{\min}\theta^4}\Big(\frac{18\Lips}{\theta}\Big)^{\frac{d}{\holderIndex}}\Big]^{\frac{\holderIndex}{2\holderIndex+d}} \\
c_4 &:= \frac{2\cdot649\cdot(\frac{2d}{\holderIndex}+5)}{(\frac{d}{\holderIndex}+2)p_{\min}(18\theta\Lips)^2(r^*)^{2\alpha+d}} \\
    &\hspace{.25em}= \frac{649(5\holderIndex+2d)}{162(2\holderIndex+d)p_{\min}(r^*)^{2\holderIndex+d}\theta^2\Lips^2}.
\end{align*}
Note that $u=\log (c_2 c_4 / e)$. Set
\begin{align*}
n_{0} & \!:=\!\begin{cases}
\big\lceil\frac{e^{\holderIndex/(5\holderIndex+2d)}}{c_2}\big\rceil & \!\text{if }\frac{c_2 c_4}e \!\le\! 1,\\
\big\lceil\max\big\{\frac{e^{\holderIndex/(5\holderIndex+2d)}}{c_2},\\
\quad c_4(1+\sqrt{2\log\frac{c_2 c_4}{e}}+\log\frac{c_2 c_4}{e})\big\}\big\rceil & \!\text{if }\frac{c_2 c_4}e \!>\! 1.
\end{cases}
\end{align*}
Then for any $n\ge n_{0}$, the conditions that we discussed for $n$ are met, so we can choose
\begin{align}
k_n & :=\Big\lfloor c_1 n^{\frac{2\holderIndex}{2\holderIndex+d}}\big(\log(c_2 n)\big)^{\frac{d}{2\holderIndex+d}}\Big\rfloor, \nonumber \\
\varepsilon_n & := c_3 \Big(\frac{\log(c_2 n)}{n}\Big)^{\frac{\holderIndex}{2\holderIndex+d}} \label{eq:kNN-almost-sure-how-to-choose-eps}
\end{align}
to achieve
\[
\mathbb{P}\Big( \sup_{\timeVar\in[0,\timeHorizon]} |\widehat{\survEnd}^{k_n\textsc{-NN}}(\timeVar|\fixedFeatureVector) - \survEnd(\timeVar|\fixedFeatureVector)| \ge\varepsilon_n\Big)\le\frac{1}{n^{2}}.
\]
As a result, we have
\begin{align*}
 & \sum_{n=1}^{\infty}\mathbb{P}\Big( \sup_{\timeVar\in[0,\timeHorizon]} |\widehat{\survEnd}^{k_n\textsc{-NN}}(\timeVar|\fixedFeatureVector) - \survEnd(\timeVar|\fixedFeatureVector)| \ge\varepsilon_n\Big)\\
 & \le n_0 + \sum_{n=n_{0}}^{\infty}\frac{1}{n^{2}}\le n_0 + \sum_{n=1}^{\infty}\frac{1}{n^{2}}= n_0 + \frac{\pi^{2}}{6}<\infty,
\end{align*}
so by the Borel-Cantelli lemma,
\[
\mathbb{P}\Big(\limsup_{n\rightarrow\infty}\Big\{ \sup_{\timeVar\in[0,\timeHorizon]} |\widehat{\survEnd}^{k_n\textsc{-NN}}(\timeVar|\fixedFeatureVector) - \survEnd(\timeVar|\fixedFeatureVector)| \ge\varepsilon_n\Big\}\Big)=0.
\]

\section{Proof of Theorem~\ref{thm:h-near-survival}}
\label{sec:pf-thm-h-near-survival}

The proof of the fixed-radius NN estimator is similar to that of the $k$-NN estimator and actually does not require the more nuanced analysis of \citet{chaudhuri_dasgupta_2014}. In particular, in proving the $k$-NN estimator guarantee, we took the expectation $\mathbb{E}[\,\cdot\,|\widetilde{\featureVar}]$, where $\widetilde{\featureVar}$ was the feature vector of the \mbox{$(k+1)$-st} nearest neighbor of~$\fixedFeatureVector$. This conditioning made the $k$ nearest neighbors appear i.i.d. The analysis for the fixed-radius NN estimator is simpler in that with the threshold distance $h>0$ fixed, the training data that land within distance~$h$ are i.i.d.~as is. However, the bad events do slightly change since now there could be no neighbors found within distance~$h$ of~$\fixedFeatureVector$. Whereas previously the number of neighbors was fixed, now the number of neighbors being random. Thus, instead of conditioning on the \mbox{$(k+1)$-st} nearest neighbor, we condition on the number of neighbors.

We focus on the proof of the main fixed-radius NN estimator nonasymptotic bound~\eqref{eq:h-near-ptwise-bound}. The proof of the strong consistency result is the same as that of the $k$-NN estimator with the only change being that we do not need to worry about $k$ (in proving the $k$-NN strong consistency result, Corollary~\ref{thm:kNN-strong-consistency}, we had a short extra step that makes sure that $k$ can be chosen to be a valid integer; for establishing the fixed-radius NN strong consistency result, we do not need this extra step although even if we use it, the choices for $c_1$, $c_2$, $c_3$, and $n_0$ still work). We then pick the threshold distance to be $h=h^*=(\frac{\varepsilon\theta}{18\Lips})^{1/\holderIndex}$, where $\varepsilon$ is chosen as in equation~\eqref{eq:kNN-almost-sure-how-to-choose-eps}.

We proceed to proving the nonasymptotic bound~\eqref{eq:h-near-ptwise-bound}. Let $\fixedFeatureVector\in\text{supp}(\featureDist)$ and $N_{x,h}=|\mathcal{N}_{\textsc{NN}(h)}(\fixedFeatureVector)|$ denote the number of neighbors found within distance $h$ of $\fixedFeatureVector$. Using the same reasoning as for the $k$-NN estimator,
\begin{align*}
 & \log\widehat{\survEnd}^{\textsc{NN}(h)}(\timeVar|\fixedFeatureVector)\\
 & \quad =\log\prod_{i\in\mathcal{N}_{\textsc{NN}(h)}(\fixedFeatureVector)}\Big(\frac{d_{\mathcal{N}_{\textsc{NN}(h)}(\fixedFeatureVector)}^{+}(\obsVar_i)}{d_{\mathcal{N}_{\textsc{NN}(h)}(\fixedFeatureVector)}^{+}(\obsVar_i)+1}\Big)^{\eventVar_i\ind\{\obsVar_i\le \timeVar\}}\\
 & \quad =V_1(\timeVar|\fixedFeatureVector)+V_2(\timeVar|\fixedFeatureVector)+V_3(\timeVar|\fixedFeatureVector),
\end{align*}
where
\begin{align*}
V_1(\timeVar|\fixedFeatureVector) & =-\frac{1}{N_{x,h}}\!\sum_{\substack{i\in\mathcal{N}_{\textsc{NN}(h)}(\fixedFeatureVector)\\
\text{s.t.~}\obsVar_i\le \timeVar
}
}\!\!\!\!\eventVar_i\frac{1}{\obsEnd(\obsVar_i|\fixedFeatureVector)},\\
V_2(\timeVar|\fixedFeatureVector) & =-\frac{1}{N_{x,h}}\!\sum_{\substack{i\in\mathcal{N}_{\textsc{NN}(h)}(\fixedFeatureVector)\\
\text{s.t.~}\obsVar_i\le \timeVar
}
}\!\!\!\!\eventVar_i\Big[\frac{N_{x,h}}{d_{\mathcal{N}_{\textsc{NN}(h)}(\fixedFeatureVector)}^{+}(\obsVar_i)+1}\!\\
&\quad\qquad\qquad\qquad\qquad\;-\!\frac{1}{\obsEnd(\obsVar_i|\fixedFeatureVector)}\Big],\\
V_3(\timeVar|\fixedFeatureVector) & =-\sum_{\substack{i\in\mathcal{N}_{\textsc{NN}(h)}(\fixedFeatureVector)\\
\text{s.t.~}\obsVar_i\le \timeVar
}
}\!\!\!\!\eventVar_i\sum_{\ell=2}^{\infty}\frac{1}{\ell(d_{\mathcal{N}_{\textsc{NN}(h)}(\fixedFeatureVector)}^{+}(\obsVar_i)+1)^{\ell}}.
\end{align*}
Defining $\estObsEnd^{\textsc{NN}(h)}(s|\fixedFeatureVector):=\frac{d_{\mathcal{N}_{\textsc{NN}(h)}(\fixedFeatureVector)}^{+}(s)}{N_{x,h}}$, then the bad events are:
\begin{itemize}[leftmargin=1.5em,topsep=0pt,itemsep=0ex,partopsep=1ex,parsep=1ex]
\item $\badEvent{\textsc{NN}(h)}{\text{few neighbors}}(\fixedFeatureVector):=\{N_{x,h}\le\frac{n\featureDist(\mathcal{B}_{\fixedFeatureVector,h})}{2}\}$
\item $\badEvent{\textsc{NN}(h)}{\text{bad }\timeHorizon}(\fixedFeatureVector):=\{d_{\mathcal{N}_{\textsc{NN}(h)}(\fixedFeatureVector)}^{+}(\timeHorizon)\le\frac{N_{x,h}\theta}{2}\}$
\item $\badEvent{\textsc{NN}(h)}{\text{bad EDF}}(\fixedFeatureVector):=$\\
$\big\{{\underset{s\ge0}{\sup}|\estObsEnd^{\textsc{NN}(h)}(s|\fixedFeatureVector)-\mathbb{E}[\estObsEnd^{\textsc{NN}(h)}(s|\fixedFeatureVector)|N_{x,h}]|}>\frac{\varepsilon\theta^{2}}{36}\big\}$
\item $\badEvent{\textsc{NN}(h)}{\text{bad }V_1}(t,x):=\{|V_1(\timeVar|\fixedFeatureVector)-\mathbb{E}[V_1(\timeVar|\fixedFeatureVector)|N_{x,h}]|\ge\frac{\varepsilon}{18}\}$
\end{itemize}
Once all of these bad events do not happen, then applying a very similar proof to the $k$-NN estimator yields Theorem \ref{thm:kNN-survival}. Note that as before, we actually want $\badEvent{\textsc{NN}(h)}{\text{bad }V_1}(t,x)$ to hold for a finite collection of times $t=\eta_{1},\dots,\eta_{L(\varepsilon)}$ within interval $[0,\timeHorizon]$.

\interdisplaylinepenalty=10000

We remark that the union bounding over the bad events is done slightly differently for the fixed-radius NN estimator. In particular, at least one of the bad events happening can actually be written as the union over the following events:
\begin{itemize}[leftmargin=1.5em,topsep=0pt,itemsep=0ex,partopsep=1ex,parsep=1ex]
\item $\badEvent{\textsc{NN}(h)}{\text{few neighbors}}(\fixedFeatureVector)$
\item $\badEvent{\textsc{NN}(h)}{\text{bad }\timeHorizon}(\fixedFeatureVector)\cap[\badEvent{\textsc{NN}(h)}{\text{few neighbors}}(\fixedFeatureVector)]^{c}$
\item $\badEvent{\textsc{NN}(h)}{\text{bad EDF}}(\fixedFeatureVector)\cap[\badEvent{\textsc{NN}(h)}{\text{few neighbors}}(\fixedFeatureVector)]^{c}$
\item $\badEvent{\textsc{NN}(h)}{\text{bad }V_1}(t,x)\cap[\badEvent{\textsc{NN}(h)}{\text{few neighbors}}(\fixedFeatureVector)]^{c}$ for $t=\eta_{1},\dots,\eta_{L(\varepsilon)}$
\end{itemize}
We use the fact that for any two events $\mathcal{E}_1$ and $\mathcal{E}_2$, $\mathbb{P}(\mathcal{E}_1\cap\mathcal{E}_2)=\mathbb{P}(\mathcal{E}_1)\mathbb{P}(\mathcal{E}_2|\mathcal{E}_1)\le\mathbb{P}(\mathcal{E}_2|\mathcal{E}_1)$. Then
\begin{align*}
 & \mathbb{P}(\text{at least one bad event happens})\\
 & \quad \le\mathbb{P}\big(\badEvent{\textsc{NN}(h)}{\text{few neighbors}}(\fixedFeatureVector)\big)\\
 & \quad \quad+\mathbb{P}\big(\badEvent{\textsc{NN}(h)}{\text{bad }\timeHorizon}(\fixedFeatureVector)\cap[\badEvent{\textsc{NN}(h)}{\text{few neighbors}}(\fixedFeatureVector)]^{c}\big)\\
 & \quad \quad+\mathbb{P}\big(\badEvent{\textsc{NN}(h)}{\text{bad EDF}}(\fixedFeatureVector)\cap[\badEvent{\textsc{NN}(h)}{\text{few neighbors}}(\fixedFeatureVector)]^{c}\big)\\
 & \quad \quad+\sum_{\ell=1}^{L(\varepsilon)}\mathbb{P}\big(\badEvent{\textsc{NN}(h)}{\text{bad }V_1}(\eta_{\ell},x)\cap[\badEvent{\textsc{NN}(h)}{\text{few neighbors}}(\fixedFeatureVector)]^{c}\big)\\
 & \quad \le\mathbb{P}\big(\badEvent{\textsc{NN}(h)}{\text{few neighbors}}(\fixedFeatureVector)\big)\\
 & \quad \quad+\mathbb{P}\big(\badEvent{\textsc{NN}(h)}{\text{bad }\timeHorizon}(\fixedFeatureVector)\,\big|\,[\badEvent{\textsc{NN}(h)}{\text{few neighbors}}(\fixedFeatureVector)]^{c}\big)\\
 & \quad \quad+\mathbb{P}\big(\badEvent{\textsc{NN}(h)}{\text{bad EDF}}(\fixedFeatureVector)\,\big|\,[\badEvent{\textsc{NN}(h)}{\text{few neighbors}}(\fixedFeatureVector)]^{c}\big)\\
 & \quad \quad+\sum_{\ell=1}^{L(\varepsilon)}\mathbb{P}\big(\badEvent{\textsc{NN}(h)}{\text{bad }V_1}(\eta_{\ell},x)\,\big|\,[\badEvent{\textsc{NN}(h)}{\text{few neighbors}}(\fixedFeatureVector)]^{c}\big).
\end{align*}
The rest of this section is on giving upper bounds for the four different probability terms that appear on the RHS, and also on why when all of these bad events do not happen, we indeed have $|\widehat{\survEnd}^{\textsc{NN}(h)}(\timeVar|\fixedFeatureVector)-\survEnd(\timeVar|\fixedFeatureVector)|\le\varepsilon/3$ for any $\timeVar\in[0,\timeHorizon]$, which using the argument from proving Theorem \ref{thm:kNN-survival} with carefully chosen points $\eta_{1},\dots,\eta_{L(\varepsilon)}$ is sufficient to guarantee that $\sup_{\timeVar\in[0,\timeHorizon]}|\widehat{\survEnd}^{\textsc{NN}(h)}(\timeVar|\fixedFeatureVector)-\survEnd(\timeVar|\fixedFeatureVector)|\le\varepsilon$.
\begin{lem}
Under Assumption A1, let $\fixedFeatureVector\in\text{supp}(\featureDist)$. Let $N_{x,h}$ be the number of nearest neighbors found within distance $h$ of $\fixedFeatureVector$. Then
\[
\mathbb{P}\big(\badEvent{\textsc{NN}(h)}{\text{few neighbors}}(\fixedFeatureVector)\big)\le\exp\Big(-\frac{n\featureDist(\mathcal{B}_{\fixedFeatureVector,h})}{8}\Big).
\]
\end{lem}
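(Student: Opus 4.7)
The plan is to observe that $N_{x,h}$ is exactly a sum of $n$ i.i.d.\ Bernoulli indicators. Specifically, since the training feature vectors $\featureVar_1,\dots,\featureVar_n$ are drawn i.i.d.\ from $\featureDist$, and (by Assumption A1) $\featureDist$ is a Borel probability measure so $\featureDist(\mathcal{B}_{\fixedFeatureVector,h})$ is well-defined, we have
\[
N_{x,h} \;=\; \sum_{i=1}^n \ind\{\featureVar_i \in \mathcal{B}_{\fixedFeatureVector,h}\} \;\sim\; \text{Binomial}\big(n,\,\featureDist(\mathcal{B}_{\fixedFeatureVector,h})\big),
\]
with mean $\mu := n\featureDist(\mathcal{B}_{\fixedFeatureVector,h})$. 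The event $\badEvent{\textsc{NN}(h)}{\text{few neighbors}}(\fixedFeatureVector)$ is then precisely the event $\{N_{x,h} \le \mu/2\}$, i.e., $N_{x,h}$ deviates below its mean by a multiplicative factor of $1/2$.

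Next, I apply the standard multiplicative Chernoff lower-tail bound: for $X\sim\text{Binomial}(n,p)$ and $\delta\in(0,1)$,
\[
\mathbb{P}\big(X \le (1-\delta)np\big) \le \exp\Big(-\frac{\delta^2 np}{2}\Big).
\]
Taking $\delta = 1/2$ gives
\[
\mathbb{P}(N_{x,h} \le \mu/2) \le \exp(-\mu/8) = \exp\Big(-\frac{n\featureDist(\mathcal{B}_{\fixedFeatureVector,h})}{8}\Big),
\]
which is exactly the claimed bound.

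There is really no technical obstacle here: this lemma is the analogue for the fixed-radius setting of Lemma~\ref{lem:chaudhuri_dasgupta_far_neighbors} (Chaudhuri--Dasgupta), and in fact its proof is strictly simpler because we do not need to invoke the distribution of the $(k+1)$-st nearest neighbor. The only point worth flagging is that the role of Assumption A1 is just to ensure Borel-measurability of the closed ball $\mathcal{B}_{\fixedFeatureVector,h}$ so that the indicator random variables $\ind\{\featureVar_i\in\mathcal{B}_{\fixedFeatureVector,h}\}$ are genuine Bernoullis with a well-defined success probability.
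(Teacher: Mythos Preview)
Your proof is correct and matches the paper's own argument: the paper simply notes that $N_{x,h}\sim\text{Binomial}(n,\featureDist(\mathcal{B}_{\fixedFeatureVector,h}))$ and invokes a Chernoff bound for the binomial distribution. You have spelled out the details more explicitly, but the approach is identical.
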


\begin{proof}
Since $N_{x,h}\sim\text{Binomial}(n,\featureDist(\mathcal{B}_{\fixedFeatureVector,h}))$,
the claim follows from applying a Chernoff bound for the binomial distribution \citep[Section 2.1]{georgehc_thesis}.
\end{proof}
\begin{lem}
\label{lem:h-near-bad-T}
Under Assumptions A1--A3, let $\fixedFeatureVector\in\text{supp}(\featureDist)$. We have
\begin{align*}
 & \mathbb{P}\big(\badEvent{\textsc{NN}(h)}{\text{bad }\timeHorizon}(\fixedFeatureVector)\,\big|\,[\badEvent{\textsc{NN}(h)}{\text{few neighbors}}(\fixedFeatureVector)]^{c}\big)\\
 & \quad \le\exp\Big(-\frac{n\featureDist(\mathcal{B}_{\fixedFeatureVector,h})\theta}{16}\Big).
\end{align*}
\end{lem}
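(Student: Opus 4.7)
\textbf{Proof plan for Lemma \ref{lem:h-near-bad-T}.} The approach mirrors the $k$-NN analogue (Lemma~\ref{lem:kNN-bad-T}), the only wrinkle being that for the fixed-radius estimator the number of neighbors $N_{x,h}$ is itself random. The plan is to first condition on $N_{x,h}=m$, apply a binomial Chernoff bound for each such $m$, and then take a worst-case $m$ subject to $[\badEvent{\textsc{NN}(h)}{\text{few neighbors}}(\fixedFeatureVector)]^c$ holding.

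More concretely, I would first observe that conditionally on $N_{x,h}=m$, the $m$ neighbors' feature vectors appear as i.i.d.~samples from $\featureDist$ restricted to $\mathcal{B}_{\fixedFeatureVector,h}$, and hence, conditioning further on these feature vectors $\featureVar_{(1)},\dots,\featureVar_{(m)}$, the indicators $\ind\{\obsVar_{(j)}>\timeHorizon\}$ are independent Bernoulli variables with parameters $\obsEnd(\timeHorizon\mid \featureVar_{(j)})\ge\theta$ by Assumption~A3. Consequently, $d^{+}_{\mathcal{N}_{\textsc{NN}(h)}(\fixedFeatureVector)}(\timeHorizon)$ stochastically dominates a $\text{Binomial}(m,\theta)$ random variable, so by the same Chernoff bound used in Lemma~\ref{lem:kNN-bad-T},
\[
\mathbb{P}\big(d^{+}_{\mathcal{N}_{\textsc{NN}(h)}(\fixedFeatureVector)}(\timeHorizon)\le m\theta/2\,\big|\,N_{x,h}=m\big)
\le\exp(-m\theta/8).
\]

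Next I would average this bound over $N_{x,h}$ while restricting to the event $[\badEvent{\textsc{NN}(h)}{\text{few neighbors}}(\fixedFeatureVector)]^{c}=\{N_{x,h}>n\featureDist(\mathcal{B}_{\fixedFeatureVector,h})/2\}$. Since the RHS $\exp(-m\theta/8)$ is monotonically decreasing in $m$, on this event we always have $\exp(-N_{x,h}\theta/8)\le\exp(-n\featureDist(\mathcal{B}_{\fixedFeatureVector,h})\theta/16)$. Taking a (conditional) expectation over $N_{x,h}$ therefore yields the desired bound
\[
\mathbb{P}\big(\badEvent{\textsc{NN}(h)}{\text{bad }\timeHorizon}(\fixedFeatureVector)\,\big|\,[\badEvent{\textsc{NN}(h)}{\text{few neighbors}}(\fixedFeatureVector)]^{c}\big)\le\exp\Big(-\frac{n\featureDist(\mathcal{B}_{\fixedFeatureVector,h})\theta}{16}\Big).
\]
No real obstacle is anticipated; the only subtle point is the conditioning argument that establishes the stochastic domination by $\text{Binomial}(m,\theta)$, which relies crucially on Assumption~A3 holding uniformly over $\text{supp}(\featureDist)$ so that the lower bound $\theta$ on the survival probability does not depend on where in $\mathcal{B}_{\fixedFeatureVector,h}$ each neighbor lands.
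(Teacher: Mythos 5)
Your proposal is correct and takes essentially the same approach as the paper's proof: condition on $N_{x,h}=m$, use stochastic dominance by $\text{Binomial}(m,\theta)$ to apply the same Chernoff bound as in Lemma~\ref{lem:kNN-bad-T} and get $\exp(-m\theta/8)$, then use the worst-case value of $m$ on the event $[\badEvent{\textsc{NN}(h)}{\text{few neighbors}}(\fixedFeatureVector)]^{c}$ to obtain $\exp(-n\featureDist(\mathcal{B}_{\fixedFeatureVector,h})\theta/16)$. The paper writes out the conditional probability explicitly as a ratio of sums before replacing $m$ by the worst-case value, but this is a presentational difference, not a substantive one.
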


\begin{proof}
By conditioning on $N_{x,h}=k$ for any $k\in\{1,\dots,n\}$, then a proof similar to that of Lemma~\ref{lem:kNN-bad-T} yields
\[
\mathbb{P}\Big(d_{\mathcal{N}_{\textsc{NN}(h)}(\fixedFeatureVector)}^{+}(\timeHorizon)\le\frac{k\theta}{2}\,\Big|\,N_{x,h}=k\Big)\le\exp\Big(-\frac{k\theta}{8}\Big).
\]
We now use a worst-case argument that appears many times in later proofs. Let $k_{0}$ be the smallest integer larger than $\frac{1}{2}n\featureDist(\mathcal{B}_{\fixedFeatureVector,h})$. Then
\begin{align*}
 & \mathbb{P}\big(\badEvent{\textsc{NN}(h)}{\text{bad }\timeHorizon}(\fixedFeatureVector)\,\big|\,[\badEvent{\textsc{NN}(h)}{\text{few neighbors}}(\fixedFeatureVector)]^{c}\big)\\
 & \quad =\mathbb{P}\Big(d_{\mathcal{N}_{\textsc{NN}(h)}(\fixedFeatureVector)}^{+}(\timeHorizon)\le\frac{N_{x,h}\theta}{2}\,\Big|\,N_{x,h}\ge k_{0}\Big)\\
 & \quad =\frac{\begin{bmatrix}\sum_{k=k_{0}}^{n}\mathbb{P}(N_{x,h}=k)\quad\qquad\qquad\qquad\\
\times\mathbb{P}\big(d_{\mathcal{N}_{\textsc{NN}(h)}(\fixedFeatureVector)}^{+}(\timeHorizon)\le\frac{k\theta}{2}\,\big|\,N_{x,h}=k\big)
\end{bmatrix}}{\mathbb{P}(N_{x,h}\ge k_{0})}\\
 & \quad \le\frac{\sum_{k=k_{0}}^{n}\mathbb{P}(N_{x,h}=k)\exp\big(-\frac{k_{0}\theta}{8}\big)}{\mathbb{P}(N_{x,h}\ge k_{0})}\\
 & \quad =\exp\big(-\frac{k_{0}\theta}{8}\big)\\
 & \quad \le\exp\big(-\frac{n\featureDist(\mathcal{B}_{\fixedFeatureVector,h})\theta}{16}\big).\qedhere
\end{align*}
\end{proof}
\begin{lem}
Under Assumptions A1--A3, let $\fixedFeatureVector\in\text{supp}(\featureDist)$ and $\timeVar\in[0,\timeHorizon]$. When bad events $\badEvent{\textsc{NN}(h)}{\text{few neighbors}}(\fixedFeatureVector)$ and $\badEvent{\textsc{NN}(h)}{\text{bad }\timeHorizon}(\fixedFeatureVector)$ do not happen,
\begin{align*}
 & |V_2(\timeVar|\fixedFeatureVector)|\\
 & \le\frac{2}{N_{x,h}\theta^{2}}\\
 & \quad+\frac{2}{\theta^{2}}\sup_{s\in[0,\timeHorizon]}|\obsEnd(s|\fixedFeatureVector)-\mathbb{E}[\estObsEnd^{\textsc{NN}(h)}(s|\fixedFeatureVector)\,|\,N_{x,h}]|\\
 & \quad+\frac{2}{\theta^{2}}\sup_{s\ge0}|\estObsEnd^{\textsc{NN}(h)}(s|\fixedFeatureVector)-\mathbb{E}[\estObsEnd^{\textsc{NN}(h)}(s|\fixedFeatureVector)\,|\,N_{x,h}]|,
\end{align*}
where the RHS is a function of random variable $N_{x,h}$ (which is greater than 0 since bad event $\badEvent{\textsc{NN}(h)}{\text{few neighbors}}(\fixedFeatureVector)$ does not happen).
\end{lem}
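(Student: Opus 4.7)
The plan is to mirror Lemma~\ref{lem:kNN-R2-bound} almost verbatim, replacing the fixed sample size $k$ with the random neighbor count $N_{x,h}$ and using the two no-bad-event hypotheses to provide the analogous lower bounds. Abbreviate $\setS := \mathcal{N}_{\textsc{NN}(h)}(\fixedFeatureVector)$. Since $\badEvent{\textsc{NN}(h)}{\text{few neighbors}}(\fixedFeatureVector)$ does not happen we have $N_{x,h} \ge 1$, so $V_2(\timeVar|\fixedFeatureVector)$ is well-defined. Since $\badEvent{\textsc{NN}(h)}{\text{bad }\timeHorizon}(\fixedFeatureVector)$ does not happen we have $d_{\setS}^{+}(\timeHorizon) > N_{x,h}\theta/2$, and by Assumption~A3 we have $\obsEnd(\timeHorizon|\fixedFeatureVector) \ge \theta$.

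First, by rewriting and applying the triangle inequality exactly as in Appendix~\ref{subsec:pf-lem-kNN-R2-bound},
\[
|V_2(\timeVar|\fixedFeatureVector)| \le \sup_{s\in[0,\timeHorizon]} \Big|\frac{N_{x,h}}{d_{\setS}^{+}(s)+1} - \frac{1}{\obsEnd(s|\fixedFeatureVector)}\Big|.
\]
Next, using the monotonicity of $d_{\setS}^{+}$ and $\obsEnd(\cdot|\fixedFeatureVector)$ together with the two lower bounds above,
\[
\sup_{s\in[0,\timeHorizon]} \Big|\frac{N_{x,h}}{d_{\setS}^{+}(s)+1} - \frac{1}{\obsEnd(s|\fixedFeatureVector)}\Big| \le \frac{2}{\theta^{2}}\sup_{s\in[0,\timeHorizon]}\Big|\obsEnd(s|\fixedFeatureVector) - \frac{d_{\setS}^{+}(s)}{N_{x,h}} - \frac{1}{N_{x,h}}\Big|,
\]
which, after pulling out the $1/N_{x,h}$ term and recognizing $\estObsEnd^{\textsc{NN}(h)}(s|\fixedFeatureVector) = d_{\setS}^{+}(s)/N_{x,h}$, yields
\[
|V_2(\timeVar|\fixedFeatureVector)| \le \frac{2}{N_{x,h}\theta^{2}} + \frac{2}{\theta^{2}}\sup_{s\in[0,\timeHorizon]}|\obsEnd(s|\fixedFeatureVector) - \estObsEnd^{\textsc{NN}(h)}(s|\fixedFeatureVector)|.
\]

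Finally, inserting and subtracting $\mathbb{E}[\estObsEnd^{\textsc{NN}(h)}(s|\fixedFeatureVector)\,|\,N_{x,h}]$ inside the supremum and applying the triangle inequality splits the second term into a bias part (restricted to $s\in[0,\timeHorizon]$) and a variance part (which we enlarge to the supremum over all $s\ge0$), giving the claimed bound. The only subtlety worth flagging is why conditioning on $N_{x,h}$ is the right analogue of conditioning on $\widetilde{\featureVar}$ in the $k$-NN case: given $\{N_{x,h}=m\}$ with $m\ge1$, the $m$ training feature vectors inside $\mathcal{B}_{\fixedFeatureVector,h}$ are i.i.d.\ samples from $\featureDist$ restricted to $\mathcal{B}_{\fixedFeatureVector,h}$, so the conditional expectation $\mathbb{E}[\estObsEnd^{\textsc{NN}(h)}(s|\fixedFeatureVector)\,|\,N_{x,h}]$ equals $\mathbb{P}(\obsVar>s\,|\,\featureVar\in\mathcal{B}_{\fixedFeatureVector,h})$ for every such $m$, which is what the bias argument in the later application (the analogue of Lemma~\ref{lem:kNN-R2-bias}) will require. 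No further obstacle arises; the argument is entirely deterministic given the two no-bad-event hypotheses. \hfill$\square$
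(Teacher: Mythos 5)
Your proof is correct and takes essentially the same approach as the paper: the paper's proof of this lemma is literally ``replay the proof of Lemma~\ref{lem:kNN-R2-bound} with $\setS=\mathcal{N}_{\textsc{NN}(h)}(\fixedFeatureVector)$, $k$ replaced by $N_{x,h}$, $\badEvent{k\textsc{-NN}}{\text{bad }\timeHorizon}(\fixedFeatureVector)$ replaced by $\badEvent{\textsc{NN}(h)}{\text{bad }\timeHorizon}(\fixedFeatureVector)$, and $\mathbb{E}[\,\cdot\,|\widetilde{\featureVar}]$ replaced by $\mathbb{E}[\,\cdot\,|N_{x,h}]$,'' which is exactly the substitution you carry out, step for step. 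Your closing remark that $\mathbb{E}[\estObsEnd^{\textsc{NN}(h)}(s|\fixedFeatureVector)\,|\,N_{x,h}=m]=\mathbb{P}(\obsVar>s\,|\,\featureVar\in\mathcal{B}_{\fixedFeatureVector,h})$ for every $m\ge1$ is a useful (and correct) observation about why this conditioning is the right analogue, matching the bias argument the paper invokes in the subsequent lemma.
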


\begin{proof}
See the proof of Lemma \ref{lem:kNN-R2-bound} as given in Appendix \ref{subsec:pf-lem-kNN-R2-bound}, where we replace $\setS=\neighborsKNN(\fixedFeatureVector)$ with $\setS=\neighborsNNh(\fixedFeatureVector)$, $k$ with $N_{x,h}$, and bad event $\badEvent{k\textsc{-NN}}{\text{bad }\timeHorizon}(\fixedFeatureVector)$ with $\badEvent{\textsc{NN}(h)}{\text{bad }\timeHorizon}(\fixedFeatureVector)$. Also instead of using expectation $\mathbb{E}[\,\cdot\,|\widetilde{\featureVar}]$ (i.e., conditioning on the $(k+1)$-st nearest neighbor), we use $\mathbb{E}[\,\cdot\,|N_{x,h}]$.
\end{proof}
\begin{lem}
\label{lem:h-near-bad-EDF}
Under Assumptions A1--A3, let $\fixedFeatureVector\in\text{supp}(\featureDist)$. We have
\begin{align*}
 & \mathbb{P}\big(\badEvent{\textsc{NN}(h)}{\text{bad EDF}}(\fixedFeatureVector)\,\big|\,[\badEvent{\textsc{NN}(h)}{\text{few neighbors}}(\fixedFeatureVector)]^{c}\big)\\
 & \le2\exp\Big(-\frac{n\featureDist(\mathcal{B}_{\fixedFeatureVector,h})\varepsilon^{2}\theta^{4}}{1296}\Big).
\end{align*}
\end{lem}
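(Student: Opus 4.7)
The plan is to parallel the worst-case conditioning argument used in Lemma~\ref{lem:h-near-bad-T}, combined with a direct DKW inequality argument in place of the conditioning-on-$\widetilde{X}$ trick of Chaudhuri and Dasgupta that was required in the $k$-NN proof. The key structural observation is that, conditional on the event $\{N_{x,h}=k\}$ with $k\ge 1$, the feature vectors of the $k$ neighbors falling inside $\mathcal{B}_{\fixedFeatureVector,h}$ are i.i.d.\ samples from $\featureDist$ restricted to $\mathcal{B}_{\fixedFeatureVector,h}$, and therefore (integrating out each $\featureVar_i$) their associated observed times $\obsVar_i$ are i.i.d.\ from a common continuous CDF, whose survival function is precisely $\mathbb{E}[\estObsEnd^{\textsc{NN}(h)}(s|\fixedFeatureVector)\mid N_{x,h}=k]$. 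Unlike in the $k$-NN analysis, no conditioning on the $(k+1)$-st nearest neighbor is needed, because the threshold $h$ is fixed in advance.

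First, for each fixed $k\ge 1$, I would apply the Dvoretzky--Kiefer--Wolfowitz inequality (in the Massart form) to the empirical survival function built from these $k$ i.i.d.\ observed times, obtaining
\[
\mathbb{P}\Big(\sup_{s\ge 0}|\estObsEnd^{\textsc{NN}(h)}(s|\fixedFeatureVector)-\mathbb{E}[\estObsEnd^{\textsc{NN}(h)}(s|\fixedFeatureVector)\mid N_{x,h}]|>\tfrac{\varepsilon\theta^2}{36}\,\Big|\,N_{x,h}=k\Big)\le 2\exp\!\Big(-\frac{k\varepsilon^2\theta^4}{648}\Big).
\]
This bound is monotonically decreasing in $k$, which is what makes the subsequent worst-case argument go through.

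Second, following exactly the template of Lemma~\ref{lem:h-near-bad-T}, let $k_0$ be the smallest integer strictly greater than $\tfrac{1}{2}n\featureDist(\mathcal{B}_{\fixedFeatureVector,h})$. Writing the conditional probability as a ratio and splitting the numerator over values $k\ge k_0$, the upper bound at each $k$ is at most the value at $k_0$, which gives
\[
\mathbb{P}\big(\badEvent{\textsc{NN}(h)}{\text{bad EDF}}(\fixedFeatureVector)\,\big|\,[\badEvent{\textsc{NN}(h)}{\text{few neighbors}}(\fixedFeatureVector)]^{c}\big)\le 2\exp\!\Big(-\frac{k_0\varepsilon^2\theta^4}{648}\Big)\le 2\exp\!\Big(-\frac{n\featureDist(\mathcal{B}_{\fixedFeatureVector,h})\varepsilon^2\theta^4}{1296}\Big),
\]
the last inequality using $k_0>\tfrac12 n\featureDist(\mathcal{B}_{\fixedFeatureVector,h})$.

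I do not expect a serious obstacle here: the main subtlety is just verifying carefully that conditioning on $N_{x,h}=k$ together with Assumption A2 (continuity of $\mathbb{P}_{\obsVar|\featureVar}$, which guarantees that the mixture CDF obtained after integrating over $\featureDist$ restricted to $\mathcal{B}_{\fixedFeatureVector,h}$ remains continuous, as required by DKW) indeed reduces the problem to an i.i.d.\ empirical-CDF concentration. Once that reduction is in place, the rest is the same worst-case ratio calculation already used for Lemma~\ref{lem:h-near-bad-T}.
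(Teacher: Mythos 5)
Your proof is correct and follows essentially the same route as the paper: condition on $N_{x,h}=k$ so that the neighbors' observed times become i.i.d.\ draws from the CDF $1-\mathbb{P}(\obsVar>s\mid\featureVar\in\mathcal{B}_{\fixedFeatureVector,h})$, apply DKW at threshold $\varepsilon\theta^2/36$ to get $2\exp(-k\varepsilon^2\theta^4/648)$, and then run the same worst-case conditioning argument as in Lemma~\ref{lem:h-near-bad-T} to replace $k$ by $\tfrac12 n\featureDist(\mathcal{B}_{\fixedFeatureVector,h})$. Your explicit observation that no conditioning on the $(k+1)$-st nearest neighbor is needed here (since $h$ is fixed) is exactly the simplification the paper's proof relies on.
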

\begin{proof}
By conditioning on $N_{x,h}=k$ for any $k\in\{1,\dots,n\}$, then $1-\estObsEnd^{\textsc{NN}(h)}(s|\fixedFeatureVector)$ is an empirical distribution with samples drawn i.i.d.~from CDF $1-{\mathbb{P}(\obsVar>s\,|\,\featureVar\in\mathcal{B}_{\fixedFeatureVector,h})}$. By the DKW inequality,
\[
\mathbb{P}\big(\badEvent{\textsc{NN}(h)}{\text{bad EDF}}(\fixedFeatureVector)\,\big|\,N_{x,h}=k\big)\le2\exp\Big(-\frac{k\varepsilon^{2}\theta^{4}}{648}\Big).
\]
A worst-case argument similar to the one in the ending of Lemma~\ref{lem:h-near-bad-T}'s proof says that ${\mathbb{P}\big(\badEvent{\textsc{NN}(h)}{\text{bad EDF}}(\fixedFeatureVector)\,\big|\,N_{x,h}>\frac{1}{2}n\featureDist(\mathcal{B}_{\fixedFeatureVector,h})\big)}$ satisfies the above inequality with $k$ replaced by $\frac{1}{2}n\featureDist(\mathcal{B}_{\fixedFeatureVector,h})$.
\end{proof}
\begin{lem}
\label{lem:h-near-bad-V1}
Under Assumptions A1--A3, let $\fixedFeatureVector\in\text{supp}(\featureDist)$,
$\timeVar\in[0,\timeHorizon]$, and $\varepsilon\in(0,1)$. We have
\begin{align*}
 & \mathbb{P}\big(\badEvent{\textsc{NN}(h)}{\text{bad }V_1}(t,x)\,\big|\,[\badEvent{\textsc{NN}(h)}{\text{few neighbors}}(\fixedFeatureVector)]^{c}\big)\\
 & \quad \le2\exp\Big(-\frac{n\featureDist(\mathcal{B}_{\fixedFeatureVector,h})\varepsilon^{2}\theta^{2}}{324}\Big).
\end{align*}
\end{lem}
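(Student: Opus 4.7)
\textbf{Proof proposal for Lemma \ref{lem:h-near-bad-V1}.} The plan is to mimic the proof of Lemma~\ref{lem:kNN-bad-R1} almost verbatim, with the conditioning on the $(k{+}1)$-st nearest neighbor $\widetilde{X}$ replaced by conditioning on the random neighbor count $N_{x,h}$, and to then finish with the same worst-case argument used in Lemmas~\ref{lem:h-near-bad-T} and~\ref{lem:h-near-bad-EDF} to discharge the conditioning on $[\badEvent{\textsc{NN}(h)}{\text{few neighbors}}(\fixedFeatureVector)]^{c}$.

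First, I would condition on $N_{x,h}=k$ for a fixed positive integer $k$. By Assumption A1, given $N_{x,h}=k$, the feature vectors of the $k$ training subjects landing in $\mathcal{B}_{\fixedFeatureVector,h}$ are i.i.d.~draws from $\featureDist$ restricted to $\mathcal{B}_{\fixedFeatureVector,h}$ (with the remaining $n-k$ feature vectors i.i.d.~from $\featureDist$ restricted to the complement). Consequently, if I define $\xi_{\ell} := -\eventVar_{(\ell)}\ind\{\obsVar_{(\ell)}\le \timeVar\}/\obsEnd(\obsVar_{(\ell)}|\fixedFeatureVector)$ for the $k$ neighbors indexed by $\ell=1,\dots,k$, then the $\xi_\ell$'s are i.i.d.~conditionally on $N_{x,h}=k$, each taking values in $[-1/\obsEnd(\timeVar|\fixedFeatureVector),0]\subseteq[-1/\theta,0]$ by Assumption A3 and monotonicity of $\obsEnd(\cdot|\fixedFeatureVector)$. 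Moreover, $V_1(\timeVar|\fixedFeatureVector)=\frac{1}{k}\sum_{\ell=1}^{k}\xi_{\ell}$ and the common conditional mean is exactly $\mathbb{E}[V_1(\timeVar|\fixedFeatureVector)\mid N_{x,h}]$.

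Next, Hoeffding's inequality applied conditionally on $N_{x,h}=k$ gives
\begin{equation*}
\mathbb{P}\big(|V_1(\timeVar|\fixedFeatureVector) - \mathbb{E}[V_1(\timeVar|\fixedFeatureVector)\mid N_{x,h}]|\ge \varepsilon/18 \,\big|\, N_{x,h}=k\big)
\le 2\exp\!\Big(\!-\frac{k\varepsilon^{2}\theta^{2}}{162}\Big).
\end{equation*}
This bound is monotonically decreasing in $k$, so the standard worst-case argument (identical to the one closing the proof of Lemma~\ref{lem:h-near-bad-T}) lets me integrate against the conditional distribution of $N_{x,h}$ given $[\badEvent{\textsc{NN}(h)}{\text{few neighbors}}(\fixedFeatureVector)]^{c}$, i.e., given $N_{x,h} > \tfrac12 n\featureDist(\mathcal{B}_{\fixedFeatureVector,h})$, and replace $k$ in the exponent by $\tfrac12 n\featureDist(\mathcal{B}_{\fixedFeatureVector,h})$. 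This yields the claimed bound $2\exp\!\big(\!-n\featureDist(\mathcal{B}_{\fixedFeatureVector,h})\varepsilon^{2}\theta^{2}/324\big)$.

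I do not anticipate any serious obstacle: all three ingredients (the i.i.d.~structure under the conditional, the uniform $[-1/\theta,0]$ bound on $\xi_\ell$, and the monotone-in-$k$ worst-casing) have already been developed earlier in the appendix, so the proof reduces to stitching them together. The only place to be careful is verifying that $\mathbb{E}[V_1(\timeVar|\fixedFeatureVector)\mid N_{x,h}=k]$ does not actually depend on $k$ (which it does not, since conditional on $N_{x,h}\ge 1$ the neighbor draws are exchangeable), so that the event defining $\badEvent{\textsc{NN}(h)}{\text{bad }V_1}(t,x)$ is compatible with the integration step.
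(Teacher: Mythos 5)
Your proposal matches the paper's own proof in every essential step: condition on $N_{x,h}=k$, exploit the conditional i.i.d.\ structure of the neighbors with the observation that $\mathbb{E}[V_1\mid N_{x,h}]$ does not depend on $k$, apply Hoeffding with the $[-1/\obsEnd(\timeVar|\fixedFeatureVector),0]\subseteq[-1/\theta,0]$ bound, and finish with the worst-case replacement of $k$ by $\tfrac12 n\featureDist(\mathcal{B}_{\fixedFeatureVector,h})$ as in Lemma~\ref{lem:h-near-bad-T}. (As a side note, the paper's displayed proof drops the leading factor of 2 in the final line---a typo---whereas your version and the lemma statement both correctly retain it.)
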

\begin{proof}
Note that $V_1(\timeVar|\fixedFeatureVector)$ and $\mathbb{E}[V_1(\timeVar|\fixedFeatureVector)\,|\,N_{x,h}]$ can both be written as functions of random variable $N_{x,h}$, provided that $N_{x,h}$ is positive. Specifically,
\[
V_1(\timeVar|\fixedFeatureVector) = \frac{1}{N_{x,h}} \sum_{\ell=1}^{N_{x,h}} \xi_\ell,
\]
where random variables $\xi_1,\dots,\xi_{N_{x,h}}$ are sampled i.i.d.~from the same distribution as random variable $-\frac{\eventVar\ind\{\obsVar\le \timeVar\}}{\obsEnd(\obsVar|\fixedFeatureVector)}$ (where feature vector~$X$ is sampled from $\featureDist$ restricted to ball $\mathcal{B}_{\fixedFeatureVector,h}$, and observed time~$\obsVar$ and censoring indicator~$\eventVar$ as sampled as usual conditioned on~$X$). Each $\xi_\ell$ is bounded in $[-\frac{1}{\obsEnd(\timeVar|\fixedFeatureVector)},0]$ and has expectation
\begin{align}
\overline{\xi}(N_{x,h})
&:=\mathbb{E}_{\obsVar,\eventVar}\Big[-\frac{\eventVar\ind\{\obsVar\le \timeVar\}}{\obsEnd(\obsVar|\fixedFeatureVector)}\,\Big|\,\featureVar\in\mathcal{B}_{\fixedFeatureVector,h}\Big] \nonumber \\
&\hspace{.25em}=\mathbb{E}[V_1(\timeVar|\fixedFeatureVector)\,|\,N_{x,h}]. \nonumber
\end{align}
Thus, using Hoeffding's inequality, for any $k \in \{1,\dots,n\}$,
\begin{align*}
 & \mathbb{P}\Big(|V_1(\timeVar|\fixedFeatureVector)-\mathbb{E}[V_1(\timeVar|\fixedFeatureVector)\,|\,N_{x,h}]|\ge\frac{\varepsilon}{18}\,\Big|\,N_{x,h}=k\Big) \\
 & \quad = \mathbb{P}\Big(\Big|\frac{1}{N_{x,h}} \sum_{\ell=1}^{N_{x,h}} \xi_\ell - \overline{\xi}(N_{x,h})\Big|\ge\frac{\varepsilon}{18}\,\Big|\,N_{x,h}=k\Big) \\
 & \quad \le2\exp\Big(-\frac{k\varepsilon^{2}[\obsEnd(\timeVar|\fixedFeatureVector)]^2}{162}\Big)\le2\exp\Big(-\frac{k\varepsilon^{2}\theta^{2}}{162}\Big).
\end{align*}
A worst-case argument similar to the one in the ending of Lemma~\ref{lem:h-near-bad-T}'s proof yields
\begin{align*}
 & \mathbb{P}\big(\badEvent{\textsc{NN}(h)}{\text{bad }V_1}(t,x)\,\big|\,[\badEvent{\textsc{NN}(h)}{\text{few neighbors}}(\fixedFeatureVector)]^{c}\big)\\
 & \quad \le\exp\big(-\frac{n\featureDist(\mathcal{B}_{\fixedFeatureVector,h})\varepsilon^{2}\theta^{2}}{324}\big).\qedhere
\end{align*}
\end{proof}
Then when none of the bad events happen,
\begin{align*}
 & |\log\widehat{\survEnd}^{\textsc{NN}(h)}(\timeVar|\fixedFeatureVector)-\log \survEnd(\timeVar|\fixedFeatureVector)|\\
 & \le|V_1(\timeVar|\fixedFeatureVector)-\mathbb{E}[V_1(\timeVar|\fixedFeatureVector)\,|\,N_{x,h}]|\\
 & \quad+|\mathbb{E}[V_1(\timeVar|\fixedFeatureVector)\,|\,N_{x,h}]-\log \survEnd(\timeVar|\fixedFeatureVector)|+\frac{2}{N_{x,h}\theta^{2}}\\
 & \quad+\frac{2}{\theta^{2}}\sup_{s\in[0,\timeHorizon]}|\obsEnd(s|\fixedFeatureVector)-\mathbb{E}[\estObsEnd^{\textsc{NN}(h)}(s|\fixedFeatureVector)\,|\,N_{x,h}]|\\
 & \quad+\frac{2}{\theta^{2}}\sup_{s\ge0}|\estObsEnd^{\textsc{NN}(h)}(s|\fixedFeatureVector)-\mathbb{E}[\estObsEnd^{\textsc{NN}(h)}(s|\fixedFeatureVector)\,|\,N_{x,h}]|,\\
 & \quad+V_3(\timeVar|\fixedFeatureVector).
\end{align*}
The 1st and 5th terms on the RHS are at most $\frac{\varepsilon}{18}$ since bad events $\badEvent{\textsc{NN}(h)}{\text{bad }V_1}(t,x)$ and $\badEvent{\textsc{NN}(h)}{\text{bad EDF}}(\fixedFeatureVector)$ do not happen (these bad events also rely on $\badEvent{\textsc{NN}(h)}{\text{few neighbors}}(\fixedFeatureVector)$ not happening so that $N_{x,h}>0$). The theorem assumes that $n\ge\frac{144}{\varepsilon\theta^{2}\featureDist(\mathcal{B}_{\fixedFeatureVector,h})}$, so the 3rd term is at most $\frac{2}{N_{x,h}\theta^{2}}<\frac{4}{n\featureDist(\mathcal{B}_{\fixedFeatureVector,h})\theta^{2}}\le\frac{\varepsilon}{36}<\frac{\varepsilon}{18}$. The 2nd, 4th, and 6th terms can be bounded in a similar manner as we did for the $k$-NN estimator.
\begin{lem}
Under Assumptions A1--A4 $($this lemma uses H\"{o}lder continuity of $\censEnd(\timeVar|\cdot)\survDensity(\timeVar|\cdot))$, let $\fixedFeatureVector\in\text{supp}(\featureDist)$, $\timeVar\in[0,\timeHorizon]$, and $\varepsilon\in(0,1)$. If bad event $\badEvent{\textsc{NN}(h)}{\text{few neighbors}}(\fixedFeatureVector)$ does not happen, and the threshold distance satisfies $h\le[\frac{\varepsilon\theta}{18(\lips_{\textsc{\tiny{\survVar}}}\timeHorizon+(\survDensity^*{\lips}_{\textsc{\tiny{\censVar}}}\timeHorizon^2)/2)}]^{1/\holderIndex}$, then
\[
|\mathbb{E}[V_1(\timeVar|\fixedFeatureVector)\,|\,N_{x,h}]-\log \survEnd(\timeVar|\fixedFeatureVector)|\le\frac{\varepsilon}{18}.
\]
\end{lem}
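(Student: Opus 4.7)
The plan is to mirror the argument of Lemma~\ref{lem:kNN-R1-bias} (the $k$-NN bias bound), replacing the role of the open ball $\mathcal{B}^o_{\fixedFeatureVector, \rho(\fixedFeatureVector, \widetilde{X})}$ (whose radius was random and was what ``$\widetilde{X}$ being close to $\fixedFeatureVector$'' controlled) with the deterministic ball $\mathcal{B}_{\fixedFeatureVector,h}$. The condition that bad event $\badEvent{\textsc{NN}(h)}{\text{few neighbors}}(\fixedFeatureVector)$ does not happen is used only to ensure $N_{x,h}\ge 1$ so that the conditional expectation and the restricted distribution $\featureDist(\cdot\mid \mathcal{B}_{\fixedFeatureVector,h})$ are well-defined; the actual bias control comes entirely from the assumption on $h$.

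First, I would observe that conditioned on $N_{x,h}=k\ge 1$, the $k$ neighbor feature vectors are i.i.d.\ samples from $\featureDist$ restricted to $\mathcal{B}_{\fixedFeatureVector,h}$, so
\[
\mathbb{E}[V_1(\timeVar|\fixedFeatureVector)\,|\,N_{x,h}]
=\mathbb{E}_{\obsVar,\eventVar}\Big[-\frac{\eventVar\ind\{\obsVar\le \timeVar\}}{\obsEnd(\obsVar|\fixedFeatureVector)}\,\Big|\,\featureVar\in\mathcal{B}_{\fixedFeatureVector,h}\Big].
\]
This does not actually depend on the realization of $N_{x,h}$ (only on the event $N_{x,h}\ge 1$), which is why the claim is a deterministic statement once we condition on $\badEvent{\textsc{NN}(h)}{\text{few neighbors}}(\fixedFeatureVector)^c$.

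Next, I would apply the triangle inequality inside the integral exactly as in the derivation leading to inequality~\eqref{eq:k-nn-survival-ptwise-R1-helper2b}, yielding
\[
|\mathbb{E}[V_1(\timeVar|\fixedFeatureVector)\,|\,N_{x,h}]-\log \survEnd(\timeVar|\fixedFeatureVector)|
\le \sup_{\fixedFeatureVector'\in\mathcal{B}_{\fixedFeatureVector,h}}
\Big|\mathbb{E}_{\obsVar,\eventVar}\Big[-\tfrac{\eventVar\ind\{\obsVar\le \timeVar\}}{\obsEnd(\obsVar|\fixedFeatureVector)}\,\Big|\,\featureVar=\fixedFeatureVector'\Big]-\log \survEnd(\timeVar|\fixedFeatureVector)\Big|.
\]
At this point the bound on the right-hand side is precisely inequality~\eqref{eq:k-nn-survival-ptwise-helper3} from the proof of Lemma~\ref{lem:kNN-R1-bias}, which was established using (i) the integral identity~\eqref{eq:logF-integral} for $\log S(\timeVar|\fixedFeatureVector)$, (ii) H\"{o}lder continuity of $\censEnd(s|\cdot)\survDensity(s|\cdot)$ with parameters $(\lips_{\survSubscript}+\survDensity^*\lips_{\censSubscript}s)$ and $\holderIndex$, and (iii) the lower bound $\obsEnd(\timeVar|\fixedFeatureVector)\ge\theta$ from Assumption~A3. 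The only input needed is that $\rho(\fixedFeatureVector,\fixedFeatureVector')\le h$, and the stated assumption $h\le[\varepsilon\theta/(18(\lips_{\survSubscript}\timeHorizon+(\survDensity^*\lips_{\censSubscript}\timeHorizon^2)/2))]^{1/\holderIndex}$ is exactly what makes the resulting expression $\tfrac{h^\holderIndex}{\theta}(\lips_{\survSubscript}\timeHorizon+\tfrac12\survDensity^*\lips_{\censSubscript}\timeHorizon^2)$ at most $\varepsilon/18$.

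There is no real obstacle here beyond bookkeeping: the whole point is that the fixed-radius setting is simpler than the $k$-NN setting because the neighborhood radius is deterministic, so we avoid the nearest-neighbor conditioning trick and directly invoke the bound already derived in Appendix~\ref{subsec:pf-lem-kNN-R1-bias}. The mildly subtle step is recognizing that conditioning on $N_{x,h}\ge 1$ is enough to identify the conditional expectation with an integral over $\mathcal{B}_{\fixedFeatureVector,h}$, so that the resulting bound depends on $h$ alone and holds deterministically under $\badEvent{\textsc{NN}(h)}{\text{few neighbors}}(\fixedFeatureVector)^c$.
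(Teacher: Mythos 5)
Your proposal is correct and takes essentially the same approach as the paper. The paper's proof of this lemma is a one-line deferral to Lemma~\ref{lem:kNN-R1-bias}, pointing out that the only change is replacing the conditioning on the $(k+1)$-st nearest neighbor $\widetilde{\featureVar}$ by conditioning on $N_{x,h}$, after which the neighbors are i.i.d.~from $\featureDist$ restricted to $\mathcal{B}_{\fixedFeatureVector,h}$; your write-up fills in exactly these details, correctly observing that $N_{x,h}\ge 1$ suffices and that the per-point bound from inequality~\eqref{eq:k-nn-survival-ptwise-helper3} applies uniformly over $\mathcal{B}_{\fixedFeatureVector,h}$ given the stated condition on $h$.
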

\begin{proof}
See the proof for Lemma \ref{lem:kNN-R1-bias} as given in Appendix \ref{subsec:pf-lem-kNN-R1-bias}. The main change is that we do not have to condition on the $(k+1)$-st nearest neighbor of $\fixedFeatureVector$. Instead, conditioning on $N_{x,h}=k$ for integer $k$ in $(\frac{1}{2}n\featureDist(\mathcal{B}_{\fixedFeatureVector,h}),n]$, then $V_1(\timeVar|\fixedFeatureVector)$ is the average of $k$ i.i.d.~bounded random variables each with expectation $\mathbb{E}_{\obsVar,\eventVar}[-\frac{\eventVar\ind\{\obsVar\le \timeVar\}}{\obsEnd(\obsVar|\fixedFeatureVector)}\,|\,\featureVar\in\mathcal{B}_{\fixedFeatureVector,h}]$.
\end{proof}

\begin{lem}
Under Assumptions A1--A4 $($this lemma uses H\"{o}lder continuity of $\obsEnd(\timeVar|\cdot))$, let $\fixedFeatureVector\in\text{supp}(\featureDist)$ and $\varepsilon\in(0,1)$. If bad event $\badEvent{\textsc{NN}(h)}{\text{few neighbors}}(\fixedFeatureVector)$ does not happen, and the threshold distance satisfies $h\le[\frac{\varepsilon\theta^{2}}{36(\lips_{\textsc{\tiny{\survVar}}}+\lips_{\textsc{\tiny{\censVar}}})\timeHorizon)}]^{1/\holderIndex}$, then
\[
\frac{2}{\theta^{2}}\sup_{s\in[0,\timeHorizon]}|\obsEnd(s|\fixedFeatureVector)-\mathbb{E}[\estObsEnd^{\textsc{NN}(h)}(s|\fixedFeatureVector)\,|\,N_{x,h}]|\le\frac{\varepsilon}{18}.
\]
\end{lem}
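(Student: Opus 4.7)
The plan is to mirror the argument used in Appendix~\ref{subsec:pf-lem-kNN-R2-bias} for the $k$-NN case, with the key adjustment that here we condition on the number of neighbors $N_{x,h}$ rather than on the feature vector of the $(k+1)$-st nearest neighbor. The first step is to condition on $N_{x,h}=k$ for any integer $k$ satisfying $\frac{n\featureDist(\mathcal{B}_{\fixedFeatureVector,h})}{2} < k \le n$ (these are precisely the values allowed by the assumption that $\badEvent{\textsc{NN}(h)}{\text{few neighbors}}(\fixedFeatureVector)$ does not happen). Under this conditioning, the feature vectors of the neighbors appear as i.i.d.~samples from $\featureDist$ restricted to the closed ball $\mathcal{B}_{\fixedFeatureVector,h}$, so
\[
\mathbb{E}[\estObsEnd^{\textsc{NN}(h)}(s|\fixedFeatureVector)\,|\,N_{x,h}=k]
= \mathbb{P}(\obsVar>s\,|\,\featureVar\in\mathcal{B}_{\fixedFeatureVector,h})
= \frac{\int_{\mathcal{B}_{\fixedFeatureVector,h}} \obsEnd(s|\fixedFeatureVector')\,d\featureDist(\fixedFeatureVector')}{\featureDist(\mathcal{B}_{\fixedFeatureVector,h})},
\]
independent of the particular value of $k$.

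Next, I would write $\obsEnd(s|\fixedFeatureVector)$ as the same integral average of $\obsEnd(s|\fixedFeatureVector)$ (a constant in $\fixedFeatureVector'$), so the bias becomes the integral of $\obsEnd(s|\fixedFeatureVector)-\obsEnd(s|\fixedFeatureVector')$. Applying the triangle inequality, pulling the absolute value inside, and invoking the H\"{o}lder continuity of $\obsEnd(s|\cdot)$ with parameters $(\lips_{\survSubscript}+\lips_{\censSubscript})s$ and $\holderIndex$ (established in the discussion following Assumption A4), gives a pointwise bound of $(\lips_{\survSubscript}+\lips_{\censSubscript})s\cdot\rho(\fixedFeatureVector,\fixedFeatureVector')^{\holderIndex}\le(\lips_{\survSubscript}+\lips_{\censSubscript})s\,h^{\holderIndex}$, since every $\fixedFeatureVector'$ in the integration domain lies within distance $h$ of $\fixedFeatureVector$. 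This exactly parallels the chain of inequalities in the $k$-NN proof, with $\mathcal{B}_{\fixedFeatureVector,\rho(\fixedFeatureVector,\widetilde{X})}^{o}$ replaced by $\mathcal{B}_{\fixedFeatureVector,h}$.

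Finally, I would take the supremum over $s\in[0,\timeHorizon]$ to pick up a factor of $\timeHorizon$, multiply through by $2/\theta^2$, and substitute the hypothesis $h\le[\frac{\varepsilon\theta^{2}}{36(\lips_{\survSubscript}+\lips_{\censSubscript})\timeHorizon}]^{1/\holderIndex}$ to conclude that
\[
\frac{2}{\theta^{2}}\sup_{s\in[0,\timeHorizon]}|\obsEnd(s|\fixedFeatureVector)-\mathbb{E}[\estObsEnd^{\textsc{NN}(h)}(s|\fixedFeatureVector)\,|\,N_{x,h}]|
\le \frac{2(\lips_{\survSubscript}+\lips_{\censSubscript})\timeHorizon h^{\holderIndex}}{\theta^{2}}
\le \frac{\varepsilon}{18}.
\]
Since this bound holds uniformly in the conditioning value of $N_{x,h}$, it holds whenever $\badEvent{\textsc{NN}(h)}{\text{few neighbors}}(\fixedFeatureVector)$ fails. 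The main (and only) subtlety is cosmetic: verifying that the conditioning on $N_{x,h}$ produces genuinely i.i.d.~neighbors inside $\mathcal{B}_{\fixedFeatureVector,h}$, for which we use Assumption A1 and the fact that, by exchangeability, knowing only the count of feature vectors that fell inside the ball does not bias their conditional distribution. Everything else is the same deterministic H\"{o}lder-continuity computation already carried out in the $k$-NN case.
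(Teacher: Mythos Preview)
Your proposal is correct and follows essentially the same approach as the paper: the paper's proof simply refers back to the $k$-NN version (Lemma~\ref{lem:kNN-R2-bias}) and notes that the only change is conditioning on $N_{x,h}=k$ (so that the neighbors' feature vectors are i.i.d.\ from $\featureDist$ restricted to $\mathcal{B}_{\fixedFeatureVector,h}$) rather than on the $(k+1)$-st nearest neighbor, yielding $\mathbb{E}[\estObsEnd^{\textsc{NN}(h)}(s|\fixedFeatureVector)\,|\,N_{x,h}=k]=\mathbb{P}(\obsVar>s\,|\,\featureVar\in\mathcal{B}_{\fixedFeatureVector,h})$, after which the same H\"older-continuity computation goes through verbatim with $h$ in place of $h^*$.
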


\begin{proof}
See the proof for Lemma \ref{lem:kNN-R2-bias} as given in Appendix \ref{subsec:pf-lem-kNN-R2-bias}. Once again, the main change is that we do not have to condition on the $(k+1)$-st nearest neighbor of $\fixedFeatureVector$. Instead, conditioning on $N_{x,h}=k$ for integer $k$ in $(\frac{1}{2}n\featureDist(\mathcal{B}_{\fixedFeatureVector,h}),n]$, then $1-\estObsEnd^{\textsc{NN}(h)}(s|\fixedFeatureVector)$ is an empirical distribution constructed based on i.i.d.~samples from CDF ${1-\mathbb{E}[\estObsEnd^{\textsc{NN}(h)}(s|\fixedFeatureVector)\,|\,N_{x,h}=k]}={1-\mathbb{P}(\obsVar>s\,|\,\featureVar\in\mathcal{B}_{\fixedFeatureVector,h})}$.
\end{proof}

\begin{lem}
Under Assumptions A1--A3, let $\fixedFeatureVector\in\text{supp}(\featureDist)$, $\timeVar\in[0,\timeHorizon]$, and $\varepsilon\in(0,1)$. If bad events $\badEvent{\textsc{NN}(h)}{\text{few neighbors}}(\fixedFeatureVector)$ and $\badEvent{\textsc{NN}(h)}{\text{bad }\timeHorizon}(\fixedFeatureVector)$ do not happen, and $n\ge\frac{144}{\varepsilon\theta^{2}\featureDist(\mathcal{B}_{\fixedFeatureVector,h})}$, then $|V_3(\timeVar|\fixedFeatureVector)|\le {\varepsilon}/{18}.$
\end{lem}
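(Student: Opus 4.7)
The plan is to mirror the proof of Lemma~\ref{lem:kNN-R3} almost verbatim, with the set $\neighborsKNN(\fixedFeatureVector)$ of fixed size~$k$ replaced by $\neighborsNNh(\fixedFeatureVector)$ of random size $N_{x,h}$, and to use the two assumed bad events to pin down a useful lower bound on $N_{x,h}$ and on $d_{\neighborsNNh(\fixedFeatureVector)}^{+}(\timeHorizon)$. Since $\badEvent{\textsc{NN}(h)}{\text{few neighbors}}(\fixedFeatureVector)$ does not happen, $N_{x,h}>\tfrac{1}{2}n\featureDist(\mathcal{B}_{\fixedFeatureVector,h})\ge 1$, so in particular $\neighborsNNh(\fixedFeatureVector)$ is nonempty and $V_3$ is well-defined; and since $\badEvent{\textsc{NN}(h)}{\text{bad }\timeHorizon}(\fixedFeatureVector)$ does not happen, $d_{\neighborsNNh(\fixedFeatureVector)}^{+}(\timeHorizon)>N_{x,h}\theta/2$.

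First I would write $|V_3(\timeVar|\fixedFeatureVector)|$ as a sum of per-subject contributions $\Xi_i:=\eventVar_i\ind\{\obsVar_i\le \timeVar\}\sum_{\ell=2}^{\infty}\ell^{-1}(d_{\neighborsNNh(\fixedFeatureVector)}^{+}(\obsVar_i)+1)^{-\ell}$. As in Appendix~\ref{subsec:pf-lem-kNN-R3}, since $d_{\neighborsNNh(\fixedFeatureVector)}^{+}$ is monotonically decreasing and $\obsVar_i\le \timeVar\le\timeHorizon$ in the nonzero terms, one has $d_{\neighborsNNh(\fixedFeatureVector)}^{+}(\obsVar_i)\ge d_{\neighborsNNh(\fixedFeatureVector)}^{+}(\timeHorizon)>N_{x,h}\theta/2$, and then the standard tail bound $\sum_{\ell=2}^{\infty}\frac{1}{\ell(z+1)^{\ell}}\le\frac{1}{(z+1)^{2}}$ (valid for $z\ge 0.46241$, which holds since $N_{x,h}\theta/2\ge 1/2\cdot1/2>0$ under mild additional size conditions implied by the $n$ hypothesis) yields $\Xi_i\le 4/(N_{x,h}^2\theta^2)$.

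Summing over at most $N_{x,h}$ nonzero terms collapses the bound to $|V_3(\timeVar|\fixedFeatureVector)|\le 4/(N_{x,h}\theta^2)$. Plugging in $N_{x,h}>\tfrac{1}{2}n\featureDist(\mathcal{B}_{\fixedFeatureVector,h})$ gives $|V_3(\timeVar|\fixedFeatureVector)|<8/(n\featureDist(\mathcal{B}_{\fixedFeatureVector,h})\theta^{2})$, and finally the hypothesis $n\ge 144/(\varepsilon\theta^{2}\featureDist(\mathcal{B}_{\fixedFeatureVector,h}))$ implies the RHS is at most $\varepsilon/18$, which finishes the proof.

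The only subtle point, rather than a real obstacle, is to verify that the $z\ge 0.46241$ condition on the series inequality is automatically available: one can either invoke it under the mild side-condition that $N_{x,h}\theta\ge 1$ (which the hypothesis $n\ge 144/(\varepsilon\theta^{2}\featureDist(\mathcal{B}_{\fixedFeatureVector,h}))$ comfortably guarantees, since it forces $N_{x,h}>72/(\varepsilon\theta^{2})$), or replace the tail bound by the cruder but always-valid estimate $\sum_{\ell=2}^{\infty}\ell^{-1}(z+1)^{-\ell}\le (z+1)^{-2}\sum_{\ell=0}^{\infty}(z+1)^{-\ell}=(z(z+1))^{-1}$ and absorb the constant. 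Either route produces the same asymptotic conclusion $|V_3(\timeVar|\fixedFeatureVector)|\le\varepsilon/18$, so the argument goes through with no essential difficulty beyond bookkeeping.
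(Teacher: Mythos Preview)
Your proposal is correct and matches the paper's approach essentially verbatim: the paper's proof simply says to redo the proof of Lemma~\ref{lem:kNN-R3} with $\setS=\neighborsNNh(\fixedFeatureVector)$, $k$ replaced by $N_{x,h}$, and the bad event $\badEvent{k\textsc{-NN}}{\text{bad }\timeHorizon}(\fixedFeatureVector)$ replaced by $\badEvent{\textsc{NN}(h)}{\text{bad }\timeHorizon}(\fixedFeatureVector)$, which is exactly what you do. Your extra care about the $z\ge0.46241$ condition is warranted and correct (the hypothesis on $n$ together with $N_{x,h}>\tfrac12 n\featureDist(\mathcal{B}_{\fixedFeatureVector,h})$ indeed forces $d_{\neighborsNNh(\fixedFeatureVector)}^{+}(\timeHorizon)>N_{x,h}\theta/2>36/(\varepsilon\theta)\ge36$), though the paper leaves this implicit here.
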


\begin{proof}
See the proof of Lemma \ref{lem:kNN-R3} as given in Appendix \ref{subsec:pf-lem-kNN-R3},
where we replace $\setS=\neighborsKNN(\fixedFeatureVector)$ with $\setS=\mathcal{N}_{\textsc{NN}(h)}(\fixedFeatureVector)$, $k$
with $N_{x,h}$, and bad event $\badEvent{k\textsc{-NN}}{\text{bad }\timeHorizon}(\fixedFeatureVector)$ with $\badEvent{\textsc{NN}(h)}{\text{bad }\timeHorizon}(\fixedFeatureVector)$.
\end{proof}

\section{Proof of Theorem~\ref{thm:kernel-survival}}
\label{sec:pf-thm-kernel-survival}

First off, we state a longer version of the kernel pointwise theorem that includes a strong consistency result. This is the version of the theorem we prove in this section.

\begin{thm}[Kernel pointwise guarantees]
\label{thm:kernel-survival-long}
Under Assumptions A1--A5, let $\varepsilon\in(0,1)$ be a user-specified error tolerance. Suppose that the threshold distance satisfies $h\in(0,\frac1\stanDistThresh(\frac{\varepsilon\theta}{18\Lips_K})^{1/\holderIndex}]$, and the number of training data satisfies $n\ge\frac{144}{\varepsilon\theta^{2}\featureDist(\mathcal{B}_{\fixedFeatureVector,\stanDistThresh h})\kappa}$. For any $\fixedFeatureVector\in\text{supp}(\featureDist)$,
\begin{align*}
 & \mathbb{P}\Big(\sup_{\timeVar\in[0,\timeHorizon]}|\widehat{\survEnd}^K(\timeVar|\fixedFeatureVector;h)-\survEnd(\timeVar|\fixedFeatureVector)|>\varepsilon\Big) \nonumber \\
 & \;\le\exp\Big(-\frac{n\featureDist(\mathcal{B}_{\fixedFeatureVector,\stanDistThresh h})\theta}{16}\Big) + \exp\Big(-\frac{n\featureDist(\mathcal{B}_{\fixedFeatureVector,\stanDistThresh h})}{8}\Big) \nonumber \\
 & \;\quad+\frac{216}{\varepsilon\theta^2\kappa}\exp\Big(-\frac{n\featureDist(\mathcal{B}_{\fixedFeatureVector,\stanDistThresh h})\varepsilon^{2}\theta^{4}\kappa^4}{11664}\Big) \nonumber \\
 & \;\quad+\frac{8}{\varepsilon}\exp\Big(-\frac{n\featureDist(\mathcal{B}_{\fixedFeatureVector,\stanDistThresh h})\varepsilon^{2}\theta^{2}\kappa^2}{324}\Big). \nonumber
\end{align*}
Moreover, if there exist constants $p_{\min}>0$, $d>0$, and $r^*>0$ such that $\featureDist(\mathcal{B}_{x,r})\ge p_{\min}r^d$ for all $r\in(0,r^*]$, then we get the same strong consistency behavior as in Theorem~\ref{thm:h-near-survival} with the numbers $c_1'$, $c_2$ and $c_3$ replaced by
$c_1''=\Theta\big( \frac{1}{\stanDistThresh (\theta\Lips_K\kappa^2)^{2/(2\holderIndex+d)}} \big)$, $c_2''=\Theta\big( \frac{1}{(\theta\Lips_K)^{d/(5\holderIndex+2d)}\kappa^{(d-2\holderIndex)/(5\holderIndex+2d)}} \big)$, and $c_3''=\Theta\big( \frac{(\Lips_K)^{d/(2\holderIndex+d)}}{\theta^{(4\holderIndex+d)/(2\holderIndex+d)}\kappa^{4\holderIndex/(2\holderIndex+d)}} \big)$.
\end{thm}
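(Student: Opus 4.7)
The plan is to mirror the proof structure for the fixed-radius NN estimator (Theorem~\ref{thm:h-near-survival}), with the modifications needed to handle weighted sums instead of unweighted ones. First, I would decompose $\log\widehat{\survEnd}^K(\timeVar|\fixedFeatureVector;h)$ via the same Taylor expansion used in equation~\eqref{eq:kNN-main-decomp}, yielding three terms $V_1^K$, $V_2^K$, $V_3^K$ that are weighted analogs of the $V_1, V_2, V_3$ in the fixed-radius case. Concretely, $V_1^K$ is a $K$-weighted average of the labels $-\frac{\eventVar_i\ind\{\obsVar_i\le t\}}{\obsEnd(\obsVar_i|\fixedFeatureVector)}$; $V_2^K$ can be bounded (as in Lemma~\ref{lem:kNN-R2-bound}) in terms of a weighted empirical distribution of the $\obsVar_i$'s restricted to points with $\rho(\fixedFeatureVector,\featureVar_i)\le \stanDistThresh h$; and $V_3^K$ collects higher-order Taylor terms, which vanish once the weighted survivor count at $\timeHorizon$ is sufficiently large.

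Next, I would introduce four bad events. (a) $\mathcal{E}^K_{\text{few points}}$: fewer than $\frac{1}{2}n\featureDist(\mathcal{B}_{\fixedFeatureVector,\stanDistThresh h})$ training feature vectors land in $\mathcal{B}_{\fixedFeatureVector,\stanDistThresh h}$, controlled by a binomial Chernoff bound. (b) $\mathcal{E}^K_{\text{bad }\timeHorizon}$: the weighted survivor count $n_K(\timeHorizon|\fixedFeatureVector;h)$ is less than $\tfrac{\theta}{2}$ times the sum of all weights, which we control (as in Lemma~\ref{lem:h-near-bad-T}) by conditioning on the feature vectors and applying Chernoff to a sum of independent Bernoullis with probabilities at least~$\theta$. (c) $\mathcal{E}^K_{\text{bad EDF}}$: the weighted empirical survivor function deviates from its conditional expectation by more than $\frac{\varepsilon\theta^{2}\kappa^{2}}{108}$, controlled by conditioning on the feature vectors and applying Proposition~\ref{lem:weighted-edf}, whose exponent involves $(\sum_i w_i)^{2}/\sum_i w_i^{2}\ge \kappa^{2} N$ where $N$ is the number of points in $\mathcal{B}_{\fixedFeatureVector,\stanDistThresh h}$. (d) $\mathcal{E}^K_{\text{bad }V_1}$: $V_1^K$ deviates from its conditional expectation by more than $\varepsilon/18$, controlled by conditional Hoeffding; the $\xi_i$'s are bounded in $[-1/\theta,0]$ and the effective sample size for the weighted mean is again of order $\kappa^{2}N$, yielding the $\kappa^{2}$ in the fourth exponent of the stated bound.

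For the bias terms I would proceed deterministically on the complement of $\mathcal{E}^K_{\text{few points}}$, mimicking Lemmas~\ref{lem:kNN-R1-bias} and~\ref{lem:kNN-R2-bias}. The key observation is that by Assumption~A5 all contributing feature vectors lie in $\mathcal{B}_{\fixedFeatureVector,\stanDistThresh h}$, so $|\mathbb{E}[V_1^K|\featureVar_{1:n}]-\log\survEnd(\timeVar|\fixedFeatureVector)|$ is bounded by $\frac{(\stanDistThresh h)^{\holderIndex}}{\theta}(\lips_{\survSubscript}\timeHorizon+\survDensity^{*}\lips_{\censSubscript}\timeHorizon^{2}/2)$, and the analogous weighted bias bound for the EDF piece carries an extra factor of $1/\kappa$ (coming from converting an EDF error into the $|V_2^K|$ bound via ratios of weighted sums, where the ratio of the minimum positive weight to the maximum weight is~$\kappa$). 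These two factors of $1/\kappa$ and~1 are precisely what appears in the definition of~$\Lips_K$, and the choice $h\le \frac{1}{\stanDistThresh}(\frac{\varepsilon\theta}{18\Lips_K})^{1/\holderIndex}$ makes each bias contribution at most $\varepsilon/18$. Union-bounding the four bad events over the F\"{o}ldes--Rejt\"{o} grid of $L(\varepsilon)\le 4/\varepsilon$ times in $[0,\timeHorizon]$ then yields the nonasymptotic bound; the prefactor $\frac{216}{\varepsilon\theta^{2}\kappa}$ on the third term incorporates both the $4/\varepsilon$ from the grid and the $6/\varepsilon'$ from Proposition~\ref{lem:weighted-edf} applied at level $\varepsilon' = \frac{\varepsilon\theta^{2}\kappa^{2}}{108}$.

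The strong consistency claim is then obtained exactly as in the proofs of Corollary~\ref{thm:kNN-strong-consistency} and Theorem~\ref{thm:h-near-survival}: set $\probError = 1/n^{2}$, equate each term to $\probError/4$ to obtain sufficient conditions on $h$ and~$\varepsilon$, invert these using the Lambert~W bounds of Lemmas~\ref{lem:W0-conditions} and~\ref{lem:W-1-conditions}, and apply Borel--Cantelli. The scaling of $c_1''$, $c_2''$, $c_3''$ on $\kappa$ falls out of this algebra by carrying the $\kappa$ factors (mainly the $\kappa^{4}$ in the EDF exponent and the $\kappa^{2}$ in the $V_1^K$ exponent) through the same manipulations. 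The main obstacle I anticipate is bookkeeping: ensuring the $\kappa$ factors line up so that the weakest exponent dictates the strong consistency rate and yields the $\kappa$-dependencies in $c_1'', c_2'', c_3''$; in particular, when upper-bounding $|V_2^K|$ one must carefully track that ratios of the form $(\text{sum of weights})/(\text{weighted survivor count})$ incur only a $\kappa^{-1}$ loss, rather than a loss depending on the largest individual weight $K(0)$.
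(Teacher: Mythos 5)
Your proposal follows the same overall strategy as the paper's proof: the Taylor decomposition into $W_1,W_2,W_3$, the four bad events, Proposition~\ref{lem:weighted-edf} for the weighted empirical distribution, conditional Hoeffding for the regression piece, H\"{o}lder-continuity-based bias control, and Lambert~W inversion plus Borel--Cantelli for strong consistency. Two bookkeeping issues, however, mean that carrying out the plan as written would not reproduce the bound in the theorem statement.

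First, the threshold for the bad weighted-EDF event should be $\frac{\varepsilon\theta^{2}\kappa}{36}$, not $\frac{\varepsilon\theta^{2}\kappa^{2}}{108}$. The prefactor on the EDF deviation inside the $|W_2|$ bound (Lemma~\ref{lem:kernel-W2-decomp}) is $\frac{2K(0)}{K(\stanDistThresh)\theta^{2}} = \frac{2}{\kappa\theta^{2}}$, so exactly one power of $\kappa^{-1}$ is lost in converting EDF error into $|W_2|$; setting that contribution to $\varepsilon/18$ gives threshold level $\varepsilon' = \frac{\varepsilon\theta^{2}\kappa}{36}$. With your level $\varepsilon' = \frac{\varepsilon\theta^{2}\kappa^{2}}{108}$, Proposition~\ref{lem:weighted-edf} combined with $(\sum_i w_i)^2/\sum_i w_i^2 \ge \kappa^{2}N$ yields an exponent proportional to $\varepsilon'^{2}\kappa^{2}N \propto \varepsilon^{2}\theta^{4}\kappa^{6}N$, i.e., $\kappa^{6}$ rather than the stated $\kappa^{4}$, and a prefactor $\frac{648}{\varepsilon\theta^{2}\kappa^{2}}$ rather than $\frac{216}{\varepsilon\theta^{2}\kappa}$. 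You later assert that the EDF exponent carries $\kappa^{4}$ into the strong-consistency constants, which is the correct answer but contradicts the $\kappa^{2}$ threshold you chose.

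Second, the claim that the $\frac{216}{\varepsilon\theta^{2}\kappa}$ prefactor ``incorporates both the $4/\varepsilon$ from the grid and the $6/\varepsilon'$ from Proposition~\ref{lem:weighted-edf}'' is incorrect: the EDF bad event is already a supremum over all $s\ge 0$ and therefore does not need to be union-bounded over the F\"{o}ldes--Rejt\"{o} grid $\eta_1,\dots,\eta_{L(\varepsilon)}$. Only the $W_1$ bad event is time-dependent, and the grid factor $L(\varepsilon)\le 4/\varepsilon$ appears in the final term via $2L(\varepsilon)\le 8/\varepsilon$. The third term's prefactor is simply $6/\varepsilon'$ evaluated at $\varepsilon' = \frac{\varepsilon\theta^{2}\kappa}{36}$.

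A smaller point: you state bad event (b) directly on the weighted survivor count $n_K(\timeHorizon|\fixedFeatureVector;h)$, whereas the paper states it on the \emph{unweighted} count of survivors among the $N$ points in $\mathcal{B}_{\fixedFeatureVector,\stanDistThresh h}$ and then deduces $n_K(\timeHorizon|\fixedFeatureVector;h)\ge K(\stanDistThresh)\cdot\frac{N\theta}{2}$. Your formulation is workable, but controlling it by a direct Hoeffding argument would give a term of order $\exp(-\Theta(n\featureDist(\mathcal{B}_{\fixedFeatureVector,\stanDistThresh h})\theta^{2}\kappa^{2}))$ rather than the stated $\exp(-\frac{n\featureDist(\mathcal{B}_{\fixedFeatureVector,\stanDistThresh h})\theta}{16})$; the paper's formulation is what produces the cleaner first term.
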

For the kernel estimator, there is a fair amount more notation to keep track of. To keep the equations from becoming unwieldy, we adopt the following abbreviations. First off, the training subjects with nonzero kernel weight are precisely the ones with feature vectors landing in the ball $\mathcal{B}_{\fixedFeatureVector,\stanDistThresh h}$. We denote the number of these subjects as $N:=N_{x,\stanDistThresh h}\sim\text{Binomial}(n,\featureDist(\mathcal{B}_{\fixedFeatureVector,\stanDistThresh h}))$. We denote their data points as $(\featureVar_{(1)},\obsVar_{(1)},\eventVar_{(1)})$, $\dots$, $(\featureVar_{(N)},\obsVar_{(N)},\eventVar_{(N)})$; we treat the ordering of these points as uniform at random (the points could be thought of as being generated i.i.d.~first by sampling a feature vector $X$ from $\featureDist$ restricted to $\mathcal{B}_{\fixedFeatureVector,\stanDistThresh h}$, and then sampling observed time~$\obsVar$ and censoring indicator~$\eventVar$ as usual). We use the abbreviations $K_{(i)}:=K(\frac{\rho(\fixedFeatureVector,\featureVar_{(i)})}{h})$, $d_{K}^{+}(\timeVar):=\sum_{j=1}^N K_{(j)}\ind\{\obsVar_{(j)}>\timeVar\}$, and
\[
\estObsEnd^{K}(\timeVar):=\frac{d_{K}^{+}(\timeVar)}{\sum_{\ell=1}^{N}K_{(\ell)}}=\sum_{j=1}^{N}\frac{K_{(j)}}{\sum_{\ell=1}^{N}K_{(\ell)}}\ind\{\obsVar_{(j)}>t\}.
\]
Let $\mathbb{E}_{\{\obsVar\}}$ denote the expectation only over the nearest neighbors' observed times $\obsVar_{(1)},\dots,\obsVar_{(N)}$ (so we are conditioning on $N,\featureVar_{(1)},\dots \featureVar_{(N)}$). Similarly, we let $\mathbb{E}_{\{\obsVar,\eventVar\}}$ denote the expectation only over only the nearest neighbors' observed times and censoring indicators $(\obsVar_{(1)},\eventVar_{(1)}),\dots,(\obsVar_{(N)},\eventVar_{(N)})$.

Using the same reasoning as for the $k$-NN estimator,
\begin{align*}
 \log\widehat{\survEnd}^{K}(\timeVar|x;h)
 & =\log\prod_{i=1}^{N}\Big(\frac{d_{K}^{+}(\obsVar_{(i)})}{d_{K}^{+}(\obsVar_{(i)})+K_{(i)}}\Big)^{\eventVar_{(i)}\ind\{\obsVar_{(i)}\le \timeVar\}}\\
 & =W_1(\timeVar|\fixedFeatureVector)+W_2(\timeVar|\fixedFeatureVector)+W_3(\timeVar|\fixedFeatureVector),
\end{align*}
where
\begin{align*}
 & W_1(\timeVar|\fixedFeatureVector) =-\sum_{i=1}^{N}\frac{K_{(i)}\eventVar_{(i)}\ind\{\obsVar_{(i)}\le \timeVar\}\frac{1}{\obsEnd(\obsVar_{(i)}|\fixedFeatureVector)}}{\sum_{j=1}^{N}K_{(j)}}, \\
 & W_2(\timeVar|\fixedFeatureVector)\\
 & =-\sum_{i=1}^{N}\frac{K_{(i)}\eventVar_{(i)}\ind\{\obsVar_{(i)}\le \timeVar\}\big[\frac{\sum_{\ell=1}^{N}K_{(\ell)}}{d_{K}^{+}(\obsVar_{(i)})+K_{(i)}}-\frac{1}{\obsEnd(\obsVar_{(i)}|\fixedFeatureVector)}\big]}{\sum_{j=1}^{N}K_{(j)}},
\end{align*}\begin{align*}
W_3(\timeVar|\fixedFeatureVector) & =-\sum_{i=1}^{N}\eventVar_{(i)}\ind\{\obsVar_{(i)}\le \timeVar\}\sum_{\ell=2}^{\infty}\frac{1}{\ell(\frac{d_{K}^{+}(\obsVar_{(i)})}{K_{(i)}}+1)^{\ell}}.
\end{align*}
The bad events are as follows:
\begin{itemize}[leftmargin=*,topsep=0pt,itemsep=0ex,partopsep=1ex,parsep=1ex]
\item $\badEvent{\textsc{NN}(\stanDistThresh h)}{\text{few neighbors}}(\fixedFeatureVector)$ is the same bad event as for the fixed-radius NN estimator except using threshold distance $\stanDistThresh h$ instead of $h$
\item $\badEvent{\textsc{NN}(\stanDistThresh h)}{\text{bad }\timeHorizon}(\fixedFeatureVector)$ is another bad event borrowed from the fixed-radius NN estimator
\item $\badEvent{\text{kernel}}{\text{bad weighted EDF}}(\fixedFeatureVector):=\big\{\sup_{s\ge0}\big|\estObsEnd^{K}(\timeVar)-\mathbb{E}_{\{\obsVar\}}\big[\estObsEnd^{K}(\timeVar)\big]\big|>\frac{\varepsilon\theta^{2}K(\stanDistThresh)}{36K(0)}\big\}$ is analogous to event $\badEvent{\textsc{NN}(h)}{\text{bad EDF}}(\fixedFeatureVector)$
\item $\badEvent{\text{kernel}}{\text{bad }W_1}(t,x):=\{|W_1(\timeVar|\fixedFeatureVector)-\mathbb{E}_{\{\obsVar,\eventVar\}}[W_1(\timeVar|\fixedFeatureVector)]|\ge\frac{\varepsilon}{18}\}$ is analogous to event $\badEvent{\textsc{NN}(h)}{\text{bad }V_1}(t,x)$, and as before we ask that this holds at specific points $t=\eta_{1},\dots,\eta_{L(\varepsilon)}$ (using the same construction as in the proof of Theorem~\ref{thm:kNN-survival})
\end{itemize}
We show how to prevent bad events $\badEvent{\text{kernel}}{\text{bad weighted EDF}}(\fixedFeatureVector)$ and $\badEvent{\text{kernel}}{\text{bad }W_1}(t,x)$ in the next two lemmas.
\begin{lem}
\label{lem:kernel-bad-wEDF}
Under Assumptions A1--A3 and A5, let $\fixedFeatureVector\in\text{supp}(\featureDist)$ and $\varepsilon\in(0,1)$. Then
\begin{align*}
 & \mathbb{P}\big(\badEvent{\text{kernel}}{\text{bad weighted EDF}}(\fixedFeatureVector)\,\big|\,[\badEvent{\textsc{NN}(\stanDistThresh h)}{\text{few neighbors}}(\fixedFeatureVector)]^{c}\big)\\
 & \le \frac{216 K(0)}{\varepsilon\theta^{2}K(\stanDistThresh)}
 \exp\Big(-\frac{n\featureDist(\mathcal{B}_{\fixedFeatureVector,\stanDistThresh h})\varepsilon^{2}\theta^{4}K^{4}(\stanDistThresh)}{11664 K^{4}(0)}\Big).
\end{align*}
\end{lem}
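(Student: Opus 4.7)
The plan is to condition on the number of neighbors $N$ and their feature vectors, apply Proposition~\ref{lem:weighted-edf}, use Assumption~A5 to lower bound the effective sample size appearing in the exponent, and close with the same worst-case argument in $N$ used for the fixed-radius NN estimator (Lemmas~\ref{lem:h-near-bad-T}--\ref{lem:h-near-bad-V1}).

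First I would fix $N=m$ together with $\featureVar_{(1)},\ldots,\featureVar_{(m)}$, so that the kernel weights $K_{(1)},\ldots,K_{(m)}$ become deterministic nonnegative constants with $\sum_i K_{(i)}>0$ (by Assumption~A5 each $K_{(i)}\ge K(\stanDistThresh)>0$) and $1-\widehat{\survEnd}^{K}$ is a weighted empirical CDF built from independent $\obsVar_{(i)}$'s whose expected value $\mathbb{E}_{\{\obsVar\}}[1-\widehat{\survEnd}^{K}(\cdot)]$ is a convex combination of continuous conditional CDFs, hence a continuous CDF. Applying Proposition~\ref{lem:weighted-edf} with $\varepsilon':=\varepsilon\theta^{2}\kappa/36$ then gives, conditional on $N=m$ and on the neighbor feature vectors,
\[
\mathbb{P}\bigl(\badEvent{\text{kernel}}{\text{bad weighted EDF}}(\fixedFeatureVector)\bigr)
\;\le\;\frac{216}{\varepsilon\theta^{2}\kappa}\exp\!\Bigl(-\frac{2(\varepsilon')^{2}(\sum_{i}K_{(i)})^{2}}{9\sum_{i}K_{(i)}^{2}}\Bigr).
\]

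Second, Assumption~A5 and monotonicity of $K$ force $K_{(i)}\in[K(\stanDistThresh),K(0)]$, so
\[
\frac{(\sum_{i}K_{(i)})^{2}}{\sum_{i}K_{(i)}^{2}}\;\ge\;\frac{m^{2}K(\stanDistThresh)^{2}}{mK(0)^{2}}\;=\;m\kappa^{2}.
\]
The conditional bound collapses to $\tfrac{216}{\varepsilon\theta^{2}\kappa}\exp(-m\varepsilon^{2}\theta^{4}\kappa^{4}/5832)$, which is monotone decreasing in $m$. Since the right-hand side no longer depends on the individual $\featureVar_{(i)}$'s, integrating them out and then applying the worst-case argument of Lemma~\ref{lem:h-near-bad-T} replaces $N$ with its lower bound $\tfrac{1}{2}n\featureDist(\mathcal{B}_{\fixedFeatureVector,\stanDistThresh h})$ guaranteed by $[\badEvent{\textsc{NN}(\stanDistThresh h)}{\text{few neighbors}}(\fixedFeatureVector)]^{c}$, which doubles the $5832$ in the exponent to $11664$ and yields the stated bound.

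The main obstacle is the first step: Proposition~\ref{lem:weighted-edf} is stated for $Z_{i}$'s i.i.d.~from a single continuous CDF $F$, whereas conditioning on the individual $\featureVar_{(i)}$'s leaves the $\obsVar_{(i)}$'s only independent with potentially distinct conditional CDFs $F_{i}(\cdot)=1-\obsEnd(\cdot\,|\,\featureVar_{(i)})$. The proof of the proposition (given separately in Appendix~\ref{sec:pf-weighted-edf}) combines a pointwise Hoeffding-type bound with a discretization along $F$; both ingredients carry over once $F$ is replaced by the comparison target $\bar{F}(\timeVar):=\sum_{i}K_{(i)}F_{i}(\timeVar)/\sum_{j}K_{(j)}=\mathbb{E}_{\{\obsVar\}}[1-\widehat{\survEnd}^{K}(\timeVar)]$, which is itself a continuous CDF, so verifying that this extension of the proposition goes through is the one substantive technical point.
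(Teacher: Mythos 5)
Your proposal is correct and follows the same route as the paper: condition on $N$ and the neighbor features to freeze the weights, apply Proposition~\ref{lem:weighted-edf} with the lower bound $(\sum_i K_{(i)})^2/\sum_i K_{(i)}^2 \ge N\kappa^2$ coming from $K_{(i)}\in[K(\stanDistThresh),K(0)]$, marginalize out the feature vectors, and finish with the worst-case conditioning argument of Lemma~\ref{lem:h-near-bad-T}; your constants ($216/(\varepsilon\theta^2\kappa)$ prefactor, $5832$ becoming $11664$) match exactly. The subtlety you flag in the last paragraph is genuine and is glossed over in the paper's own proof, which asserts that conditional on $N,\featureVar_{(1)},\dots,\featureVar_{(N)}$ the $\obsVar_{(i)}$'s are ``drawn i.i.d.~from CDF $1-\mathbb{E}_{\{\obsVar\}}[\estObsEnd^{K}(\cdot)]$''; in fact they are only independent, with distinct conditional CDFs $1-\obsEnd(\cdot\,|\,\featureVar_{(i)})$, so Proposition~\ref{lem:weighted-edf} does not literally apply as stated. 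Your repair is the right one: the Hoeffding step in Proposition~\ref{lem:weighted-edf}'s proof uses only independence and deterministic weights (giving $\mathbb{E}[\widehat F(\timeVar)]=\bar F(\timeVar):=\sum_i\tfrac{\weightVar_i}{\sum_j\weightVar_j}F_i(\timeVar)$ by linearity), and the discretization step uses only that the comparison function be a continuous, monotone CDF, which $\bar F$ is as a convex combination of continuous CDFs; so the bound holds with $F$ replaced by $\bar F = 1-\mathbb{E}_{\{\obsVar\}}[\estObsEnd^K(\cdot)]$, which is what both you and the paper implicitly invoke.
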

\begin{proof}
Conditioned on $N,\featureVar_{(1)},\dots,\featureVar_{(N)}$ with $N$ positive (recall that $K_{(i)}$ depends on $\featureVar_{(i)}$), then $\estObsEnd^K(\timeVar)$ appears to be constructed from independent weighted samples, where the weights are deterministic and, moreover, $1-\estObsEnd^K(\timeVar)$ is precisely a weighted empirical distribution with expectation $1-\mathbb{E}_{\{\obsVar\}}[\estObsEnd^{K}(\timeVar)]=1-\sum_{j=1}^N\frac{K_{(j)}}{\sum_{\ell=1}^N K_{(\ell)}}\obsEnd(t|X_{(\ell)})$, where $\obsEnd(\cdot|X_{(\ell)})$ is continuous as a consequence of Assumption~A2. Thus, by conditioning on the event
\[
\mathcal{A}:=\{N=k,\featureVar_{(1)}=x_{(1)},\dots,\featureVar_{(k)}=x_{(k)}\}
\]
for any integer $k\in\{1,\dots,n\}$, and any choices for $x_{(1)},\dots,x_{(k)}\in\mathcal{B}_{\fixedFeatureVector,\stanDistThresh h}$, we can then apply Proposition~\ref{lem:weighted-edf} (with $\ell=k$ and noting that $\sum_{i=1}^k \weightVar_i \ge k K(\stanDistThresh)$ and
$\sum_{i=1}^k \weightVar_i^2 \le k K^2(0)$) to get
\begin{align*}
 & \mathbb{P}\Big(\sup_{\timeVar\ge0}|\estObsEnd^{K}(\timeVar)-\mathbb{E}_{\{\obsVar\}}[\estObsEnd^{K}(\timeVar)]|>\frac{\varepsilon\theta^{2}K(\stanDistThresh)}{36K(0)}\,\Big|\,\mathcal{A}\Big)\\
 & \quad \le \frac{216 K(0)}{\varepsilon\theta^{2}K(\stanDistThresh)}\exp\Big(-\frac{k \varepsilon^{2}\theta^{4}K^{4}(\stanDistThresh)}{5832 K^{4}(0)}\Big).
\end{align*}
This inequality holds for all $\fixedFeatureVector_{(1)},\dots,\fixedFeatureVector_{(k)}\in\mathcal{B}_{x,\stanDistThresh h}$, so we can marginalize over $\featureVar_{(1)},\dots,\featureVar_{(k)}$ to get:
\begin{align*}
 & \mathbb{P}\Big(\sup_{\timeVar\ge0}|\estObsEnd^{K}(\timeVar)-\mathbb{E}_{\{\obsVar\}}[\estObsEnd^{K}(\timeVar)]|>\frac{\varepsilon\theta^{2}K(\stanDistThresh)}{36K(0)}\,\Big|\,N=k \Big)\\
 & \quad \le \frac{216 K(0)}{\varepsilon\theta^{2}K(\stanDistThresh)}\exp\Big(-\frac{k \varepsilon^{2}\theta^{4}K^{4}(\stanDistThresh)}{5832 K^{4}(0)}\Big).
\end{align*}
Finally, conditioned on $[\badEvent{\textsc{NN}(\stanDistThresh h)}{\text{few neighbors}}(\fixedFeatureVector)]^{c}=\{N>\frac{1}{2}n\featureDist(\mathcal{B}_{\fixedFeatureVector,\stanDistThresh h})\}$, a worst-case argument similar to the one used at the end of Lemma~\ref{lem:h-near-bad-T}'s proof yields the claim.
\end{proof}
\begin{lem}
\label{lem:kernel-bad-W1}
Under Assumptions A1--A3 and A5, let $\fixedFeatureVector\in\text{supp}(\featureDist)$, $\timeVar\in[0,\timeHorizon]$, and $\varepsilon\in(0,1)$. We have
\begin{align*}
 & \mathbb{P}\big(\badEvent{\text{kernel}}{\text{bad }W_1}(t,x)\,\big|\,[\badEvent{\textsc{NN}(\stanDistThresh h)}{\text{few neighbors}}(\fixedFeatureVector)]^{c}\big)\\
 & \quad \le2\exp\Big(-\frac{n\featureDist(\mathcal{B}_{\fixedFeatureVector,\stanDistThresh h})\varepsilon^{2}\theta^{2}K^{2}(\stanDistThresh)}{324 K^{2}(0)}\Big).
\end{align*}
\end{lem}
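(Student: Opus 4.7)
The plan is to mirror the proof of Lemma~\ref{lem:h-near-bad-V1} (the fixed-radius NN analogue of this bound) but incorporate kernel weights via a version of Hoeffding's inequality for weighted sums. First I would condition on the event $\mathcal{A}:=\{N=k,\featureVar_{(1)}=x_{(1)},\dots,\featureVar_{(k)}=x_{(k)}\}$ for some integer $k\in\{1,\dots,n\}$ and some $x_{(1)},\dots,x_{(k)}\in\mathcal{B}_{\fixedFeatureVector,\stanDistThresh h}$. Under this conditioning the kernel weights $K_{(i)}$ are deterministic, and crucially the pairs $(\obsVar_{(i)},\eventVar_{(i)})$ are conditionally independent (because each was generated independently given its feature vector).

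Next, I would write $W_1(\timeVar|\fixedFeatureVector)=\sum_{i=1}^{k} Z_i$ with $Z_i:=-\frac{K_{(i)}}{\sum_{j=1}^{k}K_{(j)}}\cdot\frac{\eventVar_{(i)}\ind\{\obsVar_{(i)}\le \timeVar\}}{\obsEnd(\obsVar_{(i)}|\fixedFeatureVector)}$. Since $\obsEnd(\cdot|\fixedFeatureVector)$ is monotonically decreasing, $\obsEnd(\obsVar_{(i)}|\fixedFeatureVector)\ge\obsEnd(\timeVar|\fixedFeatureVector)\ge\theta$ whenever $\obsVar_{(i)}\le\timeVar$ (by Assumption A3), so each $Z_i$ lies in the interval $\bigl[-\frac{K_{(i)}/\sum_{j}K_{(j)}}{\theta},\,0\bigr]$, whose length is $\frac{K_{(i)}/\sum_{j}K_{(j)}}{\theta}$. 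Applying Hoeffding's inequality to these $k$ independent bounded summands and observing that $\mathbb{E}[W_1\mid\mathcal{A}]=\mathbb{E}_{\{\obsVar,\eventVar\}}[W_1(\timeVar|\fixedFeatureVector)]$, I get
\[
\mathbb{P}\Big(|W_1(\timeVar|\fixedFeatureVector)-\mathbb{E}_{\{\obsVar,\eventVar\}}[W_1(\timeVar|\fixedFeatureVector)]|\ge\tfrac{\varepsilon}{18}\,\Big|\,\mathcal{A}\Big)\le 2\exp\!\Big(\!-\!\frac{2(\varepsilon/18)^{2}\theta^{2}}{\sum_{i=1}^{k}(K_{(i)}/\sum_{j=1}^{k}K_{(j)})^{2}}\Big).
\]

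To make this bound uniform in $x_{(1)},\dots,x_{(k)}$, I would use the monotone decay assumption~A5 on the kernel: each $K_{(i)}\le K(0)$ and each $K_{(j)}\ge K(\stanDistThresh)$ (since $\featureVar_{(j)}\in\mathcal{B}_{\fixedFeatureVector,\stanDistThresh h}$ gives $\rho(\fixedFeatureVector,\featureVar_{(j)})/h\le\stanDistThresh$), so $\sum_{i}K_{(i)}^{2}\le k K^{2}(0)$ and $\sum_{j}K_{(j)}\ge k K(\stanDistThresh)$, yielding $\sum_{i}(K_{(i)}/\sum_{j}K_{(j)})^{2}\le K^{2}(0)/(kK^{2}(\stanDistThresh))$. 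Substituting and simplifying gives the uniform-in-$x_{(\cdot)}$ conditional bound $2\exp\!\big(-\frac{k\varepsilon^{2}\theta^{2}K^{2}(\stanDistThresh)}{162\,K^{2}(0)}\big)$, which I can then marginalize to obtain the same bound conditional on $N=k$.

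Finally, I would run the standard worst-case argument used at the end of Lemma~\ref{lem:h-near-bad-T}'s proof: on the event $[\badEvent{\textsc{NN}(\stanDistThresh h)}{\text{few neighbors}}(\fixedFeatureVector)]^{c}$ we have $N>\tfrac{1}{2}n\featureDist(\mathcal{B}_{\fixedFeatureVector,\stanDistThresh h})$, so averaging the conditional bound over the values of $N$ allowed by this event lets me replace $k$ by $\tfrac{1}{2}n\featureDist(\mathcal{B}_{\fixedFeatureVector,\stanDistThresh h})$, producing the claimed factor of $324$ in the exponent. There is no deep obstacle; the only subtlety is keeping the independence structure clean by conditioning on both $N$ and the feature vectors (so that the kernel weights are constants when Hoeffding is invoked), and being careful that the kernel-induced ratio $K(\stanDistThresh)/K(0)=\kappa$ is what controls how much effective sample size is lost relative to the uniform-weight case.
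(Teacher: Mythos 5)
Your proof is correct and takes essentially the same route as the paper's: condition on $\{N=k, X_{(1)},\dots,X_{(k)}\}$ so the kernel weights are constants, apply Hoeffding's inequality to the $k$ conditionally independent bounded summands, replace $\sum_i K_{(i)}^2$ and $\sum_j K_{(j)}$ by their worst-case bounds $kK^2(0)$ and $kK(\stanDistThresh)$, marginalize over the feature vectors, and then run the worst-case argument over $N$ to substitute $k \mapsto \tfrac12 n\featureDist(\mathcal{B}_{x,\stanDistThresh h})$. The only cosmetic difference is that you insert $\theta$ as the bound on $1/\obsEnd(\cdot|x)$ immediately when bounding the range of each summand, while the paper keeps $[\obsEnd(t|x)]^2$ in the exponent and lower-bounds it by $\theta^2$ as a final step; both yield identical constants.
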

\begin{proof}
The proof is similar to that of Lemma~\ref{lem:h-near-bad-V1}. Note that
\[
W_1(\timeVar|\fixedFeatureVector) = \sum_{i=1}^{N}\underbrace{-\frac{K_{(i)}}{\sum_{j=1}^{k}K_{(j)}}\frac{\eventVar_{(i)}\ind\{\obsVar_{(i)}\le \timeVar\}}{\obsEnd(\obsVar_{(i)}|\fixedFeatureVector)}}_{\text{bounded in }\big[-\big(\frac{K_{(i)}}{\sum_{j=1}^{N}K_{(j)}}\big)\frac{1}{\obsEnd(\timeVar|\fixedFeatureVector)},\,0\big]}.
\]
Conditioned on $N,\featureVar_{(1)},\dots,\featureVar_{(N)}$ with $N$ positive, then $W_1(\timeVar|\fixedFeatureVector)$ becomes a sum over independent random variables. Meanwhile, $\mathbb{E}_{\{\obsVar,\eventVar\}}[W_1(\timeVar|\fixedFeatureVector)]$ is precisely the expectation of $W_1(\timeVar|\fixedFeatureVector)$ conditioned on $N,\featureVar_{(1)},\dots,\featureVar_{(N)}$. Hence, by conditioning on the event
\[
\mathcal{A}:=\{N=k,\featureVar_{(1)}=x_{(1)},\dots,\featureVar_{(k)}=x_{(k)}\}
\]
for any $k\in\{1,\dots,n\}$, and any choices of $x_{(1)},\dots,x_{(k)}\in\mathcal{B}_{\fixedFeatureVector,\stanDistThresh h}$, and denoting $\weightVar_{(i)}:=K(\frac{\rho(\fixedFeatureVector,x_{(i)})}h)$, Hoeffding's inequality gives
\begin{align*}
 & \mathbb{P}\Big(|W_1(\timeVar|\fixedFeatureVector)-\mathbb{E}_{\{\obsVar,\eventVar\}}[W_1(\timeVar|\fixedFeatureVector)]|\ge\frac{\varepsilon}{18}\,\Big|\,\mathcal{A}\Big)\\
 & \quad \le2\exp\Big(-\frac{\varepsilon^{2}(\sum_{j=1}^{k}\weightVar_{(j)})^{2}[\obsEnd(\timeVar|\fixedFeatureVector)]^2}{162\sum_{i=1}^{k}\weightVar_{(i)}^{2}}\Big)\\
 & \quad \le2\exp\Big(-\frac{k\varepsilon^{2}K^{2}(\stanDistThresh)[\obsEnd(\timeVar|\fixedFeatureVector)]^2}{162K^{2}(0)}\Big)\\
 & \quad \le2\exp\Big(-\frac{k\varepsilon^{2}K^{2}(\stanDistThresh)\theta^{2}}{162K^{2}(0)}\Big).
\end{align*}
We complete the proof the same way as in Lemma \ref{lem:kernel-bad-wEDF}'s proof, marginalizing over $\featureVar_{(1)},\dots,\featureVar_{(k)}$ and using a worst-case analysis argument to replace $k$ with $\frac12 n\featureDist(\mathcal{B}_{\fixedFeatureVector,\stanDistThresh h})$.
\end{proof}
Now that we have the bad events sorted out, the argument for why them not happening guarantees that $\sup_{\timeVar\in[0,\timeHorizon]}|\widehat{\survEnd}^{K}(\timeVar|x;h)-\survEnd(\timeVar|\fixedFeatureVector)|\le\varepsilon$ proceeds in the same manner as for the $k$-NN and fixed-radius NN analyses. We first upper-bound $|W_2(\timeVar|\fixedFeatureVector)|$.
\begin{lem}
\label{lem:kernel-W2-decomp}
Under Assumptions A1--A3 and A5, let $\fixedFeatureVector\in\text{supp}(\featureDist)$ and $\timeVar\in[0,\timeHorizon]$. When bad events $\badEvent{\textsc{NN}(\stanDistThresh h)}{\text{few neighbors}}(\fixedFeatureVector)$ and $\badEvent{\textsc{NN}(\stanDistThresh h)}{\text{bad }\timeHorizon}(\fixedFeatureVector)$ do not happen,
\begin{align*}
 |W_2(\timeVar|\fixedFeatureVector)|
 & \le\frac{2K(0)}{NK(\stanDistThresh)\theta^{2}}\\
 & \quad+\frac{2K(0)}{K(\stanDistThresh)\theta^{2}}\sup_{\timeVar\in[0,\timeHorizon]}|\mathbb{E}_{\{\obsVar\}}[\estObsEnd^{K}(\timeVar)]-\obsEnd(\timeVar|\fixedFeatureVector)|\\
 & \quad+\frac{2K(0)}{K(\stanDistThresh)\theta^{2}}\sup_{\timeVar\ge0}|\estObsEnd^{K}(\timeVar)-\mathbb{E}_{\{\obsVar\}}[\estObsEnd^{K}(\timeVar)]|.
\end{align*}
\end{lem}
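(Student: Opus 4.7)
The plan is to mirror the proof of Lemma~\ref{lem:kNN-R2-bound} (the $k$-NN analog), carefully tracking the nonuniform kernel weights. I would first rewrite each summand of $W_2(\timeVar|\fixedFeatureVector)$ over a common denominator, obtaining $\frac{K_{(i)}\eventVar_{(i)}\ind\{\obsVar_{(i)}\le \timeVar\}}{S_K}\cdot\frac{S_K\obsEnd(\obsVar_{(i)}|\fixedFeatureVector)-d_K^+(\obsVar_{(i)})-K_{(i)}}{(d_K^+(\obsVar_{(i)})+K_{(i)})\obsEnd(\obsVar_{(i)}|\fixedFeatureVector)}$, where $S_K:=\sum_{\ell=1}^N K_{(\ell)}$. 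The leading factor $K_{(i)}/S_K$ sums to~$1$ over $i$, so after taking absolute values I can upper bound $|W_2|$ by $\sum_i (K_{(i)}/S_K)$ times a uniform bound (over $s\in[0,\timeHorizon]$ and $i$) on the second fraction.

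Next I would obtain a uniform lower bound on the denominator $(d_K^+(s)+K_{(i)})\obsEnd(s|\fixedFeatureVector)$. Dropping $K_{(i)}\ge 0$ and using monotonicity of $d_K^+$, it suffices to lower-bound $d_K^+(\timeHorizon)\obsEnd(\timeHorizon|\fixedFeatureVector)$. Not having $\badEvent{\textsc{NN}(\stanDistThresh h)}{\text{bad }\timeHorizon}(\fixedFeatureVector)$ guarantees that more than $N\theta/2$ of the $N$ subjects with nonzero kernel weight survive past $\timeHorizon$, and each such subject contributes weight at least $K(\stanDistThresh)$ by Assumption~A5, so $d_K^+(\timeHorizon)>K(\stanDistThresh)N\theta/2$; Assumption~A3 gives $\obsEnd(\timeHorizon|\fixedFeatureVector)\ge\theta$. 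The combined lower bound is $K(\stanDistThresh)N\theta^2/2$.

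For the numerator I would use the triangle inequality to split $|S_K\obsEnd(s|\fixedFeatureVector)-d_K^+(s)-K_{(i)}|\le S_K|\obsEnd(s|\fixedFeatureVector)-\estObsEnd^K(s)|+K_{(i)}$. Using $S_K\le NK(0)$, the $\obsEnd-\estObsEnd^K$ piece contributes the $\frac{2K(0)}{K(\stanDistThresh)\theta^2}\sup_s|\obsEnd(s|\fixedFeatureVector)-\estObsEnd^K(s)|$ term after averaging. The $K_{(i)}$ piece, averaged against the probability weights $K_{(i)}/S_K$, gives $\sum_i K_{(i)}^2/S_K\le \max_i K_{(i)}\cdot\sum_i K_{(i)}/S_K\le K(0)$, producing the residual $\frac{2K(0)}{NK(\stanDistThresh)\theta^2}$. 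A final triangle inequality inserting $\mathbb{E}_{\{\obsVar\}}[\estObsEnd^K(s)]$ then decomposes the supremum into the bias and variance pieces in the statement.

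The main obstacle will be that final bookkeeping step for the $K_{(i)}^2$ residual. A naive uniform bound $K_{(i)}^2/S_K\le K^2(0)/(NK(\stanDistThresh))$ yields a $\kappa^{-2}$ dependence rather than $\kappa^{-1}$, weakening the sample-complexity conclusions in Theorem~\ref{thm:kernel-survival}. Exploiting that $\{K_{(i)}/S_K\}_{i=1}^N$ is a probability distribution collapses the weighted average of $K_{(i)}$ to a bound by $\max_i K_{(i)}\le K(0)$, which is what preserves the linear $\kappa^{-1}$ dependence carried by $\Lips_K$ downstream.
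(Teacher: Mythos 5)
Your proposal is correct and follows essentially the same argument as the paper's proof of Lemma~\ref{lem:kernel-W2-decomp}: put the $i$-th summand over a common denominator, lower-bound that denominator by $K(\stanDistThresh)N\theta^2/2$ via the two bad events (so that $d_K^+(\timeHorizon)>K(\stanDistThresh)N\theta/2$ and $\obsEnd(\timeHorizon|\fixedFeatureVector)\ge\theta$), split the numerator by the triangle inequality into the $\obsEnd-\estObsEnd^K$ and residual $K_{(i)}$ pieces, and exploit that the $K_{(i)}/\sum_j K_{(j)}$ sum to one together with $K_{(i)}\le K(0)$ and $\sum_j K_{(j)}\le NK(0)$, finishing with one more triangle inequality inserting $\mathbb{E}_{\{\obsVar\}}[\estObsEnd^K]$. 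The only cosmetic difference is that the paper first applies the $\ell_1$--$\ell_\infty$ H\"{o}lder inequality to pass to a maximum over $i$ and then bounds each factor uniformly, whereas you retain the weighted sum and average; both routes rest on the same observations and give the same constants, including the $\kappa^{-1}$ (rather than $\kappa^{-2}$) dependence you correctly identify as the delicate point.
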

Thus, when bad events $\badEvent{\textsc{NN}(\stanDistThresh h)}{\text{few neighbors}}(\fixedFeatureVector)$ and $\badEvent{\textsc{NN}(\stanDistThresh h)}{\text{bad }\timeHorizon}(\fixedFeatureVector)$ do not happen,
\begin{align}
 & |\log\widehat{\survEnd}^{K}(\timeVar|x;h)-\log \survEnd(\timeVar|\fixedFeatureVector)|\nonumber \\
 & \quad \le|W_1(\timeVar|\fixedFeatureVector)-\mathbb{E}_{\{\obsVar,\eventVar\}}[W_1(\timeVar|\fixedFeatureVector)]|\nonumber \\
 & \quad \quad+|\mathbb{E}_{\{\obsVar,\eventVar\}}[W_1(\timeVar|\fixedFeatureVector)]-\log \survEnd(\timeVar|\fixedFeatureVector)|
   +\frac{2K(0)}{NK(\stanDistThresh)\theta^{2}}\nonumber \\
 & \quad \quad+\frac{2K(0)}{K(\stanDistThresh)\theta^{2}}\sup_{\timeVar\in[0,\timeHorizon]}|\mathbb{E}_{\{\obsVar\}}[\estObsEnd^{K}(\timeVar)]-\obsEnd(\timeVar|\fixedFeatureVector)|\nonumber \\
 & \quad \quad+\frac{2K(0)}{K(\stanDistThresh)\theta^{2}}\sup_{\timeVar\ge0}|\estObsEnd^{K}(\timeVar)-\mathbb{E}_{\{\obsVar\}}[\estObsEnd^{K}(\timeVar)]|\nonumber\\
 & \quad \quad+|W_3(\timeVar|\fixedFeatureVector)|.\label{eq:kernel-decomp}
\end{align}
If we can upper-bound each of the RHS terms by ${\varepsilon}/{18}$, then we would be done since the rest of the proof is identical to the ending of the $k$-NN proof.

On the RHS of inequality \eqref{eq:kernel-decomp}, the 1st and 5th terms are at most $\frac{\varepsilon}{18}$ when bad events $\badEvent{\text{kernel}}{\text{bad }W_1}(t,x)$ and $\badEvent{\text{kernel}}{\text{bad weighted EDF}}(\fixedFeatureVector)$ do not happen. The 5th term is less than $\frac{\varepsilon}{18}$ when $n\ge\frac{144K(0)}{\varepsilon\theta^{2}\featureDist(\mathcal{B}_{\fixedFeatureVector,\stanDistThresh h})K(\stanDistThresh)}$ and $\badEvent{\textsc{NN}(\stanDistThresh h)}{\text{few neighbors}}(\fixedFeatureVector)$ does not happen (so $N>\frac12 n\featureDist(\mathcal{B}_{\fixedFeatureVector,\stanDistThresh h})$).

The rest of the section is on proving Lemma~\ref{lem:kernel-W2-decomp} and then bounding the 2nd, 4th, and 6th RHS terms (Lemmas~\ref{lem:kernel-W1-bias},~\ref{lem:kernel-Hk-bias}, and~\ref{lem:kernel-W3}).
\begin{proof}
[Proof of Lemma~\ref{lem:kernel-W2-decomp}]
When bad events $\badEvent{\textsc{NN}(\stanDistThresh h)}{\text{few neighbors}}(\fixedFeatureVector)$ and $\badEvent{\textsc{NN}(\stanDistThresh h)}{\text{bad }\timeHorizon}(\fixedFeatureVector)$ do not happen, we are guaranteed that $N$ is an integer within $(\frac{1}{2}n\featureDist(\mathcal{B}_{\fixedFeatureVector,\stanDistThresh h}),n]$, and $d_{K}^{+}(\timeHorizon)\ge K(\stanDistThresh)d_{\mathcal{N}_{\textsc{NN}(\stanDistThresh h)}}^{+}(\timeHorizon)>K(\stanDistThresh)\frac{N\theta}{2}$. Using H\"{o}lder's inequality and a bit of algebra,
\begin{align*}
 & |W_2(\timeVar|\fixedFeatureVector)|\\
 & =\Bigg|\sum_{i=1}^{N}\Big({\textstyle \frac{K_{(i)}}{\sum_{j=1}^{N}K_{(j)}}\Big)}\eventVar_{(i)}\ind\{\obsVar_{(i)}\le \timeVar\}\\
 & \quad\quad\quad\;\times\bigg[\frac{\sum_{\ell=1}^{N}K_{(\ell)}}{d_{K}^{+}(\obsVar_{(i)})+K_{(i)}}-\frac{1}{\obsEnd(\obsVar_{(i)}|\fixedFeatureVector)}\bigg]\Bigg| \\
 & \le\max_{i=1,\dots,N}\bigg|\eventVar_{(i)}\ind\{\obsVar_{(i)}\le \timeVar\}\Big[{\textstyle \frac{\sum_{\ell=1}^{N}K_{(\ell)}}{d_{K}^{+}(\obsVar_{(i)})+K_{(i)}}-\frac{1}{\obsEnd(\obsVar_{(i)}|\fixedFeatureVector)}}\Big]\bigg| \\
 & =\max_{i=1,\dots,N}\bigg|\varUpsilon_{(i)}\bigg[\varPhi(\obsVar_{(i)})+\varPsi(\obsVar_{(i)})+\frac{K_{(i)}}{\sum_{\ell=1}^{N}K_{(\ell)}}\bigg]\bigg|,
\end{align*}
where
\begin{align*}
\varUpsilon_{(i)} &:=\frac{\eventVar_{(i)}\ind\{\obsVar_{(i)}\le \timeVar\}\sum_{\ell=1}^{N}K_{(\ell)}}{(d_{K}^{+}(\obsVar_{(i)})+K_{(i)})\obsEnd(\obsVar_{(i)}|\fixedFeatureVector)}, \\
\varPhi(\timeVar) & :=\estObsEnd^{K}(\timeVar)-\mathbb{E}_{\{\obsVar\}}[\estObsEnd^{K}(\timeVar)],\\
\varPsi(\timeVar) & :=\mathbb{E}_{\{\obsVar\}}[\estObsEnd^{K}(\timeVar)]-\obsEnd(\timeVar|\fixedFeatureVector).
\end{align*}
We can keep upper-bounding to get:
\begin{align}
 |W_2(\timeVar|\fixedFeatureVector)|
 & \le\Big[\max_{i=1,\dots,N}\varUpsilon_{(i)}\Big]\sup_{s\ge0}|\varPhi(s)|\nonumber \\
 & \quad+\Big[\max_{i=1,\dots,N}\varUpsilon_{(i)}\Big]\sup_{s\in[0,\timeHorizon]}|\varPsi(s)|\nonumber \\
 & \quad+\max_{i=1,\dots,N}\frac{\varUpsilon_{(i)}K_{(i)}}{\sum_{\ell=1}^{N}K_{(\ell)}}.\label{eq:pf-lem-W2-decomp-helper1}
\end{align}
We upper-bound $\max_{i=1,\dots,N}\varUpsilon_{(i)}$ by upper-bounding $\varUpsilon_{(i)}$ for every $i$:
\begin{align}
\varUpsilon_{(i)} & =\frac{\eventVar_{(i)}\ind\{\obsVar_{(i)}\le \timeVar\}\sum_{\ell=1}^{N}K_{(\ell)}}{(d_{K}^{+}(\obsVar_{(i)})+K_{(i)})\obsEnd(\obsVar_{(i)}|\fixedFeatureVector)}\nonumber \\
 & \le\frac{\eventVar_{(i)}\ind\{\obsVar_{(i)}\le \timeVar\}\sum_{\ell=1}^{N}K_{(\ell)}}{(d_{K}^{+}(\timeVar)+K_{(i)})\obsEnd(\timeVar|\fixedFeatureVector)}\nonumber \\
 & \le\frac{\eventVar_{(i)}\ind\{\obsVar_{(i)}\le \timeVar\}\sum_{\ell=1}^{N}K_{(\ell)}}{(d_{K}^{+}(\timeHorizon)+K_{(i)})\obsEnd(\timeHorizon|\fixedFeatureVector)}\nonumber \\
 & \le\frac{\ind\{\obsVar_{(i)}\le \timeVar\}\sum_{\ell=1}^{N}K_{(\ell)}}{(d_{K}^{+}(\timeHorizon)+K_{(i)})\theta}\nonumber \\
 & \le\frac{\ind\{\obsVar_{(i)}\le \timeVar\}\sum_{\ell=1}^{N}K_{(\ell)}}{d_{K}^{+}(\timeHorizon)\theta}\nonumber \\
 & <\frac{\ind\{\obsVar_{(i)}\le \timeVar\}NK(0)}{K(\stanDistThresh)\frac{N\theta}{2}\theta}\nonumber \\
 & =\frac{2K(0)}{K(\stanDistThresh)\theta^{2}}.\label{eq:pf-lem-W2-decomp-helper2}
\end{align}
Next, we bound $\frac{\varUpsilon_{(i)}K_{(i)}}{\sum_{\ell=1}^{N}K_{(\ell)}}$:
\begin{align}
 \frac{\varUpsilon_{(i)}K_{(i)}}{\sum_{\ell=1}^{N}K_{(\ell)}}
 & =\frac{\eventVar_{(i)}\ind\{\obsVar_{(i)}\le \timeVar\}K_{(i)}}{(d_{K}^{+}(\obsVar_{(i)})+K_{(i)})\obsEnd(\obsVar_{(i)}|\fixedFeatureVector)}\nonumber \\
 & \le\frac{\eventVar_{(i)}\ind\{\obsVar_{(i)}\le \timeVar\}K_{(i)}}{(d_{K}^{+}(\timeVar)+K_{(i)})\obsEnd(\timeVar|\fixedFeatureVector)}\nonumber \\
 & \le\frac{K_{(i)}}{(d_{K}^{+}(\timeVar)+K_{(i)})\obsEnd(\timeVar|\fixedFeatureVector)}\nonumber \\
 & \le\frac{K_{(i)}}{d_{K}^{+}(\timeVar)\obsEnd(\timeVar|\fixedFeatureVector)}\nonumber \\
 & \le\frac{K_{(i)}}{d_{K}^{+}(\timeHorizon)\obsEnd(\timeHorizon|\fixedFeatureVector)}\nonumber \\
 & \le\frac{K_{(i)}}{d_{K}^{+}(\timeHorizon)\theta}\nonumber \\
 & <\frac{K(0)}{K(\stanDistThresh)\frac{N\theta}{2}\theta}\nonumber \\
 & =\frac{2K(0)}{K(\stanDistThresh)\theta^{2}N}.\label{eq:pf-lem-W2-decomp-helper3}
\end{align}
Combining inequalities \eqref{eq:pf-lem-W2-decomp-helper1}, \eqref{eq:pf-lem-W2-decomp-helper2}, and \eqref{eq:pf-lem-W2-decomp-helper3} finishes the proof.
\end{proof}
\begin{lem}
\label{lem:kernel-W1-bias}
Under Assumptions A1--A5, let $\fixedFeatureVector\in\text{supp}(\featureDist)$, $\timeVar\in[0,\timeHorizon]$, and $\varepsilon\in(0,1)$. When bad event $\badEvent{\textsc{NN}(\stanDistThresh h)}{\text{few neighbors}}(\fixedFeatureVector)$ does not hold, and the threshold distance satisfies $h\le \frac1{\stanDistThresh}[\frac{\varepsilon\theta}{18 (\lips_{\textsc{\tiny{\survVar}}}\timeHorizon+(\survDensity^*{\lips}_{\textsc{\tiny{\censVar}}}\timeHorizon^2)/2)}]^{1/\holderIndex}$,
\[
|\mathbb{E}_{\{\obsVar,\eventVar\}}[W_1(\timeVar|\fixedFeatureVector)]-\log \survEnd(\timeVar|\fixedFeatureVector)|\le\frac{\varepsilon}{18}.
\]
\end{lem}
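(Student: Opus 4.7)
The plan is to reduce this to the pointwise bias calculation that already appears inside the proof of Lemma~\ref{lem:kNN-R1-bias} (equations~\eqref{eq:logF-integral} and the H\"{o}lder-continuity chain that follows it). Since bad event $\badEvent{\textsc{NN}(\stanDistThresh h)}{\text{few neighbors}}(\fixedFeatureVector)$ does not occur, $N$ is positive and the denominator $\sum_j K_{(j)}$ is strictly positive, so $W_1(\timeVar|\fixedFeatureVector)$ is well-defined. By Assumption A5, the kernel is supported on $[0,\stanDistThresh]$, so every one of the sampled feature vectors $\featureVar_{(i)}$ entering $W_1$ lies in the ball $\mathcal{B}_{\fixedFeatureVector,\stanDistThresh h}$.

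Next, conditioning on $N,\featureVar_{(1)},\dots,\featureVar_{(N)}$ freezes the kernel weights $K_{(i)}$ and lets us pull the expectation inside the sum. Writing $p_{(i)} := K_{(i)}/\sum_{j=1}^N K_{(j)}$, which are nonnegative and sum to $1$,
\[
\mathbb{E}_{\{\obsVar,\eventVar\}}[W_1(\timeVar|\fixedFeatureVector)]
=\sum_{i=1}^{N} p_{(i)}\,\mathbb{E}_{\obsVar,\eventVar}\!\Big[-\frac{\eventVar\ind\{\obsVar\le \timeVar\}}{\obsEnd(\obsVar|\fixedFeatureVector)}\,\Big|\,\featureVar=\featureVar_{(i)}\Big].
\]
Since $\sum_i p_{(i)}=1$ and $\log\survEnd(\timeVar|\fixedFeatureVector)$ does not depend on $i$, subtracting $\log\survEnd(\timeVar|\fixedFeatureVector)$ inside the sum and applying the triangle inequality yields
\[
|\mathbb{E}_{\{\obsVar,\eventVar\}}[W_1(\timeVar|\fixedFeatureVector)]-\log \survEnd(\timeVar|\fixedFeatureVector)|
\le \sum_{i=1}^{N} p_{(i)}\,\Delta_{(i)},
\]
where $\Delta_{(i)} := \big|\mathbb{E}_{\obsVar,\eventVar}[-\frac{\eventVar\ind\{\obsVar\le \timeVar\}}{\obsEnd(\obsVar|\fixedFeatureVector)}\,|\,\featureVar=\featureVar_{(i)}] - \log \survEnd(\timeVar|\fixedFeatureVector)\big|$.

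The core step is to bound each $\Delta_{(i)}$ by $\varepsilon/18$. This is exactly the pointwise bias estimate derived for the $k$-NN case: starting from identity~\eqref{eq:logF-integral}, writing the difference as $\int_0^\timeVar \frac{1}{\obsEnd(s|\fixedFeatureVector)}[\censEnd(s|\fixedFeatureVector)\survDensity(s|\fixedFeatureVector)-\censEnd(s|\featureVar_{(i)})\survDensity(s|\featureVar_{(i)})]ds$, and applying Assumption A3 together with the H\"{o}lder continuity of $\censEnd(s|\cdot)\survDensity(s|\cdot)$ (parameters $(\lips_\survSubscript+\survDensity^*\lips_\censSubscript s)$ and $\holderIndex$). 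Because $\featureVar_{(i)}\in\mathcal{B}_{\fixedFeatureVector,\stanDistThresh h}$ and the hypothesis on $h$ gives $(\stanDistThresh h)^\holderIndex \le \frac{\varepsilon\theta}{18(\lips_\survSubscript\timeHorizon + \survDensity^*\lips_\censSubscript\timeHorizon^2/2)}$, the same calculation that produced inequality~\eqref{eq:k-nn-survival-ptwise-helper3} yields $\Delta_{(i)}\le \varepsilon/18$ for every $i$. Summing against the convex weights $p_{(i)}$ then gives the claim. No step is particularly hard once the reduction to Lemma~\ref{lem:kNN-R1-bias}'s bias computation is noticed; the only subtlety is recognizing that the $k$-NN argument uses the containment of neighbors in a ball of a prescribed radius, which here is supplied directly by the kernel support condition (Assumption A5) rather than by conditioning on the $(k+1)$-st nearest neighbor.
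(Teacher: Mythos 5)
Your proposal is correct and takes essentially the same approach as the paper: both express $\mathbb{E}_{\{\obsVar,\eventVar\}}[W_1(\timeVar|\fixedFeatureVector)]$ as a convex combination of per-neighbor conditional expectations, subtract $\log\survEnd(\timeVar|\fixedFeatureVector)$ using identity~\eqref{eq:logF-integral}, and bound each per-neighbor bias term by $\varepsilon/18$ via the H\"{o}lder continuity of $\censEnd(s|\cdot)\survDensity(s|\cdot)$ together with the fact that $\rho(\fixedFeatureVector,\featureVar_{(i)})\le\stanDistThresh h$ under Assumption~A5. The only cosmetic difference is that you pass to the convex average via the triangle inequality and cite the $k$-NN bias calculation, while the paper takes the max over neighbors and redoes the integral estimate inline; these are interchangeable.
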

\begin{proof}
Note that $\mathbb{E}_{\obsVar,\eventVar}[W_1(\timeVar|\fixedFeatureVector)]$ is a function of random variables $N,\featureVar_{(1)},\dots,\featureVar_{(N)}$. Since bad event $\badEvent{\textsc{NN}(\stanDistThresh h)}{\text{few neighbors}}(\fixedFeatureVector)$ does not happen, we know $N>\frac{1}{2}n\featureDist(\mathcal{B}_{\fixedFeatureVector,\stanDistThresh h})$. We have
\begin{align*}
 & \mathbb{E}_{\{\obsVar,\eventVar\}}[W_1(\timeVar|\fixedFeatureVector)]\\
 & =\mathbb{E}_{\{\obsVar,\eventVar\}}\bigg[-\sum_{i=1}^{N}\frac{K_{(i)}}{\sum_{j=1}^{N}K_{(j)}}\frac{\eventVar_{(i)}\ind\{\obsVar_{(i)}\le \timeVar\}}{\obsEnd(\obsVar_{(i)}|\fixedFeatureVector)}\bigg]\\
 & =\sum_{i=1}^{N}\frac{K_{(i)}}{\sum_{j=1}^{N}K_{(j)}}\mathbb{E}_{\obsVar_{(i)},\eventVar_{(i)}}\Big[-\frac{\eventVar_{(i)}\ind\{\obsVar_{(i)}\le \timeVar\}}{\obsEnd(\obsVar_{(i)}|\fixedFeatureVector)}\Big],
\end{align*}
where
\begin{align*}
 & \mathbb{E}_{\obsVar_{(i)},\eventVar_{(i)}}\Big[-\frac{\eventVar_{(i)}\ind\{\obsVar_{(i)}\le \timeVar\}}{\obsEnd(\obsVar_{(i)}|\fixedFeatureVector)}\Big]\\
 & =-\int_0^{\timeVar}\Big[\int_{s}^{\infty}\frac{1}{\obsEnd(s|\fixedFeatureVector)}d\mathbb{P}_{C|\featureVar=\fixedFeatureVector_{(i)}}(c)\Big]d\mathbb{P}_{\survVar|X=\featureVar_{(i)}}(s)\\
 & =-\int_0^{\timeVar}\frac{1}{\obsEnd(s|\fixedFeatureVector)}\censEnd(s|\featureVar_{(i)})\survDensity(s|\featureVar_{(i)})ds.
\end{align*}
Recall from equation~\eqref{eq:logF-integral} that
\[
\log \survEnd(\timeVar|\fixedFeatureVector)=-\int_0^{\timeVar}\frac{1}{\obsEnd(s|\fixedFeatureVector)}\censEnd(s|\fixedFeatureVector)\survDensity(s|\fixedFeatureVector)ds.
\]
Therefore,
\begin{align*}
& \mathbb{E}_{\{\obsVar,\eventVar\}}[W_1(\timeVar|\fixedFeatureVector)] - \log \survEnd(\timeVar|\fixedFeatureVector) \\
& \!=\! \sum_{i=1}^{N}\frac{K_{(i)}}{\sum_{j=1}^{N}K_{(j)}} \\
& \;\, \times \!\!\! \int_0^{\timeVar}\!\!\frac{1}{\obsEnd(s|\fixedFeatureVector)}[\censEnd(s|\fixedFeatureVector)\survDensity(s|\fixedFeatureVector) \!-\! \censEnd(s|\featureVar_{(i)})\survDensity(s|\featureVar_{(i)})]ds.
\end{align*}
Thus, using H\"{o}lder's inequality and since $\censEnd(s|\cdot)\survDensity(s|\cdot)$ is H\"{o}lder continuous with parameters $(\lips_{\textsc{\tiny{\survVar}}}+\survDensity^*\lips_{\textsc{\tiny{\censVar}}}s)$ and~$\holderIndex$,
\begin{align*}
 & \big|\mathbb{E}_{\{\obsVar,\eventVar\}}[W_1(\timeVar|\fixedFeatureVector)]-\log \survEnd(\timeVar|\fixedFeatureVector)\big|\\
 & \le\max_{i=1,\dots,N}\bigg|\int_0^{\timeVar}\frac{1}{\obsEnd(s|\fixedFeatureVector)}[\censEnd(s|\fixedFeatureVector)\survDensity(s|\fixedFeatureVector)\\
 & \quad\qquad\qquad\qquad\qquad\quad-\censEnd(s|\featureVar_{(i)})\survDensity(s|\featureVar_{(i)})]ds\bigg|\\
 & \le\max_{i=1,\dots,N}\int_0^{\timeVar}\frac{1}{\obsEnd(s|\fixedFeatureVector)}|\censEnd(s|\fixedFeatureVector)\survDensity(s|\fixedFeatureVector)\\
 & \quad\qquad\qquad\qquad\qquad\quad-\censEnd(s|\featureVar_{(i)})\survDensity(s|\featureVar_{(i)})|ds\\
 & \le\frac{1}{\obsEnd(\timeVar|\fixedFeatureVector)}\max_{i=1,\dots,N}\int_0^{\timeVar}( \lips_{\textsc{\tiny{\survVar}}}+\survDensity^*\lips_{\textsc{\tiny{\censVar}}}s )\rho(\fixedFeatureVector,\featureVar_{(i)})^\holderIndex ds\\
 & \le\frac{1}{\obsEnd(\timeVar|\fixedFeatureVector)}\max_{i=1,\dots,N}\int_0^{\timeVar}( \lips_{\textsc{\tiny{\survVar}}}+\survDensity^*\lips_{\textsc{\tiny{\censVar}}}s )(\stanDistThresh h)^\holderIndex ds\\
 & =\frac{(\stanDistThresh h)^\holderIndex}{\obsEnd(\timeVar|\fixedFeatureVector)}\Big(\lips_{\textsc{\tiny{\survVar}}}\timeVar+\frac{\survDensity^*{\lips}_{\textsc{\tiny{\censVar}}}\timeVar^2}2\Big)\\
 & \le\frac{(\stanDistThresh h)^\holderIndex}{\theta}\Big(\lips_{\textsc{\tiny{\survVar}}}\timeHorizon+\frac{\survDensity^*{\lips}_{\textsc{\tiny{\censVar}}}\timeHorizon^2}2\Big)\\
 & \le\frac{\varepsilon}{18},
\end{align*}
where the last inequality uses the fact that
$
h\le\frac1{\stanDistThresh}[\frac{\varepsilon\theta}{18 (\lips_{\textsc{\tiny{\survVar}}}\timeHorizon+(\survDensity^*\lips_{\textsc{\tiny{\censVar}}}\timeHorizon^2)/2)}]^{1/\holderIndex}
$.
\end{proof}
\begin{lem}
\label{lem:kernel-Hk-bias}
Under Assumptions A1--A5, let $\fixedFeatureVector\in\text{supp}(\featureDist)$, $\timeVar\in[0,\timeHorizon]$, and $\varepsilon\in(0,1)$. When bad event $\badEvent{\textsc{NN}(\stanDistThresh h)}{\text{few neighbors}}(\fixedFeatureVector)$ does not happen, and the threshold distance satisfies $h\le\frac1{\stanDistThresh}[\frac{\varepsilon\theta^{2}K(\stanDistThresh)}{36(\lips_{\textsc{\tiny{\survVar}}}+\lips_{\textsc{\tiny{\censVar}}})\timeHorizon K(0)}]^{1/\holderIndex}$,
\[
\frac{2K(0)}{K(\stanDistThresh)\theta^{2}}\sup_{\timeVar\in[0,\timeHorizon]}|\mathbb{E}_{\{\obsVar\}}[\estObsEnd^{K}(\timeVar)]-\obsEnd(\timeVar|\fixedFeatureVector)|\le\frac{\varepsilon}{18}.
\]
\end{lem}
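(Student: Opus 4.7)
The plan is to mimic the structure of Lemma~\ref{lem:kNN-R2-bias}, with the key observation being that once we condition on $N$ and on the locations of the neighbors $\featureVar_{(1)},\dots,\featureVar_{(N)}$, the kernel weights $K_{(i)}$ become deterministic and the conditional expectation $\mathbb{E}_{\{\obsVar\}}[\estObsEnd^{K}(\timeVar)]$ reduces to a convex combination of tail probabilities. Specifically, I would first write
\[
\mathbb{E}_{\{\obsVar\}}[\estObsEnd^{K}(\timeVar)]
= \sum_{j=1}^{N}\frac{K_{(j)}}{\sum_{\ell=1}^{N}K_{(\ell)}}\,\obsEnd(\timeVar|\featureVar_{(j)}),
\]
which holds because conditionally each $\ind\{\obsVar_{(j)}>\timeVar\}$ has expectation $\obsEnd(\timeVar|\featureVar_{(j)})$.

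Next I would use the fact that $\obsEnd(\timeVar|\fixedFeatureVector) = \sum_{j=1}^N \frac{K_{(j)}}{\sum_\ell K_{(\ell)}}\obsEnd(\timeVar|\fixedFeatureVector)$ (the weights sum to $1$), so that
\[
\bigl|\mathbb{E}_{\{\obsVar\}}[\estObsEnd^{K}(\timeVar)] - \obsEnd(\timeVar|\fixedFeatureVector)\bigr|
\le \sum_{j=1}^{N}\frac{K_{(j)}}{\sum_{\ell=1}^{N}K_{(\ell)}}\,\bigl|\obsEnd(\timeVar|\featureVar_{(j)}) - \obsEnd(\timeVar|\fixedFeatureVector)\bigr|.
\]
Now I invoke H\"{o}lder continuity of $\obsEnd(\timeVar|\cdot)$ with parameters $(\lips_{\survSubscript}+\lips_{\censSubscript})\timeVar$ and $\holderIndex$ (derived earlier from Assumption A4). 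Because every $\featureVar_{(j)}$ appearing with nonzero weight sits inside $\mathcal{B}_{\fixedFeatureVector,\stanDistThresh h}$ (a consequence of Assumption A5 on the support of $K$), we have $\rho(\fixedFeatureVector,\featureVar_{(j)})^{\holderIndex}\le(\stanDistThresh h)^{\holderIndex}$ for every such $j$, giving the uniform bound
\[
\bigl|\mathbb{E}_{\{\obsVar\}}[\estObsEnd^{K}(\timeVar)] - \obsEnd(\timeVar|\fixedFeatureVector)\bigr|
\le (\lips_{\survSubscript}+\lips_{\censSubscript})\timeVar (\stanDistThresh h)^{\holderIndex}.
\]

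Taking the supremum over $\timeVar\in[0,\timeHorizon]$, multiplying through by $\frac{2K(0)}{K(\stanDistThresh)\theta^{2}}$, and substituting the hypothesis $(\stanDistThresh h)^{\holderIndex}\le \frac{\varepsilon\theta^{2}K(\stanDistThresh)}{36(\lips_{\survSubscript}+\lips_{\censSubscript})\timeHorizon K(0)}$ cancels all the prefactors and yields exactly $\varepsilon/18$. The bad event $\badEvent{\textsc{NN}(\stanDistThresh h)}{\text{few neighbors}}(\fixedFeatureVector)$ not happening is used only to ensure $N\ge 1$ so that the denominator $\sum_\ell K_{(\ell)}$ is positive and the weighted expectation is well-defined.

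No step poses a real obstacle; the main thing to be careful about is keeping track of which randomness is being averaged over. In particular, the crucial point is that the expectation $\mathbb{E}_{\{\obsVar\}}$ is taken only over the observed times while conditioning on both $N$ and the neighbors' feature vectors, so the kernel weights pass through the expectation as constants, turning the bias analysis into a deterministic H\"{o}lder-continuity argument identical in spirit to the $k$-NN case, with $\kappa = K(\stanDistThresh)/K(0)$ playing the role of the extra slack factor that appears in $\Lips_K$.
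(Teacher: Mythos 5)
Your proof is correct and follows essentially the same route as the paper's: condition on $N$ and the neighbor locations so the weights become deterministic, write the conditional expectation as a convex combination of $\obsEnd(\timeVar|\featureVar_{(j)})$'s, bound the deviation by H\"{o}lder continuity and the fact that all non-zero-weight neighbors lie in $\mathcal{B}_{\fixedFeatureVector,\stanDistThresh h}$, and then plug in the hypothesis on $h$. The only cosmetic difference is that you bound the weighted sum via a term-by-term triangle inequality whereas the paper passes directly to the maximum over $j$; the two give the same bound.
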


\begin{proof}
Note that $\mathbb{E}_{\{\obsVar\}}[\estObsEnd^{K}(\timeVar)]$ is a function of random variables $N,\featureVar_{(1)},\dots,\featureVar_{(N)}$. Since bad event $\badEvent{\textsc{NN}(\stanDistThresh h)}{\text{few neighbors}}(\fixedFeatureVector)$ does not happen, we know $N>\frac{1}{2}n\featureDist(\mathcal{B}_{\fixedFeatureVector,\stanDistThresh h})$. Then
\begin{align*}
 \mathbb{E}_{\{\obsVar\}}[\estObsEnd^{K}(\timeVar)]
 & =\sum_{j=1}^{N}\frac{K_{(j)}}{\sum_{\ell=1}^{N}K_{(\ell)}}\mathbb{E}_{\obsVar_{(j)}}[\ind\{\obsVar_{(j)}>t\}]\\
 & =\sum_{j=1}^{N}\frac{K_{(j)}}{\sum_{\ell=1}^{N}K_{(\ell)}}\obsEnd(\timeVar|\featureVar_{(j)}).
\end{align*}
Using H\"{o}lder's inequality and since $\obsEnd(\timeVar|\cdot)$ is H\"{o}lder continuous with parameters $(\lips_{\textsc{\tiny{\survVar}}}+\lips_{\textsc{\tiny{\censVar}}})\timeVar$ and~$\holderIndex$,
\begin{align*}
 & |\mathbb{E}_{\{\obsVar\}}[\estObsEnd^{K}(\timeVar)]-\obsEnd(\timeVar|\fixedFeatureVector)|\\
 & \quad =\bigg|\sum_{j=1}^{N}\frac{K_{(j)}}{\sum_{\ell=1}^{N}K_{(\ell)}}(\obsEnd(\timeVar|\featureVar_{(j)})-\obsEnd(\timeVar|\fixedFeatureVector))\bigg|\\
 & \quad \le\max_{j=1,\dots,N}|\obsEnd(\timeVar|\featureVar_{(j)})-\obsEnd(\timeVar|\fixedFeatureVector)| \\
 & \quad \le\max_{j=1,\dots,k}(\lips_{\textsc{\tiny{\survVar}}}+\lips_{\textsc{\tiny{\censVar}}})\timeVar\rho(\fixedFeatureVector,\featureVar_{(j)})^\holderIndex\\
 & \quad \le(\lips_{\textsc{\tiny{\survVar}}}+\lips_{\textsc{\tiny{\censVar}}})\timeVar(\stanDistThresh h)^\holderIndex\\
 & \quad \le(\lips_{\textsc{\tiny{\survVar}}}+\lips_{\textsc{\tiny{\censVar}}})\timeHorizon(\stanDistThresh h)^\holderIndex\\
 & \quad \le\frac{K(\stanDistThresh)\theta^{2}}{2K(0)}\cdot\frac{\varepsilon}{18},
\end{align*}
where the last inequality uses the assumption that $h\le\frac1{\stanDistThresh}[\frac{\varepsilon\theta^{2}K(\stanDistThresh)}{36(\lips_{\textsc{\tiny{\survVar}}}+\lips_{\textsc{\tiny{\censVar}}})\timeHorizon K(0)}]^{1/\holderIndex}$.
\end{proof}
\begin{lem}
\label{lem:kernel-W3}
Under Assumptions A1--A3 and A5, let $\fixedFeatureVector\in\text{supp}(\featureDist)$, $\timeVar\in[0,\timeHorizon]$, and $\varepsilon\in(0,1)$. If bad events $\badEvent{\textsc{NN}(\stanDistThresh h)}{\text{few neighbors}}(\fixedFeatureVector)$ and $\badEvent{\textsc{NN}(\stanDistThresh h)}{\text{bad }\timeHorizon}(\fixedFeatureVector)$ do not happen, and the number of training subjects satisfies
\begin{align*}
n & \ge\frac{144K^{2}(0)}{\varepsilon\theta^{2}\featureDist(\mathcal{B}_{\fixedFeatureVector,\stanDistThresh h})K^{2}(\stanDistThresh)},
\end{align*}
then $|W_3(\timeVar|\fixedFeatureVector)|\le{\varepsilon}/{18}$.
\end{lem}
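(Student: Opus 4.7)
\textbf{Proof proposal for Lemma~\ref{lem:kernel-W3}.} The plan is to follow the template of Lemma~\ref{lem:kNN-R3}, adapted to accommodate the kernel weights and the random neighbor count $N$. First, I would rewrite $|W_3(\timeVar|\fixedFeatureVector)|\le\sum_{i=1}^{N}\Xi_i$ where $\Xi_i:=\eventVar_{(i)}\ind\{\obsVar_{(i)}\le\timeVar\}\sum_{\ell=2}^{\infty}\frac{1}{\ell\,(d_{K}^{+}(\obsVar_{(i)})/K_{(i)}+1)^{\ell}}$, and bound each $\Xi_i$ separately.

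For a fixed $i$ with $\obsVar_{(i)}\le\timeVar\le\timeHorizon$, I use the monotone decrease of $d_K^{+}$ together with the two non-bad events: because $\badEvent{\textsc{NN}(\stanDistThresh h)}{\text{bad }\timeHorizon}(\fixedFeatureVector)$ does not happen, ${d_K^{+}(\timeHorizon)\ge K(\stanDistThresh)\,d_{\mathcal{N}_{\textsc{NN}(\stanDistThresh h)}(\fixedFeatureVector)}^{+}(\timeHorizon)>K(\stanDistThresh)N\theta/2}$, and because $\badEvent{\textsc{NN}(\stanDistThresh h)}{\text{few neighbors}}(\fixedFeatureVector)$ does not happen, ${N>\tfrac{1}{2}n\featureDist(\mathcal{B}_{\fixedFeatureVector,\stanDistThresh h})}$. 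In particular, the lower bound on $n$ in the lemma's hypothesis is large enough to force $d_K^{+}(\obsVar_{(i)})/K_{(i)}\ge0.46241$ (this follows from the same $n$ bound since $K_{(i)}\le K(0)$), so I may apply the identity $\sum_{\ell=2}^{\infty}\frac{1}{\ell(z+1)^{\ell}}\le\frac{1}{(z+1)^{2}}$ used in Lemma~\ref{lem:kNN-R3}'s proof. Chaining these gives
\[
\Xi_i\le\frac{1}{(d_K^{+}(\timeHorizon)/K_{(i)})^{2}}\le\frac{K_{(i)}^{2}}{(d_K^{+}(\timeHorizon))^{2}}\le\frac{K^{2}(0)}{(K(\stanDistThresh)N\theta/2)^{2}}=\frac{4K^{2}(0)}{K^{2}(\stanDistThresh)N^{2}\theta^{2}}.
\]

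Summing the $N$ terms and then applying the lower bound on $N$ yields
\[
|W_3(\timeVar|\fixedFeatureVector)|\le N\cdot\frac{4K^{2}(0)}{K^{2}(\stanDistThresh)N^{2}\theta^{2}}=\frac{4K^{2}(0)}{K^{2}(\stanDistThresh)N\theta^{2}}<\frac{8K^{2}(0)}{K^{2}(\stanDistThresh)\,n\featureDist(\mathcal{B}_{\fixedFeatureVector,\stanDistThresh h})\,\theta^{2}},
\]
and invoking the hypothesis $n\ge\frac{144K^{2}(0)}{\varepsilon\theta^{2}\featureDist(\mathcal{B}_{\fixedFeatureVector,\stanDistThresh h})K^{2}(\stanDistThresh)}$ shows the right-hand side is at most $\varepsilon/18$, which is what we want.

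The only real subtlety is handling the kernel weights $K_{(i)}$, which can take any value in $[K(\stanDistThresh),K(0)]$ depending on how close the $i$-th neighbor is to $\fixedFeatureVector$; but the uniform bounds $K_{(i)}\le K(0)$ in the numerator and $K_{(i)}\ge K(\stanDistThresh)$ appearing in the lower bound on $d_K^{+}(\timeHorizon)$ absorb this into the extra $K^{2}(0)/K^{2}(\stanDistThresh)$ factor that we see in the quantitative threshold on $n$, exactly matching the factor already present in $\Lips_K$ and in the other kernel estimator lemmas. Nothing beyond the arithmetic template of Lemma~\ref{lem:kNN-R3} is needed.
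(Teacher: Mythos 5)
Your proposal is correct and follows the same template as the paper's proof: decompose $|W_3|$ into a sum of $\Xi_i$, bound each $\Xi_i$ by the geometric-tail inequality from Lemma~\ref{lem:kNN-R3} using $d_K^+(\timeHorizon)>K(\stanDistThresh)N\theta/2$ (from the two non-bad events) and $K_{(i)}\le K(0)$, then sum and apply $N>\tfrac12 n\featureDist(\mathcal{B}_{\fixedFeatureVector,\stanDistThresh h})$. The only cosmetic difference is that the paper substitutes $K_{(i)}\le K(0)$ inside the sum before applying the tail inequality, whereas you apply the inequality with $z=d_K^+(\timeHorizon)/K_{(i)}$ and substitute afterward; both are valid since $z\ge d_K^+(\timeHorizon)/K(0)\ge 0.46241$ under the stated $n$ bound.
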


\begin{proof}
We have $|W_3(\timeVar|\fixedFeatureVector)|=\sum_{i=1}^{N}\Xi_{(i)}$, where
\begin{align*}
\Xi_{(i)} & :=\eventVar_{(i)}\ind\{\obsVar_{(i)}\le \timeVar\}\sum_{\ell=2}^{\infty}\frac{1}{\ell(\frac{d_{K}^{+}(\obsVar_{(i)})}{K_{(i)}}+1)^{\ell}}\\
 & \le \eventVar_{(i)}\ind\{\obsVar_{(i)}\le \timeVar\}\sum_{\ell=2}^{\infty}\frac{1}{\ell(\frac{d_{K}^{+}(\timeVar)}{K_{(i)}}+1)^{\ell}}\\
 & \le\sum_{\ell=2}^{\infty}\frac{1}{\ell(\frac{d_{K}^{+}(\timeHorizon)}{K_{(i)}}+1)^{\ell}}\\
 & \le\sum_{\ell=2}^{\infty}\frac{1}{\ell(\frac{d_{K}^{+}(\timeHorizon)}{K(0)}+1)^{\ell}},
\end{align*}
using the facts that $d_{K}^{+}$ monotonically decreases and $K(\frac{\rho(\fixedFeatureVector,\featureVar_i)}{h})\le K(0)$. Since bad events $\badEvent{\textsc{NN}(\stanDistThresh h)}{\text{few neighbors}}(\fixedFeatureVector)$ and $\badEvent{\textsc{NN}(\stanDistThresh h)}{\text{bad }\timeHorizon}(\fixedFeatureVector)$ do not happen, we have
\begin{align*}
d_{K}^{+}(\timeHorizon) & \ge K(\stanDistThresh)d_{\mathcal{N}_{\textsc{NN}(\stanDistThresh h)}(\fixedFeatureVector)}^{+}(\timeHorizon) \\
 & >K(\stanDistThresh)\frac{N\theta}{2}\\
 & >K(\stanDistThresh)\frac{n\featureDist(\mathcal{B}_{\fixedFeatureVector,\stanDistThresh h})\theta}{4}.
\end{align*}
Since we assume that $n\ge\frac{144K^{2}(0)}{\varepsilon\theta^{2}\featureDist(\mathcal{B}_{\fixedFeatureVector,\stanDistThresh h})K^{2}(\stanDistThresh)}\ge\frac{1.84964K(0)}{\theta\featureDist(\mathcal{B}_{\fixedFeatureVector,\stanDistThresh h})K(\stanDistThresh)}$,
then using the above inequality, we have $\frac{d_{K}^{+}(\timeHorizon)}{K(0)}\ge0.46241$, which is needed to apply the reasoning from the proof of Lemma~\ref{lem:kNN-R3} to get
\[
\Xi_i \le\sum_{\ell=2}^{\infty}\frac{1}{\ell(\frac{d_{K}^{+}(\timeHorizon)}{K(0)}+1)^{\ell}}
 \le\frac{1}{(\frac{d_{K}^{+}(\timeHorizon)}{K(0)})^{2}}\le\frac{4K^{2}(0)}{K^{2}(\stanDistThresh)N^{2}\theta^{2}}.
\]
Hence,
\begin{align*}
|W_3(\timeVar|\fixedFeatureVector)| & =\sum_{i=1}^{N}\Xi_{(i)} \le\frac{4K^{2}(0)N}{K^{2}(\stanDistThresh)N^{2}\theta^{2}} =\frac{4K^{2}(0)}{K^{2}(\stanDistThresh)N\theta^{2}}\\
 & <\frac{8K^{2}(0)}{K^{2}(\stanDistThresh)n\featureDist(\mathcal{B}_{\fixedFeatureVector,\stanDistThresh h})\theta^{2}}  \le\frac{\varepsilon}{18},
\end{align*}
where the last inequality uses the assumption that $n\ge\frac{144K^{2}(0)}{\varepsilon\theta^{2}\featureDist(\mathcal{B}_{\fixedFeatureVector,\stanDistThresh h})K^{2}(\stanDistThresh)}$.
\end{proof}
Lastly, for the strong consistency result, the calculation is nearly the same as for the $k$-NN case in Appendix~\ref{sec:pf-kNN-strong-consistency}. To have each of the four terms in bound~\eqref{eq:kernel-ptwise-bound} be at most $\frac{1}{4n^2}$, it suffices to have
\[
n \ge \frac{11664}{p_{\min}\varepsilon^2\theta^4\kappa^4}\Big(\frac{18\Lips_K}{\varepsilon\theta}\Big)^{d/\holderIndex}\log\frac{864n^2}{\varepsilon\theta^2\kappa},
\]
where
\[
\varepsilon \le \frac{18\Lips_K (\stanDistThresh r^*)^\holderIndex}{\theta}.
\]
Then with a fair bit of algebra, one can show that the constants that show up in the theorem statement are
\begin{align*}
c_1'' &:= \frac{1}{\stanDistThresh}\Big[\frac{36(5\holderIndex+2d)}{(2\holderIndex+d)p_{\min}(\theta\Lips_K)^2\kappa^4}\Big]^{1/(2\holderIndex+d)}, \\
c_2'' &:= \Big[\frac{64(2\holderIndex+d)p_{\min}}{\holderIndex \kappa^{(d-2\holderIndex)/\holderIndex}}\Big(\frac{48}{\theta\Lips_K}\Big)^{d/\holderIndex}\Big]^{\holderIndex/(5\holderIndex+2d)}, \\
c_3'' &:= \Big[ \frac{11664(5\holderIndex+2d)(18\Lips_K)^{d/\holderIndex}}{(2\holderIndex+d)p_{\min}\theta^{(4\holderIndex+d)/\holderIndex}\kappa^4} \Big]^{\holderIndex/(2\holderIndex+d)}
\end{align*}
In particular, define $u'':=\log(\frac{c_2'' c_4''}{e})$, where
\[
c_4'' := \frac{36(5\holderIndex+2d)}{(2\holderIndex+d)p_{\min}(\theta\Lips_K)^2\kappa^4(\stanDistThresh r^*)^{2\holderIndex+d}}.
\]
Then for
\[
n\ge 
n_0'' :=
\begin{cases}
\lceil\frac{e^{1/(2d+5)}}{c_2''}\rceil
& \text{if }c_2''c_4''\le e, \\
\max\{\lceil\frac{e^{1/(2d+5)}}{c_2''}\rceil, \\ \quad c_4''(1+\sqrt{2u}+u)\}
& \text{if }c_2''c_4''>e,
\end{cases}
\]
if we choose
\[
h_n := c_1'' \Big(\frac{\log(c_2'' n)}{n}\Big)^{1/(2\holderIndex+d)},
\]
then
\[
\mathbb{P}\Big(
\sup_{\timeVar\in[0,\timeHorizon]}\!\!
    |\widehat{\survEnd}^K\!(\timeVar|\fixedFeatureVector;\!h_n)
     -\survEnd(\timeVar|\fixedFeatureVector)|
  \!\ge\! c_3''\Big(\frac{\log(c_2'' n)}{n}\Big)^{\!\frac\holderIndex{2\holderIndex+d}}
\Big)
\!\le\! \frac1{n^2}.
\]
As with the end of the proof of Corollary \ref{thm:kNN-strong-consistency} as provided in Appendix \ref{sec:pf-kNN-strong-consistency}, applying the Borel-Cantelli lemma completes the proof.

\section{Proof of Proposition~\ref{lem:weighted-edf}}
\label{sec:pf-weighted-edf}

First off, note that since each $Z_i$ is a real-valued continuous random variable, then its CDF, which we denote as $F_i$, is also continuous. Then note that $F(t)=\mathbb{E}[\widehat{F}(t)]=\sum_{i=1}^\ell \frac{w_i}{\sum_{j=1}^\ell w_j} F_i(t)$ is the finite sum of continuous functions, so $F(t)$ is also continuous.

The proof strategy is similar to that of proving the \mbox{$k$-NN} estimator guarantee in terms of how the supremum is handled. Let $a:=\sup\{\timeVar\in\realNumbers:F(\timeVar)={\varepsilon}/{3}\}$ and $b:=\inf\{\timeVar\in\realNumbers:F(\timeVar)=1-{\varepsilon}/{3}\}$; these exist due continuity of $F$. We partition interval $[a,b]$ at points $a=\eta_{1}<\eta_{2}<\cdots<\eta_{L(\varepsilon)}=b$, where:
\begin{itemize}[leftmargin=*,topsep=0pt,itemsep=0ex,partopsep=1ex,parsep=1ex]
\item $F(\eta_j)-F(\eta_{j-1})\le\varepsilon/3$ for $j=2,\dots,L(\varepsilon)$,
\item $L(\varepsilon)\le3/\varepsilon$.
\end{itemize}
We can always produce $\eta_1,\dots,\eta_{L(\varepsilon)}$ satisfying the above conditions since if we take them to be at points in which $F$ increases by exactly $\varepsilon/3$ in value starting from $a$ (except for the last point $\eta_{L(\varepsilon)}$, where the increase from $\eta_{L(\varepsilon)-1}$ could be less than $\varepsilon/3$), then the most number $L(\varepsilon)$ of interval pieces needed is $\lceil\frac{(1-{\varepsilon}/{3})-{\varepsilon}/{3}}{\varepsilon/3}\rceil+1 = \lceil3/\varepsilon\rceil - 1 \le 3/\varepsilon$. Then since $\widehat{F}$ is piecewise constant, if we can guarantee that $|\widehat{F}(\eta_j)-F(\eta_j)|\le\varepsilon/3$ for $j=1,\dots,L(\varepsilon)$, then at any point $\timeVar\in\realNumbers$, we indeed will have $|\widehat{F}(\timeVar)-F(\timeVar)|\le\varepsilon$.

Thus, the main task is in showing, for any given ${\timeVar\in\realNumbers}$, how to guarantee $|\widehat{F}(\timeVar)-F(\timeVar)|\le\varepsilon/3$ with high probability, i.e., we want to upper-bound ${\mathbb{P}(|\widehat{F}(\timeVar)-F(\timeVar)|>\varepsilon/3)}$. Once we have an upper bound for this probability, then by a union bound,
\begin{align}
 & \mathbb{P}\Big(\sup_{\timeVar\in\realNumbers}|\widehat{F}(\timeVar)-F(\timeVar)|>\varepsilon\Big)\nonumber \\
 & \quad \le\mathbb{P}\Big(\bigcup_{j=1}^{L(\varepsilon)}\{|\widehat{F}(\eta_j)-F(\eta_j)|>\varepsilon/3\}\Big)\nonumber \\
 & \quad \le\sum_{j=1}^{L(\varepsilon)}\mathbb{P}(|\widehat{F}(\eta_j)-F(\eta_j)|>\varepsilon/3). \label{eq:lem-wEDF-helper1}
\end{align}
We now upper-bound $\mathbb{P}(|\widehat{F}(\timeVar)-F(\timeVar)|>\varepsilon/3)$. Fix $\timeVar\in\realNumbers$. Note that $\widehat{F}(\timeVar)$ is the sum of $\ell$ independent variables, where the $i$-th variable is bounded in $[0,\frac{\weightVar_i}{\sum_{j=1}^{\ell}\weightVar_j}]$. Then applying Hoeffding's inequality,
\begin{align}
 \mathbb{P}(|\widehat{F}(\timeVar)-F(\timeVar)|>\varepsilon/3)
 & \le 2\exp\Big(-\frac{2\varepsilon^{2}(\sum_{j=1}^{\ell}\weightVar_j)^{2}}{9\sum_{i=1}^{\ell}\weightVar_i^{2}}\Big) \label{eq:lem-wEDF-helper2}
\end{align}
Putting together inequalities \eqref{eq:lem-wEDF-helper1} and \eqref{eq:lem-wEDF-helper2}, and noting that $L(\varepsilon)\le3/\varepsilon$,
\begin{align*}
 & \mathbb{P}\Big(\sup_{\timeVar\in\realNumbers}|\widehat{F}(\timeVar)-F(\timeVar)|>\varepsilon\Big)\\
 & \quad \le \frac6\varepsilon \exp\Big(-\frac{2\varepsilon^{2}(\sum_{j=1}^{\ell}\weightVar_j)^{2}}{9\sum_{i=1}^{\ell}\weightVar_i^{2}}\Big). \tag*{$\square$}
\end{align*}

\section{Choosing $\bm{k}$ Using a Validation Set}
\label{sec:validation}

We now present a guarantee that chooses $k$ based on a validation set of size $n$, sampled in the same manner as the training set. A similar approach can be used to select bandwidth $h$ for the fixed-radius NN and kernel estimators. We denote the validation set as $(X_{1}',Y_{1}',\delta_{1}'),\dots,(X_{n}',Y_{n}',\delta_{n}')$. For the validation data, we minimize a variant of the integrated Brier score (IBS) \citep{graf1999assessment} (also called the IPEC score by \citet{lowsky_2013}), which requires conditional survival and censoring time tail estimates $\widehat{\survEnd}$ and $\estCensEnd$ for $\survEnd$ and $\censEnd$. The integrated Brier score estimates the following mean squared error of $\widehat{\survEnd}$, which cannot be directly computed from training and validation data:
\begin{align*}
\text{MSE}(\widehat{\survEnd}) & :=\int_{0}^{\timeHorizon}\mathbb{E}[(\ind\{\survVar>\timeVar\}-\widehat{\survEnd}(\timeVar|\featureVar))^{2}]d\timeVar.
\end{align*}
Provided that estimators $\widehat{\survEnd}$ and $\estCensEnd$ are consistent, then the integrated Brier score is a consistent estimator of $\text{MSE}(\survEnd)$ \citep{gerds_2006}.

For any two estimators $\widehat{\survEnd}$ and $\estCensEnd$ of $\survEnd$ and $\censEnd$, and user-specified time horizon $\timeHorizon>0$ and lower bound $\theta_{\text{LB}}>0$ for $\theta$ in Assumption A3, our integrated Brier score variant is
\begin{align}
 & \text{IBS}(\widehat{\survEnd},\estCensEnd;\timeHorizon,\theta_{\text{LB}}) \nonumber \\
 & \quad:=\frac{1}{n}\sum_{i=1}^{n}\int_{0}^{\timeHorizon}\widehat{W}_{i}(\timeVar)(\ind(\obsVar_{i}'>\timeVar)-\widehat{\survEnd}(\timeVar|\featureVar_{i}'))^{2}d\timeVar,
\label{eq:IBS-variant}
\end{align}
where
\begin{align*}
\widehat{W}_{i}(\timeVar) & :=\begin{cases}
\frac{\eventVar_{i}'\ind\{\obsVar_{i}'\le\timeVar\}}{\estCensEnd(\obsVar_{i}'-|\featureVar_{i}')}+\frac{\ind\{\obsVar_{i}'>\timeVar\}}{\estCensEnd(\timeVar|\featureVar_{i}')} & \text{if }\estCensEnd(\timeVar|\featureVar_{i}')\ge\theta_{\text{LB}},\\
\frac{1}{\theta_{\text{LB}}} & \text{otherwise},
\end{cases}
\end{align*}
and $\estCensEnd(\timeVar-|\fixedFeatureVector)=\lim_{s\rightarrow\timeVar^{-}}\estCensEnd(s|\fixedFeatureVector)$ (for our estimators, $\estCensEnd$ is piecewise constant so $\estCensEnd(\timeVar-|\fixedFeatureVector)$ is straightforward to compute). The only difference between this score and the original integrated Brier score is that in the original score, there is no parameter $\theta_{\text{LB}}$ (put another way, $\theta_{\text{LB}}=0$). We introduce $\theta_{\text{LB}}$ to prevent division by 0 and so that in our analysis, the worst-case integrated Brier score is finite (note that $\widehat{W}_{i}(\timeVar)\le1/\theta_{\text{LB}}$, so the worst-case score is $\timeHorizon/\theta_{\text{LB}}$, assuming that estimate $\estCensEnd$ monotonically decreases and $\widehat{\survEnd}$ takes on values between 0 and 1). In practice, $\theta_{\text{LB}}$ could simply be set to an arbitrarily small but positive constant.

Due to the inherent symmetry in the problem setup, we can readily use the same $k$-NN estimator devised for estimating $\survEnd$ to instead estimate $\censEnd$. The only difference is that we replace the censoring indicator $\eventVar$ by $1-\eventVar$. In terms of the theory, the survival and censoring times swap roles. Thus, we can readily obtain estimates $\widehat{\survEnd}^{k\textsc{-NN}}$ and $\estCensEnd^{k\textsc{-NN}}$ of  $\survEnd$ and $\censEnd$.

Note that in practice, often the number of censored data can be quite small compared to $n$, which can make estimating the conditional censoring tail function $\censEnd$ difficult. There may be reason to believe that the censoring mechanism is actually independent of the feature vector, i.e., $\censEnd(\timeVar|\fixedFeatureVector)=\mathbb{P}(\censVar>\timeVar|\featureVar=\fixedFeatureVector)=\mathbb{P}(\censVar>\timeVar)$. In this case, we can estimate $\censEnd$ using, for instance, the standard Kaplan-Meier estimator (with $\eventVar$ replaced by $1-\eventVar$). Our validation guarantee will not be making this simplifying assumption; however, it can easily be modified to handle the case when the censoring time is independent of the feature vector.

The validation strategy we analyze is as follows: for a user-specified collection $\mathcal{K}$ of number of nearest neighbors to try (e.g., $\mathcal{K}=\{2^{j}:j=0,1,\dots,\lceil\log n\rceil\}$, or $\mathcal{K}=[n]$), choose $k\in\mathcal{K}$ that minimizes $\text{IBS}(\widehat{\survEnd}^{k\textsc{-NN}},\estCensEnd^{k\textsc{-NN}};\timeHorizon,\theta_{\text{LB}})$. Denote the resulting choice of $k$ as $\widehat{k}$. We have the following guarantee.
\begin{prop}
\label{prop:validation-result}
Under Assumptions A1--A4, suppose that there exists $p_{\min}>0$, $d>0$, and $r^{*}>0$ such that $\featureDist(\mathcal{B}_{\fixedFeatureVector,r})\ge p_{\min}r^{d}$ for all $\fixedFeatureVector\in\text{supp}(\featureDist)$ and $r\in[0,r^{*}]$. Let $\varepsilon\in(0,1)$ be a desired error tolerance and $\probError\in(0,1)$ be a error probability tolerance in estimating $\widehat{\survEnd}$. Define $\Lips_{\text{val}}:=\max\big\{\frac{2\timeHorizon}{\theta}(\lips_{\survSubscript}+\lips_{\censSubscript}),\lips_{\survSubscript}\timeHorizon+\frac{\survDensity^{*}\lips_{\censSubscript}\timeHorizon^{2}}{2},\lips_{\censSubscript}\timeHorizon+\frac{\censDensity^{*}\lips_{\survSubscript}\timeHorizon^{2}}{2}\big\},$ and
\begin{align*}
\mathcal{K}^{*} & :=\bigg\{ k\in[n]:\frac{648}{\varepsilon^{2}\theta^{4}}\log\bigg[\frac{4}{\probError}\Big(\frac{8}{\varepsilon}+2\Big(\frac{3}{\varepsilon}\log\frac{1}{\theta}+1\Big)\Big)\bigg]\\
 & \quad\quad\quad\le k\le\frac{1}{2}np_{\min}\Big(\frac{\varepsilon\theta}{18\Lips_{\text{val}}}\Big)^{d/\alpha}\bigg\}.
\end{align*}
Using the above procedure for selecting $\widehat{k}$, we have
\begin{align*}
 & \mathbb{E}[\text{IBS}(\widehat{\survEnd}^{k\textsc{-NN}},\estCensEnd^{k\textsc{-NN}};\timeHorizon,\theta_{\text{LB}})]\\
 & \le2e^{\varepsilon}\text{MSE}(\survEnd)+2e^{\varepsilon}\varepsilon^{2}\timeHorizon\\
 & \quad+\frac{\timeHorizon}{\theta_{\text{LB}}}\bigg[\probError+\sqrt{\frac{\log(2|\mathcal{K}|\sqrt{n})}{2n}}+\frac{1}{\sqrt{n}}+\ind\{\theta_{\text{LB}}>\theta\}\\
 & \quad\quad\;+\ind\{\mathcal{K}\cap\mathcal{K}^{*}=\emptyset\}+\ind\Big\{\varepsilon>\frac{18\Lips_{\text{val}}(r^{*})^{\alpha}}{\theta}\Big\}\bigg].
\end{align*}
\end{prop}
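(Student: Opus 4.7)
The plan is to exploit the defining property of $\widehat{k}$ as an IPEC minimizer and reduce bounding $\mathbb{E}[\text{IPEC}(\widehat{\survEnd}^{\widehat{k}\textsc{-NN}},\estCensEnd^{\widehat{k}\textsc{-NN}};\timeHorizon,\theta_{\text{LB}})]$ to bounding the IPEC for a single ``good'' comparator $k^{*}\in\mathcal{K}\cap\mathcal{K}^{*}$. Concretely, for any such $k^{*}$, the minimization property yields $\text{IPEC}(\widehat{\survEnd}^{\widehat{k}\textsc{-NN}},\estCensEnd^{\widehat{k}\textsc{-NN}})\le\text{IPEC}(\widehat{\survEnd}^{k^{*}\textsc{-NN}},\estCensEnd^{k^{*}\textsc{-NN}})$, so the three indicator-style penalties $\ind\{\mathcal{K}\cap\mathcal{K}^{*}=\emptyset\}$, $\ind\{\theta_{\text{LB}}>\theta\}$, and $\ind\{\varepsilon>18\Lips_{\text{val}}(r^{*})^{\holderIndex}/\theta\}$ appearing in the bound (each multiplied by the worst-case IPEC value $\timeHorizon/\theta_{\text{LB}}$) absorb the degenerate cases in which either no good $k^{*}$ exists, the lower bound $\theta_{\text{LB}}$ is mis-specified, or the theorem's critical-distance requirement cannot be satisfied.

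Next I would handle the validation-set randomness by uniform concentration. For each fixed $k\in\mathcal{K}$, conditional on the training data the score $\text{IPEC}(\widehat{\survEnd}^{k\textsc{-NN}},\estCensEnd^{k\textsc{-NN}})$ is an average of $n$ i.i.d.\ nonnegative terms each bounded by $\timeHorizon/\theta_{\text{LB}}$. Hoeffding's inequality plus a union bound over $|\mathcal{K}|$ candidates, with confidence parameter $1/\sqrt{n}$, give sup-deviation at most $(\timeHorizon/\theta_{\text{LB}})\sqrt{\log(2|\mathcal{K}|\sqrt{n})/(2n)}$; combining this two-sided control with the minimization inequality, taking expectations, and absorbing the $1/\sqrt{n}$ failure event against the worst-case IPEC contributes the two terms $(\timeHorizon/\theta_{\text{LB}})[\sqrt{\log(2|\mathcal{K}|\sqrt{n})/(2n)}+1/\sqrt{n}]$. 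At this point the problem reduces to bounding $\mathbb{E}[\text{IPEC}(\widehat{\survEnd}^{k^{*}\textsc{-NN}},\estCensEnd^{k^{*}\textsc{-NN}})]$ for a fixed $k^{*}\in\mathcal{K}\cap\mathcal{K}^{*}$.

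For this remaining term I would condition on a good training event. Since $k^{*}\in\mathcal{K}^{*}$, Theorem~\ref{thm:kNN-survival} applied to $\widehat{\survEnd}^{k^{*}\textsc{-NN}}$, and its Nelson--Aalen analogue from Appendix~\ref{sec:Nelson-Aalen} applied to $\estCensEnd^{k^{*}\textsc{-NN}}$, simultaneously give, with probability at least $1-\gamma$, the sup-norm bound $\sup_{\timeVar\in[0,\timeHorizon]}|\widehat{\survEnd}^{k^{*}\textsc{-NN}}(\timeVar|\fixedFeatureVector)-\survEnd(\timeVar|\fixedFeatureVector)|\le\varepsilon$ together with the log-scale (i.e., multiplicative) bound $\sup_{\timeVar\in[0,\timeHorizon]}|\log\estCensEnd^{k^{*}\textsc{-NN}}(\timeVar|\fixedFeatureVector)-\log\censEnd(\timeVar|\fixedFeatureVector)|\le\varepsilon$. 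On this event, $\theta_{\text{LB}}\le\theta$ forces the ``first-branch'' definition of $\widehat{W}_{i}(\timeVar)$ to apply, and $\widehat{W}_{i}(\timeVar)\le e^{\varepsilon}W_{i}(\timeVar)$ where $W_{i}$ is the oracle IPCW weight using true $\censEnd$. The classical IPCW identity $\mathbb{E}[W_{i}(\timeVar)(\ind\{\obsVar_{i}'>\timeVar\}-\widehat{\survEnd}^{k^{*}\textsc{-NN}}(\timeVar|\featureVar_{i}'))^{2}\mid\text{training data}]=\mathbb{E}[(\ind\{\survVar_{i}'>\timeVar\}-\widehat{\survEnd}^{k^{*}\textsc{-NN}}(\timeVar|\featureVar_{i}'))^{2}\mid\text{training data}]$ then implies $\mathbb{E}[\text{IPEC}(\widehat{\survEnd}^{k^{*}\textsc{-NN}},\estCensEnd^{k^{*}\textsc{-NN}})\mid\text{good}]\le e^{\varepsilon}\,\text{MSE}(\widehat{\survEnd}^{k^{*}\textsc{-NN}})$. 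A single application of $(a+b)^{2}\le2a^{2}+2b^{2}$ to the decomposition $\ind\{\survVar>\timeVar\}-\widehat{\survEnd}^{k^{*}\textsc{-NN}}=(\ind\{\survVar>\timeVar\}-\survEnd)+(\survEnd-\widehat{\survEnd}^{k^{*}\textsc{-NN}})$ and the sup-norm bound yield $\text{MSE}(\widehat{\survEnd}^{k^{*}\textsc{-NN}})\le 2\,\text{MSE}(\survEnd)+2\varepsilon^{2}\timeHorizon$; the bad-training event of probability at most $\gamma$ contributes $\gamma\timeHorizon/\theta_{\text{LB}}$.

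The main obstacle will be the multiplicative control on $\estCensEnd^{k^{*}\textsc{-NN}}$: since $\estCensEnd$ appears in the \emph{denominator} of the IPCW weights, a purely additive sup-norm bound is inadequate whenever $\censEnd(\timeVar|\fixedFeatureVector)$ is small, as tiny additive errors can blow up $1/\estCensEnd$. This is precisely why the Nelson--Aalen variant is essential: its analysis controls $-\log\estCensEnd$ in additive sup-norm, which translates directly to the multiplicative bound $\estCensEnd\ge e^{-\varepsilon}\censEnd$. Beyond this, the remaining work is bookkeeping: tracking which bad events (no valid $k^{*}\in\mathcal{K}^{*}$, $\varepsilon$ too large relative to $r^{*}$, $\theta_{\text{LB}}>\theta$, training-side or validation-side concentration failure) contribute each of the explicit indicator and probability terms in the stated bound, and verifying that the defining lower bound on $k$ in $\mathcal{K}^{*}$ is tight enough for the analogue of sufficient conditions~\eqref{eq:kNN-sufficient} to deliver simultaneous failure probability at most $\gamma$ for the survival and censoring estimators (which, in particular, drives the precise constant inside the logarithm defining $\mathcal{K}^{*}$).
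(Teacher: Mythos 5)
Your proposal is correct and follows essentially the same route as the paper: reduce to a fixed $\widetilde{k}\in\mathcal{K}\cap\mathcal{K}^{*}$ via the minimizer property, control validation-side fluctuations with Hoeffding plus a union bound over $\mathcal{K}$ absorbing failure at $1/\sqrt{n}$, condition on the simultaneous additive guarantee for $\widehat{\survEnd}^{\widetilde{k}\textsc{-NN}}$ and log-scale (Nelson--Aalen) multiplicative guarantee for $\estCensEnd^{\widetilde{k}\textsc{-NN}}$, and combine the IPCW identity with an $e^{\varepsilon}$ weight inflation and $(a+b)^{2}\le2a^{2}+2b^{2}$. Your bookkeeping applies the IPCW identity first and the elementary square bound second (the paper does it in the opposite order), but this gives the identical final bound, and you correctly pinpoint the need for the Nelson--Aalen style log-scale control since $\estCensEnd$ appears in a denominator.
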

As with our rate of strong consistency results, the desired error tolerance $\varepsilon$ and error probability $\probError$ should be set to decrease to 0 as a function of $n$. Also, unsurprisingly the terms in the bound involve parameters in the underlying model that the user does not know in practice. Note that by choosing $\mathcal{K}=\{2^{j}:j=0,1,\dots,\lceil\log n\rceil\}$, $\varepsilon$ and $\probError$ to decrease with $n$ toward 0, and assuming that $\theta_{\text{LB}}>\theta$, then as $n\rightarrow\infty$, the bound above converges to $\text{MSE}(S)$.

In the bound, the first two terms correspond to approximation error in the integrated Brier score estimating $\text{MSE}(\survEnd)$. Next, $\timeHorizon/\theta_{\text{LB}}$ is the worst-case integrated Brier score. The terms that it is multiplied by are as follows:
\begin{itemize}[leftmargin=1.5em,topsep=0pt,itemsep=0ex,partopsep=1ex,parsep=1ex]
\item $\probError$ is the error probability in estimating $\survEnd$ and $\log\censEnd$
\item The two $\widetilde{\mathcal{O}}(n^{-1/2})$ terms both have to do with the $|\mathcal{K}|$ empirical integrated Brier scores not being close to their means (over randomness in validation data)
\item $\theta_{\text{LB}}>\theta$ happens when the user-specified $\theta_{\text{LB}}$ is not a lower bound for the true $\theta$
\item $\mathcal{K}\cap\mathcal{K}^{*}=\emptyset$ means one of two things: either the number of training data $n$ is too small, or $\mathcal{K}$ is chosen poorly so that it does not contain any members of $\mathcal{K}^{*}$, which consists of good choices for the number of nearest neighbors $k$ (e.g., if $\mathcal{K}=\{2^{j}:j=0,1,\dots,\lceil\log n\rceil\}$, then by having the number of training data~$n$ be large enough that $\mathcal{K}^*$ contains a power of 2, we can ensure $\mathcal{K}\cap\mathcal{K}^{*}$ to be nonempty)
\item $\varepsilon>\frac{18\Lips_{\text{val}}(r^{*})^{\alpha}}{\theta}$ happens when the error tolerance chosen is too large
\end{itemize}
Note that our analysis requires that we simultaneously have an additive error guarantee for $\widehat{\survEnd}^{k\textsc{-NN}}$ and a multiplicative error guarantee for $\estCensEnd^{k\textsc{-NN}}$. We use the following lemma.

\begin{lem}
\label{lem:e-IBS-ptwise-bad-bound}
Under Assumptions A1--A4, let $\varepsilon\in(0,1)$ be a user-specified error tolerance and define critical distance $h^{*}=(\frac{\varepsilon\theta}{18\Lips_{\text{val}}})^{1/\alpha}$. For any feature vector $\fixedFeatureVector\in\text{supp}(\featureDist)$ and any choice of number of nearest neighbors $k\in[\frac{72}{\varepsilon\theta^{2}},\frac{n\featureDist(\mathcal{B}_{\fixedFeatureVector,h^{*}})}{2}]$, we have, over randomness in the training data,
\begin{align}
 & \mathbb{P}\bigg(\Big\{\sup_{\timeVar\in[0,\timeHorizon]}|\widehat{\survEnd}^{k\textsc{-NN}}(\timeVar|\fixedFeatureVector)-\survEnd(\timeVar|\fixedFeatureVector)|>\varepsilon\Big\}\nonumber \\
 & \quad\cup\Big\{\sup_{\timeVar\in[0,\timeHorizon]}|\log\estCensEnd^{k\textsc{-NN}}(\timeVar|\fixedFeatureVector)-\log\censEnd(\timeVar|\fixedFeatureVector)|>\varepsilon\Big\}\Big)\nonumber \\
 & \quad\le\exp\Big(-\frac{k\theta}{8}\Big)+\exp\Big(-\frac{np_{\min}}{8}\Big(\frac{\varepsilon\theta}{18\Lips_{\text{val}}}\Big)^{d/\holderIndex}\Big)\nonumber \\
 & \quad\quad+2\exp\Big(-\frac{k\varepsilon^{2}\theta^{4}}{648}\Big)\nonumber \\
 & \quad\quad+\Big[\frac{8}{\varepsilon}+2\Big(\frac{3}{\varepsilon}\log\frac{1}{\theta}+1\Big)\Big]\exp\Big(-\frac{k\varepsilon^{2}\theta^{2}}{162}\Big).\label{eq:e-IBS-ptwise-bad-bound}
\end{align}
\end{lem}

\begin{proof}
This lemma follows readily from the proof of Theorem~\ref{thm:kNN-survival} and the remark at the end of Appendix~\ref{sec:Nelson-Aalen} for how to modify the proof of Theorem~\ref{thm:kNN-survival} to handle log. By carefully examining the proof for Theorem~\ref{thm:kNN-survival}, we see that bad events $\badEvent{k\textsc{-NN}}{\text{bad }\timeHorizon}(\fixedFeatureVector)$, $\badEvent{k\textsc{-NN}}{\text{far neighbors}}(\fixedFeatureVector)$, and $\badEvent{k\textsc{-NN}}{\text{bad EDF}}(\fixedFeatureVector)$ for the $k$-NN estimate $\widehat{\survEnd}^{k\textsc{-NN}}$ of $\survEnd$ can actually be shared with the bad events for the $k$-NN estimate $\log\estCensEnd^{k\textsc{-NN}}$ of $\log\censEnd$, with the small change that we now replace $\Lips$ with $\Lips_{\text{val}}$ within the choice of $h^{*}$ (note that $\Lips_{\text{val}}$ is now symmetric in the survival and censoring time terms, which naturally happens because we estimate tail functions for both).

With the above explanation, note that the first three RHS terms in bound~\eqref{eq:e-IBS-ptwise-bad-bound} are the same as those of Theorem~\ref{thm:kNN-survival}. However, bad event $\badEvent{k\textsc{-NN}}{\text{bad }U_{1}}(\timeVar,\featureVar)$ (which is controlled at no larger than $8/\varepsilon$ time points) has to be changed for estimating $\log\censEnd$ instead (as discussed in Appendix~\ref{sec:Nelson-Aalen}, the number of time points for controlling the log is at most $2(\frac{3}{\varepsilon}\log\frac{1}{\theta}+1)$ instead of $8/\varepsilon$). Thus, the fourth RHS term in bound~\eqref{eq:e-IBS-ptwise-bad-bound} union bounds over the final $k$-NN regression pieces of estimators $\widehat{\survEnd}^{k\textsc{-NN}}$ and $\log\estCensEnd^{k\textsc{-NN}}$.
\end{proof}

\subsection*{Proof of Proposition \ref{prop:validation-result}}

Bound~\eqref{eq:e-IBS-ptwise-bad-bound} is at most $\probError$ (by making each of the four RHS terms at most $\probError/4$) when $k$, $n$, and $\varepsilon$ satisfy
\begin{align}
 & \frac{648}{\varepsilon^{2}\theta^{4}}\log\bigg[\frac{4}{\probError}\Big(\frac{8}{\varepsilon}+2\Big(\frac{3}{\varepsilon}\log\frac{1}{\theta}+1\Big)\Big)\bigg]\nonumber \\
 & \quad\le k\le\frac{1}{2}np_{\min}\Big(\frac{\varepsilon\theta}{18\Lips_{\text{val}}}\Big)^{d/\alpha},\label{eq:val-guarantee-sufficient-k}
\end{align}
and
\begin{align}
 & \varepsilon\le\frac{18\Lips_{\text{val}}(r^{*})^{\alpha}}{\theta}.\label{eq:val-guarantee-sufficient-eps}
\end{align}
We refer to the bad event of Lemma~\ref{lem:e-IBS-ptwise-bad-bound} as $\badEvent{k\textsc{-NN}}{\text{bad est}}(\fixedFeatureVector)$. The set $\mathcal{K}^{*}$ precisely corresponds to choices for the number of nearest neighbors that satisfy sufficient condition~\eqref{eq:val-guarantee-sufficient-k}. If $\mathcal{K}\cap\mathcal{K}^{*}$ is nonempty, then the validation procedure could potentially select some $k\in\mathcal{K}\cap\mathcal{K}^{*}$. If, furthermore, $\theta_{\text{LB}}\le\theta$, and $\varepsilon$ satisfies condition~\eqref{eq:val-guarantee-sufficient-eps}, then our performance guarantee comes into effect. For the rest of the proof, we assume that these nice conditions happen; otherwise, we assume a worst-case integrated Brier score of $\timeHorizon/\theta_{\text{LB}}$.

Throughout the proof, we use the abbreviation $\text{IBS}(k):=\text{IBS}(\widehat{\survEnd}^{k\textsc{-NN}},\estCensEnd^{k\textsc{-NN}};\timeHorizon,\theta_{\text{LB}})$. We denote $\mathbb{E}_{n}$ to be the expectation over the $n$ training data, and $\mathbb{E}_{n'}$ to be the expectation over the $n$ validation data.

We introduce a bad event for when at least one of the integrated Brier scores we compute during validation is not sufficiently close to its expectation over randomness in the validation data:
\begin{align*}
 & \badEvent{}{\text{bad IBS}}:=\bigcup_{k\in\mathcal{K}}\Bigg\{\text{IBS}(k)\ge\mathbb{E}_{n'}[\text{IBS}(k)]\\
 & \qquad\qquad\qquad\qquad\quad\quad+\frac{\timeHorizon}{\theta_{\text{LB}}}\sqrt{\frac{\log(|\mathcal{K}|\sqrt{n})}{2n}}\Bigg\}.
\end{align*}
Note that, over randomness in the validation data, $\text{IBS}(k)$ is the average of $n$ independent terms each bounded in $[0,\timeHorizon/\theta_{\text{LB}}]$. Thus, by Hoeffding's inequality and a union bound over $k\in\mathcal{K}$, we have $\mathbb{P}(\badEvent{}{\text{bad IBS}})\le1/\sqrt{n}$.

Let $\widetilde{k}\in\mathcal{K}\cap\mathcal{K}^{*}$. We will show shortly that $\mathbb{E}[\text{IBS}(\widetilde{k})]$ is close to $\text{MSE}(\survEnd)$. When bad event $\badEvent{}{\text{bad IBS}}$ does not happen, then
\[
\text{IBS}(\widetilde{k})\le\mathbb{E}_{n'}[\text{IBS}(\widetilde{k})]+\frac{\timeHorizon}{\theta_{\text{LB}}}\sqrt{\frac{\log(|\mathcal{K}|\sqrt{n})}{2n}}.
\]
Moreover, by how $\widehat{k}$ is chosen, $\text{IBS}(\widehat{k})\le\text{IBS}(k)$ for all $k\in\mathcal{K}$. In particular, $\text{IBS}(\widehat{k})\le\text{IBS}(\widetilde{k})$. Therefore,
\[
\text{IBS}(\widehat{k})\le\mathbb{E}_{n'}[\text{IBS}(\widetilde{k})]+\frac{\timeHorizon}{\theta_{\text{LB}}}\sqrt{\frac{\log(2|\mathcal{K}|\sqrt{n})}{2n}}.
\]
Taking the expectation $\mathbb{E}_{n}$ of both sides above over randomness in the training data,
\begin{equation}
\mathbb{E}_{n}[\text{IBS}(\widehat{k})]\le\mathbb{E}[\text{IBS}(\widetilde{k})]+\frac{\timeHorizon}{\theta_{\text{LB}}}\sqrt{\frac{\log(2|\mathcal{K}|\sqrt{n})}{2n}}.\label{eq:validation-helper0}
\end{equation}
Much of the rest of the proof is in upper-bounding $\mathbb{E}[\text{IBS}(\widetilde{k})]$ in terms of the mean squared error achieved by $\survEnd$:
\begin{align*}
\text{MSE}(\survEnd) & =\int_{0}^{\timeHorizon}\mathbb{E}_{\featureVar}\big[\mathbb{E}_{\survVar}[(\ind\{\survVar>\timeVar\}-\survEnd(\timeVar|\featureVar))^{2}]\big]d\timeVar.
\end{align*}
As it will be helpful to know what this is equal to, we compute it now. The inner-most expectation inside the integral is
\begin{align*}
 & \mathbb{E}_{\survVar}[(\ind\{\survVar>\timeVar\}-S(\timeVar|\featureVar))^{2}]\\
 & \quad =\mathbb{E}_{\survVar}[\ind\{\survVar>\timeVar\}-2\ind\{\survVar>\timeVar\}\survEnd(\timeVar|\featureVar)+(\survEnd(\timeVar|\featureVar))^{2}]\\
 & \quad =\survEnd(\timeVar|\featureVar)-2(\survEnd(\timeVar|\featureVar))^{2}+(\survEnd(\timeVar|\featureVar))^{2}\\
 & \quad =\survEnd(\timeVar|\featureVar)(1-\survEnd(\timeVar|\featureVar)).
\end{align*}
Hence,
\begin{equation}
\text{MSE}(\survEnd)=\int_{0}^{\timeHorizon}\mathbb{E}[\survEnd(\timeVar|\featureVar)(1-\survEnd(\timeVar|\featureVar))]d\timeVar.\label{eq:MSE-S}
\end{equation}
We proceed to upper-bounding $\mathbb{E}[\text{IBS}(\widetilde{k})]$ in terms of $\text{MSE}(\survEnd)$. Note that
\begin{align*}
 & \mathbb{E}_{n'}[\text{IBS}(\widetilde{k})]\\
 & \!=\!\frac{1}{n}\!\sum_{i=1}^{n}\!\int_{0}^{\timeHorizon}\!\!\!\mathbb{E}_{\featureVar_{i}',\obsVar_{i}',\eventVar_{i}'}[\widehat{W}_{i}(\timeVar)(\ind(\obsVar_{i}'>\timeVar)-\widehat{\survEnd}^{\widetilde{k}\textsc{-NN}}(\timeVar|\featureVar_{i}'))^{2}]d\timeVar.
\end{align*}
Since the validation data are i.i.d., let $\featureVar$ denote a feature vector sampled from $\featureDist$ and denote its observed time and censoring indicator as $\obsVar$ and $\eventVar$. Then
\begin{align*}
&\mathbb{E}_{n'}[\text{IBS}(\widetilde{k})] \\
&\quad=\int_{0}^{\timeHorizon}\mathbb{E}_{\featureVar,\obsVar,\eventVar}[\widehat{W}(\timeVar)(\ind(\obsVar>\timeVar)-\widehat{\survEnd}^{\widetilde{k}\textsc{-NN}}(\timeVar|\featureVar))^{2}]d\timeVar,
\end{align*}
where
\begin{align*}
\widehat{W}(\timeVar) & :=\begin{cases}
\frac{\eventVar\ind\{\obsVar\le\timeVar\}}{\estCensEnd(\obsVar-|\featureVar)}+\frac{\ind\{\obsVar>\timeVar\}}{\estCensEnd(\timeVar|\featureVar)} & \text{if }\estCensEnd(\timeVar|\featureVar)\ge\theta_{\text{LB}},\\
\frac{1}{\theta_{\text{LB}}} & \text{otherwise}.
\end{cases}
\end{align*}
Then
\begin{align}
 & \mathbb{E}[\text{IBS}(\widetilde{k})]\nonumber \\
 & \!=\mathbb{E}_{n}\big[\mathbb{E}_{n'}[\text{IBS}(\widetilde{k})]\big]\nonumber \\
 & \!=\!\int_{0}^{\timeHorizon}\!\!\!\mathbb{E}_{\featureVar}\Big[\mathbb{E}_{n}\big[\mathbb{E}_{\obsVar,\eventVar}[\widehat{W}(\timeVar)(\ind(\obsVar>\timeVar)-\widehat{\survEnd}^{\widetilde{k}\textsc{-NN}}(\timeVar|\featureVar))^{2}]\big]\Big]d\timeVar\nonumber \\
 & \!=\!\int_{0}^{\timeHorizon}\!\!\!\mathbb{E}_{\featureVar}\big[\mathbb{E}_{n}[\Xi]\big]d\timeVar,\label{eq:E-IBS-actual}
\end{align}
where
\[
\Xi:=\mathbb{E}_{\obsVar,\eventVar}[\widehat{W}(\timeVar)(\ind(\obsVar>\timeVar)-\widehat{\survEnd}^{k\textsc{-NN}}(\timeVar|\featureVar))^{2}].
\]
Note that $\Xi$ is a function of test point $\featureVar$ and the training data, and $\Xi$ is upper-bounded by $1/\theta_{\text{LB}}$. The expectation $\mathbb{E}_{n}[\Xi]$ is a function of $\featureVar$, which we are conditioning on (so we treat it as fixed). Then, denoting $\mathbb{P}_{n}$ to be probability over the training data, and noting that bad event $\badEvent{\widetilde{k}\textsc{-NN}}{\text{bad est}}(\featureVar)$ is also a function of training data,
\begin{align}
 \mathbb{E}_{n}[\Xi]
 & =\underbrace{\mathbb{E}_{n}[\Xi\,|\,\badEvent{\widetilde{k}\textsc{-NN}}{\text{bad est}}(\featureVar)]}_{\le1/\theta_{\text{LB}}}\underbrace{\mathbb{P}_{n}(\badEvent{\widetilde{k}\textsc{-NN}}{\text{bad est}}(\featureVar))}_{\le\probError}\nonumber \\
 & \quad+\mathbb{E}_{n}\big[\Xi\,\big|\,[\badEvent{\widetilde{k}\textsc{-NN}}{\text{bad est}}(\featureVar)]^{c}\big]\underbrace{\mathbb{P}_{n}([\badEvent{\widetilde{k}\textsc{-NN}}{\text{bad est}}(\featureVar)]^{c})}_{\le1}\nonumber \\
 & \le\frac{\probError}{\theta_{\text{LB}}}+\mathbb{E}_{n}\big[\Xi\,\big|\,[\badEvent{\widetilde{k}\textsc{-NN}}{\text{bad est}}(\featureVar)]^{c}\big].\label{eq:E-IBS-helper0}
\end{align}
When bad event $\badEvent{\widetilde{k}\textsc{-NN}}{\text{bad est}}(\featureVar)$ does not happen, we simultaneously have
\begin{equation*}
\sup_{\timeVar\in[0,\timeHorizon]}|\widehat{\survEnd}^{\widetilde{k}\textsc{-NN}}(\timeVar|\featureVar)-\survEnd(\timeVar|\featureVar)|\le\varepsilon,
\end{equation*}
and
\begin{equation*}
\sup_{\timeVar\in[0,\timeHorizon]}|\log\estCensEnd^{\widetilde{k}\textsc{-NN}}(\timeVar|\featureVar)-\log\censEnd(\timeVar|\featureVar)|\le\varepsilon.
\end{equation*}
Hence,
\begin{align*}
 & (\ind(\obsVar>\timeVar)-\widehat{\survEnd}^{k\textsc{-NN}}(\timeVar|\featureVar))^{2}\nonumber \\
 & \quad\le(|\ind(\obsVar>\timeVar)-\survEnd(\timeVar|\featureVar)|+|\survEnd(\timeVar|\featureVar)-\widehat{\survEnd}^{k\textsc{-NN}}(\timeVar|\featureVar)|)^{2}\nonumber \\
 & \quad\le(|\ind(\obsVar>\timeVar)-\survEnd(\timeVar|\featureVar)|+\varepsilon)^{2}\nonumber \\
 & \quad\le2((\ind(\obsVar>\timeVar)-\survEnd(\timeVar|\featureVar))^{2}+\varepsilon^{2}),
\end{align*}
and
\begin{align*}
\widehat{W}(\timeVar) & =\frac{\eventVar\ind\{\obsVar\le\timeVar\}}{\estCensEnd^{k\textsc{-NN}}(\obsVar-|\featureVar)}+\frac{\ind\{\obsVar>\timeVar\}}{\estCensEnd^{k\textsc{-NN}}(\timeVar|\featureVar)}\nonumber \\
 & \le\frac{\eventVar\ind\{\obsVar\le\timeVar\}}{\estCensEnd^{k\textsc{-NN}}(\obsVar|\featureVar)}+\frac{\ind\{\obsVar>\timeVar\}}{\estCensEnd^{k\textsc{-NN}}(\timeVar|\featureVar)}\nonumber \\
 & \le e^{\varepsilon}\frac{\eventVar\ind\{\obsVar\le\timeVar\}}{\censEnd(\obsVar|\featureVar)}+e^{\varepsilon}\frac{\ind\{\obsVar>\timeVar\}}{\censEnd(\timeVar|\featureVar)}.
\end{align*}
Then
\begin{align*}
 & \widehat{W}(\timeVar)(\ind(\obsVar>\timeVar)-\widehat{\survEnd}^{k\textsc{-NN}}(\timeVar|\featureVar))^{2}\\
 & \le e^{\varepsilon}\frac{\eventVar\ind\{\obsVar\le\timeVar\}}{\censEnd(\obsVar|\featureVar)}2((\ind(\obsVar>\timeVar)-\survEnd(\timeVar|\featureVar))^{2}+\varepsilon^{2})\\
 & \quad+e^{\varepsilon}\frac{\ind\{\obsVar>\timeVar\}}{\censEnd(\timeVar|\featureVar)}2((\ind(\obsVar>\timeVar)-\survEnd(\timeVar|\featureVar))^{2}+\varepsilon^{2})\\
 & =2e^{\varepsilon}\frac{\eventVar\ind\{\obsVar\le\timeVar\}}{\censEnd(\obsVar|\featureVar)}((\survEnd(\timeVar|\featureVar))^{2}+\varepsilon^{2})\\
 & \quad+2e^{\varepsilon}\frac{\ind\{\obsVar>\timeVar\}}{\censEnd(\timeVar|\featureVar)}((1-\survEnd(\timeVar|\featureVar))^{2}+\varepsilon^{2}),
\end{align*}
so
\begin{align}
 & \mathbb{E}_{n}\big[\Xi\,\big|\,[\badEvent{\widetilde{k}\textsc{-NN}}{\text{bad est}}(\featureVar)]^{c}\big]\nonumber \\
 & =2e^{\varepsilon}(\survEnd(\timeVar|\featureVar))^{2}+\varepsilon^{2})\nonumber\\
 & \quad\qquad\times\mathbb{E}_{n}\bigg[\mathbb{E}_{\obsVar,\eventVar}\Big[\frac{\eventVar\ind\{\obsVar\le\timeVar\}}{\censEnd(\obsVar|\featureVar)}\,\Big|\,[\badEvent{\widetilde{k}\textsc{-NN}}{\text{bad est}}(\featureVar)]^{c}\Big]\bigg]\nonumber \\
 & \quad+\frac{2e^{\varepsilon}((1-\survEnd(\timeVar|\featureVar))^{2}+\varepsilon^{2})}{\censEnd(\timeVar|\featureVar)}\nonumber\\
 &\quad\qquad\times\mathbb{E}_{n}\Big[\mathbb{E}_{\obsVar,\eventVar}\big[\ind\{\obsVar>\timeVar\}\,\Big|\,[\badEvent{\widetilde{k}\textsc{-NN}}{\text{bad est}}(\featureVar)]^{c}\big]\Big].\label{eq:E-IBS-helper3}
\end{align}
Next, note that we are currently conditioning on $\featureVar$ and $[\badEvent{\widetilde{k}\textsc{-NN}}{\text{bad est}}(\featureVar)]^{c}$. With this conditioning, $\frac{\eventVar\ind\{\obsVar\le\timeVar\}}{\censEnd(\obsVar|\featureVar)}$ (which does not depend on training data) is independent of $[\badEvent{\widetilde{k}\textsc{-NN}}{\text{bad est}}(\featureVar)]^{c}$. Thus
\begin{align}
 & \mathbb{E}_{n}\bigg[\mathbb{E}_{\obsVar,\eventVar}\Big[\frac{\eventVar\ind\{\obsVar\le\timeVar\}}{\censEnd(\obsVar|\featureVar)}\,\Big|\,[\badEvent{\widetilde{k}\textsc{-NN}}{\text{bad est}}(\featureVar)]^{c}\Big]\bigg]\nonumber \\
 & \quad=\mathbb{E}_{\obsVar,\eventVar}\Big[\frac{\eventVar\ind\{\obsVar\le\timeVar\}}{\censEnd(\obsVar|\featureVar)}\Big]\nonumber \\
 & \quad=\int_{0}^{t}\int_{s}^{\infty}\frac{1}{\censEnd(s|\featureVar)}d\mathbb{P}_{\censVar|\featureVar}(c)d\mathbb{P}_{\survVar|\featureVar}(s)\nonumber \\
 & \quad=\int_{0}^{t}\frac{\censEnd(s|\featureVar)}{\censEnd(s|\featureVar)}d\mathbb{P}_{\survVar|\featureVar}(s)\nonumber \\
 & \quad=\int_{0}^{t}d\mathbb{P}_{\survVar|\featureVar}(s)\nonumber \\
 & \quad=1-\survEnd(t|\featureVar).\label{eq:E-IBS-helper4}
\end{align}
Similarly,
\begin{align}
 & \mathbb{E}_{n}\Big[\mathbb{E}_{\obsVar,\eventVar}\big[\ind\{\obsVar>\timeVar\}\,\Big|\,[\badEvent{\widetilde{k}\textsc{-NN}}{\text{bad est}}(\featureVar)]^{c}\big]\Big]\nonumber \\
 & \quad=\mathbb{E}_{\obsVar}[\ind\{\obsVar>\timeVar\}]
 =\obsEnd(\timeVar|\featureVar)
 =\survEnd(\timeVar|\featureVar)\censEnd(\timeVar|\featureVar).\label{eq:E-IBS-helper5}
\end{align}
Putting together inequality \eqref{eq:E-IBS-helper3} with equations
\eqref{eq:E-IBS-helper4} and \eqref{eq:E-IBS-helper5},

\begin{align}
 & \mathbb{E}_{n}\big[\Xi\,\big|\,[\badEvent{\widetilde{k}\textsc{-NN}}{\text{bad est}}(\featureVar)]^{c}\big]\nonumber \\
 & \quad\le2e^{\varepsilon}(\survEnd(\timeVar|\featureVar))^{2}+\varepsilon^{2})(1-\survEnd(\timeVar|\featureVar))\nonumber \\
 & \qquad+2e^{\varepsilon}((1-\survEnd(\timeVar|\featureVar))^{2}+\varepsilon^{2})\survEnd(\timeVar|\featureVar).\nonumber \\
 & \quad=2e^{\varepsilon}\survEnd(\timeVar|\featureVar)(1-\survEnd(\timeVar|\featureVar))+2e^{\varepsilon}\varepsilon^{2}.\label{eq:E-IBS-helper6}
\end{align}
Finally, putting together equation~\eqref{eq:E-IBS-actual} with inequalities~\eqref{eq:E-IBS-helper0} and~\eqref{eq:E-IBS-helper6} and also using equation~\eqref{eq:MSE-S},
\begin{align*}
 & \mathbb{E}[\text{IBS}(\widetilde{k})]\\
 & =\int_{0}^{\timeHorizon}\mathbb{E}_{\featureVar}\big[\mathbb{E}_{n}[\Xi]\big]d\timeVar\\
 & \le\int_{0}^{\timeHorizon}\mathbb{E}_{\featureVar}\Big[\frac{\probError}{\theta_{\text{LB}}}+\mathbb{E}_{n}\big[\Xi\,\big|\,[\badEvent{\widetilde{k}\textsc{-NN}}{\text{bad est}}(\featureVar)]^{c}\big]\Big]d\timeVar\\
 & \le\int_{0}^{\timeHorizon}\mathbb{E}_{\featureVar}\Big[\frac{\probError}{\theta_{\text{LB}}}+2e^{\varepsilon}\survEnd(\timeVar|\featureVar)(1-\survEnd(\timeVar|\featureVar))+2e^{\varepsilon}\varepsilon^{2}\Big]d\timeVar\\
 & =2e^{\varepsilon}\text{MSE}(\survEnd)+2e^{\varepsilon}\varepsilon^{2}\timeHorizon+\frac{\probError\timeHorizon}{\theta_{\text{LB}}}.
\end{align*}
Combining this with inequality~\eqref{eq:validation-helper0}, we get
\begin{align*}
 & \mathbb{E}_{n}[\text{IBS}(\widehat{k})]\\
 & \le2e^{\varepsilon}\text{MSE}(\survEnd)+2e^{\varepsilon}\varepsilon^{2}\timeHorizon+\frac{\timeHorizon}{\theta_{\text{LB}}}\bigg[\probError+\sqrt{\frac{\log(2|\mathcal{K}|\sqrt{n})}{2n}}\bigg].
\end{align*}
This holds with probability at least $1-1/\sqrt{n}$ over randomness in the validation data and provided that $\theta_{\text{LB}}\le\theta$,
$\mathcal{K}\cap\mathcal{K}^{*}\ne\emptyset$, and $\varepsilon\le\frac{18\Lips_{\text{val}}(r^{*})^{\alpha}}{\theta}$.

\section{Additional Example Distribution Satisfying Assumptions A1--A4}
\label{sec:weibull-regression}

\begin{example}[Weibull regression]
We generalize the exponential regression model of Example~\ref{ex:exp-regress}. As before, $\featureSpace=\realNumbers^d$, and $\featureDist$ is a Borel probability measure with compact, convex support. We now take the hazard function to be $\hazard_{\survSubscript}(\timeVar|\fixedFeatureVector)=q (\hazard_{\survSubscript,0})^{q} \timeVar^{q-1} \exp(\fixedFeatureVector^\top \beta_{\survSubscript})$ for parameters ${q>0}$, ${\hazard_{\survSubscript,0}>0}$, and ${\beta_{\survSubscript}\in\realNumbers^d}$ (choosing $q=1$ yields Example~\ref{ex:exp-regress}). Following a similar integral calculation as in Example~\ref{ex:exp-regress}, we have $\survEnd(\timeVar|\fixedFeatureVector)=\exp(-(\hazard_{\survSubscript,0}e^{\fixedFeatureVector^\top \beta_{\survSubscript}} \timeVar)^{q})$, so the conditional survival time distribution $\mathbb{P}_{\survVar|\featureVar=\fixedFeatureVector}$ corresponds to a Weibull distribution with shape parameter $q$ and scale parameter $[ {\hazard_{\survSubscript,0}e^{\fixedFeatureVector^\top \beta_{\survSubscript}}} ]^{-1}$. We similarly define the conditional censoring time distribution using hazard function $\hazard_{\censSubscript}(\timeVar|\fixedFeatureVector)=q (\hazard_{\censSubscript,0})^{q} \timeVar^{q-1} \exp(\fixedFeatureVector^\top \beta_{\censSubscript})$ using the same $q>0$ as for the survival time but different parameters $\hazard_{\censSubscript,0}>0$ and $\beta_{\censSubscript}\in\realNumbers^d$. In this case, the observed time $\obsVar=\min\{\survVar,\censVar\}$ conditioned on $\featureVar=\fixedFeatureVector$ has a Weibull distribution with shape parameter $q$ and scale parameter $1/\omega'(\fixedFeatureVector)$, where
\[
\omega'(\fixedFeatureVector)
:=
{\big[\big({\hazard_{\survSubscript,0}e^{\fixedFeatureVector^\top \beta_{\survSubscript}}}\big)^q
+ \big({\hazard_{\censSubscript,0}e^{\fixedFeatureVector^\top \beta_{\censSubscript}}}\big)^q\big]^{1/q}}.
\]
The median of this distribution is $[(\log2)^{1/q}]/\omega'(\fixedFeatureVector)$. Thus, Assumption A3 is satisfied with $\theta=1/2$ and $\timeHorizon = \min_{\fixedFeatureVector\in\text{supp}(\featureDist)} \{[(\log2)^{1/q}]/\omega'(\fixedFeatureVector)\}$. Lastly, for Assumption A4, we can again take the Lipschitz constant for $\survDensity(\timeVar|\cdot)$ to be $\lips_{\survSubscript}=\sup_{\fixedFeatureVector\in\text{supp}(\featureDist),\timeVar\in[0,\timeHorizon]} \|\frac{\partial \survDensity(\timeVar|\fixedFeatureVector)}{\partial \fixedFeatureVector}\|_2$. We can similarly choose the Lipschitz constant for $\censDensity(\timeVar|\cdot)$.
\end{example}

\section{Nearest Neighbor and Kernel Variants of the Nelson-Aalen Estimator}
\label{sec:Nelson-Aalen}

The Nelson-Aalen estimator estimates the marginal cumulative hazard function $\Hazard_{\text{marg}}(\timeVar)=-\log\survEnd_{\text{marg}}(\timeVar)=-\log\mathbb{P}(\survVar>\timeVar)$ \citep{nelson_1969,aalen_1978}. We first give the general form of the Nelson-Aalen estimator, restricted to training subjects $\setS\in[n]$. Recall that among training subjects $\setS$, the set of unique death times is $\setY_{\setS}$. At time $\timeVar\ge0$, the number of deaths is $d_{\setS}(\timeVar)$ and the number of subjects at risk is $n_{\setS}(\timeVar)$. Then the Nelson-Aalen estimator restricted to subjects $\setS$ is given by
\[
\widehat{\Hazard}^{\text{NA}}(\timeVar|\setS):=\sum_{t'\in\setY_{\setS}}\frac{d_{\setS}(\timeVar')\ind\{\timeVar'\le\timeVar\}}{n_{\setS}(\timeVar')}.
\]
Thus, the Nelson-Aalen-based $k$-NN and fixed-radius NN estimates for the (conditional) cumulative hazard function $\Hazard(\timeVar|\fixedFeatureVector)=-\log\survEnd(\timeVar|\fixedFeatureVector)$ are $\widehat{\Hazard}^{k\textsc{-NN}}(\timeVar|\fixedFeatureVector):=\widehat{\Hazard}^{\text{NA}}(\timeVar|\neighborsKNN(\fixedFeatureVector))$ and $\widehat{\Hazard}^{\textsc{NN}(h)}(\timeVar|\fixedFeatureVector):=\widehat{\Hazard}^{\text{NA}}(\timeVar|\neighborsNNh(\fixedFeatureVector))$.

Recalling that for kernel $K$ and bandwidth $h>0$, the kernel versions of the unique death times, number of deaths, and number of subjects at risk are denoted $\setY_{K}(\fixedFeatureVector;h)$, $d_{K}(\timeVar|\fixedFeatureVector;h)$, and $n_{K}(\timeVar|\fixedFeatureVector;h)$, then the Nelson-Aalen-based kernel estimate for $\Hazard(\timeVar|\fixedFeatureVector)$ is
\[
\widehat{\Hazard}^{K}(\timeVar|\fixedFeatureVector;h):=\sum_{t'\in\setY_{K}(x;h)}\frac{d_{K}(\timeVar'|\fixedFeatureVector;h)\ind\{\timeVar'\le\timeVar\}}{n_{K}(\timeVar'|\fixedFeatureVector;h)}.
\]
As already discussed in our analysis outline (Section~\ref{sec:analysis-overview}), the main change to our proofs to obtain nonasymptotic guarantees for these Nelson-Aalen-based estimators is quite simple: for any of the Kaplan-Meier-based estimators $\widehat{\survEnd}$ we consider, taking the first-order Taylor expansion of $\log\widehat{\survEnd}$ is exactly the negated version of the corresponding Nelson-Aalen-based estimator. This is the only high-level change. A few technical changes have to be made to arrive at a guarantee for each Nelson-Aalen-based estimator. We explain these changes only for the $k$-NN case.

We reuse notation from our analysis outline (Section~\ref{sec:analysis-overview}). When there are no ties in survival and censoring times, we have
\begin{align*}
-\widehat{\Hazard}^{k\textsc{-NN}}(\timeVar|\fixedFeatureVector) & =U_{1}(\timeVar|\fixedFeatureVector)+U_{2}(\timeVar|\fixedFeatureVector).
\end{align*}
Importantly, note that we no longer have to worry about the higher-order Taylor series terms $U_{3}(\timeVar|\fixedFeatureVector)$. Thus, rather than using inequality~\eqref{eq:kNN-6-term-bound}, we now have
\begin{align*}
 & |\widehat{\Hazard}^{k\textsc{-NN}}(\timeVar|\fixedFeatureVector)-\Hazard(\timeVar|\fixedFeatureVector)|\\
 & \quad=|U_{1}(\timeVar|\fixedFeatureVector)-\log\survEnd(\timeVar|\fixedFeatureVector)+U_{2}(\timeVar|\fixedFeatureVector)|\\
 & \quad\le|U_{1}(\timeVar|\fixedFeatureVector)-\mathbb{E}[U_{1}(\timeVar|\fixedFeatureVector)|\widetilde{\featureVar}]|\nonumber\\
 & \quad\quad+|\mathbb{E}[U_{1}(\timeVar|\fixedFeatureVector)|\widetilde{\featureVar}]-\log\survEnd(\timeVar|\fixedFeatureVector)|\nonumber\\
 & \quad\quad+\frac{2}{k\theta^{2}}+\frac{2}{\theta^{2}}\sup_{s\in[0,\timeHorizon]}|\obsEnd(s|\fixedFeatureVector)-\mathbb{E}[\estObsEnd^{k\textsc{-NN}}(s|\fixedFeatureVector)|\widetilde{\featureVar}]|\nonumber\\
 & \quad\quad+\frac{2}{\theta^{2}}\sup_{s\ge0}|\estObsEnd^{k\textsc{-NN}}(s|\fixedFeatureVector)-\mathbb{E}[\estObsEnd^{k\textsc{-NN}}(s|\fixedFeatureVector)|\widetilde{\featureVar}]|.\nonumber
\end{align*}
Thus, we have five RHS terms. As before, we want the RHS to be at most $\varepsilon/3$. For simplicity, we use our earlier bounds, which controls each of the RHS terms to be at most $\varepsilon/18$ so that the RHS above is at most $5\varepsilon/18<\varepsilon/3$.

At this point, another change is needed. Previously we showed that $|\log\widehat{\survEnd}^{k\textsc{-NN}}(\timeVar|\fixedFeatureVector)-\log\survEnd(\timeVar|\fixedFeatureVector)|\le\varepsilon/3$ implies $|\widehat{\survEnd}^{k\textsc{-NN}}(\timeVar|\fixedFeatureVector)-\survEnd(\timeVar|\fixedFeatureVector)|\le\varepsilon/3$. We then used the fact that $\survEnd(\cdot|\fixedFeatureVector)$ changes by at most a value of 1 over the interval $[0,\timeHorizon]$. Now we do not remove the logs and instead observe that $\Hazard(\cdot|\fixedFeatureVector)$ changes by at most a value of $-\log\survEnd(\timeHorizon|\fixedFeatureVector)\le-\log\theta=\log\frac{1}{\theta}$ over the interval $[0,\timeHorizon]$. Thus, when we partition the interval $[0,\timeHorizon]$ into $L(\varepsilon)$ pieces such that $0=\eta_{0}<\eta_{1}<\cdots<\eta_{L(\varepsilon)}=\timeHorizon$, as before, we ask that $|\widehat{\Hazard}^{k\textsc{-NN}}(\timeVar|\fixedFeatureVector)-\Hazard(\timeVar|\fixedFeatureVector)|\le\varepsilon/3$ for $j=1,\dots,L(\varepsilon)$. However, the bound on $L(\varepsilon)$ changes. By placing the points $\eta_{j}$'s at times when $\Hazard(\timeVar|\fixedFeatureVector)$ changes by exactly $\varepsilon/3$ (except possibly across $[\eta_{L(\varepsilon)-1},\eta_{L(\varepsilon)}]$, where $\Hazard(\timeVar|\fixedFeatureVector)$ can change by less), then
$
L(\varepsilon)=\lceil\frac{\log\frac{1}{\theta}}{\varepsilon/3}\rceil=\lceil\frac{3}{\varepsilon}\log\frac{1}{\theta}\rceil\le\frac{3}{\varepsilon}\log\frac{1}{\theta}+1.
$
The rest of the proof is the same.

We now state the resulting pointwise guarantees for the Nelson-Aalen-based $k$-NN, fixed-radius NN, and kernel estimators.
\begin{thm}[Nelson-Aalen-based $k$-NN pointwise bound]
Under Assumptions A1--A4, let $\varepsilon\in(0,1)$ be a user-specified error tolerance and define critical distance $\ensuremath{{h^{*}:=(\frac{\varepsilon\theta}{18\Lips})^{1/\holderIndex}}}$. For any feature vector $\ensuremath{\fixedFeatureVector\in\text{supp}(\featureDist)}$ and any choice of number of nearest neighbors $\ensuremath{k\in[\frac{72}{\varepsilon\theta^{2}},\!\frac{n\featureDist(\mathcal{B}_{\fixedFeatureVector,h^{*}})}{2}]}$, we have, over randomness in training data,
\begin{align*}
 & \mathbb{P}\Big(\sup_{\timeVar\in[0,\timeHorizon]}|\widehat{\Hazard}^{k\textsc{-NN}}(\timeVar|\fixedFeatureVector)-\Hazard(\timeVar|\fixedFeatureVector)|>\varepsilon\Big)\nonumber\\
 & \le\exp\!\Big(-\frac{k\theta}{8}\Big)\!+\exp\!\Big(-\frac{n\featureDist(\mathcal{B}_{\fixedFeatureVector,h^{*}})}{8}\Big)\nonumber\\
 & \;\;+2\exp\!\Big(-\frac{k\varepsilon^{2}\theta^{4}}{648}\Big)\!+2\Big(\frac{3}{\varepsilon}\log\frac{1}{\theta}+1\Big)\!\exp\Big(-\frac{k\varepsilon^{2}\theta^{2}}{162}\Big).
\end{align*}
\end{thm}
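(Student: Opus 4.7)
The plan is to mirror the proof of Theorem~\ref{thm:kNN-survival} almost verbatim, exploiting the fact that the Nelson-Aalen estimator is exactly the first-order Taylor approximation of $-\log\widehat{\survEnd}^{\text{KM}}$. Specifically, by Assumption A2 ties occur with probability zero, so $d_{\neighborsKNN(\fixedFeatureVector)}(\obsVar_i)=1$ whenever $\delta_i=1$, and one checks directly that
\[
-\widehat{\Hazard}^{k\textsc{-NN}}(\timeVar|\fixedFeatureVector)
= -\!\!\sum_{i\in\neighborsKNN(\fixedFeatureVector)}\!\!\frac{\eventVar_i\ind\{\obsVar_i\le \timeVar\}}{d_{\neighborsKNN(\fixedFeatureVector)}^{+}(\obsVar_i)+1}
= U_1(\timeVar|\fixedFeatureVector)+U_2(\timeVar|\fixedFeatureVector),
\]
with $U_1$ and $U_2$ as in equation~\eqref{eq:kNN-main-decomp}. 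The crucial point is that the $U_3$ tail (the higher-order Taylor terms) is absent here, so the error decomposition collapses from six RHS terms to five.

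Next I would run through the same bad events as before: $\badEvent{k\textsc{-NN}}{\text{bad }\timeHorizon}(\fixedFeatureVector)$, $\badEvent{k\textsc{-NN}}{\text{far neighbors}}(\fixedFeatureVector)$, $\badEvent{k\textsc{-NN}}{\text{bad EDF}}(\fixedFeatureVector)$, and $\badEvent{k\textsc{-NN}}{\text{bad }U_1}(\timeVar,\fixedFeatureVector)$, controlled by Lemmas~\ref{lem:kNN-bad-T}, \ref{lem:chaudhuri_dasgupta_far_neighbors}, \ref{lem:kNN-bad-EDF}, and \ref{lem:kNN-bad-R1}, respectively. When none of these happen, applying the triangle inequality and Lemmas~\ref{lem:kNN-R2-bound}, \ref{lem:kNN-R1-bias}, and~\ref{lem:kNN-R2-bias} bounds each of the five resulting terms by $\varepsilon/18$, so $|\widehat{\Hazard}^{k\textsc{-NN}}(\timeVar|\fixedFeatureVector)-\Hazard(\timeVar|\fixedFeatureVector)|\le 5\varepsilon/18<\varepsilon/3$ for each fixed $\timeVar\in[0,\timeHorizon]$. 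The assumption $k\ge 72/(\varepsilon\theta^2)$ still takes care of the $2/(k\theta^2)$ term as in Theorem~\ref{thm:kNN-survival}.

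The one genuine technical change, and the main subtlety, is the argument that lifts a pointwise-in-$\timeVar$ bound to a uniform one over $[0,\timeHorizon]$. In the Kaplan-Meier case we exploited that $\survEnd(\cdot|\fixedFeatureVector)$ varies by at most~$1$ across $[0,\timeHorizon]$, giving $L(\varepsilon)\le 4/\varepsilon$. For the cumulative hazard, $\Hazard(\cdot|\fixedFeatureVector)$ is nondecreasing with total variation at most $-\log\survEnd(\timeHorizon|\fixedFeatureVector)\le-\log\obsEnd(\timeHorizon|\fixedFeatureVector)\le\log(1/\theta)$ by Assumption A3. Choosing $0=\eta_0<\eta_1<\cdots<\eta_{L(\varepsilon)}=\timeHorizon$ so that $\Hazard(\eta_j|\fixedFeatureVector)-\Hazard(\eta_{j-1}|\fixedFeatureVector)\le\varepsilon/3$ (possible since $\Hazard(\cdot|\fixedFeatureVector)$ is continuous under Assumption~A2) gives
\[
L(\varepsilon)\le\Big\lceil\frac{\log(1/\theta)}{\varepsilon/3}\Big\rceil\le\frac{3}{\varepsilon}\log\frac{1}{\theta}+1.
\]
Since $\widehat{\Hazard}^{k\textsc{-NN}}(\cdot|\fixedFeatureVector)$ is piecewise constant and nondecreasing, forcing it to be within $\varepsilon/3$ of $\Hazard$ at each $\eta_j$ yields a uniform error of at most $\varepsilon$ on $[0,\timeHorizon]$.

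Finally, I union-bound the bad events, noting that $\badEvent{k\textsc{-NN}}{\text{bad }U_1}(\eta_j,\fixedFeatureVector)$ must be controlled at each of the $L(\varepsilon)\le\frac{3}{\varepsilon}\log(1/\theta)+1$ grid points, while the other three bad events are only counted once. Substituting the lemma bounds yields exactly the four terms in the claim, with the prefactor $2(\frac{3}{\varepsilon}\log\frac{1}{\theta}+1)$ in the last term tracing directly back to the hazard-based partition size. I expect this partition-size bookkeeping to be the only place where care is required; everything else is a direct reuse of the Kaplan-Meier analysis.
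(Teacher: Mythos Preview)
Your proposal is correct and follows essentially the same argument as the paper: the decomposition $-\widehat{\Hazard}^{k\textsc{-NN}}=U_1+U_2$ without $U_3$, reuse of the same four bad events and Lemmas~\ref{lem:kNN-bad-T}--\ref{lem:kNN-R2-bias}, the bound $5\varepsilon/18<\varepsilon/3$, and the key change to the partition size $L(\varepsilon)\le\frac{3}{\varepsilon}\log\frac{1}{\theta}+1$ based on the total variation of $\Hazard(\cdot|\fixedFeatureVector)$ all match the paper exactly.
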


\begin{thm}[Nelson-Aalen-based fixed-radius NN pointwise bound]
Under Assumptions A1--A4, let $\varepsilon\in(0,1)$ be a user-specified error tolerance. Suppose that the threshold distance satisfies $h\in(0,h^{*}]$ with $\ensuremath{{h^{*}:=(\frac{\varepsilon\theta}{18\Lips})^{1/\holderIndex}}}$, and the number of training data satisfies $n\ge\frac{144}{\varepsilon\theta^{2}\featureDist(\mathcal{B}_{\fixedFeatureVector,h})}$. For any $\ensuremath{\fixedFeatureVector\in\text{supp}(\featureDist)}$,
\begin{align*}
 & \mathbb{P}\Big(\sup_{\timeVar\in[0,\timeHorizon]}|\widehat{\Hazard}^{\textsc{NN}(h)}(\timeVar|\fixedFeatureVector)-\Hazard(\timeVar|\fixedFeatureVector)|>\varepsilon\Big)\nonumber\\
 & \quad\le\exp\!\Big(-\frac{n\featureDist(\mathcal{B}_{\fixedFeatureVector,h})\theta}{16}\Big)+\exp\!\Big(-\frac{n\featureDist(\mathcal{B}_{\fixedFeatureVector,h})}{8}\Big)\nonumber\\
 & \quad\quad+2\exp\!\Big(-\frac{n\featureDist(\mathcal{B}_{\fixedFeatureVector,h})\varepsilon^{2}\theta^{4}}{1296}\Big)\\
 & \quad\quad+2\Big(\frac{3}{\varepsilon}\log\frac{1}{\theta}+1\Big)\exp\!\Big(-\frac{n\featureDist(\mathcal{B}_{\fixedFeatureVector,h})\varepsilon^{2}\theta^{2}}{324}\Big).
\end{align*}
\end{thm}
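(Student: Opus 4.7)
The plan is to mirror the proof of Theorem~\ref{thm:h-near-survival} in Appendix~\ref{sec:pf-thm-h-near-survival}, with two targeted changes: (i) replace the Taylor-expansion decomposition of $\log\widehat{\survEnd}^{\textsc{NN}(h)}$ by the exact identity for $\widehat{\Hazard}^{\textsc{NN}(h)}$, which removes the higher-order remainder, and (ii) adjust the $[0,\timeHorizon]$ discretization so that $L(\varepsilon)$ reflects the total variation of $\Hazard(\cdot|\fixedFeatureVector)$ rather than of $\survEnd(\cdot|\fixedFeatureVector)$. The analogy with the Nelson-Aalen/$k$-NN modification sketched at the end of Appendix~\ref{sec:Nelson-Aalen} will drive the whole argument.

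First, under Assumption~A2 the observed times are a.s.~distinct, so directly from the Nelson-Aalen definition one has, for $\setS=\neighborsNNh(\fixedFeatureVector)$ with $N_{x,h}>0$,
\[
-\widehat{\Hazard}^{\textsc{NN}(h)}(\timeVar|\fixedFeatureVector)=V_1(\timeVar|\fixedFeatureVector)+V_2(\timeVar|\fixedFeatureVector),
\]
where $V_1,V_2$ are exactly the two leading terms of the Taylor expansion of $\log\widehat{\survEnd}^{\textsc{NN}(h)}$ used in Appendix~\ref{sec:pf-thm-h-near-survival}. There is no $V_3$: each summand of $\widehat{\Hazard}^{\textsc{NN}(h)}$ is an exact ratio, not a $\log(1+\cdot)$, so the series tail that produced $V_3$ in the Kaplan-Meier case simply does not arise. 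Consequently, the six-term triangle-inequality bound from that appendix collapses to a five-term bound for $|\widehat{\Hazard}^{\textsc{NN}(h)}(\timeVar|\fixedFeatureVector)-\Hazard(\timeVar|\fixedFeatureVector)|$, involving $|V_1-\mathbb{E}[V_1|N_{x,h}]|$, the $V_1$-bias term, the $\frac{2}{N_{x,h}\theta^{2}}$ constant, and the two suprema controlling the conditional CDF deviation.

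Next, I reuse verbatim the four probability bounds on $\badEvent{\textsc{NN}(h)}{\text{few neighbors}}(\fixedFeatureVector)$, $\badEvent{\textsc{NN}(h)}{\text{bad }\timeHorizon}(\fixedFeatureVector)$, $\badEvent{\textsc{NN}(h)}{\text{bad EDF}}(\fixedFeatureVector)$, and $\badEvent{\textsc{NN}(h)}{\text{bad }V_1}(\timeVar,\fixedFeatureVector)$ from Appendix~\ref{sec:pf-thm-h-near-survival}; none of these depend on whether the ambient estimator is Kaplan-Meier- or Nelson-Aalen-based. The three deterministic control lemmas of that appendix (the bias bound for $\mathbb{E}[V_1|N_{x,h}]-\log\survEnd$, the bias bound for $\obsEnd-\mathbb{E}[\estObsEnd^{\textsc{NN}(h)}|N_{x,h}]$, and the $\frac{2}{N_{x,h}\theta^2}$ and its prefactor) together with the two stochastic suprema being bounded by $\varepsilon/18$ each immediately yield the five-term bound $\le 5\varepsilon/18<\varepsilon/3$ under the stated conditions $h\le h^*$ and $n\ge \frac{144}{\varepsilon\theta^{2}\featureDist(\mathcal{B}_{\fixedFeatureVector,h})}$.

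The only substantive change is the lift from a fixed-$\timeVar$ bound to a supremum over $[0,\timeHorizon]$. Here I use that $\Hazard(\timeHorizon|\fixedFeatureVector)=-\log\survEnd(\timeHorizon|\fixedFeatureVector)\le -\log\obsEnd(\timeHorizon|\fixedFeatureVector)\le\log(1/\theta)$ by Assumption~A3, so $\Hazard(\cdot|\fixedFeatureVector)$ has total variation at most $\log(1/\theta)$ on $[0,\timeHorizon]$. Partitioning with points $0=\eta_{0}<\eta_{1}<\cdots<\eta_{L(\varepsilon)}=\timeHorizon$ on which $\Hazard$ grows by at most $\varepsilon/3$ requires only $L(\varepsilon)\le\frac{3}{\varepsilon}\log\frac{1}{\theta}+1$ pieces, and since $\widehat{\Hazard}^{\textsc{NN}(h)}(\cdot|\fixedFeatureVector)$ is piecewise constant, nondecreasing, and equals $0$ at $\timeVar=0$, controlling the error at the grid points yields the supremum bound. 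Union bounding $\badEvent{\textsc{NN}(h)}{\text{bad }V_1}$ over these $L(\varepsilon)$ time points (with the per-event bound $2\exp(-\frac{n\featureDist(\mathcal{B}_{\fixedFeatureVector,h})\varepsilon^{2}\theta^{2}}{324})$ from Lemma~\ref{lem:h-near-bad-V1}) produces precisely the multiplier $2(\frac{3}{\varepsilon}\log\frac{1}{\theta}+1)$ in the last term; the first three terms come from the remaining three bad-event bounds, giving the stated inequality.

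The main obstacle, such as it is, is purely bookkeeping: I must verify that the removal of $V_3$ does not disturb the $\varepsilon/18$ budget per surviving term (it does not, since the budgets for $V_1$, $V_2$, and its pieces are unchanged), and I must check that the monotonicity and boundary value of $\widehat{\Hazard}^{\textsc{NN}(h)}$ justify the grid-based lift to the supremum in exactly the same way monotonicity of $\widehat{\survEnd}^{\textsc{NN}(h)}$ did before. Both checks are routine, so no genuinely new technical ingredient beyond the modified discretization is needed.
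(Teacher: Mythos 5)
Your proposal is correct and mirrors the paper's own argument exactly: drop the higher-order remainder so that $-\widehat{\Hazard}^{\textsc{NN}(h)}=V_1+V_2$ (giving a five-term decomposition in place of six), reuse the same bad-event bounds from the fixed-radius NN Kaplan-Meier proof, and replace the discretization of $[0,\timeHorizon]$ by one adapted to the total variation $\log(1/\theta)$ of $\Hazard(\cdot|\fixedFeatureVector)$, so that $L(\varepsilon)\le\frac{3}{\varepsilon}\log\frac{1}{\theta}+1$ drives the coefficient of the final Hoeffding term. Your routine checks (that the $\varepsilon/18$-per-term budget still gives $5\varepsilon/18<\varepsilon/3$, and that $\widehat{\Hazard}^{\textsc{NN}(h)}$ is piecewise constant, nondecreasing, and vanishes at $\timeVar=0$) are precisely what the paper relies on as well.
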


\begin{thm}[Nelson-Aalen-based kernel pointwise bound]
Under Assumptions A1--A5, let $\varepsilon\in(0,1)$ be a user-specified error tolerance. Suppose that the threshold distance satisfies $h\in(0,\frac{1}{\stanDistThresh}(\frac{\varepsilon\theta}{18\Lips_{K}})^{1/\holderIndex}]$, and the number of training data satisfies $n\ge\frac{144}{\varepsilon\theta^{2}\featureDist(\mathcal{B}_{\fixedFeatureVector,\stanDistThresh h})\kappa}$. For any $\ensuremath{\fixedFeatureVector\in\text{supp}(\featureDist)}$,
\begin{align*}
 & \mathbb{P}\Big(\sup_{\timeVar\in[0,\timeHorizon]}|\widehat{\Hazard}^{K}(\timeVar|\fixedFeatureVector;h)-\Hazard(\timeVar|\fixedFeatureVector)|>\varepsilon\Big)\nonumber\\
 & \quad\le\exp\!\Big(-\frac{n\featureDist(\mathcal{B}_{x,\stanDistThresh h})\theta}{16}\Big)+\exp\!\Big(-\frac{n\featureDist(\mathcal{B}_{\fixedFeatureVector,\stanDistThresh h})}{8}\Big)\nonumber\\
 & \quad\quad+\frac{216}{\varepsilon\theta^{2}\kappa}\exp\!\Big(-\frac{n\featureDist(\mathcal{B}_{x,\stanDistThresh h})\varepsilon^{2}\theta^{4}\kappa^{4}}{11664}\Big)\\
 & \quad\quad+2\Big(\frac{3}{\varepsilon}\log\frac{1}{\theta}+1\Big)\exp\!\Big(-\frac{n\featureDist(\mathcal{B}_{x,\stanDistThresh h})\varepsilon^{2}\theta^{2}\kappa^{2}}{324}\Big).
\end{align*}
\end{thm}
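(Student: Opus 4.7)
The plan is to adapt the proof of Theorem~\ref{thm:kernel-survival} from Appendix~\ref{sec:pf-thm-kernel-survival} using the correspondence noted at the end of Section~\ref{sec:analysis-overview}: under Assumption~A2 ties in the observed times occur with probability zero, so the Nelson-Aalen kernel estimator coincides with the negated first-order Taylor truncation of $\log\widehat{\survEnd}^K$, namely
\[
-\widehat{\Hazard}^K(\timeVar|\fixedFeatureVector;h) \;=\; W_1(\timeVar|\fixedFeatureVector) + W_2(\timeVar|\fixedFeatureVector),
\]
with $W_1$ and $W_2$ exactly as defined in Appendix~\ref{sec:pf-thm-kernel-survival} and the higher-order remainder $W_3$ dropped. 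I would therefore reuse verbatim the four bad events $\badEvent{\textsc{NN}(\stanDistThresh h)}{\text{few neighbors}}$, $\badEvent{\textsc{NN}(\stanDistThresh h)}{\text{bad }\timeHorizon}$, $\badEvent{\text{kernel}}{\text{bad weighted EDF}}$, and $\badEvent{\text{kernel}}{\text{bad }W_1}$, together with Lemmas~\ref{lem:kernel-W2-decomp}, \ref{lem:kernel-W1-bias}, and~\ref{lem:kernel-Hk-bias}. Since Lemma~\ref{lem:kernel-W3} is no longer needed, the sample-size hypothesis $n \ge \frac{144}{\varepsilon\theta^2\featureDist(\mathcal{B}_{\fixedFeatureVector,\stanDistThresh h})\kappa}$ is used only to force the deterministic term $\frac{2K(0)}{NK(\stanDistThresh)\theta^2}$ from Lemma~\ref{lem:kernel-W2-decomp} below $\varepsilon/18$ once $\badEvent{\textsc{NN}(\stanDistThresh h)}{\text{few neighbors}}$ is ruled out.

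When none of the four bad events occurs and the bandwidth condition $h \le \frac{1}{\stanDistThresh}(\frac{\varepsilon\theta}{18\Lips_K})^{1/\holderIndex}$ holds, the triangle-inequality decomposition of $|W_1(\timeVar|\fixedFeatureVector) + W_2(\timeVar|\fixedFeatureVector) - \log\survEnd(\timeVar|\fixedFeatureVector)|$ yields the same five summands as in the kernel Kaplan-Meier proof (all terms except the $|W_3|$ contribution), each bounded by $\varepsilon/18$, so
\[
|\widehat{\Hazard}^K(\timeVar|\fixedFeatureVector;h)-\Hazard(\timeVar|\fixedFeatureVector)|\le \tfrac{5\varepsilon}{18}<\tfrac{\varepsilon}{3}
\]
holds pointwise for every fixed $\timeVar\in[0,\timeHorizon]$.

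To upgrade this to a uniform-in-$\timeVar$ bound, I would partition $[0,\timeHorizon]$ as $0 = \eta_0 < \eta_1 < \cdots < \eta_{L(\varepsilon)} = \timeHorizon$ with $\Hazard(\eta_j|\fixedFeatureVector) - \Hazard(\eta_{j-1}|\fixedFeatureVector) \le \varepsilon/3$, and demand the pointwise bound at each $\eta_j$. Because $\Hazard(\cdot|\fixedFeatureVector)$ is continuous and monotonically non-decreasing with total variation $\Hazard(\timeHorizon|\fixedFeatureVector) = -\log\survEnd(\timeHorizon|\fixedFeatureVector) \le -\log\theta$ (using $\survEnd(\timeHorizon|\fixedFeatureVector) \ge \obsEnd(\timeHorizon|\fixedFeatureVector) \ge \theta$), such a partition exists with $L(\varepsilon) \le \lceil 3\varepsilon^{-1}\log(1/\theta)\rceil \le \frac{3}{\varepsilon}\log\frac{1}{\theta}+1$. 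Since $\widehat{\Hazard}^K(\cdot|\fixedFeatureVector;h)$ is right-continuous, piecewise constant, and monotonically non-decreasing, the same sandwich argument used for $\widehat{\survEnd}$ in the Kaplan-Meier proof gives $\sup_{\timeVar\in[0,\timeHorizon]}|\widehat{\Hazard}^K - \Hazard| \le 2\varepsilon/3 < \varepsilon$.

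Finally, union-bounding over the bad events mirrors the end of Appendix~\ref{sec:pf-thm-kernel-survival}: the first three terms of the stated bound are inherited unchanged from Theorem~\ref{thm:kernel-survival} because those bad events do not depend on $\timeVar$, while $\badEvent{\text{kernel}}{\text{bad }W_1}(\eta_j,\fixedFeatureVector)$ is now summed over $L(\varepsilon)\le\frac{3}{\varepsilon}\log\frac{1}{\theta}+1$ partition points (rather than the $4/\varepsilon$ points used for $\widehat{\survEnd}^K$), producing the fourth term $2(\frac{3}{\varepsilon}\log\frac{1}{\theta}+1)\exp(\cdots)$ stated in the theorem. No new concentration inequalities are needed; the only substantively new ingredient is the $\log(1/\theta)$ bound on the total variation of $\Hazard$ that drives the enlarged partition count, so the main obstacle is really just the bookkeeping verification that the sandwich argument goes through with the reversed monotonicity direction of the cumulative hazard estimator.
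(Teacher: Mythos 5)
Your proposal is correct and follows the paper's own route: the paper (Appendix G) makes exactly these modifications — drop the $W_3$ remainder since $-\widehat{\Hazard}^K = W_1 + W_2$ with no ties, reuse the same bad events and Lemmas~\ref{lem:kernel-W2-decomp}--\ref{lem:kernel-Hk-bias}, and replace the partition count $4/\varepsilon$ by $\lceil\frac{3}{\varepsilon}\log\frac{1}{\theta}\rceil \le \frac{3}{\varepsilon}\log\frac{1}{\theta}+1$ since $\Hazard(\cdot|\fixedFeatureVector)$ has total variation at most $\log\frac{1}{\theta}$ on $[0,\timeHorizon]$, yielding the modified fourth term in the union bound. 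Your observation that the $n$-hypothesis is now needed only for the deterministic $\frac{2K(0)}{NK(\stanDistThresh)\theta^2}$ term, and no longer for Lemma~\ref{lem:kernel-W3}, is also the intended reading.
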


We remark that the slight change in the proof (regarding partitioning $[0,\timeHorizon]$ as to handle log space) can actually be applied to any of the nearest neighbor and kernel Kaplan-Meier-based estimators $\widehat{\survEnd}$ to guarantee that $\sup_{\timeVar\in[0,\timeHorizon]}|\log\widehat{\survEnd}(\timeVar|\fixedFeatureVector)-\log\survEnd(\timeVar|\fixedFeatureVector)|\le\varepsilon$.

\section{Details on Experimental Results}
\label{sec:experiment-details}

\textbf{Concordance index calculation.}
Harrell's concordance index (c-index) \citep{harrell_1982} is a pairwise-ranking-based accuracy metric for survival analysis. Roughly, it measures the fraction of pairs of subjects that are correctly ordered among pairs that can actually be ordered (not every pair can be ordered due to censoring). As such, the highest c-index is 1, and 0.5 corresponds to a random ordering. Because c-index is ranking based, it requires that a survival estimator provide some way to rank pairs of subjects in terms of who is at greater risk (ties are allowed).

C-index is computed as follows. Suppose that there are $n'$ test subjects with data $(X_1',Y_1',\delta_1'),\dots,(X_{n'}',Y_{n'}',\delta_{n'}')\in\mathcal{X}\times\mathbb{R}_+\times\{0,1\}$. Then:

\begin{enumerate}
\item Construct the set of all pairs of test subjects: \[\mathcal{P}:=\{(i,j) : i,j\in[n'] \text{ such that } i<j\}.\]
\item Remove any pair $(i,j)$ from $\mathcal{P}$ for which the earlier observed time among test subjects $i$ and $j$ is censored.
\item Remove any pair $(i,j)$ from $\mathcal{P}$ for which the observed times are tied unless at least one of test subjects $i$ and~$j$ has an event indicator value of 1.
\item For each pair $(i,j)$ that remains in $\mathcal{P}$, we compute a score $C_{(i,j)}$ for $(i,j)$ as follows:
\begin{itemize}
\item If $Y_i' \ne Y_j'$: set $C_{(i,j)}:=1$ if the subject with the shorter observed time (which is guaranteed to be a survival time due to step 2) is predicted to be at higher risk among subjects $i$ and $j$; set $C_{(i,j)}:=1/2$ if the predicted risks are tied between subjects $i$ and $j$; otherwise, set $C_{(i,j)}:=0$.
\item If $Y_i' = Y_j'$ and $\delta_i'=\delta_j'=1$: set $C_{(i,j)}:=1$ if the predicted risks are tied between $i$ and $j$; otherwise, set $C_{(i,j)}:=1/2$.
\item If $Y_i' = Y_j'$ and exactly one of $\delta_i'$ or $\delta_j'$ is 1: set $C_{(i,j)}=1$ if the predicted risk is higher for the subject with event indicator set to 1; otherwise set $C_{(i,j)}=1/2$.
\end{itemize}
\item Finally, the c-index is given by:
\[
\frac1{|\mathcal{P}|} \sum_{(i,j)\in\mathcal{P}} C_{(i,j)}.
\]
\end{enumerate}
As for how we rank any pair of test subjects in our experimental results, we use the same approach as \citet{ishwaran_2008}. Let $Y_1^*,\dots,Y_m^*$ denote the unique observed times among the test subjects. Then test subject $i$ is considered to be at higher risk than test subject $j$ if
\[
\sum_{\ell=1}^{m} \widehat{H}(Y_\ell^*|X_i')
> \sum_{\ell=1}^{m} \widehat{H}(Y_\ell^*|X_j'),
\]
where $\widehat{H}$ is an estimate of the conditional cumulative hazard function $H(t|x)=-\log S(t|x)$ (we can, for instance, use nearest neighbor and kernel variants of the Nelson-Aalen estimator). (As a remark, other ways of ranking test subjects are possible. For instance, for the $i$-th test subject, we could estimate the subject's median survival time by finding time $t\ge0$ such that $\widehat{S}(t|X_i')\approx1/2$ for some estimate $\widehat{S}$ of conditional survival function $S$, and then rank the test subjects by predicted median survival times, i.e., shorter predicted median survival time means higher risk.)

\textbf{Parameter selection grids.}
For the $k$-NN estimator, we search for $k$ over integer powers of 2, starting at 4 and up to the size of the training dataset. For the kernel estimator, we first compute the largest pairwise distance $h_{\max}$ seen in the training data. Then we search for kernel bandwidth $h$ from 0.01$h_{\max}$ to $h_{\max}$ on an evenly spaced logarithmic scale with 20 grid points. For random survival forests and the adaptive kernel variant, we search over the number of trees (50, 100, 150, 200) and over the max depth (3, 4, 5, 6, 7, 8, and lastly no restriction on max depth).

\textbf{Extended results.}
We now present extended experimental results that also include Epanechnikov and truncated Gaussian kernels for the $k$-NN, \textsc{cdf-reg}, and kernel estimators. The truncated Gaussian kernel is of the form $K(s)=\exp(-\frac{s^2}{2\sigma^2})\ind\{s\le1\}$ for standard deviation/scale parameter $\sigma>0$. We have results for $\sigma\in\{1,2,3\}$. The concordance indices are reported for the \textsc{pbc}, \textsc{gbsg2}, \textsc{recid}, and \textsc{kidney} datasets in Figures~\ref{fig:pbc}, \ref{fig:gbsg2}, \ref{fig:recid}, and \ref{fig:kidney}.

We also report our integrated Brier score variant given in equation~\eqref{eq:IBS-variant} (with $\theta_{\text{LB}}=10^{-6}$) in Figures~\ref{fig:pbc-ibs}, \ref{fig:gbsg2-ibs}, \ref{fig:recid-ibs}, and \ref{fig:kidney-ibs}. Note that this integrated Brier score requires a user-specified time horizon $\timeHorizon$. For a given dataset, we set the time horizon to be the 75th percentile of the observed times in the training data (when using other percentiles that are at least the 50th percentile, although the integrated Brier scores can be different, the relative performance between the methods remains about the same). In terms of our integrated Brier score variant, which algorithms achieve the best performance changes from what we get using the concordance index. Consistently, random survival forests achieves lower (i.e., better) integrated Brier scores than the adaptive kernel method and tends to have the lowest scores for the \textsc{gbsg2}, \textsc{recid}, and \textsc{kidney} datasets. For these three larger datasets, the adaptive kernel method tends to achieve integrated Brier scores that are second best. Similar to the case of concordance indices, for the smallest dataset \textsc{pbc}, weighted versions of $k$-NN using $\ell_2$ distance have the best performance.

\begin{figure}[t]
    \vspace{-0.5em}
    \centering
    \includegraphics[width=\linewidth]{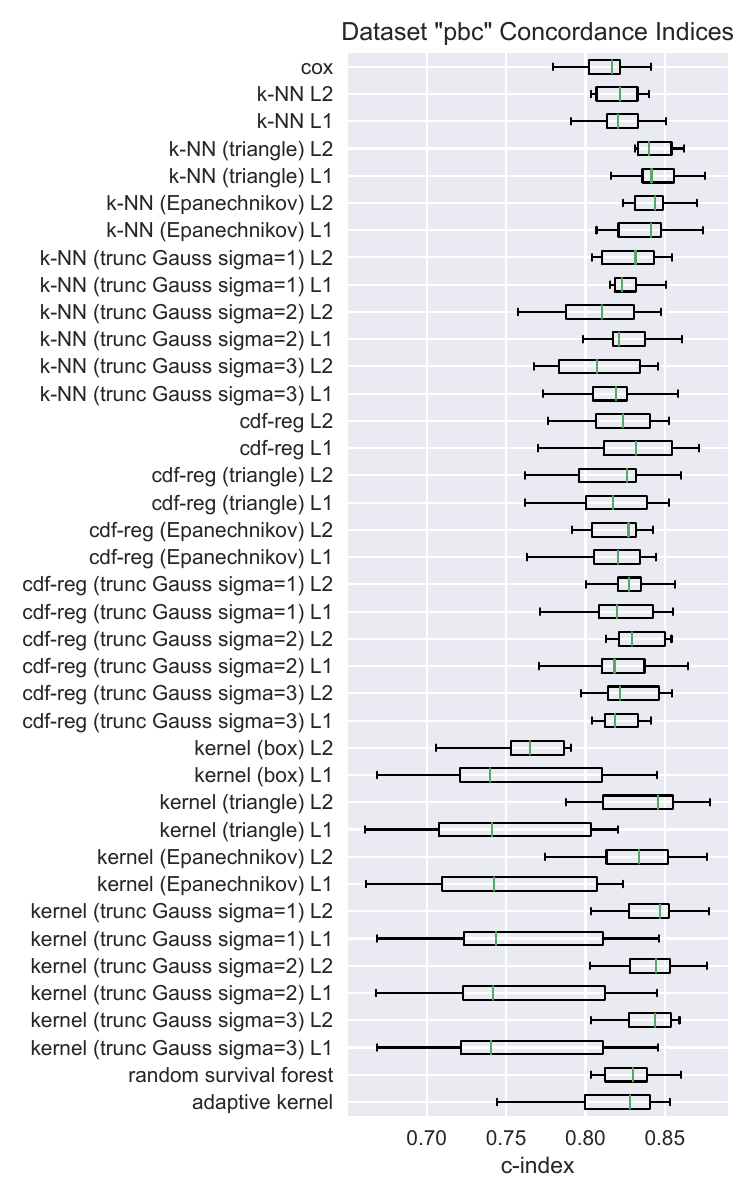}
    \vspace{-2.5em}
    \caption{Extended concordance index results for the \textsc{pbc} dataset (higher is better).}
    \label{fig:pbc}
    \vspace{-1em}
\end{figure}

\begin{figure}[t]
    \vspace{-0.5em}
    \centering
    \includegraphics[width=\linewidth]{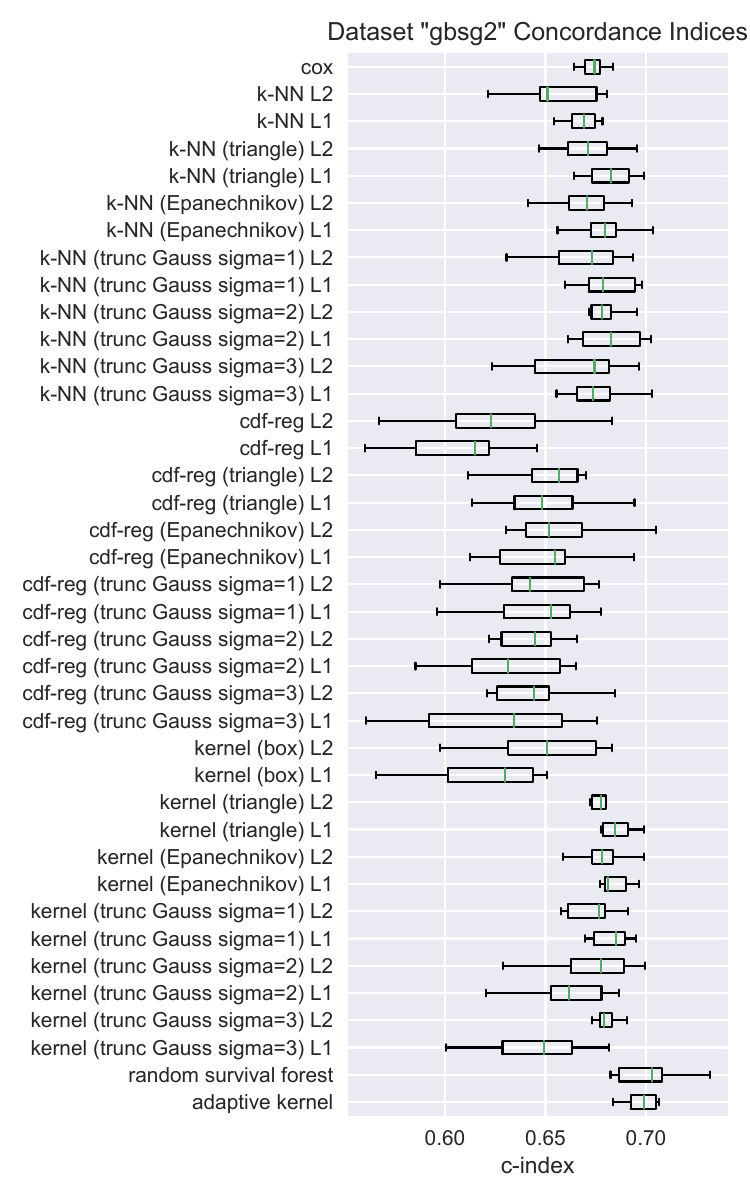}
    \vspace{-2.5em}
    \caption{Extended concordance index results for the \textsc{gbsg2} dataset (higher is better).}
    \label{fig:gbsg2}
    \vspace{-1em}
\end{figure}

\begin{figure}
    \centering
    \includegraphics[width=\linewidth]{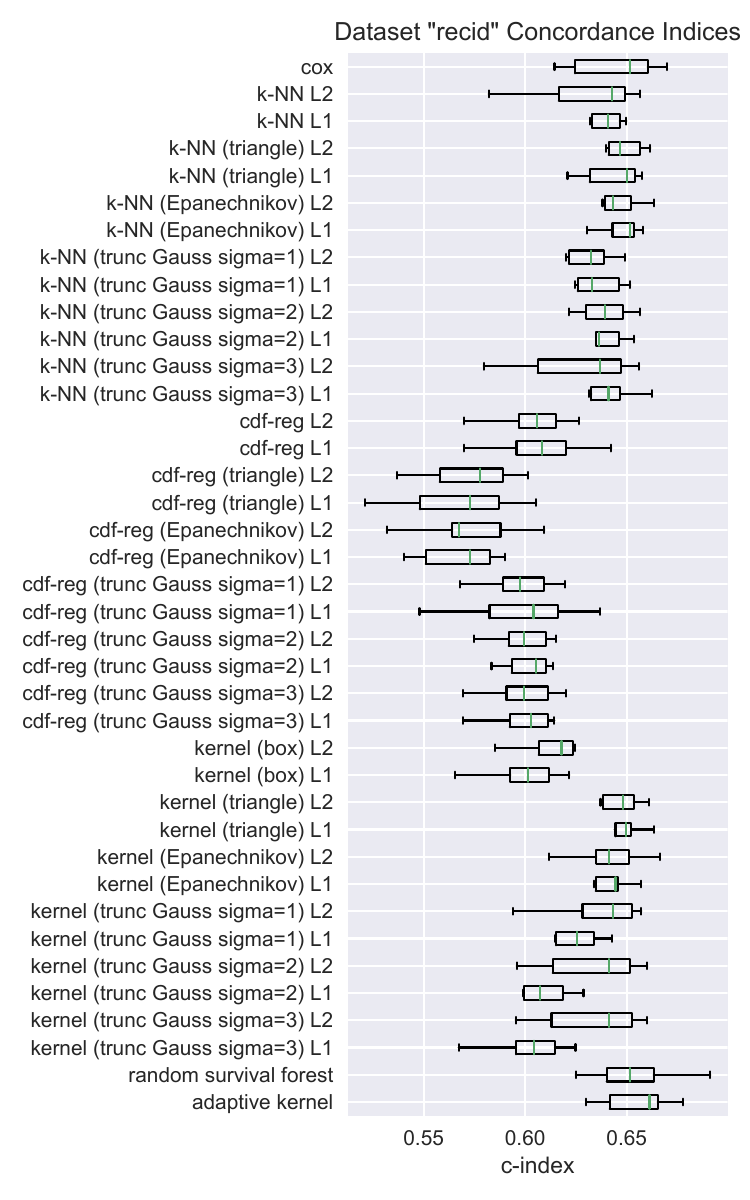}
    \vspace{-2.5em}
    \caption{Extended concordance index results for the \textsc{recid} dataset (higher is better).}
    \label{fig:recid}
\end{figure}

\begin{figure}
    \centering
    \includegraphics[width=\linewidth]{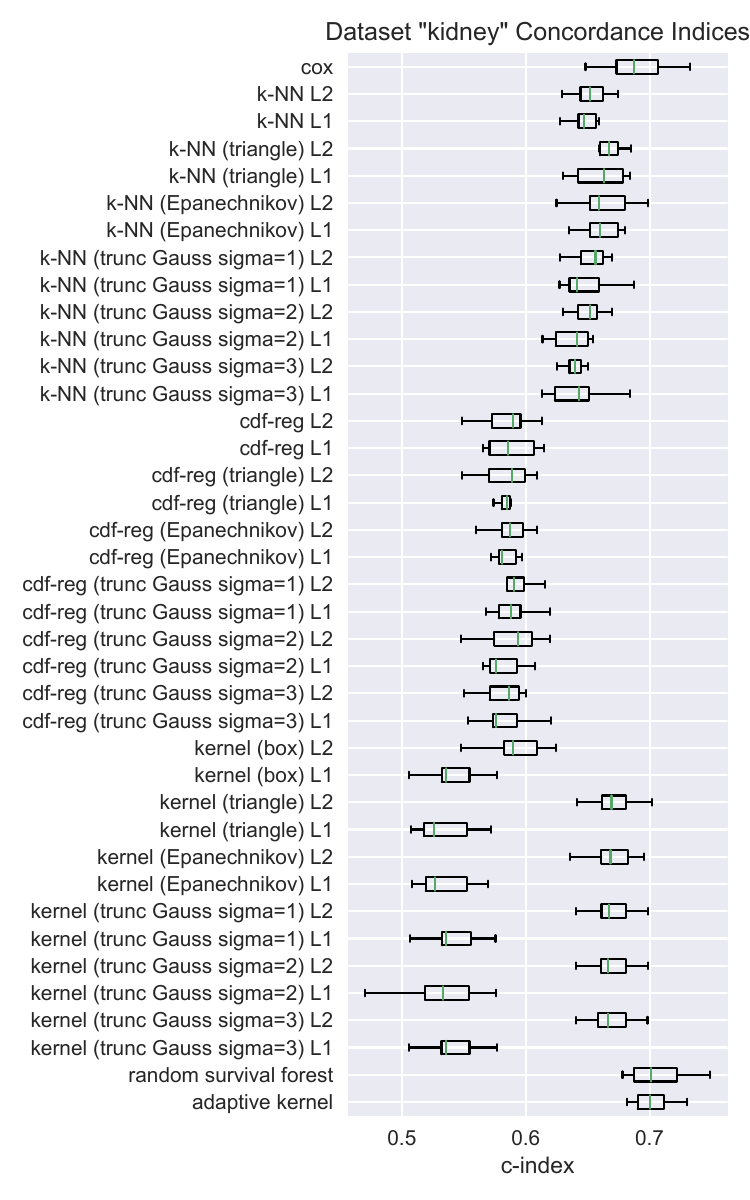}
    \vspace{-2.5em}
    \caption{Extended concordance index results for the \textsc{kidney} dataset (higher is better).}
    \label{fig:kidney}
\end{figure}

\begin{figure}[p]
    \centering
    \includegraphics[width=\linewidth]{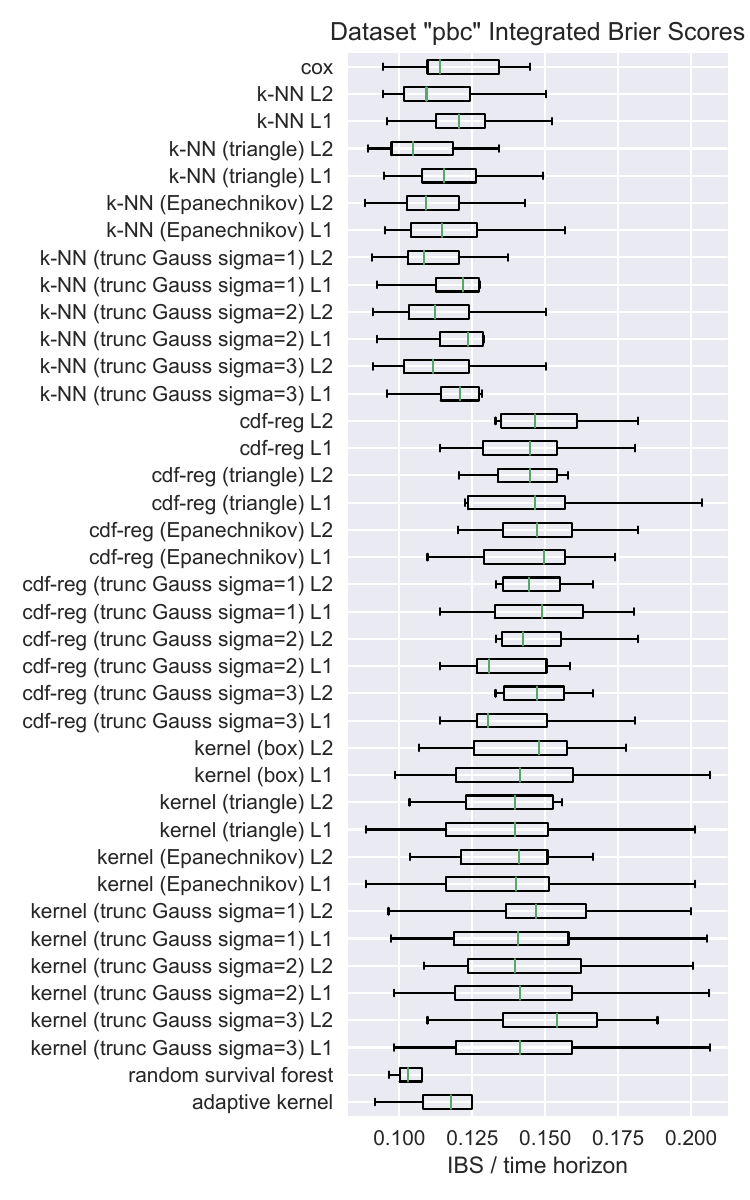}
    \vspace{-2.5em}
    \caption{Integrated Brier scores (divided by the time horizon) for the \textsc{pbc} dataset (lower is better).}
    \label{fig:pbc-ibs}
\end{figure}

\begin{figure}[p]
    \centering
    \includegraphics[width=\linewidth]{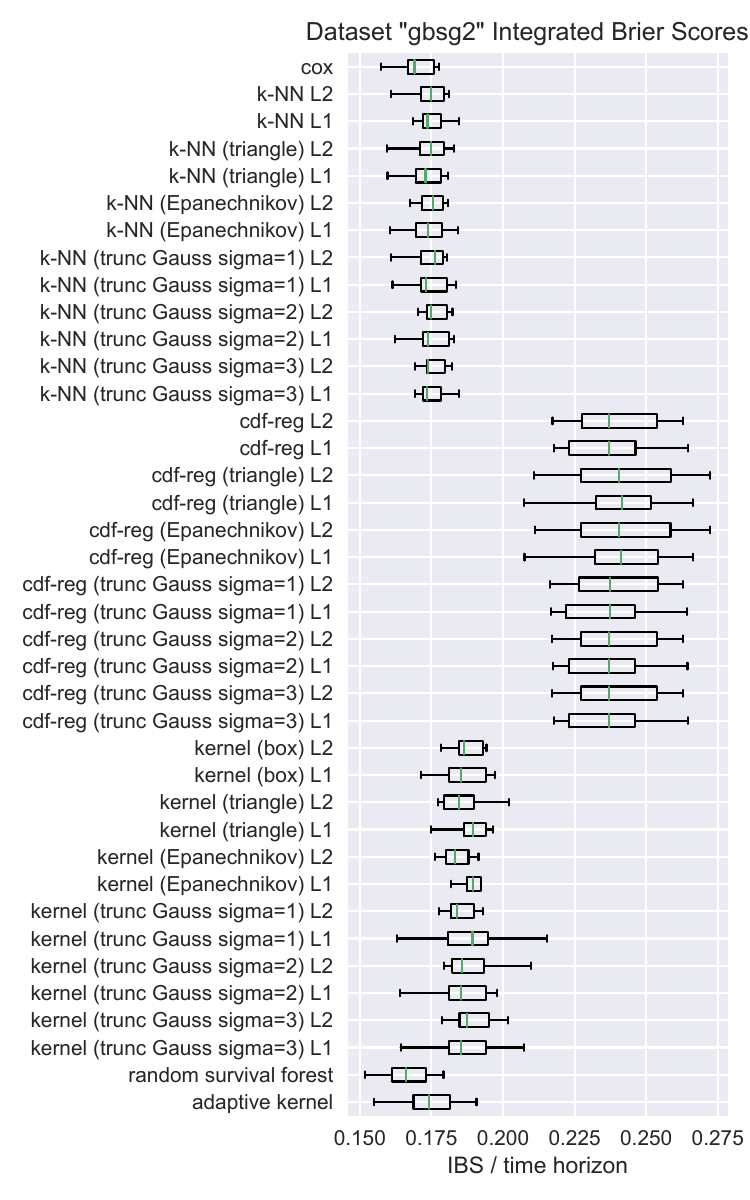}
    \vspace{-2.5em}
    \caption{Integrated Brier scores (divided by the time horizon) for the \textsc{gbsg2} dataset (lower is better).}
    \label{fig:gbsg2-ibs}
\end{figure}

\begin{figure}[p]
    \centering
    \includegraphics[width=\linewidth]{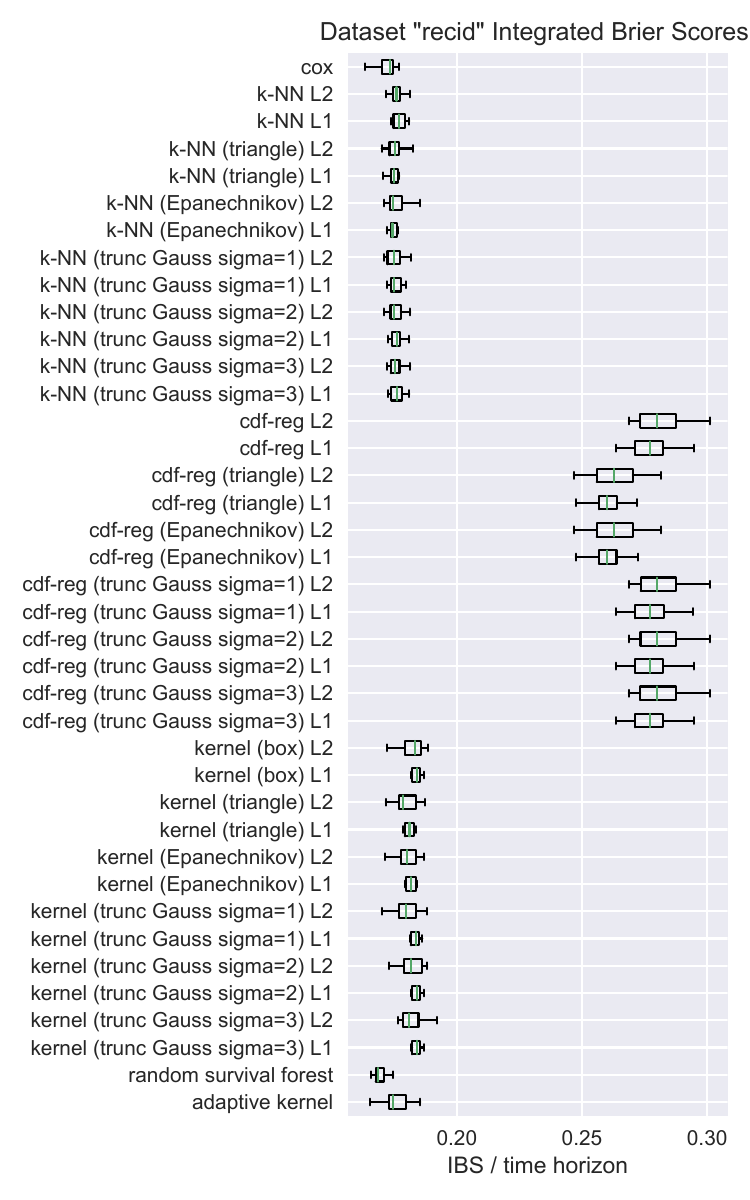}
    \vspace{-2.5em}
    \caption{Integrated Brier scores (divided by the time horizon) for the \textsc{recid} dataset (lower is better).}
    \label{fig:recid-ibs}
\end{figure}

\begin{figure}[p]
    \centering
    \includegraphics[width=\linewidth]{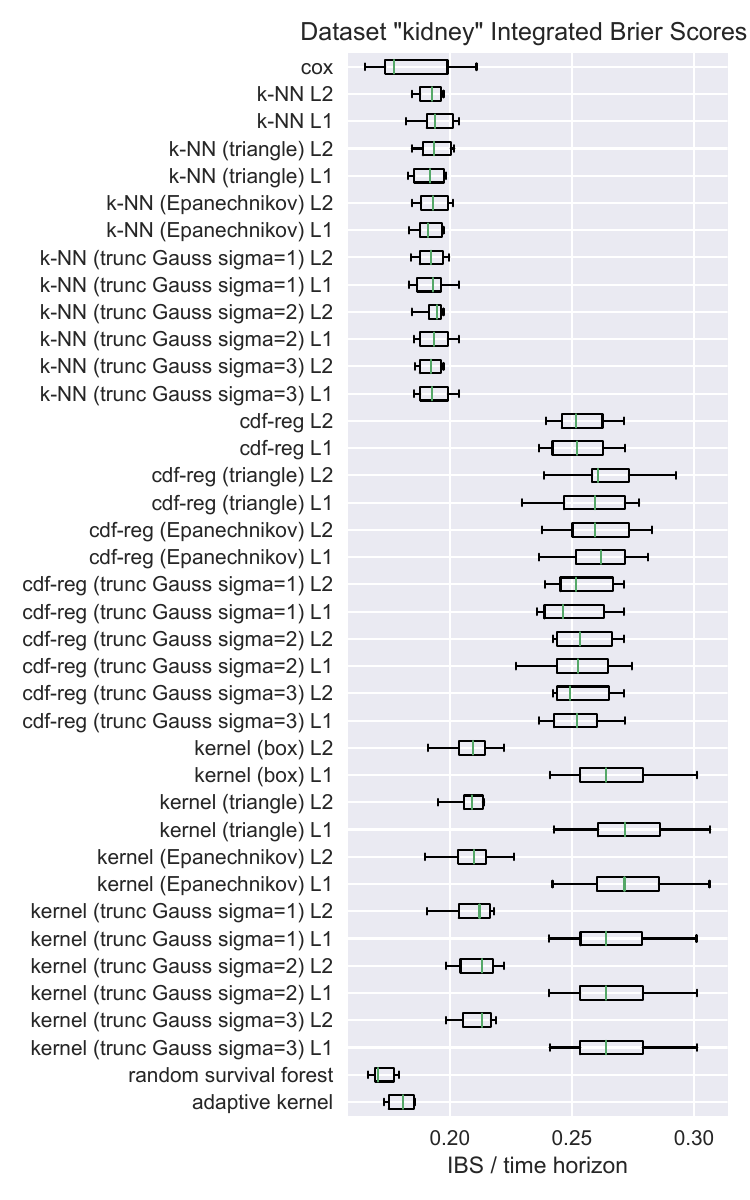}
    \vspace{-2.5em}
    \caption{Integrated Brier scores (divided by the time horizon) for the \textsc{kidney} dataset (lower is better).}
    \label{fig:kidney-ibs}
\end{figure}

\end{document}